\pdfoutput=1
\RequirePackage{silence} % :-\
\documentclass[oneside,openright,titlepage,numbers=noenddot,%1headlines,
                headinclude,footinclude,cleardoublepage=empty,abstract=on,
                BCOR=5mm,paper=a4,fontsize=12pt
                ]{scrreprt}
\PassOptionsToPackage{utf8}{inputenc}
  \usepackage{inputenc}

\PassOptionsToPackage{T1}{fontenc} % T2A for cyrillics
  \usepackage{fontenc}

\PassOptionsToPackage{
  drafting=false,    % print version information on the bottom of the pages
  tocaligned=false, % the left column of the toc will be aligned (no indentation)
  dottedtoc=false,  % page numbers in ToC flushed right
  eulerchapternumbers=true, % use AMS Euler for chapter font (otherwise Palatino)
  linedheaders=false,       % chaper headers will have line above and beneath
  floatperchapter=true,     % numbering per chapter for all floats (i.e., Figure 1.1)
  eulermath=false,  % use awesome Euler fonts for mathematical formulae (only with pdfLaTeX)
  beramono=true,    % toggle a nice monospaced font (w/ bold)
  palatino=true,    % deactivate standard font for loading another one, see the last section at the end of this file for suggestions
  style=classicthesis % classicthesis, arsclassica
}{classicthesis}

\newcommand{\myTitle}{Towards Realistic Models of Change}

\newcommand{\myFirstName}{Isacco}
\newcommand{\myLastName}{Beretta}
\newcommand{\myWebsite}{https://scholar.google.com/citations?user=3YUmyg4AAAAJ&hl=en}

\newcommand{\myProfTitle}{Prof.}
\newcommand{\myProfFirstName}{Salvatore}
\newcommand{\myProfLastName}{Ruggieri}
\newcommand{\myProfWebsite}{https://pages.di.unipi.it/ruggieri/}

\newcommand{\myOtherProfTitle}{Prof.}
\newcommand{\myOtherProfFirstName}{Dino}
\newcommand{\myOtherProfLastName}{Pedreschi}
\newcommand{\myOtherProfWebsite}{https://scholar.google.com/citations?user=5efz6osAAAAJ&hl=en}

\newcommand{\myFaculty}{}

\newcommand{\myUni}{University of Pisa}

\newcommand{\mySubtitle}{A Causal Perspective on Decision-Making and Dynamical Systems}
\newcommand{\myName}{Isacco Beretta\xspace}

\newcommand{\myVersion}{\classicthesis}

\providecommand{\mLyX}{L\kern-.1667em\lower.25em\hbox{Y}\kern-.125emX\@}
\newcommand{\ie}{i.\,e.}

\newcommand{\eg}{e.\,g.}

\PassOptionsToPackage{ngerman,american}{babel} % change this to your language(s), main language last
    \usepackage{babel}

\usepackage{csquotes}
\PassOptionsToPackage{%
  backend=bibtex8,bibencoding=ascii,%
  language=auto,%
  style=authoryear-comp, % Author 1999, 2010
  sorting=nyt, % name, year, title
  maxbibnames=10, % default: 3, et al.
  natbib=true % natbib compatibility mode (\citep and \citet still work)
}{biblatex}
    \usepackage{biblatex}

\PassOptionsToPackage{fleqn}{amsmath}       % math environments and more by the AMS
  \usepackage{amsmath}

\usepackage{graphicx} %
\usepackage{scrhack} % fix warnings when using KOMA with listings package
\usepackage{xspace} % to get the spacing after macros right
\PassOptionsToPackage{printonlyused,smaller}{acronym}
  \usepackage{acronym} % nice macros for handling all acronyms in the thesis
\usepackage{tabularx} % better tables
\usepackage{listings}
\usepackage{classicthesis}
\usepackage{tikz}
\usetikzlibrary{decorations.pathmorphing,arrows.meta,positioning, fit, shapes.geometric,cd, backgrounds, patterns, calc,shapes}
\tikzstyle{fixed vertex} = [circle, draw=red,fill=red!10]
\tikzstyle{edge} = [-latex,thick]
\tikzstyle{o->} = [{Circle[open]}->, thick]
\tikzstyle{o-o} = [{Circle[open]}-{Circle[open]}, thick]
\tikzstyle{<-->} = [<->, thick]
\tikzstyle{redge} = [-latex,thick, draw=red]
\tikzstyle{uedge} = [thick, draw=black!20]
\tikzstyle{bluedge} = [-latex,thick, draw=blue]
\tikzstyle{snakedge} = [-stealth,thick, shorten <=1pt,decorate, decoration={snake,amplitude=.3mm}]
\tikzstyle{array}=[draw, fill=green!40, minimum width = 7mm, minimum height = 7mm]
\tikzstyle{hidden vertex} = [circle, draw=black,style=dashed]
\tikzstyle{vertex} = [circle, fill=black!10]
\tikzstyle{box} = [fill=black!10,draw = black,minimum width=0.7cm, minimum height=0.7cm]
\tikzstyle{array}=[draw, fill=green!40, minimum width = 7mm, minimum height = 7mm]

\hypersetup{%
  colorlinks=true, linktocpage=true, pdfstartpage=3, pdfstartview=FitV,%
  breaklinks=true, pageanchor=true,%
  pdfpagemode=UseNone, %
  plainpages=false, bookmarksnumbered, bookmarksopen=true, bookmarksopenlevel=1,%
  hypertexnames=true, pdfhighlight=/O,%nesting=true,%frenchlinks,%
  urlcolor=CTurl, linkcolor=CTlink, citecolor=CTcitation, %pagecolor=RoyalBlue,%
  pdftitle={\myTitle},%
  pdfauthor={\textcopyright\ \myName, \myUni, \myFaculty},%
  pdfsubject={},%
  pdfkeywords={},%
  pdfcreator={pdfLaTeX},%
  pdfproducer={LaTeX with hyperref and classicthesis}%
}

\makeatletter
\@ifpackageloaded{babel}%
  {%
    \addto\extrasamerican{%
    }%
    \addto\extrasngerman{%
    }%
    }{\relax}
\makeatother

\begin{document}
\frenchspacing
\raggedbottom
\selectlanguage{american} % american ngerman
%\renewcommand*{\bibname}{new name}
%\setbibpreamble{}
\pagenumbering{roman}
\pagestyle{plain}
%********************************************************************
% Frontmatter
%*******************************************************
%\include{FrontBackmatter/DirtyTitlepage}
%*******************************************************
% Titlepage
%*******************************************************
\begin{titlepage}
  \doublespacing
  \large
  \hfill
  \begin{center}

    \includegraphics[width=3cm]{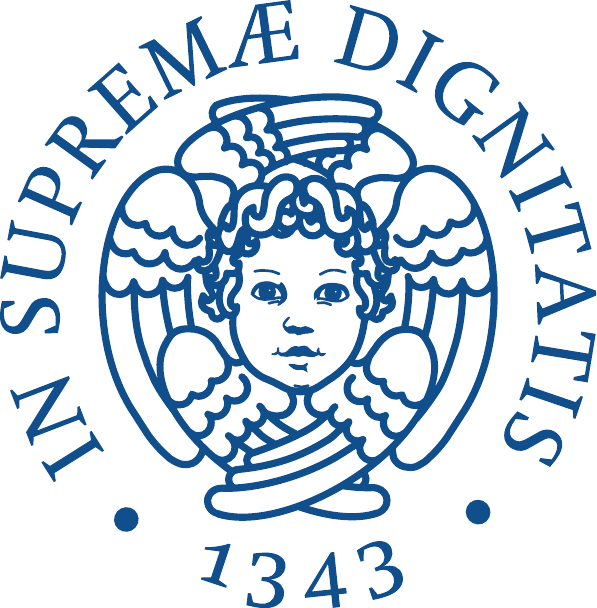}\\
    \vspace{10pt}
    \vspace{20pt}
    %\doublespacing
    %\textcolor{CTtitle}{\huge\textbf{\myTitle}}\\
    
    % Unipi is fixed
    {\Large \textsc{Università di Pisa}\par}
    {\Large \textbf{National Ph.D. in Artificial Intelligence}\par}
    {\normalfont \textit{XXXVII cycle} \par}
    \vspace{20pt}

    {\normalfont\bfseries\boldmath\Large
    \linespread{1}\selectfont % 1.0 is default; < 1.0 reduces space
    \begin{center}
        \myTitle:
        \\
        \mySubtitle
    \end{center}
    }	
    %\begingroup
    %{\setstretch{1.2}\color{black}\huge\spacedallcaps{\myTitle} \\ \medskip}
    %\endgroup
    %\emph{\mySubtitle}
  \end{center}
  \vspace{0.8cm}
  %\hrule
  \vspace{1.2cm}
  \onehalfspacing
  \begin{center}

    \begin{minipage}[t]{0.5\textwidth}
      \begin{flushleft}
        \textbf{Candidate:}\\
        \href{\myWebsite}{\color{black}{{\myFirstName} {\myLastName}}}
      \end{flushleft}
    \end{minipage}
    \begin{minipage}[t]{0.4\textwidth}
      \begin{flushright}
        \textbf{Supervisors:} \\
        \href{\myProfWebsite}{\color{black}{{\myProfTitle} {\myProfFirstName} {\myProfLastName}}}\\
        \href{\myOtherProfWebsite}{\color{black}{{\myOtherProfTitle} {\myOtherProfFirstName} {\myOtherProfLastName}}}\\
        %\href{\myOtherProfWebsite}{\color{black}{{\myOOtherProfTitle} {\myOOtherProfFirstName} {\myOOtherProfLastName}}}\\
      \end{flushright}
    \end{minipage}\\[1.5cm]
    \vspace{10pt}
    \small{Thesis submitted in partial fulfillment of the requirements for the degree of \par
    Doctor of Philosophy in Artificial Intelligence}
    
    %\myDepartment\\
    %\emph{\myProgram}
    %\href{https://www.unipi.it/}{\myUni}\\
    %\myLocation\\[1em]
%    {\mySubmissionMonth} {\mySubmissionYear}
  \end{center}
\end{titlepage}

\cleardoublepage%*******************************************************
% Dedication
%*******************************************************
\thispagestyle{empty}
\phantomsection
\pdfbookmark[1]{Dedication}{Dedication}

\vspace*{3cm}

\begin{comment}    
\begin{center}
    \emph{Give a neural network a thousand photos of cats and dogs, and it will learn to classify them with decent accuracy. Give it a billion, and it will achieve remarkable performance. But after all that, show it a horse, and it won’t have the slightest idea what it’s looking at.}
\end{center}
\end{comment}

%\begin{quote}
%\begin{minipage}{0.8\linewidth}
%\emph{Give a neural network a thousand photos of cats and dogs, and it will learn to classify them with decent accuracy. Give it a billion, and it will achieve remarkable performance. But after all that, show it a horse, and it won’t have the slightest idea what it’s looking at.} \\
%\noindent\rule{\linewidth}{0.4pt}
%\hrule
%\medskip
%\end{minipage}
%\end{quote}

%\medskip

%\begin{center}
%    Dedicated to the loving memory of Rudolf Miede. \\ \smallskip
%    1939\,--\,2005
%\end{center}

%\cleardoublepage\include{FrontBackmatter/Foreword}
\cleardoublepage%*******************************************************
% Abstract
%*******************************************************
%\renewcommand{\abstractname}{Abstract}
\pdfbookmark[1]{Abstract}{Abstract}
% \addcontentsline{toc}{chapter}{\tocEntry{Abstract}}
\begingroup
\let\clearpage\relax
\let\cleardoublepage\relax
\let\cleardoublepage\relax

\chapter*{Abstract}
%In recent decades, the \textit{statistical learning} research has made remarkable progress. The emergence of large datasets has enabled data-driven protocols that have transformed traditional scientific practice, with a huge impact on fields once dominated by mathematical models. However, these protocols share a key limitation: they struggle to generalize beyond the data they have seen, failing under \textit{distribution shift}. \textit{Causality offers a promising solution to this challenge}: causal models focus on the underlying mechanisms of change rather than surface-level associations, and enable robust reasoning under external interventions. This paradigm allows systems not only to learn from data, but also to reason about hypothetical scenarios, adapt to new environments, and inform better decision-making. This thesis explores how these two methodological approaches can be integrated to inform and enhance each other. The work is structured in two parts, each focusing on a different aspect of this integration.

While \textit{statistical learning} has made remarkable progress in recent decades, driven by the availability of large datasets and the rise of data-driven protocols that have transformed traditional scientific practice, it still faces a key limitation: it struggles to generalize beyond the data it has seen, e.g. often failing under distribution shift. \textit{Causality offers a promising solution to this challenge}: causal models focus on the underlying mechanisms of change rather than surface-level associations, \textit{enabling robust reasoning under external interventions}. This paradigm allows systems not only to learn from data, but also to reason about hypothetical scenarios, adapt to new environments, and inform better decision-making. This thesis explores how these methodological approaches can be integrated to inform  each other. The work is structured in two parts. %, each focusing on a different aspect of this integration.

The first part examines the problem of \textit{algorithmic recourse}—that is, identifying which changes an individual can make to overturn an unfavorable decision. We propose an alternative perspective to the established framework, arguing for a more realistic characterization of the cost function being minimized, a reformulation of the problem using causal terms, and the inclusion of a temporal dimension in policy evaluation. 

The second part focuses on dynamical systems, extending the notions of \textit{causal interventions and effects} to both discrete and continuous stochastic processes. It is shown that the equilibria of these processes—and how they change following an intervention—can be linked to Structural Causal Models (SCMs), that can be viewed as compressed representations of the underlying temporal dynamics. We argue for the relevance of such temporal causal models, emphasizing that in real-world scenarios, actions take time to produce their effects. 

The common goal of both parts is practical: the utility, robustness, and trustworthiness of these models depend critically on their \textit{ability to realistically represent the underlying phenomena}. A deeper integration of causality with data-driven methods is essential for building models that can adapt to real-world complexities and provide more reliable recommendations.

\vfill

\endgroup

\vfill

%\cleardoublepage\include{FrontBackmatter/Publications}
\cleardoublepage%*******************************************************
% Acknowledgments
%*******************************************************
\pdfbookmark[1]{Acknowledgments}{acknowledgments}

\bigskip

\begingroup
\let\clearpage\relax
\let\cleardoublepage\relax
\let\cleardoublepage\relax
\chapter*{Acknowledgments}

\bigskip

My thanks to Francesco, for convincing me that pursuing a PhD was the right choice.
My thanks to Dino and to Salvatore, for their guidance as supervisors.
My thanks to Isabel, for her guidance and for helping me shape the work I am most proud of.
My thanks to Marta, to Eleonora, and to Virginia, for being close to me through different moments of this journey.
And my thanks to Martina, for doing all of these things at once.

\endgroup

\cleardoublepage%*******************************************************
% Table of Contents
%*******************************************************
\pagestyle{scrheadings}
%\phantomsection
\pdfbookmark[1]{\contentsname}{tableofcontents}
\setcounter{tocdepth}{1} % <-- 2 includes up to subsections in the ToC
\setcounter{secnumdepth}{3} % <-- 3 numbers up to subsubsections
\manualmark
\markboth{\spacedlowsmallcaps{\contentsname}}{\spacedlowsmallcaps{\contentsname}}
\tableofcontents
\automark[section]{chapter}
\renewcommand{\chaptermark}[1]{\markboth{\spacedlowsmallcaps{#1}}{\spacedlowsmallcaps{#1}}}
\renewcommand{\sectionmark}[1]{\markright{\textsc{\thesection}\enspace\spacedlowsmallcaps{#1}}}
%*******************************************************
% List of Figures and of the Tables
%*******************************************************
\clearpage
\begingroup
    \let\clearpage\relax
    \let\cleardoublepage\relax
    %*******************************************************
    % List of Figures
    %*******************************************************
    %\phantomsection
    %\addcontentsline{toc}{chapter}{\listfigurename}
    %\pdfbookmark[1]{\listfigurename}{lof}
    %\listoffigures

    %\vspace{8ex}

    %*******************************************************
    % List of Tables
    %*******************************************************
    %\phantomsection
    %\addcontentsline{toc}{chapter}{\listtablename}
    %\pdfbookmark[1]{\listtablename}{lot}
    %\listoftables

    %\vspace{8ex}
    % \newpage

    %*******************************************************
    % List of Listings
    %*******************************************************
    %\phantomsection
    %\addcontentsline{toc}{chapter}{\lstlistlistingname}
    %\pdfbookmark[1]{\lstlistlistingname}{lol}
    %\lstlistoflistings

    %\vspace{8ex}

    %*******************************************************
    % Acronyms
    %*******************************************************
    %\phantomsection
    \pdfbookmark[1]{Acronyms}{acronyms}
    \markboth{\spacedlowsmallcaps{Acronyms}}{\spacedlowsmallcaps{Acronyms}}
    \chapter*{Acronyms}
    \begin{acronym}[CPDAGG]
    \setlength{\itemsep}{0.17cm}
    \acro{AI}{Artificial Intelligence}
    \acro{AR}{Algorithmic Recourse}
    \acro{BM}{Brownian Motion}
    \acro{BN}{Bayesian Network}
    \acro{CD}{Causal Discovery}
    \acro{CEx}{Contrastive Explanation}
    \acro{CI}{Causal Inference}
    \acro{CPDAG}{Completed Partially Directed Acyclic Graph}
    \acro{CRe}{Consequential Recommendations}
    \acro{DAG}{Directed Acyclic Graph}
    \acro{DE}{Differential Equation}
    \acro{dE}{Difference Equation}
    \acro{DSP}{Discrete-time Stochastic Process}
    \acro{GD}{Gradient Descent}
    \acro{GF}{Gradient Flow}
    \acro{MEC}{Markov Equivalence Class}
    \acro{ML}{Machine Learning}
    \acro{OdE}{Ordinary Difference Equation}
    \acro{ODE}{Ordinary Differential Equation}
    \acro{PDAG}{Partially Directed Acyclic Graph}
    \acro{RDE}{Random Differential Equation}
    \acro{RdE}{Random Difference Equation}
    \acro{SCM}{Structural Causal Model}
    \acro{SDE}{Stochastic Differential Equation}
    \acro{SdE}{Stochastic Difference Equation}
    \acro{SP}{Stochastic Process}
    \acrodefplural{SP}[SPs]{Stochastic Processes}
    \acro{VAR}{Vector AutoRegressive Model}
    \acro{XAI}{eXplainable Artificial Intelligence}
    \end{acronym}

\endgroup

\clearpage\renewcommand\nomgroup[1]{%
\vspace{10pt}
  \item[\bfseries
  \ifstrequal{#1}{A}{Variables}{%
  \ifstrequal{#1}{B}{Probability}{%
  \ifstrequal{#1}{C}{Statistics}{%
  \ifstrequal{#1}{D}{Causality}{}}}}%
]}
\makenomenclature
\renewcommand{\nomname}{Notation and Terminology} % Customize the title

% RANDOM VARIABLES
\nomenclature[A]{$X,Y$}{Real-valued Random Variables}
\nomenclature[A]{x}{particular value or realization of $X$}
\nomenclature[A]{$\X=(X_1,\dots,X_n)$}{$n$-dimensional Random Vector}
\nomenclature[A]{$\Noise$}{Real-valued Noise Variable}
\nomenclature[A]{$\noise$}{realization of the noise $\Noise$}

% PROBABILITIES
\nomenclature[B]{$\Prob$}{Probability Measure, also denoting the Probability Distribution of $X$, $\Prob_X$, when the context is clear}
\nomenclature[B]{$\prob$}{Probability Density Function}
\nomenclature[B]{$\prob(x)$}{density of $\Prob_X$ evaluated at the point $x$}
\nomenclature[B]{$\prob(y \mid x)$}{Conditional density given $X=x$ evaluated at $y$}

% STATISTICS
\nomenclature[C]{$\expected{X}$}{Expected Value of $X$}
\nomenclature[C]{$\Var[X]$}{Variance of $X$}
\nomenclature[C]{$\Cov[X,Y]$}{Covariance between $X$ and $Y$}
\nomenclature[C]{$\normal(\mu,\sigma^2)$}{Normal distribution with mean $\mu$ and variance $\sigma^2$}
\nomenclature[C]{$\uniform(a,b)$}{Uniform distribution between values $a$ and $b$}

% GRAPHS
\nomenclature[D,1]{$\graph$}{A graph. The symbols $X, Y, \dots$ are used to denote its nodes, overloading the notation for their corresponding random variables}
\nomenclature[D,2]{$\parents[i]_{\graph}$}{Parents of node $\Xcomp[i]$ in graph $\graph$. Subscript omitted when clear from the context}
\nomenclature[D,3]{$\children[i]_{\graph}$}{Children of node $\Xcomp[i]$ in graph $\graph$}
\nomenclature[D,3]{$\descendants[i]_{\graph}$}{Descendants of node $\Xcomp[i]$ in graph $\graph$}
\nomenclature[D,3]{$\ancestors[i]_{\graph}$}{Ancestors of node $\Xcomp[i]$ in graph $\graph$}
\nomenclature[D]{$\ce{}$}{Causal Effect}
\nomenclature[D]{$\ce{}_t$}{Causal Effect over Time}

\printnomenclature[1.4in]  % Print the notation list
%********************************************************************
% Mainmatter
%*******************************************************
\cleardoublepage
\pagestyle{scrheadings}
\pagenumbering{arabic}
%\setcounter{page}{90}
% use \cleardoublepage here to avoid problems with pdfbookmark
%
%
%
\chapter{Introduction}\label{ch:introduction}

\section{Motivation}\label{ch:causality_motivation}
Two main strategies are adopted in investigating natural and social systems, as well as designing procedures to perform practical tasks.

The first is a \emph{model-driven} approach, grounded in general principles and an established theoretical framework, often emphasizing \emph{causal relationships} to explain and predict the system's behavior, understanding the mechanisms by which elements influence one another. Model-based methods offer the advantages of interpretability and alignment with scientific theory, making them well-suited for generalization across domains. %, even under conditions not initially considered. 
However, they can struggle when applied to highly complex systems, where the underlying laws are too difficult to model from a human perspective or are not fully understood.

The second is a \emph{data-driven} approach, and it is particularly well-suited to systems governed by unknown or poorly understood laws. Instead of relying on pre-existing theoretical frameworks~\citep{anderson}, it draws insights from observational data to uncover \emph{patterns and statistical dependencies}. While this approach offers notable strengths, such as its flexibility in high-dimensional contexts, it also has significant limitations. Data-driven methods often lack transparency and interpretability, and without rigorous validation, they risk producing spurious or non-generalizable results, especially when applied to previously unseen scenarios or systems that have changed since the data were collected.

By comparing these two paradigms, we can appreciate the complementary nature of theory-driven and data-driven inquiry. A balanced integration of both may lead to more robust, insightful models that leverage each other's strengths while mitigating their weaknesses. These preliminary remarks introduce a general discussion: \textit{under what conditions and within what limits is it possible to integrate these two methodologies while retaining their strengths}? 

\subsection{The Role of Causality}

In what follows, a specific perspective will be adopted, limiting the analysis to the role that the concept of causality could play in addressing the question just posed. More concretely, on the one hand, we are interested in exploring how \ac{ML} can support the discovery of causal structures and enable reliable causal inference. On the other hand, principles of causal reasoning can also inform the design and development of \ac{ML} systems.
After all, causal reasoning is a core component of human intelligence, and causality plays a fundamental role in how we structure and understand knowledge~\citep{pearl2009causality}. Therefore, adopting a cause-effect framework could help us build more transparent, interpretable, and robust learning protocols.

\textbf{Beyond the i.i.d. assumption.} 
Most techniques used to draw inferences from data are inherently associative. These methods focus on identifying statistical dependencies among variables based solely on observational data, typically operating under the assumption that data points are \emph{independently and identically distributed (i.i.d.)} and generated from a fixed probability distribution. This foundational assumption enables the development of powerful and efficient algorithms, particularly for tasks such as regression, classification, and pattern recognition. However, while this associative framework has demonstrated remarkable effectiveness in many practical applications, it proves insufficient when epistemic goals extend beyond prediction to encompass explanation, understanding, or policy formulation. Such objectives are common in disciplines like the social sciences, epidemiology, medicine, and engineering, where addressing real-world problems often requires identifying causal-effect relationships. In these contexts, it becomes essential to answer questions such as: \emph{What would happen if a specific action were taken?} or \emph{How would the outcome change under a particular intervention?}

Accordingly, our focus shifts toward characterizing how a system’s behavior might change in response to external interventions, and evaluating counterfactual scenarios i.e., hypothetical alternatives to observed events. Gaining this level of insight necessitates moving beyond purely associative models to embrace causal frameworks. Such models offer a deeper understanding of the underlying generative mechanisms and help guard against common inferential pitfalls in observational analysis, such as spurious correlations and counterintuitive phenomena like Simpson’s Paradox~\citep{wagner1982simpson}. Reflecting this shift, the machine learning community has grown interested in integrating causal reasoning into inductive learning paradigms. This increasing attention stems from recognizing that causal structures can significantly enhance predictive models, rendering them more robust, interpretable, and better suited to inform real-world decision-making.

\section{Research Questions}\label{sec:research_questions}

In the first stage, this thesis explores how causal reasoning can be integrated into inductive procedures traditionally grounded in associative methods. The goal is to understand whether adopting a causal perspective leads to more accurate and realistic models of the systems under investigation. 
In the second stage, the focus shifts toward an additional layer of refinement i.e., integrating the temporal dimension into the causal framework.

These two research directions represent distinct and progressive steps within a broader conceptual pipeline, aiming to improve the ability of predictive and decision-making models to reflect the true complexity of the phenomena they are meant to describe in an increasingly realistic way. %The first step introduces structure through causality; the second adds dynamics through time.

This thesis is structured around the following Research Questions (RQs):

\begin{enumerate}[label=\textbf{RQ\arabic*}]
    \item \textit{What is the role of causality in building more realistic and interpretable decision-making systems?} \\
    This question, addressed primarily in Part~\ref{pt:algorithmic_recourse}, investigates whether integrating causal reasoning is a necessary and/or sufficient condition for developing decision-making protocols that more accurately reflect the real-world systems they are intended to model.

    \item \textit{How can classical causal frameworks be extended to incorporate the temporal dimension of real-world systems?} \\
    This question, explored in Part~\ref{pt:time_aware_causality}, focuses on the limitations of time-unaware models and examines the benefits of embedding temporal structure into causal reasoning, particularly when modeling interventions over time.
\end{enumerate}

\section{Thesis Contributions and Outline}\label{sec:contributions_outline}

The thesis is organized into three parts. 

\begin{itemize}
    \item \textbf{Part I – Causality.}  
    This first part introduces the core mathematical tools of causal inference and establishes the common language used throughout the rest of the thesis. It serves both as an accessible introduction to the formal notation and as a structured overview of the main levels of causal analysis. In doing so, it aims to give the reader a clear understanding of what causal models promise to deliver and the assumptions they require—and the limitations that emerge as one moves to more complex settings.
    \item \textbf{Part II – Towards Realistic Algorithmic Recourse. (\textbf{RQ1})}  
    This part focuses on decision-making systems, with particular attention to the problem of Algorithmic Recourse—i.e., identifying what changes an individual should make to overturn an unfavorable outcome. 
    
    \textbf{Chapter~\ref{ch:algo_recourse} - Algorithmic Recourse} introduces the classical formulations of the problem (Contrastive Explanations and Consequential Recommendations), discusses their conceptual differences, and connects them to the causal framework. It also provides a formal mathematical foundation and identifies key limitations in existing approaches. 
    
    \textbf{Chapter~\ref{ch:cost-subjective} - Cost is Subjective} builds on this analysis by shifting the focus to a user-centered perspective. It questions the assumption of a fixed, universal cost function and argues instead for a subjective, experience-driven notion of effort. The chapter introduces \EMA, an interactive tool that allows users to explore the decision space by directly modifying features. Unlike traditional methods, this approach relies on the user's knowledge and judgment to define plausible and actionable changes, without embedding predefined constraints. The tool supports experts and non-experts and is evaluated through examples on standard datasets. As the most human-centric contribution of the thesis, this chapter addresses the problem of realism without resorting to formal causal reasoning, highlighting limitations that remain even when such reasoning is later introduced.
    
    \textbf{Chapter~\ref{ch:timecar} - Causal Algorithmic Recourse and Time} initiates a reflection on the role of time in causal models and introduces one of the earliest contributions of the thesis to integrate temporal reasoning into Algorithmic Recourse (\textbf{RQ2}). Building on recent causal formulations, this work highlights a critical limitation of existing approaches—their inability to account for temporal dependencies between features and interventions. The chapter proposes a novel method that incorporates time into the cost evaluation of recourse by leveraging the topology of the causal graph. It also revisits the concept of \textit{actionability}, showing how the feasibility of interventions can vary significantly when time is taken into account. Finally, the work explores the fairness implications of time-unaware models, particularly their risk of exacerbating inequalities for disadvantaged groups, and supports its findings with semi-synthetic datasets.

    \item \textbf{Part III – Time-Aware Causality. (\textbf{RQ2})}  This part is dedicated to extending the classical causal framework, represented in its most complete form by Structural Causal Models, to dynamical systems, specifically stochastic processes.  
    
    \textbf{Chapter~\ref{ch:dynamical_sistems_intro} - Autonomous Dynamical Systems }introduces the mathematical objects related to dynamical systems and examines how the typical assumptions of Structural Causal Models transform in temporal models. In particular, it explores the connection between the notion of \textit{autonomy} in dynamical systems and the concept of \textit{modularity} in Structural Causal Models. The chapter also provides a first contextualization of \textit{causal inference over time}, proposing an initial link between estimating causal effects in the original static setting and its extended, time-aware formulation. 
    
    \textbf{Chapter~\ref{ch:Ci_over_time} - Causal Inference over Time} presents a unified framework for performing causal inference over time for discrete-time stochastic processes. It introduces two classes of time-aware interventions and provides the first general method for mapping Vector Autoregressive models to Structural Causal Models at equilibrium, preserving causal semantics under stability assumptions. Through theoretical results and empirical evaluations on both synthetic and real-world datasets, this chapter demonstrates the practicality of causal reasoning over time for interventional forecasting and retrospective counterfactuals. 
    
    \textbf{Chapter~\ref{ch:CIcontinuous_time} - Continuous Time and Non-linear Systems} expands the formal framework introduced in the previous chapter by moving \textit{from discrete-time to continuous-time representations}. It introduces stochastic differential equations as the mathematical foundation for modeling systems affected by ongoing random fluctuations. It presents the necessary stochastic calculus tools, such as Brownian motion, Itô processes, and the Fokker–Planck equation. It uses them to define and estimate the causal effect of interventions over continuous time. It also introduces the concept of stochastic gradient flows, allowing for closed-form equilibrium analysis and providing a natural mechanism for studying long-term causal effects. It presents a recursive procedure for computing the stationary distribution under intervention for acyclic systems. Finally, it bridges the gap between discrete-time and continuous-time models, outlining conditions under which results can be transferred across frameworks.
\end{itemize}

A more detailed characterization of the contributions related to the works published during the PhD is provided at the beginning of each chapter in the dedicated \textit{Contributions} section.

\section{Publications}\label{sec:PhDPubs}
We list below the papers that form the basis of this thesis, published at the time of writing. We will highlight when a chapter is based on a published work.

\begin{refsection}[ownpubs]
    \small
    \nocite{*} % is local to to the enclosing refsection
    \printbibliography[heading=none]
\end{refsection}

The following publication during the PhD studies does not contribute to the contents of this thesis:

\fullcite{nannini2024mapping}
%\begin{itemize}
%\nobibliography* \item \bibentry{nannini2024mapping}
%\end{itemize}

%\emph{Attention}: This requires a separate run of \texttt{bibtex} for your \texttt{refsection}, \eg, \texttt{ClassicThesis1-blx} for this file. You might also use \texttt{biber} as the backend for \texttt{biblatex}. See also \url{http://tex.stackexchange.com/questions/128196/problem-with-refsection}.

%
% PART 1
\cleardoublepage
\begin{comment}\ctparttext{You can put some informational part preamble text here.
Illo principalmente su nos. Non message \emph{occidental} angloromanic
da. Debitas effortio simplificate sia se, auxiliar summarios da que,
se avantiate publicationes via. Pan in terra summarios, capital
interlingua se que. Al via multo esser specimen, campo responder que
da. Le usate medical addresses pro, europa origine sanctificate nos se.}
\end{comment}
\part{Causality}\label{pt:causality}
\chapter{Mathematical Framework}\label{ch:causality_math_framework}

In this chapter, we introduce the mathematical objects underlying the causal framework, following the structure of Pearl’s Ladder of Causation~\citep{pearl2009causality}. As we move upward along this ladder, new mathematical tools are introduced and additional assumptions are layered on top of the previous ones. A summary of this pipeline is illustrated in Figure~\ref{fig:causal_pipeline_graph}.

\begin{figure}[h]
\centering
\begin{tikzpicture}[node distance=2.4cm and 1.2cm, every node/.style={align=center}]
    % Nodes (boxes)
    \node (obs) [draw, rectangle, rounded corners, minimum width=3.7cm, minimum height=2.1cm, fill=blue!10] 
        {\textbf{Observational} \\[1pt] \footnotesize Probability $\Prob(\X)$};
        
    \node (intv) [right=of obs, draw, rectangle, rounded corners, minimum width=3.7cm, minimum height=2.1cm, fill=green!10] 
        {\textbf{Interventional} \\[1pt] \footnotesize Causal graph \( \mathcal{G} \)};
        
    \node (cf) [right=of intv, draw, rectangle, rounded corners, minimum width=3.7cm, minimum height=2.1cm, fill=red!10] 
        {\textbf{Counterfactual} \\[1pt] \footnotesize Structural functions \( f_i \)};
        
    % Arrows
    \draw[->, thick] (obs) -- (intv);
    \draw[->, thick] (intv) -- (cf);

    % Assumptions node above arrows
    \node (assumptions1) at ($(obs)!0.5!(intv) + (0,2)$) 
        [font=\footnotesize, text width=4.4cm] 
        {\textbf{Causal assumptions:\\} Markov condition, Causal Sufficiency, Faithfulness};

    \node (assumptions2) at ($(intv)!0.5!(cf) + (0,2)$) 
        [font=\footnotesize, text width=5.0cm] 
        {\textbf{Structural assumptions:\\} Functional form, Modularity, Invariance};

    % Vertical lines from assumptions to arrows
    \draw[thick] (assumptions1.south) -- ($(obs)!0.5!(intv)$);
    \draw[thick] (assumptions2.south) -- ($(intv)!0.5!(cf)$);

    % Labels below
    \node[below=0.3cm of obs] {\footnotesize Statistical inference};
    \node[below=0.3cm of intv] {\footnotesize Interventions};
    \node[below=0.3cm of cf] {\footnotesize Counterfactual reasoning};
\end{tikzpicture}
\caption{\textbf{The Causal Inference Pipeline}: From Observational data to Interventions and Counterfactual reasoning}
\label{fig:causal_pipeline_graph}
\end{figure}

Understanding a system via the lens of causality involves progressing through a series of conceptual and methodological stages, each carrying its own theoretical and practical implications. Climbing the ladder of causality requires increasingly strong assumptions, many of which are not directly testable. This progression highlights a fundamental trade-off: as causal tools become more powerful, their domain of applicability narrows~\citep{barocas2020assumptions}. 

%At each level, new mathematical objects are introduced, gradually increasing the complexity of the overall framework. 
At the observational level, the central object of analysis is the probability distribution. Understanding the system involves identifying statistical patterns and examining relationships between variables through conditional probability queries. The interventional level introduces a causal graph, which encodes the directional dependencies among variables. This structure allows the model to move beyond correlations, enabling experimental-style queries that reflect the effects of active interventions. The counterfactual level adds another layer of complexity i.e., the functional identification of the mechanisms driving cause-effect relationships. While previous levels focused on whether and in what direction a causal influence exists, counterfactual reasoning aims to uncover exactly how those effects are produced. 

\section{The Ladder of Causation}
When the i.i.d. assumption breaks down, interventions and counterfactuals typically induce \emph{distribution shifts}, altering the data-generating process and invalidating standard learning protocols~\citep{holland1986statistics}. A model trained on one context may perform poorly when conditions change due to policy shifts, environmental variability, or human decisions. Addressing such scenarios requires models that engage with the causal structure underlying data distributions, rather than relying on surface-level associations.

From a practical perspective, the challenge is compounded by the fact that causal information is rarely available in a clean or controlled form. In most real-world settings, data are \emph{observational} rather than \emph{experimental}, meaning that we observe outcomes as they naturally occur, without randomized interventions. This limitation significantly constrains our ability to draw reliable causal conclusions. To navigate this complexity, there is a need for an epistemological framework that distinguishes between different levels of causal reasoning. 
As illustrated in \cref{tab_causality}, the \emph{Ladder of Causation}~\citep{pearl2009causality, bareinboim2022pearl} provides a conceptual hierarchy for structuring this progression. %Climbing this ladder requires increasingly sophisticated tools and assumptions, but it also offers deeper insights into the structure of reality and the consequences of action. 
In the following discussion, we will climb this ladder step by step, using the bivariate case as a guiding example.

\begin{table}[ht!]	
	\centering
	\begin{adjustbox}{max width=\textwidth}
		\begin{tabular}{@{}lllll@{}}
			\toprule
			Level &
			Typical Question &
			Model &
			Data &
			ML Methods \\ \midrule \\
			\textbf{3. Counterfactual} &
			\begin{tabular}[c]{@{}l@{}}What would Y have been \\ if I had acted on X?\end{tabular} &
			\begin{tabular}[c]{@{}l@{}}$\hat{\noise}_j := y^j-f(x^j)$\\
				$x^j:=\hat{x}$\\	
				$(y^j)^{\hat{x}}:=f(\hat{x})+\hat{\noise}^j$\end{tabular} &
			No Data &
			\begin{tabular}[c]{@{}l@{}}No ML \\ methods\end{tabular} \\ \\
			\textbf{2. Interventional} &
			\begin{tabular}[c]{@{}l@{}}How would acting on X \\ change Y?\end{tabular} &
			\begin{tabular}[c]{@{}l@{}}
				$x^j:=\hat{x}$\\$y^j:=f(\hat{x})+\noise^j$ \end{tabular}&
			\begin{tabular}[c]{@{}l@{}}$(\hat{x},y^i) \sim  \prob^{\hat{x}}$\\ $i.i.d$\end{tabular} &
			\begin{tabular}[c]{@{}l@{}}Reinforcement \\ Learning\end{tabular} \\ \\
			\textbf{1. Observational} &
			\begin{tabular}[c]{@{}l@{}}How would seeing X\\ change my belief on Y?\end{tabular} &
			$y^j=f(x^j)+\noise^j$ &
			\begin{tabular}[c]{@{}l@{}}$(x^i,y^i) \sim \prob$\\$i.i.d$\end{tabular} &
			\begin{tabular}[c]{@{}l@{}}Supervised/\\ Unsupervised\\ Learning\end{tabular} \\ \\\bottomrule
		\end{tabular}
	\end{adjustbox}
	\caption{\textbf{The Ladder of Causality.} The Ladder of Causality for the static-bivariate case.}
	\label{tab_causality}
\end{table}

\begin{itemize}
	\item[1.] \textbf{How would seeing X change my belief on Y?} On the first step, observational data $(x^i,y^i)$ are i.i.d. according to the a priori probability $\Prob$, and questions admit answers as conditional probabilities, say $\prob(y|x)$. We assume that a functional mechanism $f$ governs the relationship between $X$ and $Y$, excluding measurement errors, unmodelled external factors, and additional stochasticity captured by the noise $\noise$. For simplicity, we can think of the \ac{ML} protocol as aiming to recognize $f$, for example, by adopting a regression procedure.   
	\item[2.] \textbf{How would acting on X change Y?} Moving up to the next step, if we intervene by manually fixing the value of $X$, data are generally no longer distributed as $\Prob$. For each possible external intervention, e.g., $X:=\hat{x}$, data are distributed as the corresponding new distribution $\prob^{\hat{x}}$, to be determined. Generally, observational data are insufficient to extrapolate the system's cause-effect relations, and only a partial causal structure can be uniquely determined. No matter how elaborate \ac{ML} protocols may be, the observational distribution and the set of independence relations inferred from data are compatible with a non-trivial class of alternative causal structures. On the other side, causal inferences require causal assumptions: some conditions under which it is possible to answer interventional questions by examining observational data are available, but assuming a causal graph can correctly capture the cause-effect relationships between variables involved. 
	\item [3.] \textbf{How would Y have been if I had acted on X?} In this step, data are unavailable by definition at the counterfactual level, and no experiment in the world will be sufficient to answer counterfactual questions. More precisely, counterfactual inferences require assumptions about the mechanism underlying the observed phenomenon: the model $Y:=f(X)+\noise$ is necessary to make counterfactual conclusions. In a nutshell, the noise actual value $\hat{\noise}^j$ is determined via the observational couple $(x^j,y^j)$, providing $\hat{\noise}_j := y^j-f(x^j)$. Consequently, the counterfactual quantity $(y^{\hat{x}})^j$ is obtained by intervening with $X:=\hat{x}$ on the modified equation $y=f(\hat{x})+\hat{\noise}$.  
\end{itemize} 
The points above show how causality is essential for accessing information that is not necessarily obtainable on the observational level, regardless of the size of the system under consideration. Accordingly, the integration of causal structures within inductive protocols bears the potential to enhance their \emph{expressive capacity}. In this regard, the cause-effect problem in the \emph{bivariate} context may be considered both the most fundamental from a conceptual point of view and the least realistic in modeling concrete situations.
\\

\textbf{The Bivariate Case.} Given only two observable and \emph{correlated} variables, denoted here as $X$ and $Y$, a fundamental question arises: in which direction, if any, does a potential causal influence flow between them? That is, does $X$ cause changes in $Y$, does $Y$ influence $X$, or is the observed correlation merely the result of a hidden confounder, i.e., a third not-observed variable which causes both $X$ and $Y$, providing their non-zero correlation, or coincidence? This basic \emph{causal identification} problem has been the subject of significant theoretical interest, particularly within causal inference and algorithmic information theory. While it is generally acknowledged that real-world systems cannot be accurately modeled as mere collections of bivariate relationships, the analysis of such simplified pairs remains highly relevant. Much of the scientific literature 
%--- especially in the social sciences ---
is rich in discussions of variable pairs commonly assumed or debated to share a causal connection. Examples include: taxation and government expenditure, educational attainment and income level, military spending and aggressive behavior, inflation and unemployment, or social status and political affiliation. From a cognitive viewpoint, it seems clear that humans tend to act with this simplified causal relationship in mind, whereas more complex causal structures are employed in less common contexts. Despite their simplicity, these binary associations often serve as entry points for formulating and testing hypotheses about underlying mechanisms. Furthermore, from a methodological standpoint, the bivariate case provides a valuable setting to illustrate foundational concepts of causal inference. It also helps to clarify how such concepts may intersect with, and potentially inform, modern \ac{ML} approaches, particularly in contexts where large-scale observational data are abundant, but causal structure is unclear. 
\\
\\
The analysis for the bivariate static case builds upon the preliminary ideas summarized in \cref{tab_biv}, and aims to expose general properties of causality that are essential for both theoretical development and application to real-world inference problems. At this point, considering only the interventional level of the ladder, \cref{tab_biv} collects the two causal alternatives available when the \emph{confounder} is excluded a priori. In a nutshell, the scenario can be outlined as follows:    

\begin{table}[t]
	\centering
	\resizebox{\textwidth}{!}{%
		\begin{tabular}{@{}cccc@{}}
			\toprule
			\begin{tabular}[c]{@{}c@{}} Observational \\ Level\end{tabular} &
			\multicolumn{1}{c}{\begin{tabular}[c]{@{}c@{}}Causal \\ Structure\end{tabular}} &
			\multicolumn{1}{c}{\begin{tabular}[c]{@{}c@{}}Post-Interventional\\ Level\end{tabular}} &
			\begin{tabular}[c]{@{}c@{}}Interventional \\ Distribution\end{tabular} \\ \midrule
			\multirow{2}{*}{
				\begin{tikzpicture}
				\node[vertex] (x) at (0,-0.5) {X};
				\node[vertex] (y) at (1.5,-0.5) {Y};
				\draw[uedge] (x) -- (y);  
				\end{tikzpicture}} &
			\begin{tikzpicture}
			\node[vertex] (x) at (0,-0.5) {X};
			\node[vertex] (y) at (1.5,-0.5) {Y};
			\draw[edge] (x) -- (y);  
			\end{tikzpicture} &
			\begin{tikzpicture}
			\node[fixed vertex] (x) at (0,0) {X};
			\node[vertex] (y) at (1.5,0) {Y};
			\draw[edge] (x) -- (y);  
			\end{tikzpicture} &
			\raisebox{0.5\totalheight}{$\prob^{\hat{x}}(y) = \prob(y|\hat{x})$} \\
			&
			\begin{tikzpicture}
			\node[vertex] (x) at (0,0) {X};
			\node[vertex] (y) at (1.5,0) {Y};
			\draw[edge] (y) -- (x);  
			\end{tikzpicture} &
			\begin{tikzpicture}
			\node[fixed vertex] (x) at (0,0) {X};
			\node[vertex] (y) at (1.5,0) {Y};
			\end{tikzpicture} &
			\raisebox{0.5\totalheight}{$\prob^{\hat{x}}(y) = \prob(y)$} \\ 
			\bottomrule
		\end{tabular}%
	}
	\caption{\textbf{Cause-Effect Pairs}. Whenever the presence of a confounder is ruled out, the intervention $X:={\hat{x}}$ makes it possible to distinguish which case the system belongs to by comparing the interventional distribution $\prob^{\hat{x}}$ with the observational one.}
	\label{tab_biv}
\end{table}

\begin{itemize}
	\item Unlike statistical correlation, causation is inherently \emph{directional}. Therefore, a satisfactory theory of causality must account for this asymmetry — an aspect we must be able to detect, at least to some extent, when attempting to infer causal information from statistical dependencies. However, given only the observational joint distribution $\prob(X,Y)$ and no further assumptions, the direction of causality between $X$ and $Y$ cannot be determined. Indeed, for any such joint distribution, it is always possible to construct both a functional model $\Phi_{X \rightarrow Y}$, i.e., $Y := f(X, \noise)$ where $f$ is deterministic and $\noise$ is a noise term independent of $X$, and a model $\Phi_{Y \rightarrow X}$ of the opposite direction.
	
	\item This inherent \emph{symmetry} in the joint distribution is theoretical and remains unaffected by the amount of observational data available. This highlights a key distinction from standard predictive modeling, where increasing the volume of data typically improves performance. In contrast, more data does not resolve the fundamental ambiguity of causal direction.
	
	\item Importantly, while both functional models $\Phi_{X \rightarrow Y}$ and $\Phi_{Y \rightarrow X}$ can reproduce the same joint distribution, they generally differ in structure. This observation opens the door to resolving causal asymmetry by constraining the class $\Phi$ of admissible models or introducing a complexity criterion to distinguish between them. In the latter case, an \emph{Occam’s Razor} principle might be invoked: the direction yielding a simpler or more plausible model is preferred. Either way, the symmetry can only be broken by incorporating prior knowledge or additional assumptions, highlighting the necessity of theoretical input when working with purely observational data.
\end{itemize}

It is important to emphasize that further complications arise upon closer examination. In real-world scenarios, dependencies between random variables $X$ and $Y$ may not stem from a direct causal relationship, but rather from alternative underlying mechanisms. For instance, the observed association might result from selection bias, where both variables are conditionally dependent on unobserved \emph{confounding factors.} In such cases, the joint distribution $\prob(X, Y)$ reflects a distorted view of the true data-generating process, leading to potentially spurious inferences. Moreover, dependencies may emerge from \emph{shared temporal structures} rather than causality per se. A classic example involves systems governed by deterministic physical laws, where variables co-evolve over time due to common dynamics rather than a causal influence of one variable on the other. In these settings, the temporal correlation between $X$ and $Y$ may reflect their simultaneous response to an underlying process, rather than indicating a directional causal link. These considerations underscore the inherent challenges in distinguishing causation from mere statistical association. They also reinforce the necessity of incorporating domain knowledge, careful model design, or interventional data to correctly identify causal relationships,  especially in observational studies where such confounding factors and structural dependencies are likely present.

\textbf{Towards a general discussion.} The challenges and limitations encountered in the bivariate case become significantly more intricate in the multivariate setting, where the underlying causal structures are typically richer, more entangled, and susceptible to a wider range of confounding influences. In multivariate systems, not only does the number of possible causal configurations grow combinatorially with the number of variables, but the potential for complex dependencies, feedback loops, and latent confounders increases correspondingly. Moreover, the task of causal inference in high-dimensional settings is often compounded by computational constraints, such as the scalability of inference algorithms and the tractability of evaluating competing models. In \cref{pt:causality}, we shall outline the general mathematical framework, with particular attention to the difficulties mentioned above.

%In light of these challenges, our approach is to delineate a structured and methodical \emph{causal pipeline} — one that is as linear as possible and organized into clearly demarcated modules. This modular decomposition serves two complementary purposes. On the one hand, it provides a broad, high-level framework for understanding the typical stages of scientific inquiry involved in causal analysis, from data acquisition and modeling to inference and validation. On the other hand, it enables a precise characterization of the specific assumptions, methodological choices, and critical bottlenecks that emerge at each step of the process. By isolating the various stages of causal inference, we aim to both clarify the roles played by different components of the pipeline and make explicit the points at which domain knowledge, computational resources, or statistical assumptions become essential. This systematic approach not only enhances interpretability but also facilitates the identification of failure modes and the development of more resilient causal methodologies. Finally, this examination will contribute to clarifying where our work has offered original insights, even while operating within a broader landscape of themes and challenges already explored in the existing literature.
Given these challenges, our approach delineates a structured and methodical \emph{causal pipeline}, designed to be as linear as possible and organized into clearly demarcated modules. This decomposition serves two complementary purposes. On the one hand, it offers a broad, high-level framework that captures the typical stages of causal analysis, from data acquisition and modeling to inference and validation. On the other hand, it enables a precise characterization of the assumptions, methodological choices, and critical bottlenecks encountered throughout the process. By isolating each stage, we aim to clarify the specific role played by every component and to highlight where domain knowledge, computational resources, or statistical assumptions become essential. This systematic organization not only enhances interpretability but also facilitates the identification of potential failure modes and the development of more resilient methodologies. Finally, it helps situate our contributions within the broader landscape of existing research, clarifying their scope and limitations. In the following three sections, we will explore this progression in detail, introducing each causal level along with its associated formalisms and assumptions.

\section{Observational Level}\label{sec:observational_level}

In statistical inference, the study of a system typically begins with the collection of a finite number of observations. The fundamental goal is to infer the properties of the underlying data-generating process by detecting patterns, regularities, and systematic deviations across the observed instances. In formal terms, the objective is to recover, or approximate, the probability distribution $\Prob(\X)$ from which each observation is assumed to be a sample. This constitutes the first step of any statistical modeling effort and already requires the adoption of preliminary assumptions. The most basic of these is the i.i.d. assumption.

\begin{assumption}[i.i.d.\ Sampling]\label{ass:iid-formal}
Let $\X_1, \dots, \X_n$ be random (possibly vector valued) variables defined over a common sample space $\mathcal{X}$. The \emph{i.i.d.\ assumption} requires that:
\begin{enumerate}
    \item Each variable $\X_k$ has the same marginal distribution:
    $$
        \Prob_{\X_k}(\x) = \Prob_{\X_1}(\x) =: \Prob_{\X}(\x) \quad \text{for all } 1 \leq k \leq n;
    $$
    \item The joint distribution factorizes as the product of its marginals:
    $$
        \Prob_{\X_1,\dots,\X_n}(\x_1,\dots,\x_n) = \prod_{k=1}^n \Prob_{\X_k}(\x_k).
    $$
\end{enumerate}
That is, the variables are identically distributed and mutually independent. A dataset $\mathcal{D} = \{\x_1, \dots, \x_n\}$ is said to satisfy the i.i.d.\ assumption if it represents a joint realization of $\X_1, \dots, \X_n$. Consequently, the probability of the entire dataset factorizes as
$$
    \Prob(\mathcal{D}) = \prod_{i=1}^n \Prob_{\X}(\x_i).
$$
This assumption guarantees two essential properties. First, it ensures that the target of inference—the distribution $\Prob(\X)$—is well-defined. Second, it guarantees that the complexity of this target is fixed a priori and does not grow with the sample size. Each observation contributes equally and cumulatively to its identification.
\end{assumption}

\paragraph{Probability Factorization.}\label{sec:probability_factorization_math_framework}

Once the inferential target is identified, a natural question arises: how complex is it to determine $\Prob$? How many parameters are required to describe it accurately? How many samples are needed to estimate these parameters reliably? And how does this complexity scale with the dimension of $\X$? More importantly, how can we ensure this complexity remains manageable as the dimensionality grows?

A simple example may help clarify these concerns. Consider an $n$-dimensional binary random variable $\X$, with joint distribution $\Prob(\X)$. A naive representation of $\Prob(\X)$ consists of a table assigning a probability to each possible realization $\x$ of $\X$. Since there are $2^n$ such realizations, this representation requires $2^n - 1$ parameters, which quickly becomes computationally and cognitively intractable.

On the other hand, intelligent agents typically build internal models of the world through compact representations of observed data. A key step in this process is decomposing the system into simpler, independent components. In particular, recognizing \textit{statistical independence} among variables enables significant simplification. The formalism of \ac{BN}~\citep{koller2009probabilistic} provides the tools to express and exploit such independence relations.

\begin{remark}
The chain rule always allows us to factorize a joint distribution regardless of the dependencies among variables:
$$
\Prob(\X) = \Prob(\Xcomp[1]) \Prob(\Xcomp[2] \mid \Xcomp[1]) \cdots \Prob(\Xcomp[n] \mid \Xcomp[1], \dots, \Xcomp[n-1]) = \prod_{i=1}^n \Prob(\Xcomp[i] \mid X^{j<i}).
$$

Now, assume the variables are indexed such that each $X^i$ depends only on its immediate predecessor: the joint distribution satisfies the \emph{Markov condition}
$$
\Prob(\Xcomp[i] \mid X^{j<i}) = \Prob(\Xcomp[i] \mid X^{i-1}),
$$
and the factorization simplifies to
$$
\Prob(\X) = \Prob(\Xcomp[1]) \prod_{i=1}^{n-1} \Prob(X^{i+1} \mid X^i).
$$

This reduction significantly decreases the number of parameters that need to be estimated, and only values $2n 1$ are required. More generally, the independent relations among the variables can be structured according to a partial order, which, even in such cases, entails reducing the number of parameters. The existence and identification of such independence structures act as a mechanism of \emph{compression in representation}, achieved here through the right choice of factorization for $\Prob$.
\end{remark}

\subsection{Finite-data problem}
One important problem concerns the limited data availability. Although many inductive methods are \emph{asymptotically consistent}, i.e., %better and better 
approximating an ideal %operating 
condition as the number of available data increases, consistency does not always induce the expected practical advantage, and some problems are often %simply 
unavoidable due to the finite data regime.
	
Consider the problem of testing for conditional independence. Suppose we are studying a system involving many variables, and we try to determine whether $X$ and $Y$ are conditionally independent given a set of variables $\bm{S}$. Formally, this means verifying whether
$$
\Prob(X, Y \mid \bm{S}) = \Prob(X \mid \bm{S}) \Prob(Y \mid \bm{S}).
$$
We denote this condition by $X \ind_{\Prob} Y \mid \bm{S}$. The statistical procedure adopted to perform the test will typically rely on a subset of the available data. Specifically, it must verify that the independence condition holds \textit{for each possible realization} $\bm{S} = \bm{s}$.
Reasoning along these lines, the possible realizations will be, considering the simplified case of binary variables, $2^{|\mathbf{S}|}$. Therefore, each test will only have the $2^{|\mathbf{S}|}$-th part of the entire dataset. Consequently, the efficiency of statistical independence tests can only be preserved by accessing an exponentially increasing amount of data on the system, %quickly 
causing the procedure to be practically unfeasible. The reasoning is directly extended to non-binary discrete variables. In general, the relation between the problem's dimensionality and the amount of data required represents a discipline in itself, occupying a relevant role also in \ac{AI} and \ac{ML}~\citep{mohri2012foundations}.
	
When one considers continuous data, the problem often gets worse. First of all, many of the most popular independence tests, such as \textit{G-test, Fisher-test, and Entropy-based tests}, involve a binning procedure, i.e., a block discretization of the continuous distribution, thus ending up with the problems analyzed before. As an additional problem, the choice of bin size can significantly affect the results, introducing an extra layer of uncertainty and arbitrariness. Finally, working in the finite data regime introduces algorithm-specific limitations — some of which are theoretical in nature. For example, under these conditions, key causal learning algorithms suffer from known weaknesses: \pc{} becomes order-dependent~\citep{colombo2014order}, while \ges{} no longer guarantees finding the global optimum concerning its score \citep{chickering2002optimal}.
	
\section{Interventional Level}\label{sec:interventional_level}

The interventional level necessitates the introduction of a \textit{causal graph}, which generalizes the factorization example discussed above by providing a principled framework for encoding dependencies among variables in a compact and interpretable manner. The canonical factorization associated with the graph accommodates the modeling of external interventions, allowing them to be represented as modifications to one or more underlying factors.

\begin{definition}[Graph]
A \emph{graph} is a pair $\graph = (\GraphVertices, \GraphEdges)$, where:
\begin{itemize}
    \item \( \GraphVertices \) is a finite set of \emph{nodes} (or \emph{vertices}) $\GraphVertices=\{\GraphVertices^i\}_{i=1}^n$.
    \item \( E \subseteq V \times V \) is a set of \emph{directed edges}, where each edge $\edge{i,j} \in \GraphEdges$ represents a directed link from node $\GraphVertices^i$ to node $\GraphVertices^j$.
\end{itemize}
\end{definition}

\begin{definition}[Path]
Given a graph $\graph = (\GraphVertices, \GraphEdges)$, a \emph{path} is a finite sequence of nodes
$$
\GraphVertices^{i_1},\ \GraphVertices^{i_2},\ \dots,\ \GraphVertices^{i_k}
$$
such that for every consecutive pair $(\GraphVertices^{i_j}, \GraphVertices^{i_{j+1}})$ with $1 \leq j < k$, either 
$$
\edge{i_j,i_{j+1}} \in \GraphEdges \quad \text{or} \quad \edge{i_{j+1},i_j} \in \GraphEdges.
$$

A path is said to be \emph{directed} if all edges follow the same direction:
$$
\edge{i_j,i_{j+1}} \in \GraphEdges \quad \text{for all } 1 \leq j < k.
$$

A \emph{cycle} is a path where $i_1 = i_k$ and all intermediate nodes $\GraphVertices^{i_2}, \dots, \GraphVertices^{i_{k-1}}$ are distinct.
\end{definition}

\begin{definition}[\ac{DAG}]
A graph is a \ac{DAG} if it contains no directed cycles.
\end{definition}

In the context of probabilistic and causal modeling, each node corresponds to a random variable in the system, and each directed edge represents a direct causal influence between two variables. In this regard, some basic terminology can be helpful. Given a node \( \Xcomp[k] \in \graph \)\footnote{From this point onward, with a slight abuse of notation, we will use the same symbols to denote both the nodes in the graph and their corresponding random variables. This choice is intended to make the one-to-one correspondence between them more direct and easier to follow.}, we define:

\begin{itemize}
    \item The set of nodes with directed edges pointing into $ \Xcomp[i] $ is called its \textbf{parents}, denoted by $ \parents[i]_\graph $.
    
    \item The set of nodes that receive directed edges from $ \Xcomp[i] $ is called its \textbf{children}, denoted by $ \children[i]_\graph $.
    
    \item The set of nodes from which there exists a directed path into $ \Xcomp[i] $ is called its \textbf{ancestors}, denoted by $ \ancestors[i]_\graph $.
    
    \item The set of nodes that are reachable by a directed path starting from $ \Xcomp[i] $ is called its \textbf{descendants}, denoted by $ \descendants[i]_\graph $.
\end{itemize}

In what follows, we may omit the superscript \( \graph \) when the underlying \ac{DAG} is clear from context. 

At this point, we can characterize how much a graph \( \graph \) conforms to the independence properties inherent in the observational distribution. In this regard, the \emph{Markov property} is a foundational assumption in graphical modeling: a distribution $ \Prob $ is said to be \emph{Markov for} a \ac{DAG} $ \graph $ if the conditional independencies in $ \Prob $ are reflected in the structure of $ \graph $ and can be exploited for efficient computation and representation.

\begin{definition}
    The \ac{DAG} $ \graph $ is \textbf{$ \Prob $-compatible} if the following \textit{Local Markov Condition} holds for all nodes $ \Xcomp[i] \in \graph $:
\begin{equation}\label{eq:local_markov}
    \prob(\Xcomp[i] \mid \neg \descendants[i]_\graph, \parents[i]_\graph) 
    = \prob(\Xcomp[i] \mid \parents[i]_\graph),
\end{equation}
that is, each variable is independent of its non-descendants given its parents.
\end{definition} 
Under this condition, the joint distribution $ \prob(\mathbf{X}) $ admits the following \textit{canonical factorization}:
\begin{equation}\label{eq:canonical_factorization}
    \prob(\X) = \prod_{i=1}^{n} \prob(\Xcomp[i] \mid \parents[i]_\graph).
\end{equation}

This decomposition highlights the efficiency gained by modeling dependencies via a \ac{DAG}: rather than specifying the full joint distribution directly, one only needs to describe local conditional distributions over each variable given its parents.

\begin{remark}
Beyond its computational efficiency, the canonical factorization in~\cref{eq:canonical_factorization} carries significant cognitive and epistemological implications. Graphical models offer a way of representing dependencies that mirrors how humans organize knowledge: through modular, local interactions among components. Such representations align with cognitive constraints on memory and reasoning, allowing agents to store, interpret, and manipulate information efficiently \citep{pearl2011bayesian}.
\end{remark} 

\begin{remark}
   From a statistical and computational perspective, this modular structure enables scalable inference and learning by exploiting independence relations~\citep{koller2009probabilistic}. The factorized form reduces the memory requirements for storing a distribution and the computational complexity of querying it. 
\end{remark}

\begin{remark}
Although multiple \acp{DAG} can encode the same set of conditional independencies, one may prefer the graph that provides the most compact and structured account of the observed relations. In this spirit, the causal graph $ \graph $ may be interpreted, following an \textit{Occam's Razor-type principle}~\citep{walsh1979occam}, as the minimal description consistent with the observed data. This viewpoint naturally connects with algorithmic information theory, where simplicity and explanatory power are tied to measures such as Kolmogorov complexity \citep{Janzing2008CausalIU, PetersJanzingSchoelkopf17}.
\end{remark}

The $\Prob$-compatibility condition, as captured by \cref{eq:local_markov} and \cref{eq:canonical_factorization}, can also be characterized in terms of \emph{d-separation} relations. 

\begin{definition}[\textit{d}-separation]
    Given three sets of nodes $\mathbf{A}$, $\mathbf{B}$, and $\mathbf{C}$ in $\GraphVertices$, we say that $\mathbf{A}$ and $\mathbf{B}$ are \emph{d-separated by} $\mathbf{C}$ — denoted $ \mathbf{A} \ind_{\graph} \mathbf{B} \mid \mathbf{C} $ — if, for every $ A \in \mathbf{A} $, every $ B \in \mathbf{B} $, and every path $\gamma$ in $\graph$ connecting $A$ to $B$, the following conditions hold: at least one of (1) or (2) is satisfied, and (3) is satisfied\footnote{Throughout the following, we use the arrow shorthand $ \rightarrow C \rightarrow $, $ \leftarrow C \rightarrow $, and $ \rightarrow W \leftarrow $ to refer to local configurations along a path $\gamma$, without requiring that the variables involved be adjacent. These motifs describe the directions of edges at a node along $\gamma$, not the global structure of the path.}:

\begin{enumerate}
    \item There is a \emph{chain} that is blocked by $\mathbf{C}$; that is, a subpath of the form $ \rightarrow C \rightarrow $ or $ \leftarrow C \leftarrow $ appears in $\gamma$, with $C \in \mathbf{C}$;
    
    \item There is a \emph{fork} that is blocked by $\mathbf{C}$; that is, a subpath of the form $ \leftarrow C \rightarrow $ appears in $\gamma$, with $C \in \mathbf{C}$;
    
    \item No \emph{collider} along $\gamma$ is activated by $\mathbf{C}$; that is, whenever a subpath of the form $ \rightarrow W \leftarrow $ appears in $\gamma$, it holds that $(W \cup \descendants[W]_\graph) \cap \mathbf{C} = \emptyset$.
\end{enumerate}
\end{definition}

Using this notion, it can be shown that $ \graph $ is $ \Prob $-compatible if and only if the following \textbf{Global Markov Condition} holds:
\begin{equation}
    \mathbf{A} \ind_{\graph} \mathbf{B} \mid \mathbf{C} \quad \Rightarrow \quad \mathbf{A} \ind_{\Prob} \mathbf{B} \mid \mathbf{C}.
\end{equation}

In practice, we are often interested in designing an algorithm that, given $\Prob(\X)$ or a dataset sampled from $\Prob$, returns a graph $\graph$ that encodes \emph{exactly} the set of independence relations implied by $\Prob$. This motivates the introduction of the Faithfulness Assumption.

\begin{assumption}[Faithfulness]
    Given a \ac{DAG} $\graph$, a probability distribution $\Prob$ is said to be \graph-faithful if
    \begin{equation}
        \mathbf{A} \ind_{\graph} \mathbf{B} \mid \mathbf{C} \quad \Leftrightarrow \quad \mathbf{A} \ind_{\Prob} \mathbf{B} \mid \mathbf{C}.
    \end{equation}
\end{assumption}

\begin{remark}
    This assumption has a precise meaning not only in the relationship between $\graph$ and $\Prob$, but also in the internal structure of the causal system. To illustrate this point, we introduce a classical counterexample. Consider the \ac{DAG} $\graph$ depicted in Figure~\ref{fig:faithfulbroken}, defined by the following linear equations:
    \begin{equation}
        \begin{aligned}
            \Xcomp[1] &:= \varepsilon_1, \\
            \Xcomp[2] &:= \alpha \Xcomp[1] + \varepsilon_2, \\
            \Xcomp[3] &:= \Xcomp[1] + \beta \Xcomp[2] + \varepsilon_3,
        \end{aligned}
    \end{equation}
    where $\varepsilon_1, \varepsilon_2, \varepsilon_3$ are independent noise terms.

\begin{figure}[h]
    \centering
    \begin{tikzpicture}[
        scale=0.9,
        transform shape,
        vertex/.style={circle, draw, minimum size=0.9cm, inner sep=0pt}]
        
        % Nodes
        \node[vertex] (x1) at (0,3) {$\Xcomp[1]$};
        \node[vertex] (x2) at (-2,0) {$\Xcomp[2]$};
        \node[vertex] (x3) at (2,0) {$\Xcomp[3]$};
        
        % Edges with labels
        \draw[edge] (x1) -- (x2) node[scale=1.2,midway,sloped,above] {$\alpha$};
        \draw[edge] (x1) -- (x3) node[scale=1.2,midway,sloped,above] {$1$};
        \draw[edge] (x2) -- (x3) node[scale=1.2,midway,sloped,above] {$\beta$};
        
    \end{tikzpicture}
    \caption{\textbf{Violation of Faithfulness}. A simple \ac{DAG}. When $\alpha \beta = -1$, the causal effects from $\Xcomp[1]$ to $\Xcomp[3]$ cancel out, leading to a violation of Faithfulness.}
    \label{fig:faithfulbroken}
\end{figure}

    If we set $\alpha \beta = -1$, the two causal paths from $\Xcomp[1]$ to $\Xcomp[3]$ cancel each other out, resulting in $\Xcomp[1] \ind_{\Prob} \Xcomp[3]$ for the distribution $\Prob$ generated by $\graph$, even though $\Xcomp[1] \not\ind_{\graph} \Xcomp[3]$. In fact, a simpler \ac{DAG} structure, namely $\Xcomp[1] \rightarrow \Xcomp[2] \leftarrow \Xcomp[3]$, would be compatible with $\Prob$.

    The Causal Faithfulness assumption rules out such cancellations of causal effects due to unlikely parameter coincidences, which would otherwise hide causal relationships from independence tests. In this sense, it can be seen as analogous to Occam’s razor, favoring simpler explanations. Although it is often argued that non-faithful configurations form a set of measure zero in the parameter space and are thus negligible in a probabilistic sense, recent results indicate that, in finite-sample settings and when allowing for a margin of error in statistical tests, violations of Faithfulness can be much more problematic than traditionally assumed~\citep{uhler2013geometry}.
\end{remark}

It is worth stressing that a $ \Prob $-compatible \ac{DAG} $ \graph $ is simply a compact representation of the set of conditional independence relations in $ \Prob $, and does \emph{not necessarily} reflect the true causal structure of the system. This becomes apparent when recognizing that the set of $ \Prob $-compatible graphs is not in general a singleton, as will be discussed below.

\subsection{Markov Equivalence Class}\label{se:MEC}

Given $n$ random variables $\Xcomp[1], \dots, \Xcomp[n]$, the number of possible \acp{DAG} increases super-exponentially with $n$. However, the conditional independence relations encoded by a distribution $\Prob$ impose constraints on the space of admissible graphs. 

\begin{definition}
    The subset of \acp{DAG} of $n$ nodes that are $\Prob$-compatible is called the \ac{MEC} of $\Prob(X^1,...,X^n)$, and is denoted by $\text{\ac{MEC}}_\Prob$.
\end{definition}

The class \ac{MEC} can be fully characterized by two structural features of a \ac{DAG}: its \emph{skeleton} and its set of \emph{v-structures}.

\begin{definition}[Skeleton]
Given a \ac{DAG} $\graph = (\GraphVertices, \GraphEdges)$, its \emph{skeleton} is the undirected graph $\graph'$ obtained by replacing all directed edges $\edge{i,j} \in \GraphEdges$ with undirected ones. Formally,
$$
\graph' = (\GraphVertices, \{ \{\GraphVertices^i,\GraphVertices^j\} \mid \edge{i,j} \in \GraphEdges \ \text{or} \ \edge{j,i} \in \GraphEdges \}).
$$
\end{definition}

\begin{definition}[V-structure]
Let $\graph = (\GraphVertices, \GraphEdges)$ be a DAG. A triple of distinct nodes $(\GraphVertices^i, \GraphVertices^j, \GraphVertices^k)$ forms a \emph{v-structure} if the following two conditions hold:
\begin{itemize}
    \item Both edges $(\GraphVertices^i, \GraphVertices^j) \in \GraphEdges$ and $(\GraphVertices^k, \GraphVertices^j) \in \GraphEdges$;
    \item There is no edge between $\GraphVertices^i$ and $\GraphVertices^k$ in the skeleton $\graph'$, i.e., $\{ \GraphVertices^i, \GraphVertices^k \} \notin \graph'$.
\end{itemize}

In shorthand, we write this configuration as:
$$
\GraphVertices^i \rightarrow \GraphVertices^j \leftarrow \GraphVertices^k,
$$
where $\GraphVertices^j$ is called a \emph{collider}. The arrow notation is used informally to indicate the presence of the corresponding ordered pairs in $\GraphEdges$.
\end{definition}

With these definitions in place, the following result characterizes when two \acp{DAG} belong to the same \ac{MEC}.

\begin{theorem}[~\citet{v-structuretheorem}]
Two \acp{DAG} are in the same \ac{MEC} if and only if they share the same skeleton and set of v-structures.
\end{theorem}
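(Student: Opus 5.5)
Two DAGs lie in the same \ac{MEC} exactly when they impose the same conditional independence constraints. So I would reduce the statement to a purely graphical claim via the Global Markov Condition: $\graph_1$ and $\graph_2$ are Markov equivalent iff $\mathbf{A} \ind_{\graph_1} \mathbf{B} \mid \mathbf{C} \Leftrightarrow \mathbf{A} \ind_{\graph_2} \mathbf{B} \mid \mathbf{C}$ for all disjoint $\mathbf{A},\mathbf{B},\mathbf{C}$. One direction of this reduction is immediate: if the d-separation statements coincide, the sets of $\Prob$-compatible distributions coincide. For the converse I would use the standard fact that every DAG admits a faithful distribution (e.g.\ a generic linear Gaussian model). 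If a d-separation held in $\graph_1$ but not in $\graph_2$, a $\graph_2$-faithful distribution would be $\graph_2$-compatible but violate the Global Markov Condition for $\graph_1$. It therefore suffices to show that equal d-separations is equivalent to same skeleton plus same v-structures.

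\emph{Necessity.} First, $X^i$ and $X^j$ are adjacent in $\graph$ iff no set $\mathbf{C}$ d-separates them. A direct edge is an unblockable path. Conversely, if they are nonadjacent with, say, $X^j \notin \ancestors[i]$, then $\mathbf{C}=\parents[i]$ separates them: by the Local Markov Condition, $X^i$ is independent of its non-descendants given $\parents[i]$, and $X^j$ is a non-descendant of $X^i$ up to swapping roles. Hence the skeletons agree. Second, take a nonadjacent pair $X^i, X^k$ with common neighbour $X^j$. Choose the separating set $\mathbf{C}=\parents[i]\cup\parents[k]$ or $\ancestors$-based as above. If $X^j$ is a collider, then every separating set excludes $X^j$, since conditioning on it activates $X^i\to X^j\leftarrow X^k$. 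If $X^j$ is a non-collider, then every separating set must contain $X^j$, since otherwise the path $X^i - X^j - X^k$ is open. Thus the v-structures are determined by the d-separation relations.

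\emph{Sufficiency.} This is the main obstacle. Assume the two graphs share skeleton and v-structures, and suppose a path $\gamma$ is d-connecting given $\mathbf{C}$ in $\graph_1$. I would show there is a d-connecting path in $\graph_2$. The difficulty is that a collider on $\gamma$ in $\graph_1$ whose endpoints are adjacent (a shielded collider) may fail to be a collider in $\graph_2$, and activated colliders rely on descendants that may differ between the graphs. My plan is to take a \emph{minimal} d-connecting path, shortest in a suitable sense, and argue by induction on its length.

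\begin{itemize}
\item At an unshielded triple, collider status agrees in both graphs, so blocking behaviour agrees.
\item At a shielded triple $X^a - X^b - X^c$ with $X^a, X^c$ adjacent, I shortcut $\gamma$ through the edge $X^a - X^c$, which contradicts minimality, unless the shortcut is blocked or deactivates something. The cases require checking that $X^b$ or its descendant in $\mathbf{C}$ keeps the shortcut path connecting.
\item For an activated collider whose activation comes through a descendant $D\in\mathbf{C}$, I replace the detour to $D$ by concatenating directed paths, again choosing $D$ closest.
\end{itemize}

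I expect this case analysis to require the most care. If it becomes unwieldy, a fallback is the Chickering route: a sequence of covered edge reversals transforms $\graph_1$ into $\graph_2$, and each covered edge reversal visibly preserves the factorization in~\cref{eq:canonical_factorization}.
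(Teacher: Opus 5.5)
The paper states this theorem with a citation and gives no proof, so your argument has to stand on its own.

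\textbf{What works.} Your reduction to d-separation via faithful distributions is fine, and so is the necessity half, apart from two slips. First, if $X^j$ is not an ancestor of $X^i$, the separating set is $\mathrm{pa}(X^j)$, not $\mathrm{pa}(X^i)$. Second, ``parents d-separate a node from its non-descendants'' is a graphical fact. You can get it from the probabilistic Local Markov Condition only by applying that condition to a faithful distribution, or you can prove it directly on the graph.

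\textbf{The gap: sufficiency.} This direction carries all the content of the theorem, and you leave it as a list of cases to be checked. Moreover, the one concrete mechanism you offer, that minimality eliminates shielded triples by shortcutting, is false. Take $G_1$ with edges $A\to u$, $v\to u$, $v\to w$, $u\to w$, $w\to B$, and let $C=\{u\}$. The only d-connecting path is $A\to u\leftarrow v\to w\to B$. It contains the shielded triple $u - v - w$, and the shortcut $A\to u\to w\to B$ is blocked at $u\in C$.

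So shielded non-colliders survive on minimal paths, and these are exactly the triples that might become colliders in $G_2$. What your plan lacks is any use of the v-structure hypothesis at the \emph{neighbouring} triples. Here is how it enters. Suppose $v$ is a non-collider on a shortest d-connecting path, with $v\to u$ in $G_1$ but $u\to v\leftarrow w$ in $G_2$. Then the other path-neighbour $t$ of $u$ cannot satisfy $t\to u$ in $G_1$:
if $t,v$ are non-adjacent, $t\to u\leftarrow v$ is a v-structure of $G_1$ missing from $G_2$;
if they are adjacent, routing through $t - v$ gives a shorter d-connecting path.
It follows that the shortcut through $u - w$ is d-connecting, contradicting minimality, except in one configuration: $v\to w\leftarrow s$, $w\to u$ and $w\in C$, where $s$ follows $w$ on the path. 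That configuration again yields either a v-structure $v\to w\leftarrow s$ absent from $G_2$, or a shorter path through $v - s$.

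\textbf{Activated colliders.} The difficulty here is not choosing the closest descendant in $C$. It is proving that $z\in\mathrm{An}_{G_1}(C)$ implies $z\in\mathrm{An}_{G_2}(C)$. Take a shortest directed path $z\to d_1\to\cdots\to d_k\in C$ in $G_1$. Once $z\to d_1$ is known to survive in $G_2$, every later edge survives too: reversing one would create a v-structure $d_{i-2}\to d_{i-1}\leftarrow d_i$ absent from $G_1$, or give a shorter directed path. The first edge, however, can flip. A flip forces both path-neighbours of $z$ to be parents of $d_1$ in $G_1$. You can rule this out only if minimality is lexicographic in (path length, total distance of colliders to $C$), together with a separate shortening argument when $d_1$ already lies on the path.

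\textbf{The fallback.} It has the same status. The substance of Chickering's route is the lemma that if $G\neq G'$ share skeleton and v-structures, some edge $x\to y$ of $G$ that is reversed in $G'$ is covered in $G$. To find it, choose $y$ so that no proper $G$-ancestor of $y$ heads a reversed edge; then choose $x$ so that no other reversed tail into $y$ is a proper $G$-descendant of $x$. You also need to check that a covered reversal preserves acyclicity and the v-structures. Asserting that the sequence of covered reversals exists simply assumes the hard direction.

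As written, the proposal establishes only necessity.
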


The class $\text{\ac{MEC}}_\Prob$ is graphically represented through a \ac{PDAG} $\graph_\Prob$, obtainable via three simple steps: 
\begin{itemize}
    \item We start with only nodes without any edges.
    \item The skeleton of $\graph_P$ is the same of any \ac{DAG} in the \ac{MEC}. We add all the required undirected edges to the graph.
    \item If an edge of $\graph_P$ has the same direction for all the \ac{DAG}s in the $\text{\ac{MEC}}_\Prob$, we direct the edge in that direction. 
\end{itemize}
The new graph $\graph_P$ is referred to as the \ac{CPDAG} of $\Prob$. The Markov equivalence class, $\text{\ac{MEC}}_\Prob$, can be understood as the set of all alternative causal structures capable of generating the probability distribution $\Prob$. These structures are collectively represented by $\graph_P$, where the directed edges correspond to the shared cause-effect relations. 

\subsection{Causal Sufficiency}

The Markov Condition ensures that the independence relations represented in the graph must also be reflected in the associated probability distribution. In particular, for any \ac{DAG}, the root nodes must be mutually independent. Indeed, it is easy to verify that if $X$ and $Y$ are both roots, applying \cref{eq:local_markov} yields
\[
\prob(X,Y) = \prob(X,Y \mid \emptyset) = \prob(X)\prob(Y).
\]
This property can be verified across variables in the graph by recursively applying \cref{eq:local_markov}, considering the condition on the variables' parents. From a modeling perspective, this reflects an important assumption:

\begin{assumption}[Causal Sufficiency]\label{ass:causal_sufficiency}
    A set of variables $\X$ is said to be causally sufficient if, for every variable $C$ that acts as a direct common cause of two distinct variables $X_i, X_j \in \X$, it holds that $C \in \X$. Formally, for all $X_i \neq X_j \in \X$ and any variable $C$,  
    \[
    (C \to X_i) \wedge (C \to X_j) \quad \Rightarrow \quad C \in \X.
    \]
\end{assumption}

Causal Sufficiency requires that the available variables capture all the relevant factors needed to model the system, and that no latent (i.e., unmeasured or unmodeled) variables influence the observed behavior. If this assumption does not hold, we may only observe a subset $\mathbf{O}$ of a larger set of variables $\mathbf{V}$ governing the system, where $\mathbf{V}$ also contains latent variables $\bm{L}$.

\begin{figure}[t]
    \centering
    \begin{tikzpicture}[
        scale=0.9,
        transform shape,
        vertex/.style={circle, draw, minimum size=0.9cm, inner sep=0pt},
        latent/.style={circle, draw, dashed, minimum size=0.9cm, inner sep=0pt}]
        
        % Observed variables
        \node[vertex] (o1) at (-4,-1) {$O_1$};
        \node[vertex] (o2) at (-2,0) {$O_2$};
        \node[vertex] (o3) at (2,0) {$O_3$};
        \node[vertex] (o4) at (4,-1) {$O_4$};
        \node[vertex] (o5) at (0,-1) {$O_5$};
        
        % Latent variables
        \node[latent] (l1) at (-4,1) {$L_1$};
        \node[latent] (l2) at (0,1) {$L_2$};
        \node[latent] (l3) at (4,1) {$L_3$};
        
        % Edges from latent to observed
        \draw[edge] (l1) -- (o1);
        \draw[edge] (l1) -- (o2);
        \draw[edge] (l2) -- (o2);
        \draw[edge] (l2) -- (o3);
        \draw[edge] (l2) -- (o5);
        \draw[edge] (l3) -- (o3);
        \draw[edge] (l3) -- (o4);
        
        % Edges between observed variables
        \draw[edge] (o1) -- (o2);
        \draw[edge] (o2) -- (o3);
        \draw[edge] (o3) -- (o4);
        \draw[edge] (o2) -- (o5);
        \draw[edge] (o3) -- (o5);
        
    \end{tikzpicture}
    \caption{\textbf{Partially Observed Causal Graph}. Observed variables $O_i$ are influenced by latent variables $L_j$. Causal Sufficiency does not hold.}
    \label{fig:latent_observed_graph}
\end{figure}

In principle, since $\bm{L}$ are unknown, there exist infinitely many causal models that can explain a given distribution, each introducing different hidden variables and connecting the observed variables through distinct causal pathways. Without any restriction on the class of models considered, making meaningful assertions about the underlying causal structure is unfeasible. 

Following the principle of Occam's razor~\citep{walsh1979occam}, it is reasonable to prefer the simplest models consistent with the observed data. Theories that survive this form of selection are called \emph{minimal}~\citep{pearl2009causality}.

Nonetheless, Occam’s razor remains a heuristic guideline rather than guaranteeing that the selected model reflects the underlying mechanism. Its application, while reasonable, represents an assumption that is arbitrary in principle. Overreliance on simplicity can, in some cases, lead not just to simpler models but to \emph{oversimplified} ones that fail to capture essential features of the system.

Assumption~\ref{ass:causal_sufficiency} effectively enforces $\mathbf{V} = \mathbf{O}$, that is, it posits the ``simplest imaginable'' situation by excluding latent variables altogether. However, this assumption often comes at a high cost, as it can produce unrealistic models, undermining the rationale for applying Occam’s razor in the first place.

\subsection{Interventions}

In the following, we define the notion of a Causal Intervention.

\begin{definition}[Causal Intervention]\label{def:interventional_intervention_math_framework}
    Consider a system described by a causal graph $\graph$ and a corresponding probability distribution $\prob$, represented through the canonical factorization~\cref{eq:canonical_factorization}. 
    An intervention $\intervention$ on the variable $\Xcomp[i]$ is defined as the replacement of the conditional probability term $\prob(\Xcomp[i] \mid \parents[i])$ with a modified term $\hat{\prob}(\Xcomp[i] \mid \hat{\parents}^i)$, where $\hat{\parents}^i \subseteq \parents[i]$. 
    In other words, a controlled shift in the conditional distribution of $\Xcomp[i]$ is performed, altering its dependency structure.

    As a result, a new canonical factorization is obtained by replacing the corresponding factor in the original one:
    \begin{equation}\label{eq:interventional_canonical_factorization}
        \prob(\X) = \prod_{j=1}^{n} \prob(\Xcomp[j] \mid \parents[j]) 
        \overset{\intervention}{\longmapsto} 
        \prob^\intervention(\X) := \hat{\prob}(\Xcomp[i] \mid \hat{\parents}^i) \cdot \prod_{j \neq i} \prob(\Xcomp[j] \mid \parents[j]).
    \end{equation}
\end{definition}

\begin{definition}[Do-Intervention]
An intervention is called \emph{hard} if the value of $\Xcomp[i]$ is set to a specific constant $\hat{x}^i$, that is,
\[
\hat{\prob}(\Xcomp[i] \mid \hat{\parents}^i) = \delta_{\hat{X}^i}(\Xcomp[i]),
\]
where $\delta_{\hat{x}^i}$ denotes the Dirac delta distribution concentrated at $\hat{x}^i$.

Compared to the general definition of intervention, a hard intervention sets $\hat{\parents}^i = \emptyset$, eliminating any dependence of $\Xcomp[i]$ on its original parents in the causal model. As a consequence, the causal graph is modified: $\graph$ is transformed into a new graph $\graph^\intervention$ by removing all arrows pointing into $\Xcomp[i]$, as depicted in \cref{fig:do_intervention_graph}.
This type of intervention is commonly denoted by $do(\Xcomp[i] = \hat{x}^i)$ in the literature and is often referred to as a \emph{do-intervention}.
\begin{figure}[t]
    \centering
    \begin{tikzpicture}[
        scale=0.9,
        transform shape,
        vertex/.style={circle, draw, minimum size=0.9cm, inner sep=0pt},
        thickvertex/.style={circle, draw, very thick, minimum size=0.9cm, inner sep=0pt}]
        
        % Original graph
        \node[vertex] (x1) at (0,2) {$X_1$};
        \node[vertex] (x2) at (2,2) {$X_2$};
        \node[vertex] (x3) at (1,0) {$X_3$};
        
        \draw[edge] (x1) -- (x2);
        \draw[edge] (x1) -- (x3);
        \draw[edge] (x2) -- (x3);
        
        \node at (1,-1.2) {Original graph $\graph$};

        % Intervened graph
        \node[vertex] (x1i) at (5,2) {$X_1$};
        \node[thickvertex] (x2i) at (7,2) {$X_2$};
        \node[vertex] (x3i) at (6,0) {$X_3$};

        % Only some arrows now
        \draw[edge] (x1i) -- (x3i);
        \draw[edge] (x2i) -- (x3i);
        
        % Label intervention over X2i
        \node at (6.5,2.5) {$\intervention$};
        
        \node at (6,-1.2) {Intervened graph $\graph^\intervention$};
        
        % Indicate the do operation
        \node at (3.5,1) {$\xrightarrow{do(X_2 = x_2)}$};
        
    \end{tikzpicture}
    \caption{\textbf{Effect of a Do-Intervention on the Causal Graph}. The intervention $do(X_2 = x_2)$ removes all incoming arrows to $X_2$.}
    \label{fig:do_intervention_graph}
\end{figure}

\end{definition}

\begin{remark}[Modularity for Generalization]
When environmental conditions change, a machine trained in one environment cannot be expected to perform well unless the localized changes can be identified. For instance, in computer vision, distribution shifts may arise due to aberrations, geometric transformations, or compression artifacts. 

Problems of this nature are typically addressed through data augmentation, pre-training, or other strategies that introduce data biases. However, these solutions may not be sufficient, and some structural knowledge of the system may be required. Researchers in \ac{AI} have recognized this challenge, motivating the development of subtasks such as \emph{domain adaptation}, \emph{transfer learning}, and \emph{life-long learning} to improve robustness.

In this context, \emph{modularity} plays a central role by enabling certain forms of generalization~\citep{pearl2009causality,schol2}. Specifically, it is assumed that, starting from the canonical factorization~\cref{eq:canonical_factorization}, a single factor $\prob(\Xcomp[i] \mid \parents[i])$ can be replaced by a modified factor $\hat{\prob}(\Xcomp[i] \mid \hat{\parents}^i)$ without altering the other terms, yielding~\cref{eq:interventional_canonical_factorization}.

This process formalizes the functional and semantic independence of the mechanisms characterizing the system, allowing the identification of the specific mechanism that has changed, whether due to external perturbations or the application of the model to a different, though related, problem.

Consequently, the generalization achieved is \emph{not from one dataset to another sampled from the same distribution}, but rather \emph{from one problem to another}. This form of \emph{out-of-distribution generalization} is a key capability of \emph{intelligence}, which relies on interventions, domain shifts, and temporal structure to build modular representations. In this regard, causal models may represent an important step toward achieving more transparent and robust \ac{AI} systems~\citep{pearl_seven_tools,pearlMackenzie18}.
\end{remark}

\begin{definition}[Causal Effect]
    The \emph{causal effect} $\ce$ of an intervention $\intervention$ is defined as the difference in the expectation of the variables $\X$ under the probability distribution before and after the intervention. Formally, %we write
    \begin{equation}\label{eq:causal_effect_in_framework}
        \ce^\intervention := \expected{\X}_{\prob^\intervention} - \expected{\X}_{\prob} 
        = \int_{\Xspace} \X \prob^\intervention(\X) \, \mathrm{d}\X - \int_{\Xspace} \X \prob(\X) \, \mathrm{d}\X.
    \end{equation}
    The causal effect $\ce$ can also be characterized conditionally on some system variables. Conditioning on the event $\bm{C} = \bm{c}$, the object of interest becomes $\expected{\X \mid \bm{C} = \bm{c}}$, while the structure of the definition remains unchanged.
\end{definition}

\section{Counterfactual Level}\label{sec:counterfactual_level}

So far, the following perspective has been adopted: a causal model consists of a graph $\graph$ that specifies which variables $\parents[i]$ directly determine the value of each variable $\Xcomp[i]$, inducing a partial order on $\Xcomp[1], \Xcomp[2],\dots, \Xcomp[m]$. Moreover,~\cref{eq:interventional_canonical_factorization} allows the computation of \emph{interventional} distributions.

However, the underlying mechanisms of the system have remained implicit, represented through the conditional probability distributions $\prob(\Xcomp[i] \mid \parents[i])$. This approach suffices when we are interested in answering causal queries that can be expressed in probabilistic terms, placing us at the second level of Table~\ref{tab_causality}. 

In this sense, a causal graph can be viewed as an intermediate description between fully mechanistic models, such as differential equations in physics, and purely statistical models based on the i.i.d. assumption. 

A further step toward a more explicit modeling of causal mechanisms is provided by the language of \acp{SCM}~\citep{pearl2009causality, scholk, schol2, schol3}. In place of the probabilistic factor $\prob(\Xcomp[i] \mid \parents[i])$, we consider a relationships as
\begin{equation}\label{eq:scm_equations}
\Xcomp[i] := f_i(\parents[i], \Noise^i),
\end{equation}
which is interpreted as an assignment from right to left, rather than a symmetric equation. 
$\Noise^i$ denotes the exogenous factors influencing $\Xcomp[i]$, modeled as random noise due to our lack of control over them.

This finer-grained representation enables us to formalize interventional queries more richly, including cases that cannot be expressed purely in probabilistic terms~\citep{Halpern16}, as well as to address counterfactual queries~\citep{pearl2009causality,Halpern16}.

First, we distinguish between the \emph{endogenous} variables $\X$ of the system and the \emph{exogenous} variables $\Noise$, which represent the noise terms. If Assumption~\ref{ass:causal_sufficiency} holds, each exogenous variable $\Noise^k$ influences one and only one observable variable $\Xcomp[k]$, ensuring the following independence properties: $\parents(\Xcomp[k]) \ind \Noise_k$ and $\Noise_i \ind \Noise_j$ for all $i \neq j$, with $i,j,k \in \{1, \dots, n\}$. 

The graph $\graph$ can then be extended by including the exogenous nodes $\Noise^k$, with directed edges $\Noise^k \rightarrow \Xcomp[k]$ for each $k$. This extended graph explicitly encodes that each variable $\Xcomp[k]$ is generated by its parents $\parents[k]$ together with its exogenous noise $\Noise^k$, through the underlying mechanism given by~\cref{eq:scm_equations}.

Given the distribution 
$$
\prob(\vectorNoise) = \prod_{i=1}^n \prob(\Noise^i),
$$
the functional assignments $f_k$ induce a joint distribution $\prob(\X)$, which inherits structural properties from the topology of $\graph$. Intuitively, independent noise terms can be viewed as sources of randomness that propagate information through the graph $\graph$. Then, given~\cref{eq:canonical_factorization}, each factor $\prob(\Xcomp[k] \mid \parents[k])$ is computed from the mechanism $f_k$ combined with the marginal distribution $\prob(\Noise^k)$.

\begin{figure}[t]
    \centering
    \begin{tikzpicture}[
        scale=1,
        transform shape,
        vertex/.style={circle, draw, minimum size=0.8cm, inner sep=0pt},
        noise/.style={circle, draw, dashed, minimum size=0.7cm, inner sep=0pt}]
        
        % Original Graph
        \node[vertex] (x1) at (-2,2) {$\Xcomp[1]$};
        \node[vertex] (x2) at (-3.8,0) {$\Xcomp[2]$};
        \node[vertex] (x3) at (-0.2,0) {$\Xcomp[3]$};
        
        \draw[edge] (x1) -- (x2);
        \draw[edge] (x1) -- (x3);
        \draw[edge] (x2) -- (x3);
        
        \node at (-2,-1.2) {Original graph};
        
        % Extended Graph
        \node[vertex] (x1i) at (7,2) {$\Xcomp[1]$};
        \node[vertex] (x2i) at (5.2,0) {$\Xcomp[2]$};
        \node[vertex] (x3i) at (8.8,0) {$\Xcomp[3]$};
        
        % Noise nodes
        \node[noise] (n1) at (7,3.5) {$\Noise^1$};
        \node[noise] (n2) at (5.2,1.5) {$\Noise^2$};
        \node[noise] (n3) at (8.8,1.5) {$\Noise^3$};
        
        % Original edges
        \draw[edge] (x1i) -- (x2i);
        \draw[edge] (x1i) -- (x3i);
        \draw[edge] (x2i) -- (x3i);
        
        % Noise edges
        \draw[edge, dashed] (n1) -- (x1i);
        \draw[edge, dashed] (n2) -- (x2i);
        \draw[edge, dashed] (n3) -- (x3i);
        
        \node at (7,-1.2) {Extended graph};
        
        % Arrow between graphs
        \node at (2.5,1) {$\xrightarrow{\text{\ac{SCM} representation}}$};
        
    \end{tikzpicture}
    \caption{\textbf{Extension of a Causal Graph}. The original graph $\graph$ is extended by explicitly including exogenous noise variables $\Noise^i$, each influencing its corresponding endogenous variable $\Xcomp[i]$.}
    \label{fig:extended_graph}
\end{figure}

The advantage of this formulation lies in the fact that the object of interest is no longer the probability distribution $\prob(\X)$, but rather the random variable $\X$ itself. Specifically, interpreting the noises $\vectorNoise$ as random variables, we say that $\X$ is a solution of the \ac{SCM} $\scm[{\rmF, \vectorNoise}]$, where $\rmF$ is the set of structural equations $\rmF := \{\Xcompalt[i] := f_i(\parents[i], \vectorNoise^i)\}_{i=1}^{\sizex}$, if $\X$ satisfies all equations in $\rmF$. 

This perspective motivates a new level of causal query: assuming that a realization $\x$ of $\X$ has already occurred through a specific realization of the noises $\vectornoise$ and the application of the assignments in $\rmF$, in principle we can infer the values of $\vectornoise$ and answer questions of the form: \quotes{What would have happened if, prior to generating this realization, we had applied a certain intervention?}. 
In other words, we can formulate counterfactual queries. 
It is important to note a subtle point: the uncertainty of the model is initially captured by the probability distribution over $\vectorNoise$, but once a realization has been produced, the entire system of mechanisms becomes deterministic. \textit{There is no longer any uncertainty about the outcome of the process}. 

\subsection{Interventions and Counterfactuals}

As the final and most expressive level, \acp{SCM} are capable of answering all queries addressed at the previous levels. They can describe the observational distribution $\prob(\X)$, answer \emph{interventional queries} about the effects of external manipulations, and go further to enable \emph{counterfactual queries} to assess what would have happened to a particular observation if one observed variable $\Xcompalt[i]$ had taken a different value.

To this end, we reformulate the notion of intervention introduced in \cref{def:interventional_intervention_math_framework} within the language of \acp{SCM}.

\begin{definition}[Intervention - \acp{SCM}]\label{def:intervention_scm_math_framework}
An intervention $\intervention$ on an \ac{SCM} $\scm$ yields a new \ac{SCM} $\scm^\intervention$ in which one or more mechanisms $f_i(\parents[i], \vectorNoise^i)$ are replaced by modified mechanisms
\[
\hat{f}_i(\hat{\parents}^{i}, \hat{\vectorNoise}^{i}),
\]
where $\hat{\parents}^{i} \subseteq \parents[i]$ and $\hat{\vectorNoise}^{i} \subseteq \vectorNoise^i$.

We refer to a \textit{hard intervention} when $f_i$ is replaced by a constant value $\alpha^{i}$, and $\hat{\parents}^{i} = \hat{\vectorNoise}^{i} = \varnothing$. This type of intervention is denoted by the do-operator $do(\Xcompalt[i] = \alpha^{i})$. Conversely, we refer to a \textit{soft intervention} when at least one argument of $f_i$ is retained.
\end{definition}

\begin{definition}[Causal Effect - \acp{SCM}]\label{def:causal_effect_scm_math_framework}
The causal effect $\ce$ of an intervention $\intervention$ is evaluated as the difference between the values of the observable variables before and after the intervention, that is,
\begin{equation}\label{eq:causal_effect_scm_math_framework}
\ce^\intervention := \expected{\X^\intervention - \X},
\end{equation}
where $\X$ and $\X^\intervention$ are the solutions of $\scm$ and $\scm^\intervention$, respectively.
\end{definition}

\begin{definition}[Counterfactual Realization and Causal Effect]
    Given a realization $(\vectornoise, \x)$ produced by an \ac{SCM} $\scm$ through its structural equations $\rmF$, the \emph{counterfactual realization} $\x^\intervention$ is defined as the outcome of the intervened \ac{SCM} $\scm^\intervention$, evaluated under the same noise realization $\vectornoise$. 

    Moreover, the \emph{counterfactual causal effect} of $\intervention$ is the deterministic quantity
    $$
    \x^\intervention - \x.
    $$
\end{definition}

\begin{remark}
The average causal effect $\ce$ and the counterfactual causal effect are two fundamentally distinct quantities, and they can yield very different values. Consider, for example, the following toy model that describes the effect of a drug on a patient. The system consists of only two variables. The first is a binary variable that determines whether the treatment was administered. The second represents the patient's health status, which is also binary (good or poor health). The model can be described by the following structural equations:
\begin{equation}
    \begin{split}
        T &:= \Noise^T, \quad &\Noise^T \sim \text{Bernoulli}(0.5),\\
        S &:= \Noise^S T + (1-\Noise^S)(1-T), \quad &\Noise^S \sim \text{Bernoulli}(0.9).
    \end{split}
\end{equation}

The final health status of the patient depends on the treatment, and it is highly probable (90\%) that the treatment has a positive effect. However, there is a small probability (10\%) that the treatment has a negative effect, e.g., due to fragile or hidden health conditions. We can compute the expected value of $S$ as a function of $T$, obtaining
$$
\expected{S} = T - 0.1.
$$
Without intervention, we have $\expected{S} = 0.5 - 0.1 = 0.4$. If we intervene to ensure that the drug is always administered, that is, if we apply $do(T=1)$, we obtain $\expected{S}^{do(T=1)} = 1 - 0.1 = 0.9$. Consequently, the average causal effect is $\ce = 0.5$. On average, the drug gives an effective outcome in improving the patient's health. However, the situation changes for a specific patient if it turns out that $\Noise^S = 0$. In this case, evaluating the counterfactual realizations by fixing $\Noise^S=0$, we obtain $\x^{do(T=0)} = 1$ and $\x^{do(T=1)} = 0$. In other words, for this particular patient, the treatment is harmful.
\end{remark}

\begin{remark}
The previous example further clarifies an important point. Counterfactual queries refer to an alternative scenario to the one observed, and can only be evaluated \emph{after} a specific realization has occurred. In this sense, their utility is limited to very specific contexts. Although counterfactuals represent a powerful tool, this condition imposes a limitation that must be considered. If we were to ask whether the doctor made the right decision in administering the drug, the answer would be positive: \textit{based on the information available at the time of the decision}, administering the drug was the best alternative, as the expected causal effect $\ce$ was positive. Moreover, the counterfactual nature of the problem makes evaluating the quality of the model highly problematic. By definition, any estimate of a counterfactual outcome can not be directly verified or falsified, as we have no access to a ground-truth realization of the alternative scenario.
\end{remark}

\subsection{Modularity and Invariance}

The \acp{SCM} formulation relies on \textbf{invariance assumptions}, implicitly encoded in their definition; specifically, through the modeling of the system using a set of separate structural equations collected in $\rmF$. 
These assumptions do not concern the mathematical objects within the \ac{SCM} per se, but rather the expressiveness of the modeling framework itself, i.e., it assumes that the \ac{SCM} is sufficiently expressive to capture the underlying phenomena faithfully.

\begin{assumption}[Strict Functional Mechanisms]
Each mechanism $f_k(\parents[k], \Noise^k)$ depends only on the values of its arguments, not on any higher-order properties such as their mean or variance. In other words, $f_k$ is a function of the specific realizations of $\parents[k]$ and $\Noise^k$.

This assumption does not universally apply and may be invalid in a range of realistic settings. Consider, for example, a public competition in which the top 10 most competent participants are selected. In such a setting, the outcome for each participant depends not only on their attributes but also on the attributes of others. Consequently, this scenario cannot be adequately modeled within the \ac{SCM} framework, and it would lead to paradoxical results: for instance, if every participant's score were increased by 5 points, the relative ranking and thus the final selection would remain unchanged. 
\end{assumption}

\begin{assumption}[Modularity]\label{ass:modularity}
The overall behavior of the system depends only on the aggregate behavior of its individual mechanisms, which are assumed not to interact with each other. In other words, each mechanism $f_k$ represents an independent module that describes a subsystem isolated from the rest.

Modularity often provides a way to represent a system in a compressed form, excluding higher-order interactions that are either unknown or uncontrollable, and should therefore be interpreted in an approximate sense. In this view, the \ac{SCM} should be understood as an \textit{effective model}: sufficiently accurate for the intended purpose, but necessarily simplified relative to the true complexity of the underlying phenomenon.
\end{assumption}

Nonetheless, it is important to note that Assumption~\ref{ass:modularity} may not always hold in practice. For instance, in a biological system in which $\Xcomp[1]$ represents the presence of a virus and $\Xcomp[2]$ denotes the immune response of a patient, one might initially model their relationship as
\[
\Xcomp[2] := f_2(\Xcomp[1], \Noise^2),
\]
where $\Noise^2$ captures individual variation. 
However, the immune system may autonomously adapt over time, modifying its internal response mechanism to the virus without external intervention. This change implies that $f_2$ evolves as a function of past exposures or internal regulatory feedback, violating the assumption that each mechanism operates in isolation. Such endogenous adaptation challenges the modular view and reflects the limitations of treating mechanisms as entirely independent.

\begin{assumption}[Interventional Invariance]
After an intervention, all mechanisms not directly affected by it are assumed to remain unchanged, i.e., \textit{the principle of modularity extends to the interventional level}. This assumption applies to the system itself, not to the \ac{SCM}, where it is just a rule for how interventions are translated into model modifications.

Violations of this assumption can occur in systems that adapt to external manipulations. For instance, consider the biological scenario discussed above and suppose we intervene to eliminate the virus, setting $\Xcomp[1] := 0$ via vaccination. The virus may adapt by developing long-term immunity to the vaccine. This constitutes a change in a mechanism that was not directly targeted by the intervention, violating the principle of interventional invariance.
\end{assumption}

\cleardoublepage
\begin{comment}\ctparttext{You can put some informational part preamble text here.
Illo principalmente su nos. Non message \emph{occidental} angloromanic
da. Debitas effortio simplificate sia se, auxiliar summarios da que,
se avantiate publicationes via. Pan in terra summarios, capital
interlingua se que. Al via multo esser specimen, campo responder que
da. Le usate medical addresses pro, europa origine sanctificate nos se.}
\end{comment}
\part{Towards Realistic Algorithmic Recourse}\label{pt:algorithmic_recourse}
\chapter{Algorithmic Recourse}\label{ch:algo_recourse}
\ac{AR} refers to the class of problems concerned with determining what changes an individual should make to improve their outcome. Depending on the specific context and application domain, this problem may be known by different names. In this chapter, we present two of the most well-known formulations: \textit{Contrastive Explanations} and \textit{Consequential Recommendations}\citep{karimi2020survey}.

Imagine the following situation: Mario, a 38-year-old homeowner, applies for a personal loan to fund an extensive home renovation project. The financial institution, using a binary classification system, evaluates the application and ultimately decides to reject it based on the applicant's details, such as income or credit history. Faced with this outcome, Mario might naturally ask the following questions:  

\begin{enumerate}
    \item[\textbf{Q1}] What factors led to my loan application being rejected?  
    \item[\textbf{Q2}] How can I improve my chances of getting approved for a loan in the future?  
\end{enumerate}  

\paragraph{Contrastive Explanations}Q1 aims at receiving an explanation for the outcome, \ie, an understanding of why the application was declined. One particularly intuitive approach involves the use of \ac{CEx}.  

These explanations provide clarity by offering \emph{counterfactual scenarios} that demonstrate how the outcome could have been different if certain attributes or circumstances were modified. Such examples might include either real profiles of successful applicants or synthetic, hypothetical scenarios where the decision would have resulted in approval.
For instance, Mario might receive feedback such as:  
\begin{itemize}
    \item If your annual income were \$10,000 higher, your loan would have been approved.  
    \item If your credit score were 50 points higher, you would have met the approval threshold.  
\end{itemize}  

\paragraph{Consequential Recommendations}  
Q2 asks for a set of actions to be performed to change the person's status, to achieve a favourable outcome.
Typically, \ac{CRe} are tailored to the specific subject and presented in the form of actionable advice. Examples of \ac{CRe} include:

\begin{itemize}
    \item If you were willing to extend the mortgage term from 10 to 20 years, we could approve your loan.
    \item If you managed to secure a 20\% salary increase, you would become eligible.
\end{itemize}

\paragraph{Similarities and Differences}
It is important to specify that \ac{CEx} and \ac{CRe} are conceptually distinct and can yield significantly different responses under certain conditions. 
\begin{example}A student takes the Calculus 101 exam, which consists of 10 multiple-choice questions. The student answers only 5 questions correctly and thus fails the exam. A \ac{CEx} in this context might be something like: \quotes{if you had selected answer 'A' for question 3 and 'B' for question 6, you would have achieved a score of 7, passing the exam.} However, if the student were to retake the exam the following week, this type of information would hold little relevance. In contrast, a \ac{CRe} might highlight the necessity of improving understanding of the sections on derivatives and integrals covered in the course.
\end{example}

From a mathematical point of view, these concepts can be understood based on two main factors that give them clear meaning and highlight their differences. The first factor is \textit{the presence of random elements that can affect the values of the variables involved} and, as a result, the final outcome (for example, a student's mental state on the day of the test). The second factor is \textit{the way the variables in the system are connected and influence each other}. This aspect will be addressed in its more advanced form in \cref{ch:timecar}.

In the next section, we will introduce the mathematical formalism required to analyze \ac{AR}. We will start with its simplest definition and gradually refine it as the chapters progress.

\section{Preliminaries}\label{sec:algo_recourse_preliminaries}

\begin{definition}
Given a binary predictive model \( h: \Xspace \to \{0,1\} \), where \( \Xspace = \Xspace^1 \times \dots \times \Xspace^\sizex \) represents the feature space of \( \X \), consider the original (factual) instance of interest \( \xf \). We define \( \xcf \) as a counterfactual instance of \( \xf \) if \( h(\xf) \neq h(\xcf) \). 
\end{definition}

In most cases, we are interested in scenarios where \( h(\xf) = 0 \). Here, the model outcome assigning \( 0 \) is interpreted as "unfavorable," while an outcome of \( 1 \) is interpreted as "favorable". 

\paragraph{Minimal Cost}
While there may exist infinitely many alternative counterfactual instances, some are preferable to others. To formalize this preference, we introduce a notion of cost, defined as a function 
\begin{equation}\label{eq:cfcost}
\begin{split}
    \costfunction : \Xspace \times \Xspace &\to \R^+ \\
    \xf,\xcf &\mapsto \costfunction(\xf,\xcf) 
\end{split}
\end{equation}

\begin{definition}[\acf{CEx}]\label{def:contrastive_explanation}
\ac{CEx} consists in producing the \textit{optimal counterfactual instance} by optimizing cost with respect to the space of instances, that is  
\begin{equation}\label{eq:optimal_cf}
    \xcf^* := \argmin_{\xcf}\left[ \costfunction(\xf,\xcf)\right] \quad s.t. \quad h(\xcf)\neq h(\xf).   
\end{equation}  
\end{definition}

%The problem can also be formulated by considering an action applied to $\xf$ with the objective of reaching $\xcf^*$. \textit{An initial observation is that multiple actions may result in the same overall change}. Moreover, the specific effects of each action depend on the underlying structure of the system, particularly \textit{the causal relationships between the variables involved}.
The simplest case to consider is one in which the features are independent of each other, as illustrated in \cref{fig:independent-features}. Under this assumption, there exists only a single possible action to transition from $\xf$ to $\xcf$, namely, $\delta = \xcf - \xf$. In other words, the required action involves adding to each feature of $\xf$ \textit{the exact amount needed to match the corresponding value in $\xcf$}. \Cref{eq:optimal_cf} then becomes

\begin{equation}\label{eq:optimal_delta}
    \delta^* := \argmin_{\delta}\left[ \costfunction(\xf,\xf+\delta)\right] \quad s.t. \quad h(\xf + \delta)\neq h(\xf). 
\end{equation}

\begin{figure}
    \centering
    \begin{tikzpicture}[scale=1,
            transform shape,
            vertex/.style={circle, draw, minimum size=0.8cm, inner sep=0pt, font=\small}]
    \node[vertex] (x1) at (-2, 0) {$\Xcomp[1]$};
    \node[vertex] (x2) at (-1, 0) {$\Xcomp[2]$};
    \node[vertex] (x3) at (0, 0) {$\Xcomp[3]$};
    \node[vertex] (x4) at (1, 0) {$\Xcomp[4]$};
    \node[vertex] (x5) at (2, 0) {$\Xcomp[5]$};
    \node[vertex] (y) at (0, -2) {$Y$};
    
    \draw[edge] (x1) to (y);
    \draw[edge] (x2) to (y);
    \draw[edge] (x3) to (y);
    \draw[edge] (x4) to (y);
    \draw[edge] (x5) to (y);
    \end{tikzpicture}
    \caption{\textbf{Independent Features.} The simplest non-trivial causal model relevant to the problem of Algorithmic Recourse: the features are independent of one another, and each contributes individually to the final outcome.}
    \label{fig:independent-features}
\end{figure}

\section{First Challenge: Choosing Cost}\label{sec:algo_recourse_cost_challenges_1}

So far, we have defined \ac{AR} as a constrained optimization problem over a given $\costfunction$ function. However, this $\costfunction$ has been treated as an abstract quantity, leaving its specific form open to choice. In this section, we will address this point by presenting some of the most common choices in the literature and analyzing their limitations. \Cref{ch:cost-subjective} will further explore this topic from the user's perspective, arguing for the necessity of moving beyond a purely algorithmic approach, which may be insufficient to capture the full complexity of the problem at hand.

\subsection{Common choices}\label{sec:cost_choices}

The most direct, simple, and widely used approach is to select a cost function based on some distance in the feature space, typically some $\normlp$ norm, defined by
$$\norm{\x}_p:=((x^1)^p+\dots+(x^d)^p)^{1/p}.$$
Common examples include the Euclidean distance, characterized by the $\normltwo$ norm and defined as
$$\distance(\x,\y)=\norm{\x-\y}_2,$$
the Manhattan distance, induced by the $\normlone$ norm, the one induced by the $\normlzero$ norm, and the $\normlinfinity$ norm. \citet{KarimiBBV20}, for example, propose a linear combination of these. Each of these norms reflects a specific preference regarding the type of counterfactuals to be generated and emphasizes different optimality criteria. For instance, $\normlzero$ acts as a feature change counter. Consequently, its minimization promotes \textit{sparsity} in the modified features. 

Weights are commonly applied to the distance metric to address the varying scales of features, especially when the dataset is not pre-normalized \citep{Wachter2017}. Examining the broader field of counterfactual generation beyond the specific recourse problem, the selection of distance measures remains largely consistent and predominantly relies on $\normlp$ norms \citep{guidotti2022counterfactual}.  
\subsection{Constraints and Limitations}
To make $\costfunction$ functions as realistic and useful as possible, recent work in the field has focused on incorporating additional constraints (for a comprehensive review see  \citet{verma2024counterfactual,guidotti2022counterfactual}). Key examples include:
\begin{itemize}
\item \textit{Actionability}: Only allow changes to features that can be realistically modified (e.g., job skills, not date of birth or race).
\item \textit{Plausibility}: Ensure counterfactuals follow realistic data patterns (e.g., avoid suggesting $1M$ salary for entry-level positions).
\item \textit{Sparsity}: Prefer solutions with fewer feature changes (e.g., modifying 1-2 attributes rather than many).
\item \textit{Diversity}: When generating multiple counterfactuals, ensure they represent meaningfully different alternatives.
\item \textit{Robustness}: Counterfactuals should remain valid under model uncertainty or small input perturbations.
\end{itemize}

Intuitively, we can view these constraints as additional penalty terms added to the original cost function. Formally, given a set of constraint costs $\{c_i\}_{i=1}^n$ (where each $c_i$ represents the penalty for violating a constraint), the total cost becomes:
$$
\costfunction = \costfunction_{\text{original}} + \sum_{i=1}^n c_i.
$$
For \textit{hard constraints} (those that must be absolutely satisfied), the corresponding $c_i$ is set to $\infty$, effectively prohibiting any solution that violates them.

Despite this refinement process, certain limitations inherent in using distance metrics as a foundation persist and cannot be easily resolved through additional penalty terms. In this section, we begin by examining the simplest issues: \textit{translational invariance} and \textit{symmetry}. Later in our work, we will address more complex challenges including \textit{causality}  (\cref{sec:dependent_variables}), \textit{user dependence} (\cref{ch:cost-subjective}), and \textit{time unawareness} (\cref{pt:time_aware_causality}).

\subsection{Limitations}
\paragraph{Translational Invariance} A key limitation of distance metrics induced by $\normlp$ norms is their \textit{translational invariance} property, expressed as:
$$\distance(\x+\delta,\y+\delta) = \distance(\x,\y) \quad \forall \x,\y,\delta.$$

\begin{example}
Consider two individuals, $\x$ and $\y$, who receive identical recommendations to qualify for a bank loan: \quotes{increase your monthly salary by \$700}. However, $\x$ currently earns \$8,000, while $\y$ earns \$1,200. Intuitively, fulfilling this recommendation requires substantially less relative effort for $\x$ than for $\y$, as the difficulty depends critically on the initial salary - a factor not reflected in translationally invariant distance metrics.
\end{example}

To address this limitation, \citet{ustun2019} proposes a percentile-based measure defined as:
\begin{equation*}
    \costfunction(\x,\y) = |\Prob(\x)-\Prob(\y)|,
\end{equation*}
where $\Prob$ represents the cumulative probability function of the target population.  

\paragraph{Symmetry}

Nevertheless, this approach still presents significant limitations, particularly regarding the \textit{symmetry} of $\costfunction$:
\begin{equation}\label{eq:cost_simmetry}
    \costfunction(\x,\y) = \costfunction(\y,\x).
\end{equation}
This property proves undesirable in nearly all realistic scenarios. Typically, at least at a basic analytical level, we assign each variable a directionality: movement in one direction improves the variable's state, while movement in the opposite direction worsens it. Intuitively, we expect the effort required for improvement to always exceed that needed for deterioration. Note that $\normlp$ and percentile shifts are both symmetric.

\section{Second Challenge: Dependent Variables}\label{sec:dependent_variables}
Variables in a system are not always independent. In most real-world cases, they interact with each other, meaning that a single change can be beneficial for one variable while simultaneously harmful for another. In this section, we formalize this problem by introducing the \emph{causal formulation of \ac{AR}}.  

\begin{example}[The Winemaker's Dilemma]
\begin{figure}[h!]
\centering
\begin{minipage}{0.5\textwidth}
    \centering
    \begin{tikzpicture}[scale=1.1, transform shape, vertex/.style={rectangle, draw, minimum size=0.9cm, inner sep=4pt, font=\small}]
        \node[vertex] (quality) at (0, 1) {Quality};
        \node[vertex] (quantity) at (0, -1) {Quantity};
        \node[vertex] (earnings) at (3,0) {Earnings};
        
        \draw[edge, latex-latex] (quality) -- (quantity);  % Bidirectional edge
        \draw[edge] (quality) -- (earnings);
        \draw[edge] (quantity) -- (earnings);
    \end{tikzpicture}
\end{minipage}%
\begin{minipage}{0.5\textwidth}
    \centering
    \includegraphics[width=0.7\textwidth]{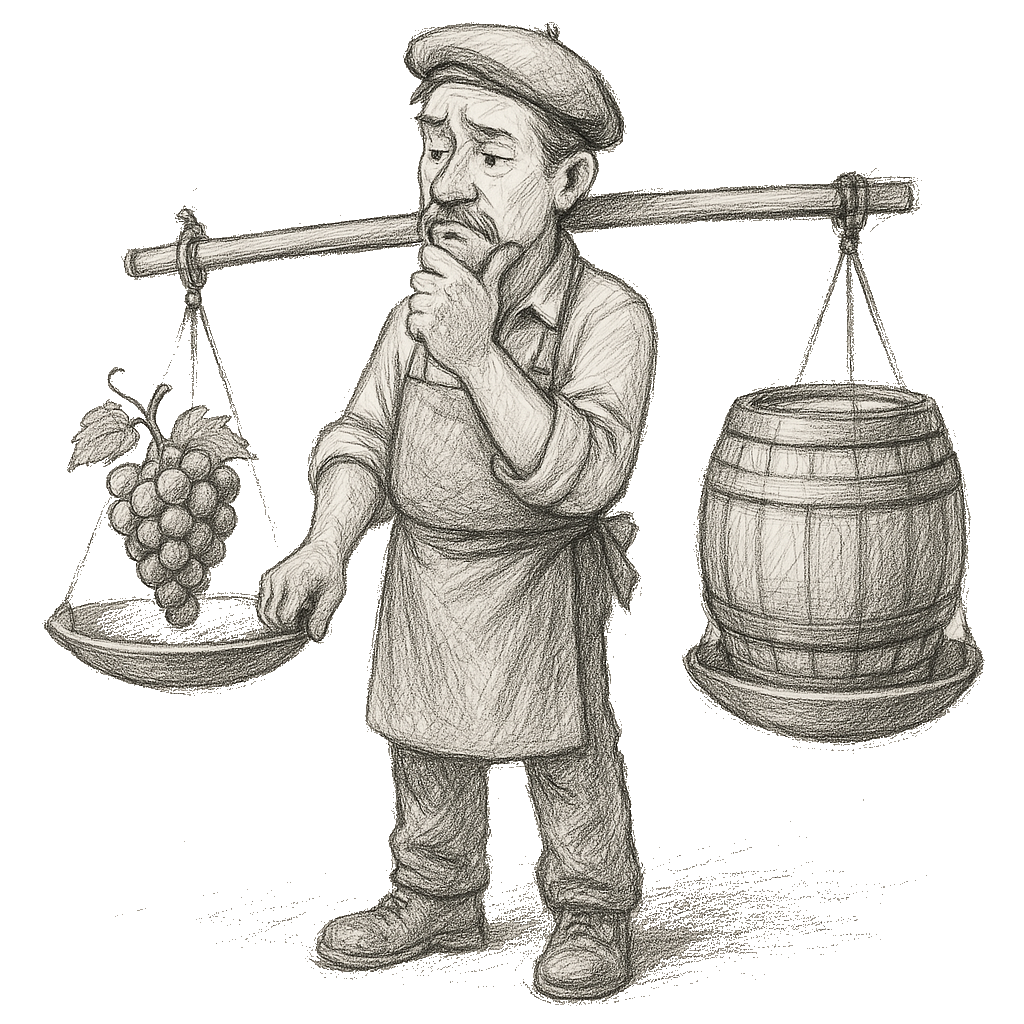}
\end{minipage}
\label{fig:winemaker_graph}
\end{figure}
In winemaking, there is a trade-off between grape quality and yield. Focusing on high-quality grapes often means reducing yield, as vines are pruned to enhance the flavor of fewer fruits. This improves the wine's quality but lowers production. Conversely, maximizing yield (for instance, through the use of fertilizers) can result in less flavorful grapes, leading to lower-quality wine. The outcome of this trade-off is total earnings, which depend on both the price of high-quality wine and the volume produced, balancing the benefits of quality with the costs of reduced yield. \emph{What actions should the winemaker take to maximize their profit?}
\end{example}

\subsection{Causality in Algorithmic Recourse}\label{sec:Causal_AlgorithmicRecourse}

Assuming independence between variables, the formulation of Contrastive Explanation through \cref{eq:optimal_cf} and \cref{eq:optimal_delta} provided a sufficiently clear protocol for modifying the factual instance $\xf$ into the optimal counterfactual $\xcf^*$. Since altering a single variable had no effect on the others, the transformation from one state to the other was straightforward: one simply needed to examine the component-wise differences and sequentially add the missing quantity, one variable at a time.  

Without this assumption, the process becomes more complex: \textit{as soon as the first modification is made, other variables may change as well}. Even if one compensates for these unintended drifts through further adjustments, new drifts may arise, leading to a cascading effect. In other words, without understanding the mechanisms governing the dependencies within the system, reaching $\xcf^*$ may no longer be straightforward.  

Conversely, knowledge of these mechanisms could not only ensure that $\xcf^*$ is reached but might even simplify the process, potentially requiring fewer modifications. In other words, one could leverage this knowledge to their advantage. Consider \cref{fig:cex_vs_causal_plot}. 
\begin{figure}[h!]
    \centering
    \includegraphics[width=1\linewidth]{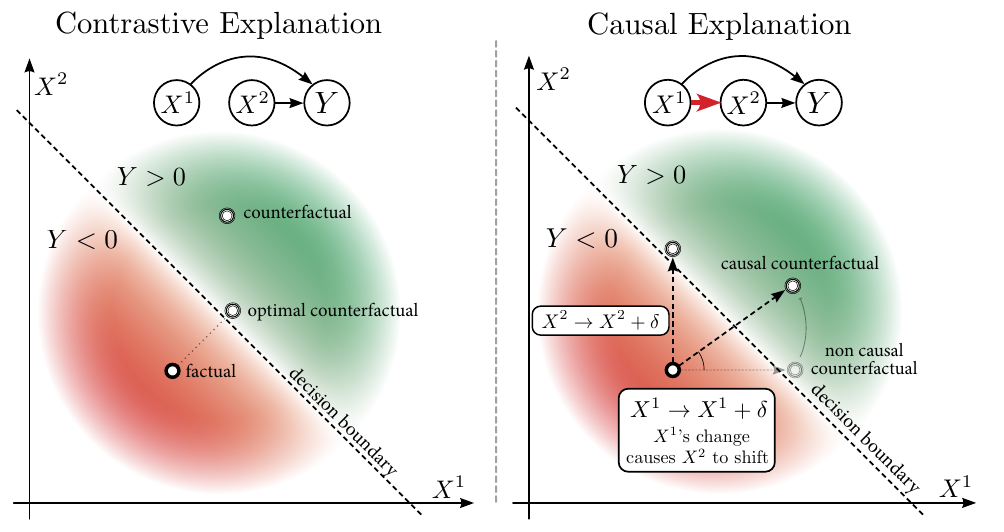}
    \caption{\textbf{Causal vs non-Causal \ac{AR}.} (Left) \textit{Classic Algorithmic Recourse} setting. (Right) \textit{Causal Algorithmic Recourse setting}, where variables can have dependencies.}
    \label{fig:cex_vs_causal_plot}
\end{figure}

The \textit{Contrastive Explanations} setting (Left) relies on the following assumptions:  
\begin{enumerate}
    \item The variables $\Xcomp[i]$ are mutually independent.  
    \item The final decision is determined only by the sign of $Y$. 
    \item Every point can be reached, meaning all features are actionable.  
    \item The optimal $\xcf^*$ minimizes a given distance metric (e.g., Euclidean norm).  
\end{enumerate}  

Introducing dependencies between variables leads to what we call the \textit{causal setting}, as these dependencies can be modeled by a \ac{SCM}. The key differences are:  
\begin{enumerate}
    \item The variables are no longer independent, as indicated by the red arrow between $\Xcomp[1]$ and $\Xcomp[2]$.  
    \item The cost of an action depends only on changes to the causes, not their effects. The $\costfunction$ function inherits the structure of the causal model. For example, if someone changes their job, their salary is likely to change as a consequence, without incurring additional cost.  
\end{enumerate}  

Let us analyze the two actions shown in the figure on the right. 
\begin{itemize}
    \item The first action consists of increasing $\Xcomp[1]$ by a quantity $\delta$. Since $\Xcomp[1]$ is a cause of $\Xcomp[2]$ and has a positive effect on it, increasing $\Xcomp[1]$ also increases $\Xcomp[2]$. In a non-causal setting, this intervention would result in a horizontal shift, producing the \textit{non-causal counterfactual}. In the causal setting, however, the shift in $\Xcomp[2]$ modifies the final state, leading to a point much farther from the decision boundary (a more positive outcome). Similarly, a weaker intervention in the same direction, that may have led to a negative outcome in the non-causal model, could result in a positive outcome in the causal model, despite the estimated cost being the same.
    \item The second action, instead, increases the value of $\Xcomp[2]$, which is not a cause of $\Xcomp[1]$. Consequently, the counterfactual obtained in both settings remains exactly the same. The knowledge of the causal structure informs us that, in this example, acting on $\Xcomp[1]$ is preferable. 
\end{itemize}  

In the causal setting, knowing the dependencies between features makes it possible to evaluate the effect of a given intervention on $\xf$. However, the inverse problem \ie, determining the necessary actions to reach a desired target point $\xcf$, is not straightforward. Additionally, the original metric used to determine the optimality of counterfactuals is no longer valid, as the new cost function is now adjusted to account for cause-effect relationships. In other words, formulating the optimization problem as in \cref{eq:optimal_cf} is no longer meaningful. Instead, it seems more useful to shift the optimization problem to the space of actions. \textit{Rather than asking where one wants to arrive, the central question becomes how to get there}. 

\begin{figure}
    \centering
    \includegraphics[width=0.8\linewidth]{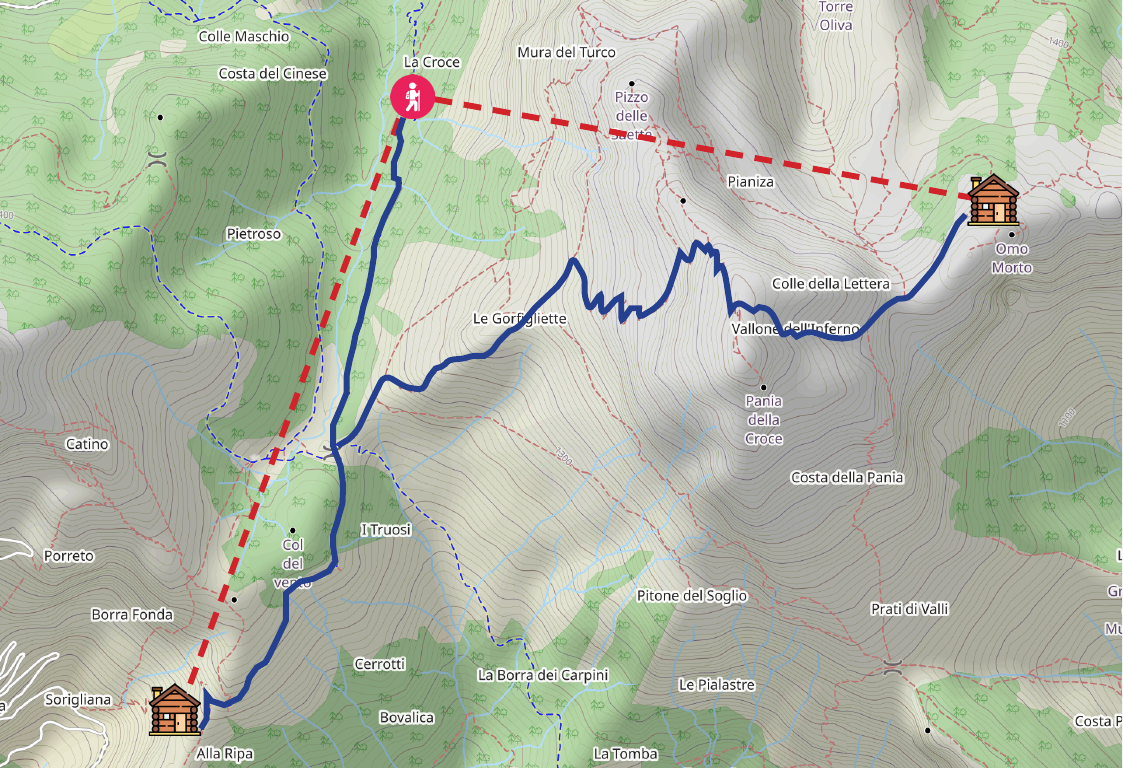}
    \caption{\textbf{\ac{CEx} vs \ac{CRe}.} Illustration of the difference between \ac{CEx} and \ac{CRe}. While \ac{CEx} suggests reaching the nearest refuge in a straight line, ignoring obstacles or terrain, \ac{CRe} leverages additional structural information (the topographic map) to recommend the fastest feasible route, even if it leads to a different refuge. This example highlights how incorporating knowledge of system structure enables more effective and realistic recommendations.}
    \label{fig:hiking_problem}
\end{figure}
\begin{example}
    Consider the following analogy, depicted in \cref{fig:hiking_problem}. A mountaineer wants to reach the nearest refuge to spend the night. A \emph{distance-based recommendation} would suggest the closest refuge in a straight line, without providing specific guidance on how to get there.  In contrast, an \emph{action-based recommendation}, relying on a topographic map of the area, would evaluate alternative routes and ultimately suggest the one with the shortest duration. Note that the selected refuge may differ depending on the recommendation approach used. The key distinction between these two approaches is that the second \textit{incorporates additional knowledge of the system’s structure} (\ie, the topographic map), enabling the optimization of a more meaningful metric. Moreover, while the first approach \textit{only identifies the destination}, the second \textit{explicitly provides the route to follow}.  
 \end{example}

 \begin{definition}[\acf{CRe}]\label{def:Consequential_recommendation}
\ac{CRe} consists in producing the \emph{optimal action} to implement in order to change the evaluation of the function $h$, by optimizing the cost function $\costfunction$ over the space of actions $\actionspace$. Formally, we define:
\begin{equation}\label{eq:optimal_action}
\bm{a}^* := \argmin_{\bm{a}\in \actionspace} \left[ \costfunction(\xf,\bm{a}) \right] \quad \text{subject to} \quad h(\bm{a}(\xf)) \neq h(\xf).
\end{equation}

Let us briefly comment on the formulation. First, an action is a mapping $\bm{a} : \Xspace \to \Xspace$ that modifies the state of the system. Consequently, the cost function $\costfunction$ is a function
$$
\costfunction : \Xspace \times \actionspace \to \R^+,
$$
which evaluates the cost of implementing a given action starting from a specific initial state.

The objective remains the same as in \cref{def:contrastive_explanation}, namely to change the output of the evaluation function $h$. However, in general, we do not expect the resulting counterfactual instance to coincide with the one found in contrastive explanations. The action space $\actionspace$ is currently an abstract object that must be contextualized based on the specific structure assigned to the system. In our case, adopting a causal perspective, $\actionspace$ corresponds to the space of possible causal interventions, and the structural model considered will be that of an \ac{SCM} (See \cref{sec:counterfactual_level}).
\end{definition}

 \paragraph{Complexity of \ac{CEx} and \ac{CRe}}
Enriching the problem structure does not only bring benefits. On one hand, \ac{CRe} allows for a more meaningful and realistic study of \ac{AR}, but on the other, it requires a more complex formulation, not only conceptually but also computationally.

Comparing \cref{eq:optimal_delta} and \cref{eq:optimal_action}, the former optimizes in $\Xspace$, which has dimension $n$, corresponding to the number of features in the system. Using a brute-force Grid Search optimization algorithm, \ac{CEx} would suffer from the curse of dimensionality. For simplicity, let us assume that $\Xspace$ is the hypercube $[0,1]^n$. If we fix a resolution of $1/k$, meaning that each feature takes $k$ different values, a total of $k^n$ instances would need to be evaluated, leading to exponential complexity.  

In practice, the regularity of the functions $\costfunction$ and $h$ can be exploited to significantly simplify the problem. This can be achieved using gradient-based methods (similarly to approaches employed in neural network models) or, when gradients are unavailable, through approximate algorithms. As a result, \ac{CEx} is almost always computationally manageable. In contrast, \ac{CRe} introduces an additional layer of complexity.

 \begin{remark}
     \ac{CRe} is at least as computationally complex as \ac{CEx}.  

     Assume that all system features are actionable. Then, we can apply an intervention to each component $\Xcomp[i]$ that modifies its value, denoted as $\intervention^i$. For example, considering additive interventions (adding or subtracting a given quantity $\delta$ from the initial value), $\intervention^i$ defines a one-dimensional space.  

     In principle, the space of all possible interventions (i.e., the action space $\actionspace$) consists of all combinations of individual interventions $\intervention^i$. Consequently, in the worst case, its dimensionality is $2^n \cdot n$.  

     At the beginning of \cref{ch:causality_math_framework}, we saw that exploiting the causal structure of the model can significantly reduce its complexity. By leveraging the independence and modularity of causal mechanisms, it is often possible to achieve a more compact representation, sometimes reducing the complexity from exponential to polynomial, or even linear (see, for example, \cref{sec:probability_factorization_math_framework}).  

     In the best-case scenario, where variables are completely independent (see \cref{fig:independent-features}), the action space simplifies to the Cartesian product  
     $$\actionspace = \actionspace^1 \times \dots \times \actionspace^n.$$  
     Consequently, under these conditions, \ac{CEx} and \ac{CRe} become entirely equivalent.    
\end{remark}
\paragraph{Linearity} 
An important case where \ac{CRe} keeps the same computational complexity as \ac{CEx} is when the \ac{SCM} is linear and the interventions are additive. In this case, the structural equations can be written in compact form as
\begin{equation}
    \X = \A \X + \vectornoise,
\end{equation}
where $\A$ is a matrix that represents the strengths of the causal relationships between the variables. 

An additive intervention on component $\Xcomp[i]$ means adding the term $\delta_i \hat{\bm{e}}_i$ to the right-hand side of the equation, where $\hat{\bm{e}}_i$ is the $i$-th unit vector of the standard basis in $\R^n$. Explicitly writing the equation in vector form gives
\begin{equation}
    \begin{bmatrix}
        \Xcomp[1] \\
        \vdots \\
        \Xcomp[i] \\
        \vdots \\
        \Xcomp[n]
    \end{bmatrix}
    =
    \A
    \begin{bmatrix}
        \Xcomp[1] \\
        \vdots \\
        \Xcomp[i] \\
        \vdots \\
        \Xcomp[n]
    \end{bmatrix}
    +
    \begin{bmatrix}
        \noisecomp[1] \\
        \vdots \\
        \noisecomp[i] \\
        \vdots \\
        \noisecomp[n]
    \end{bmatrix}
    +
    \begin{bmatrix}
        0 \\
        \vdots \\
        \delta_i \\
        \vdots \\
        0
    \end{bmatrix}.
\end{equation}

Solving for $\X$, we obtain
\[
\X = [\I - \A]^{-1} (\vectornoise + \delta_i \hat{\bm{e}}_i).
\]
We can easily identify the effect of the intervention in the term $[\I - \A]^{-1} \delta_i \hat{\bm{e}}_i$.

Suppose we apply another intervention on top of the first one, but on a different variable $\Xcomp[j]$. Thanks to the linearity of $[\I - \A]^{-1}$, by the superposition principle, the total effect is simply the sum of the individual effects:
\[
[\I - \A]^{-1}\delta_i \hat{\bm{e}}_i + [\I - \A]^{-1}\delta_j \hat{\bm{e}}_j.
\]
Therefore, the search for $\bm{a}^*$ can be performed one variable at a time, and the size of the search space reduces to the dimension of the system.

\chapter{Cost is Subjective}\label{ch:cost-subjective} 

\begin{tcolorbox}[colback=CTsemi!5!white, colframe=CTsemi, boxrule=0.5mm]
\textbf{Main Sources:} \fullcite{DBLP:conf/xai/BerettaCM23}, \fullcite{cappuccio2024interactive}.
\end{tcolorbox}

\paragraph{Author's Note}
This chapter presents one of the first attempts to explore counterfactual explanations from a more user-centered point of view. The ideas discussed were developed during the early stages of the PhD and are mostly meant as a starting point for reasoning about how people interact with algorithmic decisions.
The objective is to propose a preliminary solution to the problem of rendering this framework more compatible with human cognitive processes, providing an in-depth discussion of the methodological aspects and limitations of classical algorithmic recourse from the perspective of the user.
\vspace{30pt}

Progress in the performance and efficiency of automatic decision-making systems has incentivized AI-based solutions in pervasive and impactful contexts of daily life, such as finance, healthcare, and transportation.
A problematic aspect of these models is the lack of explainability: it is difficult to provide the reasons behind the automated decisions due to the complexity of the algorithms and the large amounts of data processed. This lack of transparency and accountability can lead to mistrust and concerns about the ethical implications of using AI in decision-making~\citep{guidotti2018survey}.
In sensitive real-world contexts, accounting for the algorithmic decision is necessary for the user to understand and contest the motivations behind the system's output. This is especially important when the outcome strongly impacts human life or has harmful consequences.
Moreover, relevant policies such as the GDPR (General Data Protection Regulations)~\citep{GDPR,Seizov_AItransparencyGDPR} claim the relevance of appealing in case of rejection, i.e., to request what changes users should make to be accepted in receiving a positive response, for example, to a loan application. In this way, users can be empowered to understand how the algorithmic decision affects them.

\begin{figure}[t] 
    \centering
    \includegraphics[width=1\linewidth]{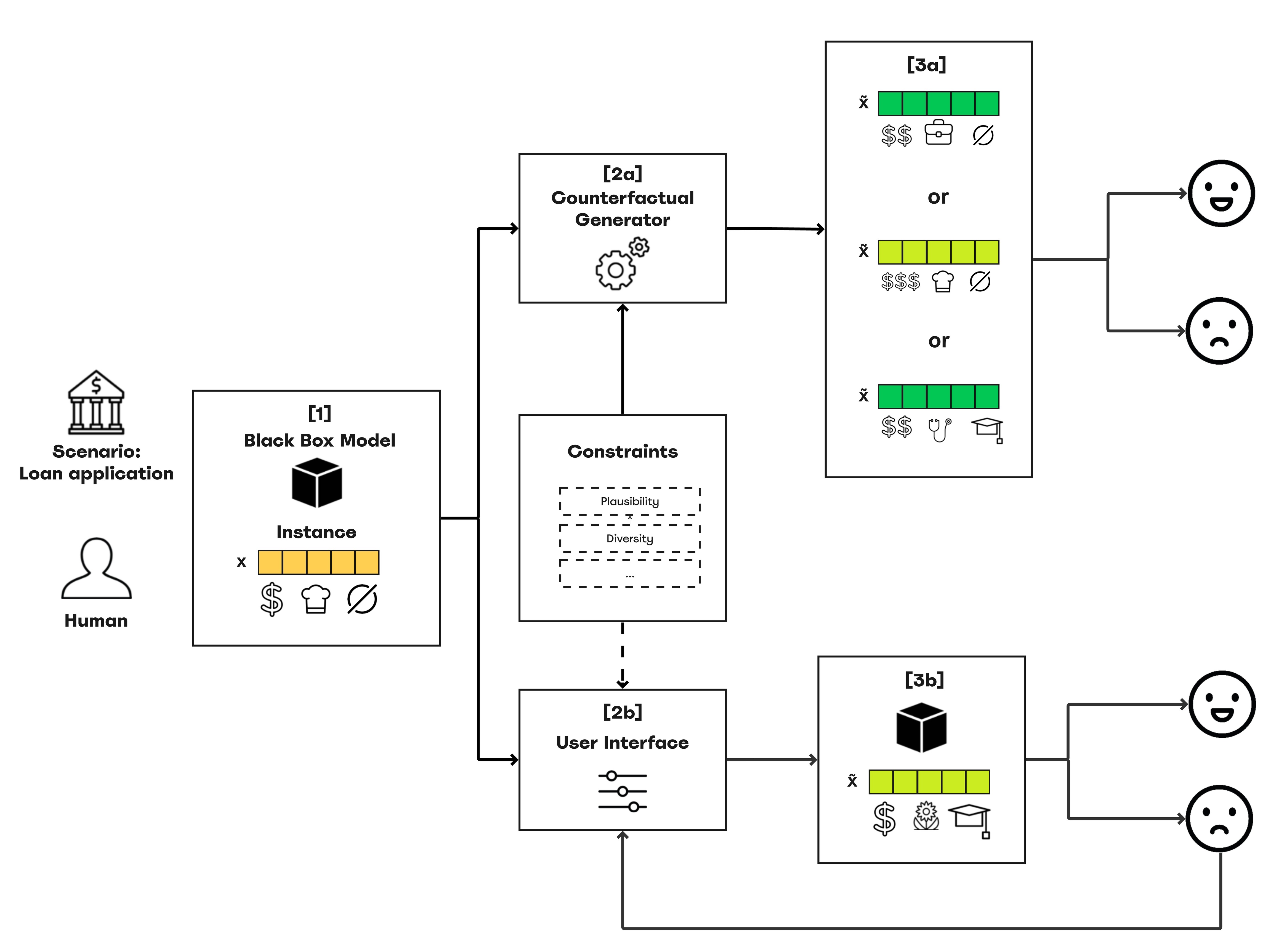}
    \caption{\textbf{Workflow of \EMA.}}
    \label{fig:workflow}
\end{figure}

In this work, we present \EMA, an interactive interface that allows an exploration of the decision boundary of any black box model.
We report in Fig.~\ref{fig:workflow} a visual workflow of our approach. Imagine a person applying for a loan to a bank that leverages an automatic decision-making system. The instance $x$ represents her current situation (stage [1] of the workflow): she has a certain amount of capital, her current working position is being a chef in a restaurant, and she has no degree. The decision emitted by the black box model is negative, hence her request is denied. 

Following the diagram, in stage [2a] an automatic counterfactual technique generates instances trying to optimize a predefined cost function that embeds a set of constraints, e.g., \textit{plausibility} and \textit{diversity}.  
Three counterfactual instances receiving a positive outcome are proposed to the user (stage [3a]). The scenarios suggested are: (i) the person has a medium amount of capital and occupies a business role, (ii) the person remains a chef, but she has a high amount of capital, and (iii) the person has a medium amount of capital, she is a doctor and she has a degree in medicine. 
From the counterfactuals returned, the user satisfaction is symbolized by the smiley and gloomy faces. Following this pipeline, the subjective human satisfaction is impossible to capture and it remains unknown or an irrelevant feedback.

Our proposal, exemplified in stage [2b], consists of an interface that lets the users freely explore their options, changing the features at will. The set of constrains is not directly embedded nor expressed through the interface, but it is implicitly delegated to the common sense of the users. The resulting counterfactual user-crafted in stage [3b] has the same amount of money, by profession she is a florist and has a degree. The difference between our approach w.r.t. the traditional one consists in the possibility for the user to intervene again on the interface and generate a new, more effective counterfactual if the one created is not useful or applicable, creating a continuous loop through the interaction.

The rest of the chapter is organized as follows. In Section~\ref{related}, we report a critical literature discussion and a review of the state-of-the-art solutions that closely align with our contribution. Section~\ref{method} presents the proposed method. In Section~\ref{samples}, we report several examples testing different datasets to demonstrate our proposal’s novelty and effectiveness for counterfactual generation. Finally, Section~\ref{concl} indicates future research directions and discusses the limitations of our approach.

\section{Contributions}\label{sec:contributions_costsubj}

This chapter offers the following contributions:

\begin{itemize}
    \item \textbf{Human-Centered Perspective:} We highlight several limitations of existing counterfactual generation techniques, particularly the assumption of a universal, fixed cost function, and argue for a more subjective, user-driven notion of cost.

    \item \textbf{Interactive Exploration:} We propose an alternative approach to counterfactual generation that gives users direct control over feature modification, allowing them to explore the decision space through a simple and intuitive interface.

    \item \textbf{Prototype Implementation:} We present \EMA, a lightweight tool for local, model-agnostic interaction with decision systems, designed to support both experts and lay users.

    \item \textbf{Preliminary Evaluation:} We demonstrate the use of \EMA through illustrative examples on standard benchmark datasets, showing how user interaction can lead to flexible and realistic counterfactuals.

    \item \textbf{Design Reflection:} We discuss how this type of tool could open the way for new evaluation methods and user studies that shift the focus from algorithmic optimization to human experience and control.
\end{itemize}

\section{Related Work}\label{related}

\subsection{Counterfactual generation techniques}
Counterfactual approaches explain individual predictions describing what-if contrastive scenarios~\citep{karimi2021,DBLP:conf/pkdd/MolnarCB20,DBLP:conf/xai/BerettaC23}. Specifically, the explanations indicate to the user the minimum change necessary in the feature space representing them so that the output of the automatic system changes toward the desired outcome. 

Various approaches have been devised for generating counterfactuals, including optimization, heuristic search, instance-based, and decision trees. We refer to~\citet{guidotti2022counterfactual} for a more comprehensive overview. 
Evaluating the quality of counterfactual explanations requires careful consideration of multiple desired qualities, which have been defined in the literature as \textit{validity, sparsity, similarity, plausibility, discriminative power, actionability, causality}, and \textit{diversity}~\citep{DBLP:conf/kdd/Jia0RLH19,zhang2019should,DBLP:journals/ai/Guidotti21}. The complexity of this range of evaluation metrics presents a critical challenge in comparing the currently available algorithms and establishing benchmark procedures.
Moreover, most of the current counterfactual generation techniques do not allow for user interaction, thus limiting the practical applicability of these methods in real-world contexts~\citep{DBLP:journals/csur/KarimiBSV23}.

\subsection{Human-Centered Approach}
It is crucial to rethink new ways of interaction between humans and the \ac{XAI} algorithm, focusing more on the user's point of view and needs~\citep{miller2019explanation,liao2021human,abdul2018trends,amershi2014power}.
The effectiveness of an explanation is based on the extent to which the recipient regards it as comprehensible and useful. A human-centered approach is needed to fill the gap between the algorithmic perspective and an explanation that is truly intelligible to humans. In fact, as stated in ~\citet{liao2021human}, \ac{XAI} presents more of a design challenge than an algorithmic challenge.

Previous social science research has shown that human explanations are often contrastive. Therefore, counterfactuals are generally considered less cognitively demanding and more intuitive ~\citep{riveiroT21}.
Given these assumptions, our proposed interface and algorithm are outlined as starting points for several case studies, see Section~\ref{samples}, and are therefore designed for domain experts and lay users ~\citep{RiberaL19}.

In 2018 the DARPA explainable AI initiative framed the explainable AI process as a three-stage approach, distinguishing between the explainable model, the explanation user interface.~\citep{gunning2019darpa}.
This definition draws a difference between the model through which an explanation of a machine learning algorithm is generated and the means used to communicate it to the user~\citep{chromik2020taxonomy}.
From this perspective, the explanation process has to be outlined as a continuous dialogue between a sender and a receiver~\citep{abdul2018trends}. Therefore \ac{XAI} has to consider interactivity as a fundamental part of the process~\citep{madumal2019grounded,miller2019explanation, chromik2020taxonomy}, for example, through the design of novel interfaces that allow model inspection at will~\citep{Shneiderman20bridging}, and promote an understanding of the computational processes giving users control of the algorithmic actions~\citep{DBLP:journals/interactions/ShneidermanPCJE16}. 
%This paradigm shift also entails changing the desiderata, taxonomy, and evaluation criteria of Explainable AI.

\subsection{Interactive explanations and visualizations}
In the following, we frame \EMA in relation to existing literature solutions, and critically discuss the novelty and positioning of our contribution.

\texttt{DECE} is an advanced counterfactual generation method tailored for both global and local interpretability tasks~\citep{DBLP:journals/tvcg/ChengMQ21}. With a strong focus on visualization and user interface design, \texttt{DECE} automates counterfactual generation using established metrics such as diversity and validity. Notably, it facilitates user-driven constraint imposition through a decision tree framework, offering fine-grained control over the generation process. Moreover, \texttt{DECE} extends beyond dataset limitations by exploring counterfactuals independent of available data, while leveraging subset counterfactuals to dissect machine learning model decision boundaries.

\texttt{Prospector} is a method designed to generate instance counterfactuals via a visual interface~\citep{DBLP:conf/chi/KrausePN16}. It utilizes a semi-global score, specifically a partial dependence plot, to estimate the probable prediction when specific features of the original instance are modified. This process involves three distinct phases: patient selection, patient inspection, and the generation of partial dependence plots. Access to both the model and the dataset used for training is required. %\footnote{Unfortunately, the code for Prospector is no longer accessible or available online}
The tool is specifically designed for data scientists. It allows them to understand how features affect the prediction, thus improving a predictive model.

\texttt{DiCE} framework generates and evaluates counterfactuals according to two properties, namely \textit{plausibility} and \textit{diversity}~\citep{mothilal2020explaining}. The authors test \texttt{DiCE} on four real-world datasets and compare its counterfactuals with popular local explanation methods and other prior approaches.  

\texttt{T-LACE} generates reliable counterfactual explanations leveraging on two properties of a tailored transparent latent space, namely \textit{similarity} and \textit{linearity}~\citep{bodria2022transparent}. An interactive framework for auto-encoders allows a visual exploration of the latent space, shedding insights on the links between input features and model prediction~\citep{bodria2022explaining}. 

\texttt{What-If Tool}\footnote{\url{https://pair-code.github.io/what-if-tool/}} consists of a model-agnostic interactive interface designed to visually explore ML systems on a variety of inputs and analyze features importance both at local and global level~\citep{wexler2019if}. It is able to automatically generate counterfactuals by optimizing predefined metrics, while it does not allow direct user intervention. It allows practitioners to interactively adjust models rather than individual instances, making it ideal for refining model performance. Its user-friendly design requires minimal coding skills, catering to a wide range of practitioners seeking insights through counterfactual reasoning.

For a thorough overview, we refer to~\citep{ExplanatoryModelAnalysis2021Biecek}, where authors present techniques and strategy designed to explore and explain model's predictions at \textit{instance-level}, i.e., for a single record, and at \textit{dataset-level}, i.e., for an entire collection.  

\paragraph{Our contribution}
We outline some of the distinguishing properties of \EMA, our method, setting it apart from existing literature. Firstly, it prioritizes the user, who not only operates the tool but also indirectly defines the optimization metric, ensuring subjective considerations are captured. Moreover, it requires minimal technical skills, as no coding knowledge is needed, and users interact through intuitive sliders. It works with any model and does not require access to the dataset, making it easy to explore different models without constraints. Furthermore, it is purely local, focusing solely on the specific instance at hand, ensuring relevance to the user's context. Users can adjust instances feature by feature, enabling thorough exploration of the feature space. Overall, our method is user-friendly, adaptable, and effective for exploring machine learning systems.

\section{User-Centered Limitations}\label{sec:user_cost_limitations}

Building on the discussion in \cref{sec:algo_recourse_cost_challenges_1}, we now focus on limitations that emerge when algorithmic recommendations interact with the social and personal context of the user. While previous efforts have tried to improve $\costfunction$ through better distance metrics or added constraints, many still operate under a fixed and context-agnostic definition of cost, limiting their usefulness in real-world settings. In particular, we highlight two critical issues that are especially relevant when considering user interaction:

\textbf{Limitation 1.} \textit{Stationarity}: The cost function $\costfunction(x, x')$ is predetermined and remains fixed after deployment. It does not update based on user feedback or preferences, preventing any adaptation that could improve alignment between the model’s suggestions and the user's actual needs (\textit{see Research Challenge 9 in~\citep{counterfactual_survey_2020}}). Consider, for example, a credit scoring system that assumes obtaining a formal degree (e.g., a university diploma) is easier than switching to a different job. Based on this assumption, the model prioritizes education-related changes in its recourse suggestions. However, after deployment, it may observe that users consistently prefer to modify their employment status rather than pursue further education. This could be due to the long time horizon, financial cost, or perceived inaccessibility of academic pathways. In contrast, changing jobs may feel more attainable to users in practice. Such behavior reveals that the cost function has underestimated the difficulty of education-related changes and overestimated that of employment shifts. A non-stationary system could learn from this feedback and revise the relative weights in $\costfunction$, ultimately generating more practical and user-aligned recommendations. In other words, incorporating user feedback may offer a powerful means of calibrating the cost function to better reflect real-world preferences and constraints.

\textbf{Limitation 2.} \textit{Universality}: $\costfunction$ cannot depend on exogenous factors of the system. In most existing approaches, it is implicitly assumed that the same cost function is suitable for all users. However, in practice, the feasibility and difficulty of taking action can vary widely based on personal circumstances that are not captured in the dataset. For example, the ease of changing jobs may depend on the types of employment available in the local area, access to transportation, or caregiving responsibilities at home. Requesting a salary increase might be easier for individuals with strong workplace relationships, and accumulating wealth may depend on whether someone owns additional assets (such as family heirlooms or property). These factors, though highly relevant, are typically unobserved by the system. The cost of an action is not an objective or universal quantity—it is shaped by each individual’s social, economic, and psychological context. Ignoring this subjectivity can lead to overly rigid or unrealistic recommendations (\textit{see Research Challenge 10 in~\citep{counterfactual_survey_2020}}). One preliminary attempt to address this issue is proposed by~\citep{DBLP:journals/corr/abs-1912-03277}, which introduces user-specific cost modeling, although much remains to be explored in this direction.

These challenges motivate a shift from purely algorithmic optimization toward user-centered methods that treat individuals not just as passive recipients of recourse, but as active agents capable of judging what is realistic, desirable, or worthwhile. We explore one such approach, introducing a prototype system that allows users to directly explore the decision space and define their own counterfactuals.

Each of these properties imposes specific limitations on the ability to provide effective recommendations to each individual. 
Our contribution aims to address these limitations, especially from a user-focused viewpoint.

\section{Feature Space Navigator}\label{method} 
%Building upon~\citep{DBLP:conf/xai/BerettaCM23}, 
Our method proposes an alternative formulation of the counterfactual generation problem, departing from the algorithmic perspective and instead shifting the focus to user interaction. 
Our proposal leverages the autonomy and control of the users that can explore the behavior of the decision model accordingly with their own knowledge base, reducing the need for a dedicated explanation algorithm. From this perspective, users are designated as the ``domain experts'' of their own real-word scenario. 

\EMA is local, post-hoc and agnostic, i.e., designed to account for any black box\footnote{\EMA is available at \url{https://github.com/Elecapp/Feature_space_navigator}.}. 
It offers an interactive interface of the areas where the model returns the desired outcome by providing a graphical visualization. 
The entire process is delegated to the users that are free to explore the feature space of the system and generate autonomously, ``by hand'', effective counterfactual and prototype instances that they deem feasible, desirable, and suitable to their specific evaluations and needs\footnote{User interactions through the interface are not stored; hence, no concern occurs regarding the safety of private information that users might input.}. 
Users are therefore empowered with complete autonomy and control over the process without the mediation of an explanation algorithm between the human and the decision model.
In the following, we outline the interface, the underlying algorithm, and discuss evaluation aspects of the outputs returned by \EMA.

\paragraph{Interface}
The interface, depicted in Section~\ref{samples}, visually displays how model predictions react in response to the adjustments introduced by the users, letting them to identify relevant prototypes and counterfactuals.  
It allows users to independently explore the feature space by selectively modifying the original instance on the basis of her own knowledge and experience, i.e., changing the value of one of the features that characterize the instance $\x$. This can be done introducing minimal changes through the adjustment of the slider \textit{feature by feature} within its designated range.
Specifically, each slider is characterized by a colored gradient representing the model's scores for the different values of the feature. Users can choose which feature to modify by moving the cursor to a value that improves their score, selecting the feature they find most comfortable to change.
After each adjustment, the instance $\x$ is promptly updated with new values, generating new gradients and enabling further modifications until the user is content with the set of changes and insights.
Through a number of attempts, generating both prototypes and counterfactuals, the user intuitively guesses the feature importance of the classifier decision boundary through gradients change. 

\begin{figure}
  \begin{center}
    \includegraphics[width=0.80\textwidth]{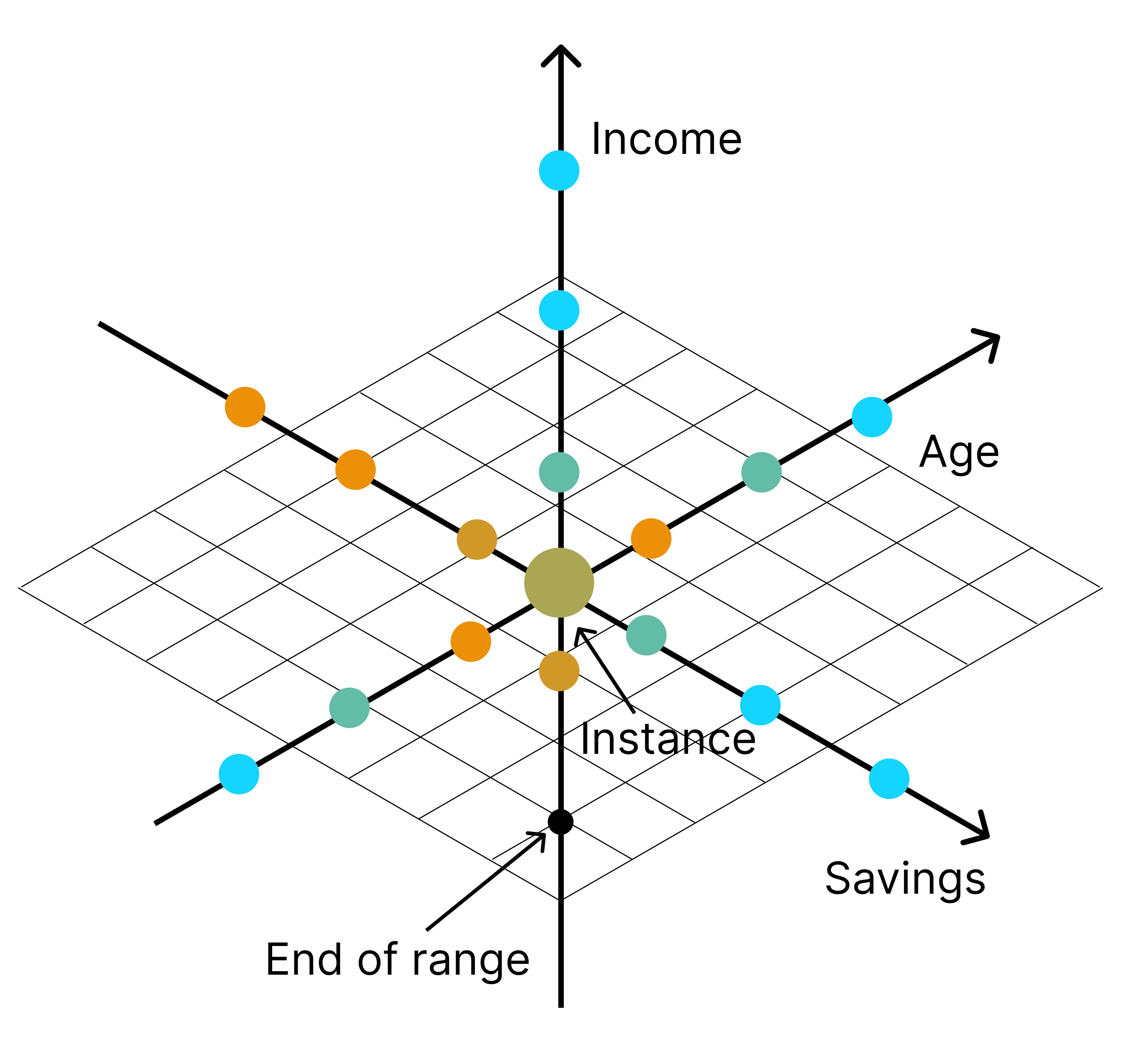}
  \end{center}
  \caption{\textbf{The Space of Changes.} Example of allowed movements through the sliders.}
  \label{fig:piano}
\end{figure}

\begin{algorithm2e}[t]
\caption{\textsc{GridEvaluation}{}($ranges$, $bins$, $X$, $x$, $model$)}\label{alg}
\SetKwInOut{Input}{Input}
\SetKwInOut{Output}{Output}
\Input{$ranges$ - features ranges, $bins$ - number of bins, $X$ - dataset, $x$ - original instance, $model$ - black box}
\Output{$\hat{Y}$ - $model$'s predictions}
$d \gets \text{len}(x)$\tcp*{\small storing number of features}
$\hat{Y} \gets  [0]_{d \times \text{bins}}$ \tcp*{\small creating a matrix of zeros}
\For{$i=0$ \KwTo $d$ }{
    \If{$categorical(X_i)$ %\tcp*{\small True if the value is categorical}
    } 
    {
    \For{$v \in unique(X_{i})$ %\tcp*{\small for each feature's categorical value}
    }
    { 
        $\hat{x} \gets  \text{copy}(x)$ \tcp*{\small creating a copy of $x$}
        $\hat{x}_i \gets  v $ \tcp*{\small changing $\hat{x}$'s $i$-th feature with the new value}
        $\hat{Y}_{i,b} \gets \text{model}(\hat{x})$ \tcp*{\small predicting score}
    }
    }
    \Else
    {
    $min, max \gets ranges_i$ \tcp*{\small saving min and max of current feature} 
    \For{$b=0$ \KwTo $\text{bins}$ }{
        $v \gets min + \frac {b}{\text{bins}} \cdot (max - min) $ \tcp*{\small computing the new value} 
        $\hat{x} \gets  \text{copy}(x)$ \tcp*{\small creating a copy of $x$}
        $\hat{x}_i \gets  v $ \tcp*{\small changing $\hat{x}$'s $i$-th feature with the new value}
        $\hat{Y}_{i,b} \gets \text{model}(\hat{x})$ \tcp*{\small predicting score}
    }
    }
} 
\Return $\hat{Y}$
\end{algorithm2e}

\paragraph{Algorithm}
The user can move along each axis in the feature space but can not make diagonal movements.
We present a simplified representation of the process in Figure~\ref{fig:piano}, i.e., the allowed movements using sliders. 
The cumulative nature of the modifications ensures the ability to reach any point in the space. 
In Algorithm~\ref{alg}, we report the pseudo-code for a single step of the process. It requires access to the model's probability outputs and is specifically tailored for tabular data with a binary target variable.
The number of predictions required scales linearly with the number of features, ensuring the interface update remains essentially real-time.

\section{Scenarios}\label{samples}

\paragraph{Datasets}
In this work, we implemented \EMA on two datasets widely used in academic research and freely accessed through the UCI Repository~\citep{UCIRepository}. 
\begin{itemize}
    \item The \textsc{Statlog (German Credit Data)} dataset~\citep{misc_statlog_(german_credit_data)_144} categorizes individuals as having good or bad credit risk through a binary classification task. It comprises $1000$ instances and $20$ features, all of which are categorical and integer types. The categorical features are handled in the interface by preserving the original categorical feature's record alongside the value generated by the LabelEncoder.
    \item The \Wine collection~\citep{misc_wine_109} comprises data resulting from a chemical analysis conducted on wines originating from the same Italian region but derived from three distinct grape varieties. This examination evaluated the levels of 13 different components present in each wine type. With 178 instances and 13 features, encompassing both integer and real numbers, the primary task associated with this dataset is multi-class classification. The target variable has three possible values denoting distinct cultivars: each wine is categorized into one of three classes. From an interface point of view, we handled the multiclassification by creating a 3-step gradient.
\end{itemize}

\paragraph{Classifiers}
We performed an $85$-$15$ split in train and test for \German and a $70$-$30$ split for \Wine.
We ran our evaluation on Random Forest implemented by the \texttt{scikit-learn} library\footnote{\url{https://scikit-learn.org/stable/index.html}}. The best parameters are identified through a Randomized Search over a dictionary of possibilities.
Both the classifiers were performed using a fixed random state.

\subsection{Understanding the Effects of Interventions}
In the following, we provide examples of potential usage of the interface on the adopted datasets, aiming to illustrate the functionality of \EMA.

\paragraph{\German Scenario}
Let us examine a case from the \German  dataset, where a user applies for a loan. The initial configuration, depicted in Figure~\ref{fig:sub1_g}, is associated with a negative outcome—the model classifies the user as a bad credit risk, and the loan is therefore denied.

Using the interface, the user begins exploring alternative options. In a first attempt, shown in Figure~\ref{fig:sub2_g}, they reduce the \textit{credit amount} requested, assuming that a lower loan might be perceived as less risky. This modification slightly improves the score, but not enough to change the model's decision—the predicted credit risk remains negative.

Realizing that credit amount alone may not be sufficient to alter the outcome, the user continues their exploration. In the third step, shown in Figure~\ref{fig:sub3_g}, they adjust the \textit{duration in month}—a variable that may influence the perceived repayment capacity. This time, the result changes: the model now classifies the instance as a good credit risk, and the loan would be granted.

This simple scenario illustrates the core idea behind our approach: by enabling users to interactively explore the feature space, they can experiment with combinations of changes that feel plausible or acceptable to them. Instead of relying on a fixed optimization routine, the process becomes a negotiation between what is desirable, feasible, and effective—driven by the user’s own judgment and understanding of their situation.

\paragraph{\Wine Scenario}
In this example, a hypothetical chemist is using the interface to determine the likely origin of a wine sample, based on its chemical composition. Initially, the sample is classified as class $2$, as shown in Figure~\ref{fig:sub1_w}.

Looking to explore how the prediction might change, the user adjusts the \textit{flavanoids} level—one of the more influential chemical features related to classification. As depicted in Figure~\ref{fig:sub2_w}, this modification leads to a shift in the predicted class: the sample is now associated with cultivar class $1$.

Curious about the full decision boundary, the user continues to explore this feature further. In a final step, shown in Figure~\ref{fig:sub3_w}, they increase the \textit{flavanoids} value once more. This additional change pushes the prediction into a different region of the feature space, resulting in a new classification as class $0$.

This scenario demonstrates how users can explore the sensitivity of the model to single features and observe non-linear shifts in its decision-making. In this case, modifying just one feature—flavanoids—across a narrow range was enough to traverse the boundaries between all three wine classes, giving the user a good sense of the model’s internal behavior and thresholds.

\begin{remark*} While in this work we instantiated \EMA on the two datasets provided as examples, it is important to recognize that the scenarios discussed serve merely as illustrative cases. The interface developed through this approach is generalizable and applicable to various real-world contexts. Therefore, while the specific instances showcased here offer valuable insights, the broader implications extend beyond these samples, demonstrating the versatility and utility of \EMA in addressing varied challenges across different domains.
\end{remark*}

\begin{figure}[ht!]
\centering
\subfloat[Starting situation.]{\includegraphics[width=1\textwidth]{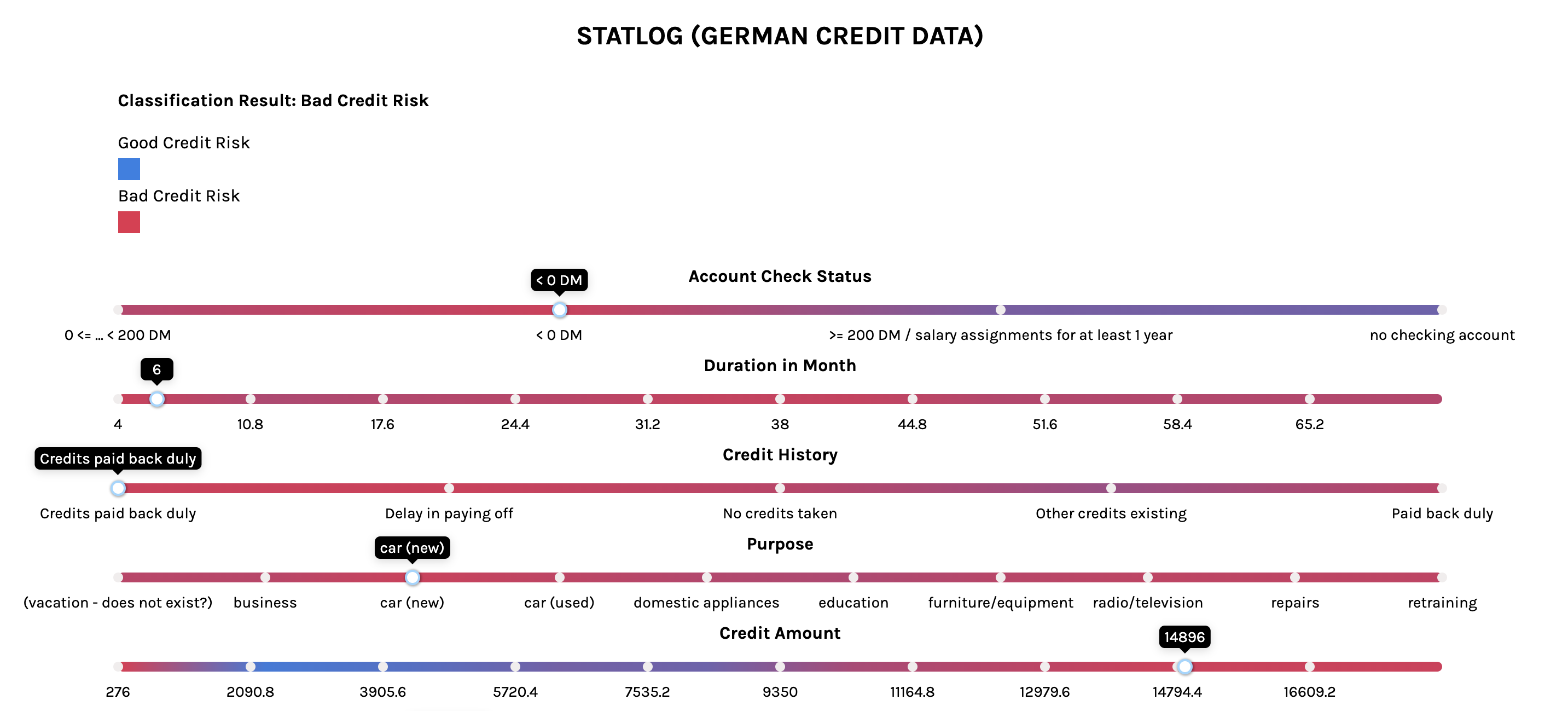}\label{fig:sub1_g}}\hskip1ex
\subfloat[User changes \textit{Credit Amount}; credit risk is still not good enough.]{\includegraphics[width=1\textwidth]{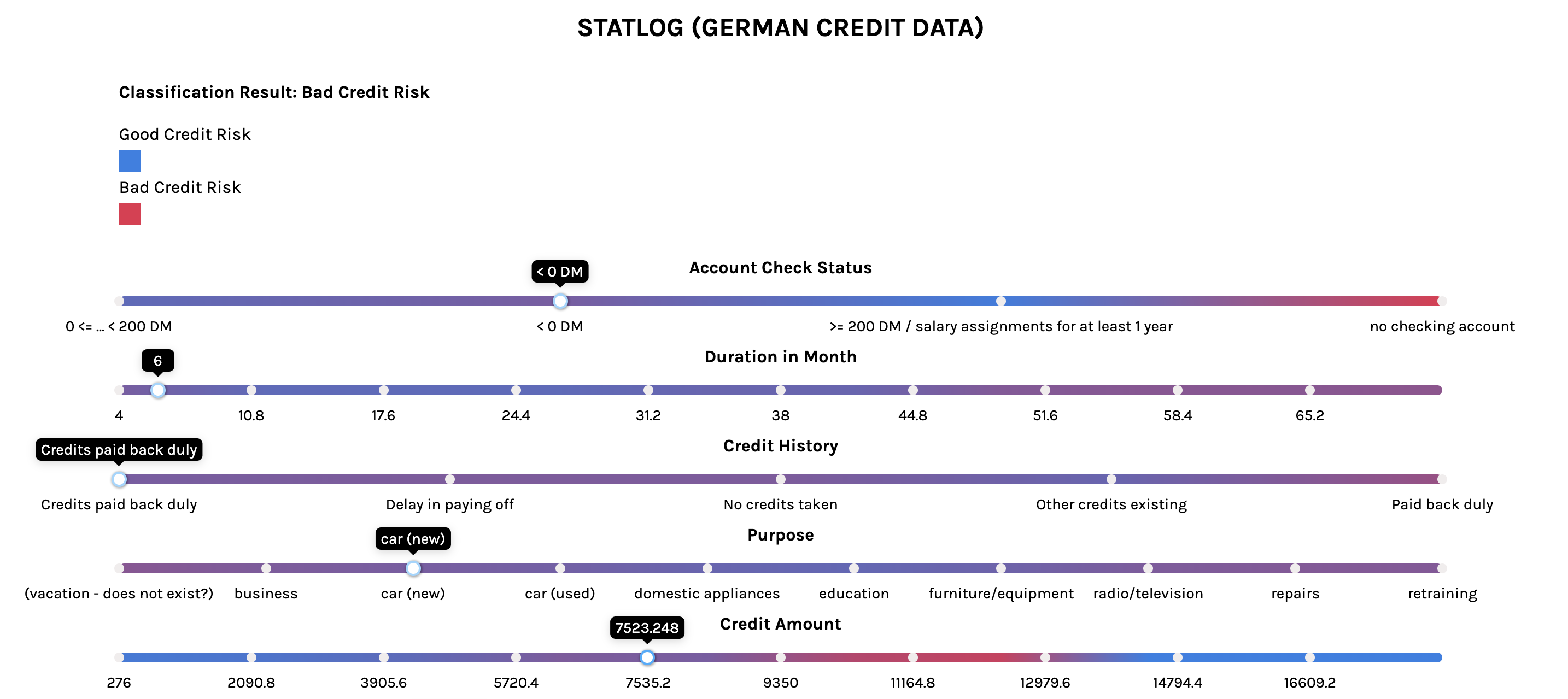}\label{fig:sub2_g}}\hskip1ex
\subfloat[User changes \textit{Duration in Month}; credit risk becomes good.]{\includegraphics[width=1\textwidth]{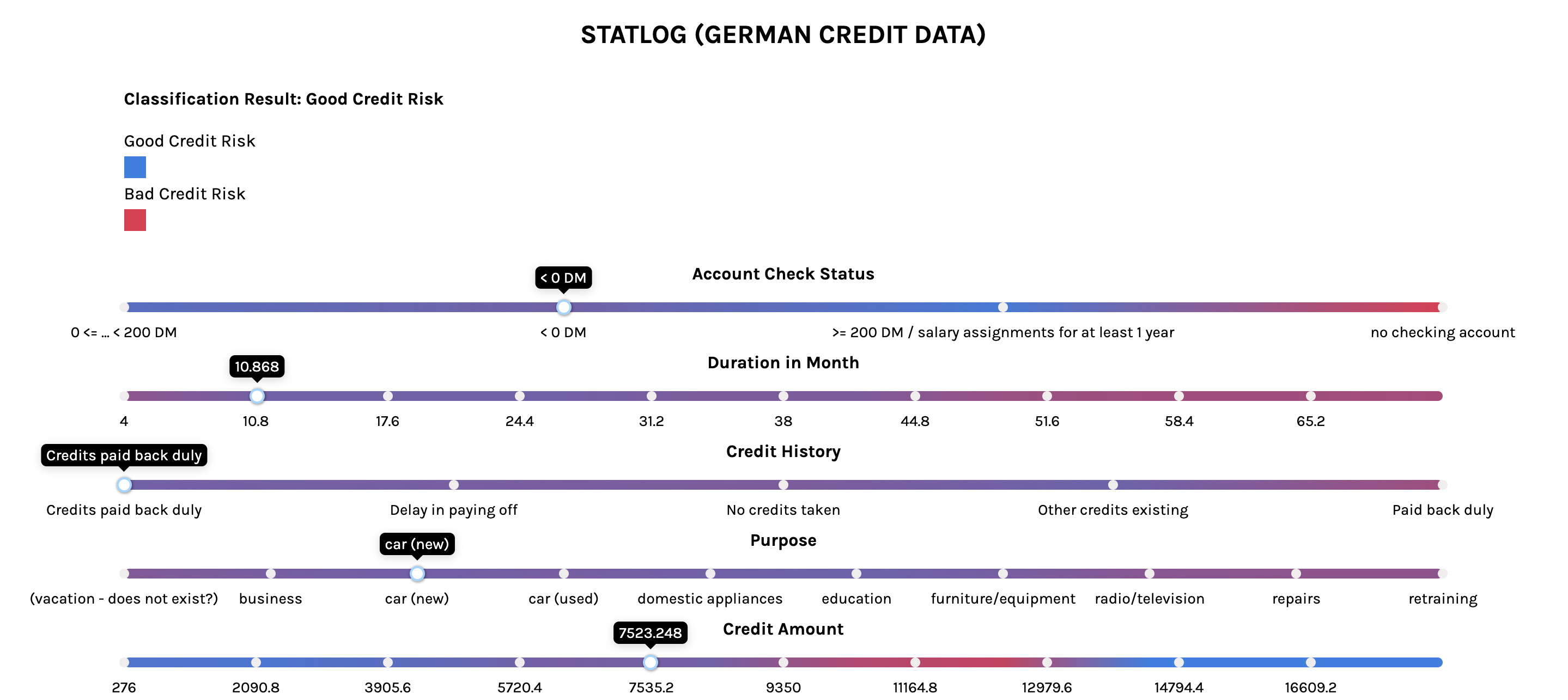}\label{fig:sub3_g}}
\caption{\textbf{\German Scenario.}}
\end{figure}

\begin{figure}[ht!]
\centering
\subfloat[Starting situation.]{\includegraphics[width=1\textwidth]{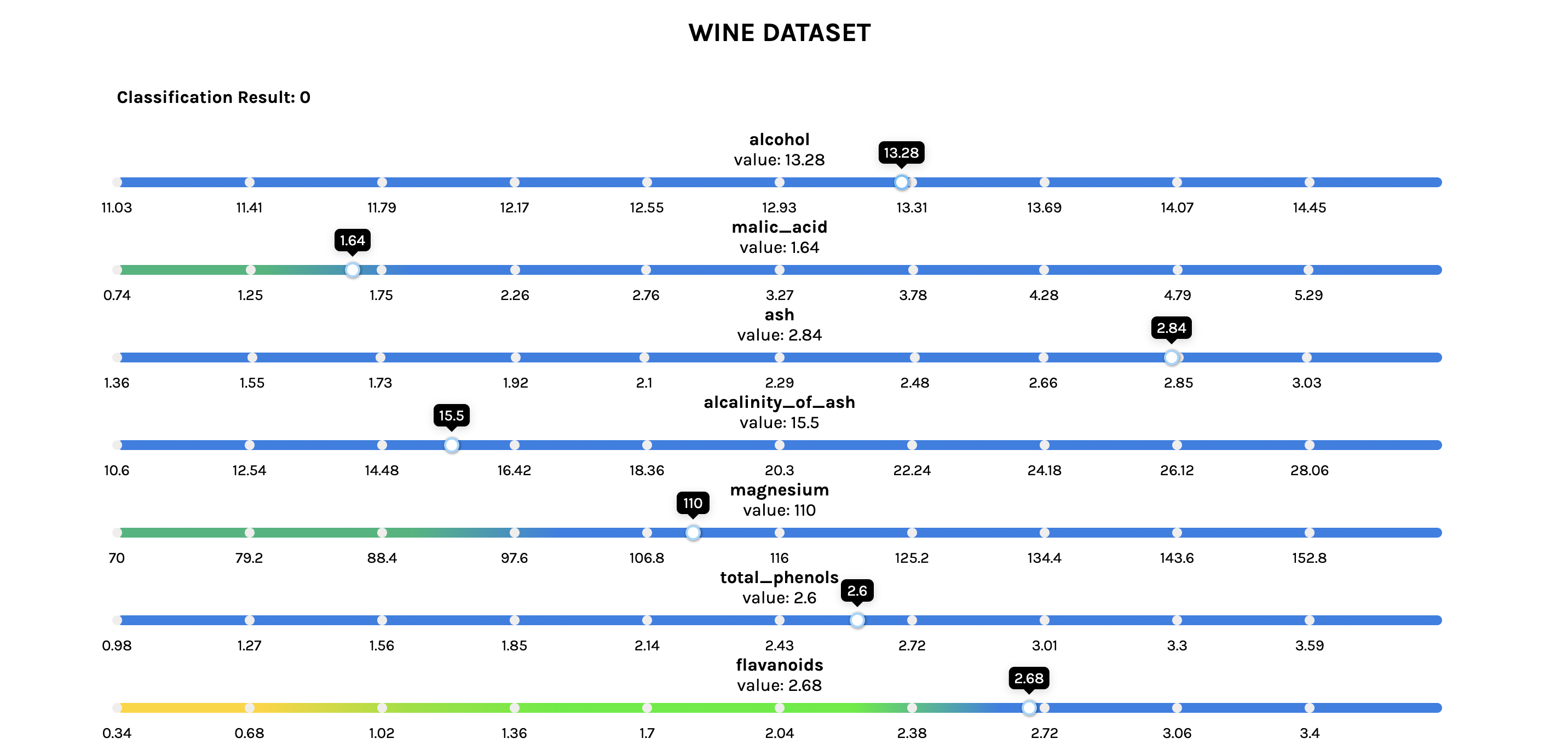}\label{fig:sub1_w}}\hskip1ex
\subfloat[User changes \textit{flavanoids}; wine class becomes $1$.]{\includegraphics[width=1\textwidth]{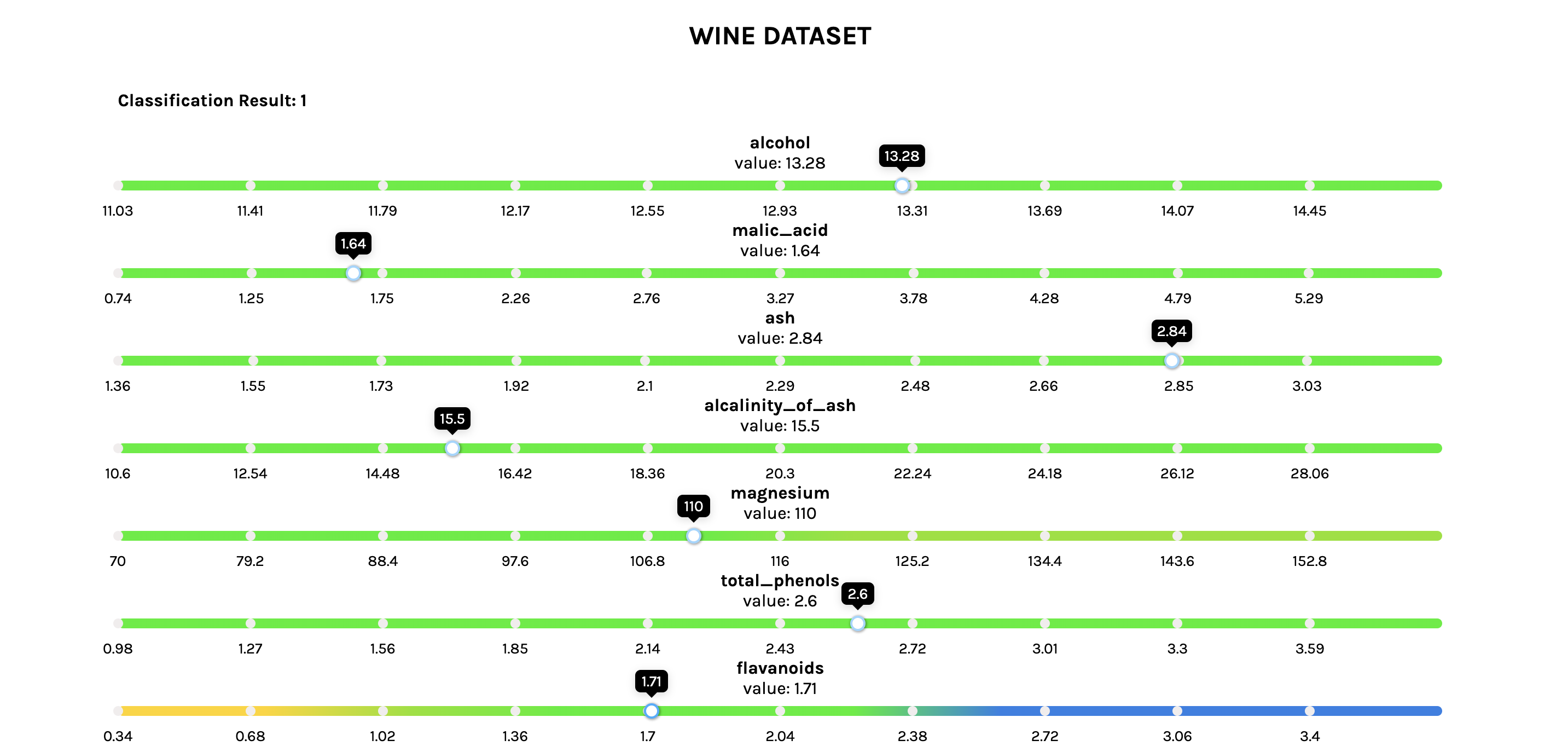}\label{fig:sub2_w}}\hskip1ex
\subfloat[User changes \textit{flavanoids}; wine class becomes $2$.]{\includegraphics[width=1\textwidth]{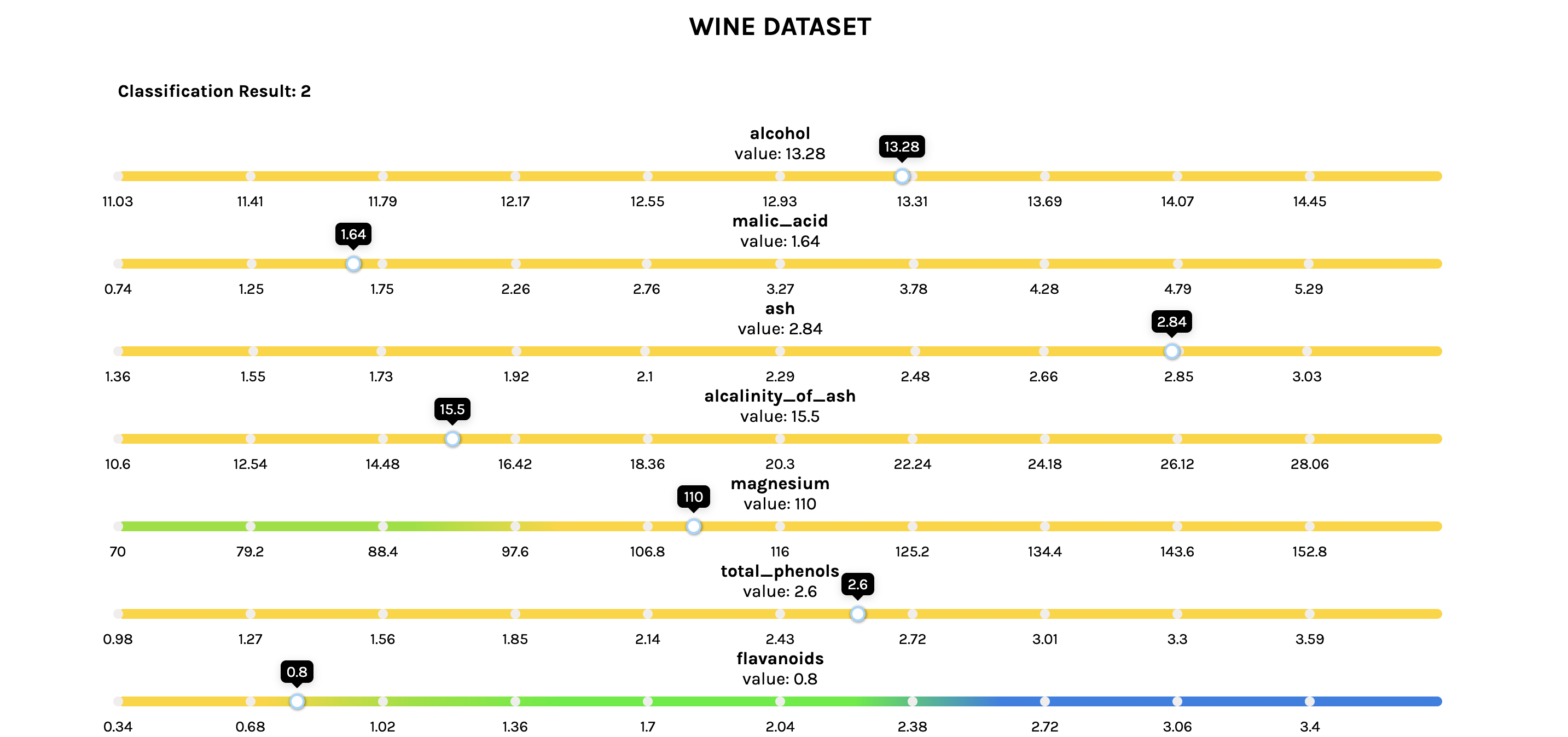}\label{fig:sub3_w}}
\caption{\textbf{\Wine Scenario.}}
\end{figure}

\section{Conclusions}\label{concl}
In this work, we have critically examined the limitations of the techno-solutionist approach to explanations in the context of counterfactual generation, reaffirming interactivity as a core value in the explanation interface between the model and the user.
We have developed \EMA, an interactive graphical interface that facilitates exploration of the feature space of the decision boundary of a black box, providing an intuitive visualization of the areas where the model produces the desired outcomes. 
This user-centric approach enhances transparency and control in the decision-making process, contributing to the advancement of \ac{XAI} methodologies.

\paragraph{Future Work}
Embracing a human-centered perspective, the evaluation requires a conceptual shift by involving users within it in order to understand whether the explanation is convincing for the intended user, the ultimate stakeholder in the process~\citep{DBLP:conf/hcomp/LiaoZLDD22,DBLP:conf/chi/WangVCC23}.
Concretely, a qualitative evaluation in the form of a user study is necessary. The study will provide a measure of human validation regarding the comprehensibility and usability of the interface, as well as the usefulness of exploration as a means of generating explanations for the underlying decision model. 
Indeed, since our approach does not assume the existence of a pre-established cost function, making a quantitative assessment of its optimality is not useful. Therefore, a direct comparison with algorithmic methods should be dismissed, in fact the evaluation of the method's effectiveness can not rely solely on data-driven metrics mentioned in Section~\ref{related}, as generating a counterfactual depends entirely on user interaction.

\paragraph{Limitations}
Although our approach delegates the responsibility of evaluating counterfactuals to the user, the issue of \textit{plausibility} still applies, at least in part. While users are generally equipped with common sense and contextual knowledge, they may still be confused when the system labels as "valid" or "positive" a counterfactual instance that appears implausible or even absurd from their perspective. For example, an individual might wonder why a suggestion that feels unachievable is considered a valid recourse.

This highlights a broader challenge: how to incorporate soft constraints or plausibility filters into our interactive system without compromising user freedom. While we have not fully addressed this aspect in the current version, we believe it is a promising direction for future work. Ideally, users would be able to explore freely while receiving subtle guidance from the system—perhaps through visual cues or lightweight warnings—when proposed changes deviate too far from realistic or consistent patterns.

Another important aspect is the internal consistency between feature values. Some feature combinations, though technically valid, may be incompatible in practice. Many algorithmic methods address this through statistical constraints or generative models that ensure counterfactuals lie within the data distribution. In the absence of such mechanisms, our system risks presenting users with combinations that, while permitted by the interface, may not make sense in real-world scenarios. Incorporating light-touch plausibility checks, while preserving the simplicity and transparency of the tool, is an open challenge.

Our method also assumes that users will tend to make fewer, more meaningful changes compared to automatic algorithms. This encourages \textit{sparsity}, which is generally desirable. However, as the number of editable features grows, the cognitive burden on users increases. Exploring a complex decision boundary manually can become tiring or confusing, especially for users without technical or domain-specific expertise. Providing mechanisms to highlight the most influential features or prioritize promising directions could help mitigate this challenge.

Finally, we acknowledge that counterfactual explanations carry broader limitations. They can be susceptible to manipulation or gaming, and may recommend actions that are formally valid but ethically questionable or socially undesirable. These concerns, raised in recent work~\citep{slack2021counterfactual,barocas2020assumptions,kasirzadeh2021use}, must be taken seriously in the design of responsible and trustworthy recourse systems.

%By embracing a user-centric perspective, the field of \ac{XAI} can overcome the drawbacks of a purely technologically driven perspective. Our approach holds the potential to not only enhance the interpretability and transparency of AI models~\citep{transparency_LarssonH20} but also foster trust and effective decision-making in human-AI interactions~\citep{trust_Nicodeme20}.
\chapter{Causal Algorithmic Recourse and Time}\label{ch:timecar}

\begin{tcolorbox}[colback=CTsemi!5!white, colframe=CTsemi, boxrule=0.5mm]
\textbf{Main Sources:} \fullcite{DBLP:conf/xai/BerettaC23}, \fullcite{beretta2024enhancing}.
\end{tcolorbox}

\paragraph{Author's Note} 
This chapter marks the beginning of a reflection on the role of time in causal models, initiated during the second year of the PhD. Although the theoretical framework presented in the included works represents an early version of what is further developed in \cref{pt:time_aware_causality}, and although the incorporation of the temporal dimension remains shallow, these contributions analyze key methodological limitations of existing \ac{AR} approaches and lay the foundation for a richer notion of actionability and fairness, rooted in time-aware recommendations.
\vspace{30pt}

\ac{AR} aims to provide actionable recommendations to reverse negative outcomes from automated decision-making systems~\citep{karimi2020survey}. 
A possible circumstance in which such recommendations are required is when a credit scoring model denies a loan to an applicant, and the individual desires to understand what should be different to change the outcome of the \ac{AI} system (i.e., to have the loan accepted). Since the individual’s attributes change may have downstream effects on other features, observing and identifying causal mechanisms is crucial in analyzing real-world scenarios to avoid infeasible actions. From this perspective, ~\citet{karimi2021, karimi2020algorithmic} propose a fundamental reformulation of the recourse problem, incorporating knowledge of causal relationships into the process of recommending recourse actions. The ability to assess the causal relationships explicitly guarantees plausible counterfactuals~\citep{byrne2019} and improves the user's perception of a decision's quality since it reflects the tendency of human beings to think in terms of cause-effect~\citep{pearl2018book}.

A significant limitation of state-of-the-art methods is their \textit{inability to incorporate the temporal dimension}. Neglecting the temporal interdependencies between features and actions can result in erroneous identification of the feature that requires modification at a particular moment, leading to ineffective or sub-optimal recourse actions. As a result, there is a need to devise causal \ac{AR} techniques that can incorporate temporal information to provide more reliable and plausible recommendations.
Hence, we investigate the usefulness of integrating the temporal dimension into a causal \ac{AR} problem by incorporating the topological information of the causal graph in the cost function evaluation. 
Besides, we discuss the necessity of interpreting the causal model as a representation of a dynamical process i.e., one that involves the evolution of its instances over time.
Lastly, we highlight the critical role of time in influencing the fairness of \ac{AR} and explore how temporal factors contribute to systemic inequities.

The rest of the chapter is organized as follows. 
Section~\ref{sec:contributions_timecar} outlines the main contributions. 
Section~\ref{sec:why_timecar} presents the essential background and elaborates on the importance of considering the time dimension along with a practical example. 
Section~\ref{sec:timecar} details the proposed approach. 
Section~\ref{sec:fairtime} discusses time unawareness in the domain of fairness, examining the implications of recommendations for minority populations.
Section~\ref{sec:exp_timecar} reports the experimental evaluation.
Finally, Section~\ref{sec:conclusions_timecar} summarizes the findings and highlights open research directions.

\section{Contributions}\label{sec:contributions_timecar}

This chapter makes the following contributions:

\begin{itemize}
    \item \textbf{Problem Identification:} We identify an important limitation in existing causal \ac{AR} methods - their failure to account for temporal dependencies - and argue that this omission undermines the realism and feasibility of recourse recommendations.
    
    \item \textbf{Temporal Cost Modeling:} We introduce the first method in the literature that explicitly incorporates the temporal dimension into the cost evaluation of recourse actions. Our approach leverages the topology of the causal graph to model time-aware action effects.
    
    \item \textbf{Actionability Revisited:} We propose a new interpretation of actionability that integrates time constraints, highlighting how similar interventions may carry vastly different temporal costs and how this affects the practicality of recourse.
    
    \item \textbf{Fairness Implications:} We analyze how neglecting time in \ac{AR} can lead to unfair outcomes, particularly for marginalized groups, and demonstrate that temporal considerations are critical for equitable algorithmic decision-making.
    
    \item \textbf{Empirical Insight:} Through a controlled semi-synthetic setup based on the German Credit dataset, we illustrate how existing time-unaware causal AR methods can produce suboptimal or impractical recommendations.
\end{itemize}

\section{Why Time Matters?}\label{sec:why_timecar}
\subsection{Preliminaries}\label{sec:timecar_preliminaries}
\paragraph{Action Cost}
Identifying the optimal action in the causal \ac{AR} problem necessarily requires defining a notion of \quotes{intervention cost}, typically using a function 
$ \costfunction: \Xspace \times \actionspace \to \R^+,$ where $\X \in \Xspace$ is the individual. 
We use the notation $\bm{a}_{\bm{\delta}} \in \actionspace$ to denote an action that changes $\X$ by an amount $\bm{\delta}$. The cost associated with actions that result in greater changes to $\X$ is intuitively expected to be higher. In other words, we expect the function $\costfunction$ to increase in $|\bm{\delta}|$ monotonically. The choice of $\costfunction$ determines the optimization outcome, regardless of whether the problem is formulated at the observational, interventional, or counterfactual level.
The most commonly used cost function in the literature is the $\ell_p$ norm (see \cref{sec:cost_choices}), which we denote by $\costfunction_{\ell_p}$. Specifically, in this case we have 
\[
\costfunction_{\ell_p}(\X,\bm{a}_{\bm{\delta}}) = \norm{\bm{\delta}}_p.
\]
To ensure scale invariance across the features of $\X$, $\costfunction_{\ell_p}$ is often replaced by a normalized variant:
\[
\costfunction_{\ell_pn}(\X,\bm{a}_{\bm{\delta}}) = \norm*{\frac{\bm{\delta}}{\bm{\sigma}^2}}_p, \quad \text{where} \quad \sigma^2_i = \Var[X_i]
\]
and the division between $\bm{\delta}$ and $\bm{\sigma}^2$ is understood component-wise.

Despite the constraints inherent in this formulation, it is widely considered a rational and viable choice, mainly due to the inherent challenges of formulating an effective cost function without access to supplementary information.

%\paragraph{Actionable Recourse}
%The distinction between \ac{AR} and the \textit{causality-aware} variant is defined by how the action $\mathbb{A}_\delta$ operates on a particular instance $X$. In the former $\mathbb{A}_\delta(X):=X+\delta$, whereas in the latter, the action is considered as a causal intervention
%$$\mathbb{A}_\delta(X):=\mathbf{F}_{\mathbb{A}_\delta}(X), \quad \text{where} \quad  \mathbf{F}_{\mathbb{A}_\delta}=\{X_i := f_i(\parents[i], E^{(i)}+\delta_i\}_{i=1}^{n}.$$

\subsection{The Role of Time}
In everyday experiences, we typically observe a temporal ordering between the cause and the effect, where the former precedes the latter. This relation could be exemplified by turning on a light switch in a room, where the action of flipping the switch is the cause of the light turning on. 
In the context of causal graphs applied to cross-sectional data, time is often ignored, 
leaving room for other notions of dependence between variables.
However, in the framework of \ac{AR}, it seems natural to include time as a relevant parameter in defining the cost of a specific action. 
We typically assume that \textit{a change in the value of one variable in the causal graph instantaneously affects the descendant variables}. 
In short, probability distributions, including interventional ones, represent a static and unchanging phenomenon of a fundamentally descriptive type. From another perspective, when considering %thinking of 
a physical system, its structural equations describe the system's behavior
in response to specific physical interventions, ultimately leading to a new and distinct equilibrium state.
However, the propagation of the effects of these interventions to the downstream variables may not occur immediately.
\begin{figure}[t]
    \centering
        \begin{tikzpicture}[
            scale=0.9,
            transform shape,
            vertex/.style={circle, draw, minimum size=0.9cm, inner sep=0pt}]
        	\node[vertex,label=below:Education] (x) at (-2,0) {$X$};
        	\node[vertex,label=below:Skills] (z) at (0,0) {$Y$};
        	\node[vertex,label=below:Salary] (y) at (2,0) {$Z$};
        	
        	\draw[edge] (x) -- (z);
        	\draw[edge] (z) -- (y);
        \end{tikzpicture} 
    \caption{\textbf{Simple Causal Graph}. A causal graph representing the relationship between college education, individual skills, and job salary.}
    \label{fig:causal_dag}
\end{figure}
For example, Figure~\ref{fig:causal_dag} reports a causal graph consisting of three variables, $X$, $Y$, and $Z$, representing a person's college education, skills, and job salary, respectively. 
%where $X$ stands for having a college education or not, $Y$ represents the individual's skill, $Z$ their job's salary. 
We can assume that the system is described by a linear model with additive noise, %by the structural equations
which can be expressed by the following structural equations:
$$ X:=\Noise_{X}, \quad Y:=a X+\Noise_{Y},\quad Z:=b Y+\Noise_{Z}.$$ 
where, $\Noise_{X}$, $\Noise_{Y}$, and $\Noise_{Z}$ represent noise terms, and $a$ and $b$ are constants. 
%The model states that college generally provides better preparation, often leading to a better salary. 
Such a model implies that a college education generally leads to better preparation and a higher salary. 
However, the process leading from $X$ to $Y$ can take time (in this case, years), but the model lacks this kind of temporal information 
and thereby is incapable of considering it. 
Suppose a person %that a particular individual 
wants to increase his salary $Z$ and queries the model for advice on achieving this goal. 
The alternatives are only two: the person can improve their skills $Y$ by taking a training course, or they can attend college $X$ to obtain skills as a result of the action. 
%The optimal action will likely turn out to be the former. 
The optimal action would likely be to take a training course. 
However, the model may not be able to detect this fact. 
In particular, if the coefficient $a$ that links $X$ to $Y$ is sufficiently large, according to~\citep{karimi2020algorithmic}, the optimal action would be to intervene on $X$, rather than on $Y$. 
%In general, we could expect that every time a node has many descendants and its effects are relevant, the intervention set, i.e. the variables to intervene on, will probably include it. In other words, \citep{karimi2020algorithmic} could have a bias towards \emph{root nodes}. Our goal is to identify this bias if present and be able to control it. 
Generally, whenever a node has many children and/or is the beginning of a long chain, it is likely to be part of the intervention set, \ie, the set of variables on which to intervene. 
%Generally, whenever a node has many descendants and  effects, the intervention set, which is the set of variables to intervene on, would likely include the node. 
This suggests that the actual formulation of the causal algorithmic recourse problem could be biased towards \emph{root nodes}.

\section{Time Weighted Causal Graphs}\label{sec:timecar}
A possible approach to consider the temporal aspect as a relevant parameter of the process is to use the topological information of the graph as an indicator of the time interval between the action and its consequences.
One way to integrate the model with the missing information is to manually assign additional weights to each edge of the graph, incorporating information about the \textit{characteristic response time} of the child variable w.r.t. the change of the parent variable.
This allows for analyzing the dynamic flow of the causal effect over time.

Given the \ac{DAG} $\graph$, we reformulate the cost function of an action $\bm{a}_{\bm{\delta}}$ over $\X$ \textit{to obtain an effect over $Y$}, as 
\begin{equation}\label{eq:timecar}
    \costfunction(\X,\bm{a}_{\bm{\delta}},Y) = c_{s}\left(\X,\bm{a}_{\bm{\delta}}\right) + \lambda c_{t(\graph)}\left(S(\bm{a}_{\bm{\delta}}),Y\right),
\end{equation}
where $S(\bm{a}_{\bm{\delta}})$ is the support of $\bm{a}_{\bm{\delta}}$, i.e. the set of variables \textit{directly} modified by $\bm{a}_{\bm{\delta}}$. 
%We added the $\lambda$ as a free parameter to tune explicitly the balance between the two parts of the cost function, $C_s$ that represents the cost function in the features' space and $C_t$ that indicates the \textit{time} part. 
We add the parameter $\lambda$ as a free variable to explicitly adjust the balance between the two components of the cost function, namely $c_s$ that denotes the cost function in the feature space and $c_{t(\graph)}$ that reflects the time part.
%Note that, with some abuse of notation, 
It should be noted that the parameters of $c_s$ represent %are intended as 
features and their values, while $c_{t(\graph)}$ involves the topological properties of the graph and thus depends on nodes and their graph relationships. In the following, we provide an example of a cost function considering the temporal dimension:
$$ \costfunction(X,\bm{a}_{\bm{\delta}},Y) = \norm{\bm{\delta}}_2 + \lambda \sup_{V\in \mathcal{S}(\bm{a}_{\bm{\delta}})} d_{lp}^\graph(V,Y)$$
where $d_{lp}^\graph(V,Y)$ %being 
denotes the longest path distance between node $V$ and $Y$ over $\graph$.

~\cref{fig:graphexample} illustrates the graphical example of the above formula. 
Assuming for simplicity that each edge has %the same weight ($1$)
a weight of $1$, we consider an action over the set $\{W,X\}$, observing different directed paths towards $Y$. 
\begin{figure}[!t]
\centering
\begin{subfigure}{0.45\textwidth}
    \centering
    \begin{tikzpicture}[
        scale=0.9,
        transform shape,
        vertex/.style={circle, draw, minimum size=0.8cm, inner sep=0pt}]
        \node[vertex] (x) at (0,-0.3) {$X$};
        \node[vertex] (w) at (0.6,1.3) {$W$};
        \node[vertex] (z) at (2.4,1.3) {$Z$};
        \node[vertex] (c) at (1.5,-1.3) {$C$};
        \node[vertex] (y) at (3,-0.3) {$Y$};
        
        \draw[edge] (x) -- (w);
        \draw[edge] (w) -- (z);
        \draw[edge] (w) -- (y);
        \draw[edge] (z) -- (y);
        \draw[edge] (x) -- (c);
        \draw[edge] (c) -- (y);
    \end{tikzpicture}
    %\caption{Causal graph}
\end{subfigure}
\hfill
\begin{subfigure}{0.45\textwidth}
    \centering
    \begin{tikzpicture}[
        scale=0.9,
        transform shape,
        vertex/.style={circle, draw, minimum size=0.8cm, inner sep=0pt}]
        \node[vertex] (x) at (0,-0.3) {$X$};
        \node[vertex] (w) at (0.6,1.3) {$W$};
        \node[vertex] (z) at (2.4,1.3) {$Z$};
        \node[vertex] (c) at (1.5,-1.3) {$C$};
        \node[vertex] (y) at (3,-0.3) {$Y$};
        
        \draw[snakedge] (x) -- (w)
        node[scale=1,midway,fill=white] {$3$};
        \draw[snakedge] (w) -- (z)
        node[scale=1,midway,fill=white] {$5$};
        \draw[snakedge] (w) -- (y)
        node[scale=1,midway,fill=white] {$1$};
        \draw[snakedge] (z) -- (y)
        node[scale=1,midway,fill=white] {$1$};
        \draw[snakedge] (x) -- (c)
        node[scale=1,midway,fill=white] {$0$};
        \draw[snakedge] (c) -- (y)
        node[scale=1,midway,fill=white] {$4$};
    \end{tikzpicture}
    %\caption{Time-weighted causal graph}
\end{subfigure}
\caption{\textbf{Causal Graph vs Time Weighted Causal Graph}. (Left) An example of a causal graph. (Right) A possible way to incorporate time information over the causal graph. A wavy edge weight $\tau_{ij}$ is meant to represent time between intervention over node $i$ and the observed effect on node $j$.}
\label{fig:graphexample}
\end{figure}

By applying $\costfunction$ and focusing on the \emph{time} part of the formula, we obtain $$c_t = \sup_{V\in \{X,W\}}d_{lp}(V,Y) = d_{lp}(X,Y) = 3.$$ 
The $\sup$ operation is motivated by %this 
the following observation: 
%if we act over some variables and we interpret 
when we act on some variables and interpret the process as a dynamic information flux from them to the target, the total causal effect will be observed after every single process finishes, equivalently, after the last one does.
This proposal for $c_t$ may be overly simplistic and lacking flexibility, as adding a single edge with a high weight can radically change the cost function regardless of its causal effect. 
A less rigid formulation of $c_t$ could consider the average response time across the multitude of causal processes involved, each of which can be represented by a causal path between the intervention variable and the target. 
The average could be weighted by the causal impact of each process, ensuring that minor processes do not excessively skew the evaluation.

Consider the case of linear models with additive noise characterized by \acp{SCM} of the following form:
$$ \Xcomp[j] := \sum_{X_i \in \parents[j]} \beta_{ij} X_i + \Noise_j.$$
In this setting, the path impact can be assessed by calculating the product of the coefficients $\beta_{ij}$ associated with their corresponding edges $(i,j)$. 
Furthermore, the total causal effect of a cause $X$ over $Y$ is the sum of the effects attributed to the paths between $X$ and $Y$.
Suppose we also have information regarding the response times associated with edges in $\graph$, as depicted in Figure \ref{fig:timegraph}. We denote these quantities as $\tau_{ij}$.
Considering the set of all these paths and defining the weight of each path $\pi$ as $w_\pi := \prod_{(i,j) \in \pi}|\beta_{ij}|$, we define the following:
$$ c_t(X,Y) = \frac{1}{Z_{XY}}\sum_{\pi | X\overset{\pi}{\to }Y}w_\pi t_\pi, \quad Z_{XY} := \sum_{\pi | X\overset{\pi}{\to }Y}w_\pi, \quad t_\pi := \sum_{(i,j) \in \pi}\tau_{ij}.$$
In brief, $t_\pi$ represents the propagation time of the causal effect \textit{through} $\pi$, and $c_t$ is the weighted average of $t_\pi$, weighted by the relative importance of the causal effect of each path on the overall process. 
The value of $c_t$ in this revised formulation intuitively represents the time at which a significant portion of the causal effect becomes observable, rather than necessarily capturing its entirety. 
In this sense, it offers greater robustness and flexibility compared to a cost that relies solely on calculating the longest path.

\paragraph{General Remarks} 
We have presented some formulations incorporating temporal dimension within the causal \ac{AR} framework. 
However, it is essential to recognize that a general solution is insufficient in resolving the issue across diverse contexts. Instead, deeper considerations must be given to the unique demands of the user and the specific properties inherent in the problem being addressed. 

If the user needs to complete the action within a time constraint, \eg{}, purchase a house and move in within two months of applying for a loan, instead of directly adding the term $c_t$ to the total cost, an alternative approach could be to use it as a constraint in the optimization problem. This approach would redefine the problem as finding the most cost-effective action, where the effects are achieved within a predetermined maximum time, leading to a favorable decision.

Moreover, the problem could be such that $\tau_{ij}$ is not a predetermined fixed value, but rather it may depend on the specific instance under consideration. For instance, in the context of achieving fitness through a diet and exercise program, the required time would vary based on factors such as the user's age, gender, and weight.
In such cases, it would be necessary to enrich the data structure and fit $\tau$ based on available data. 

Lastly, the nature of the data can have a significant impact on the formulation of the problem. As an example, where data provides relevant information regarding the temporal dimension, the choice of $c_t$ may inherently depend on it. In particular, when working with a time series dataset, it may be possible to construct a more nuanced $c_t$ that aligns with specific practical contexts.

Considering these aspects, it follows that the issue is both complex and multifaceted. Although fully addressing it is beyond the scope of this work, developing effective methods will require applying our proposal to real-world scenarios.

\section{Time Awareness for Fairer Recommendations}\label{sec:fairtime}
Building on the formulation proposed in~\cref{sec:timecar}, this section extends the discussion to the domain of fairness, specifically examining the implications of recommendations for marginalized demographic groups.
Indeed, long wait times for implementing recourse actions can profoundly impact these communities' financial stability, educational opportunities, and overall well-being. Such delays exacerbate pre-existing inequalities, perpetuating cycles of disadvantage that are challenging to overcome. 
In the following, we highlight the critical role of time in influencing the fairness of \ac{AR}, particularly for minority populations, and explore how temporal factors contribute to systemic inequities. Finally, we propose a novel interpretation of actionability that incorporates time constraints, ensuring that recommendations remain feasible within predefined temporal limits. 

\begin{figure}[t!]
  \centering
  \begin{subfigure}[t]{0.5\textwidth}
    \includegraphics[width=\textwidth]{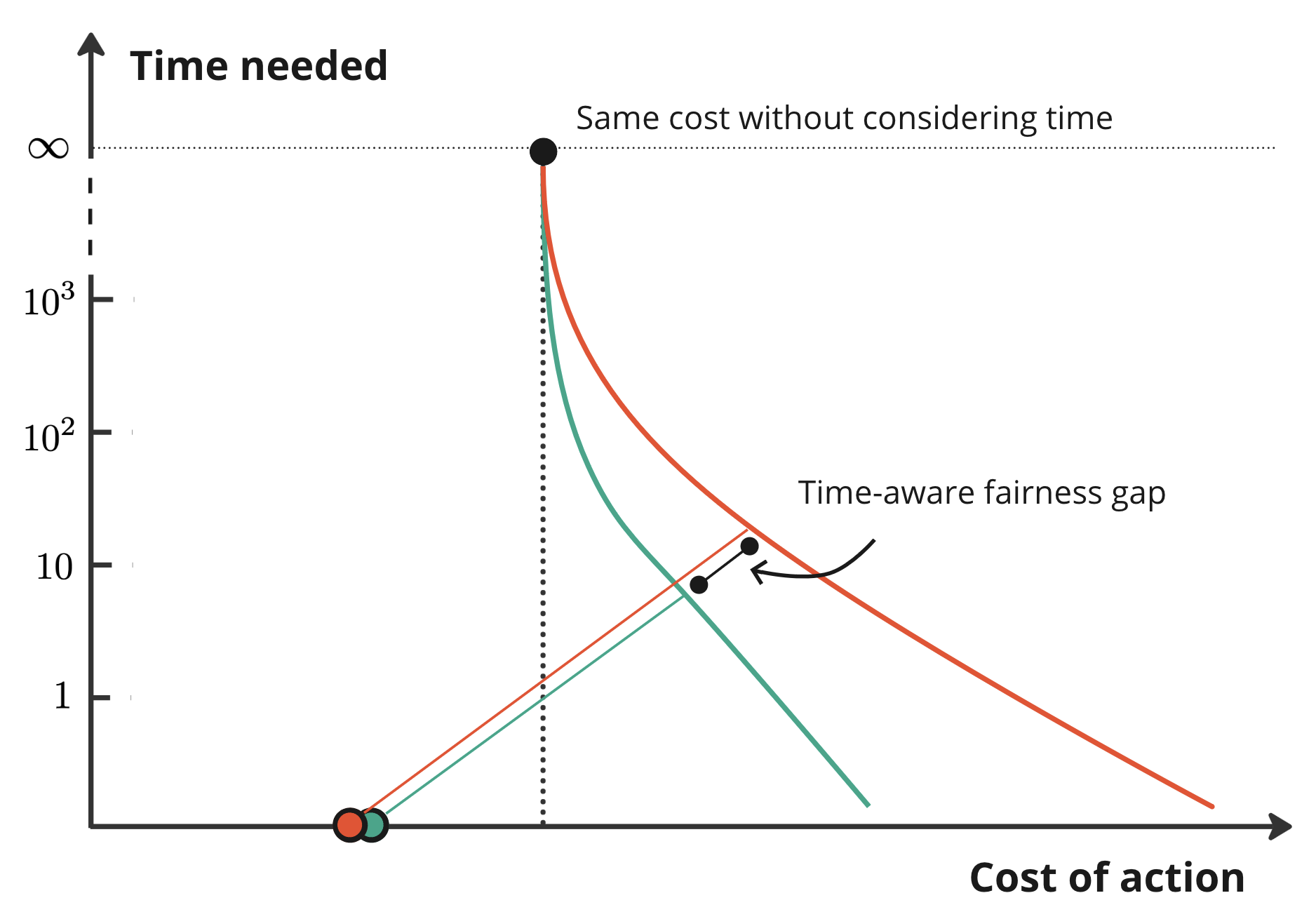}
    \caption{Not accounting for time could introduce hidden biases in recommendation algorithms.
    }\label{fig:same_cost}
  \end{subfigure}%
  \hfill
  \begin{subfigure}[t]{0.42\textwidth}
     \raisebox{2.8mm}
    {\includegraphics[width=\textwidth]{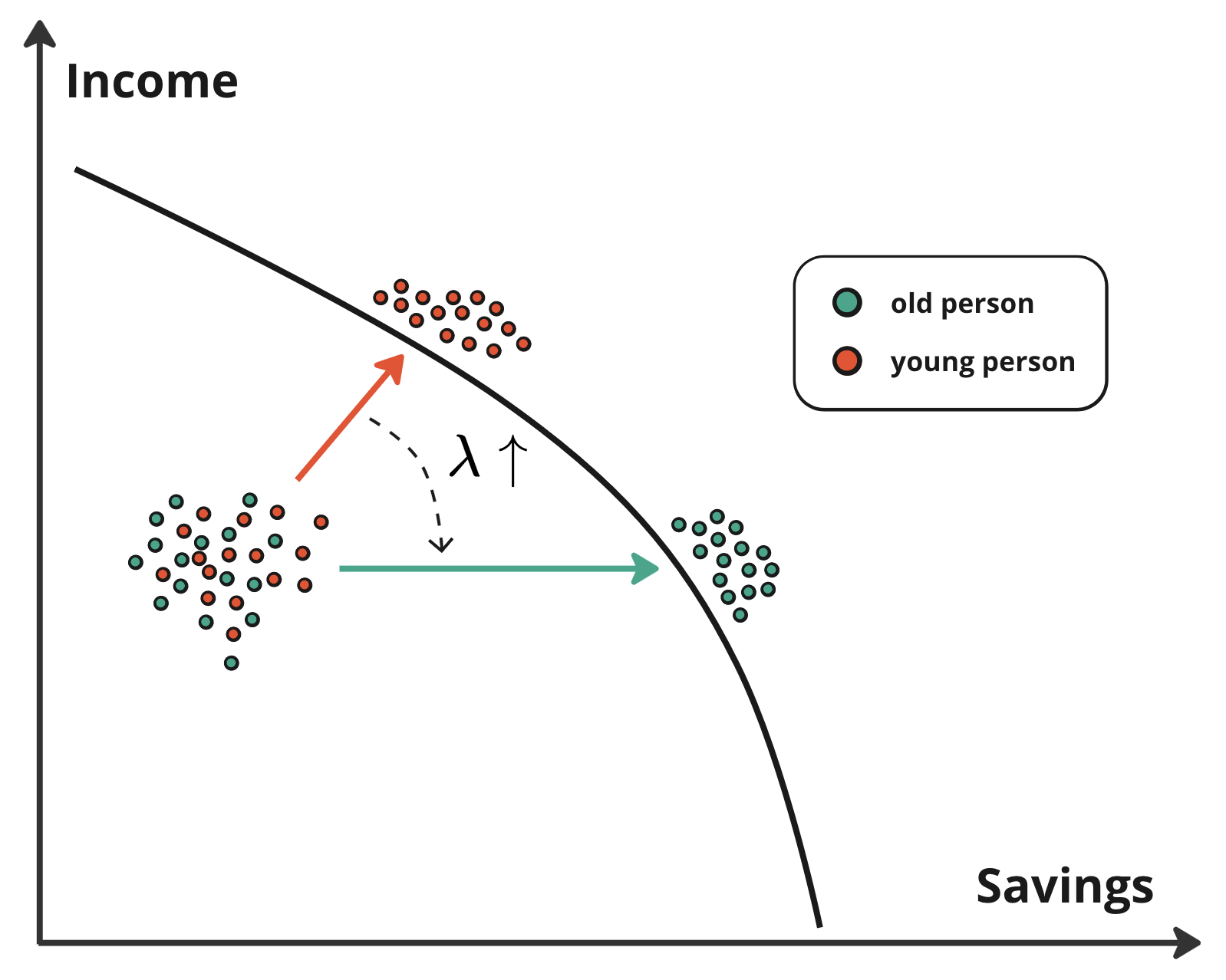}}
    \caption{Let $cost(\delta) = c_s(\delta) + \lambda \bm{1}(\delta_{income} \neq 0)$, it could be that $\lambda$ is depending on \textit{Age}.}\label{fig:old_young}
  \end{subfigure}
  \caption{\textbf{Time Awareness Scenarios for Fair Recommendations.} Possible scenarios where the time cost matters from a fairness perspective. 
  (a) An example of how similar feature costs can result in significantly different temporal costs. (b) An example of time being considered as a resource whose value varies based on individual characteristics.}
\end{figure}

\paragraph{Same Cost, Different Times}

Given the formulation in~\cref{eq:timecar}, we consider a scenario in which \textit{sensitive attributes} are included among the features, denoted by $\mathcal{S} \subset \Xspace$ (with a slight abuse of notation, we use the same symbol for both the feature space and the set of features). We examine the case of two individuals, $\x_1$ and $\x_2$, who differ in their sensitive attributes, such that $\bm{s}_1 \neq \bm{s}_2$.
We hypothesize that the cost of recommendations produced by the time-unaware automatic decision system for these individuals, $c_s(\x_1, \bm{a}_{\bm{\delta_1}})$ and $c_s(\x_2, \bm{a}_{\bm{\delta_2}})$, are approximately equal. This suggests that, despite differences in sensitive attributes, the system proposes similar intervention costs for both individuals. However, the temporal cost $c_t$ may differ significantly between them—meaning one individual might require more time than the other to reach the desired outcome. This scenario is illustrated in~\cref{fig:same_cost}.

\paragraph{Not Everyone Values Time Equally}
In another scenario, time may be regarded as a resource whose value varies based on individual characteristics. ~\cref{fig:old_young} demonstrates this idea through a specific case related to applying for a loan. The value of $\lambda$ in~\cref{eq:timecar} might be higher for the older population as they are likely closer to retirement and have a limited window to recuperate from financial setbacks. Conversely, younger individuals might have a lower $\lambda$ value given their longer time horizon to adjust their savings behavior.
Hence, financial models must be calibrated to accommodate varying $\lambda$ values across different demographic segments. This understanding enables the creation of customized recommendations sensitive to each individual's dynamics and the time-related evaluation of changes within their societal and economic contexts.

\begin{figure}[t!]
    \centering
\includegraphics[width=1\textwidth]{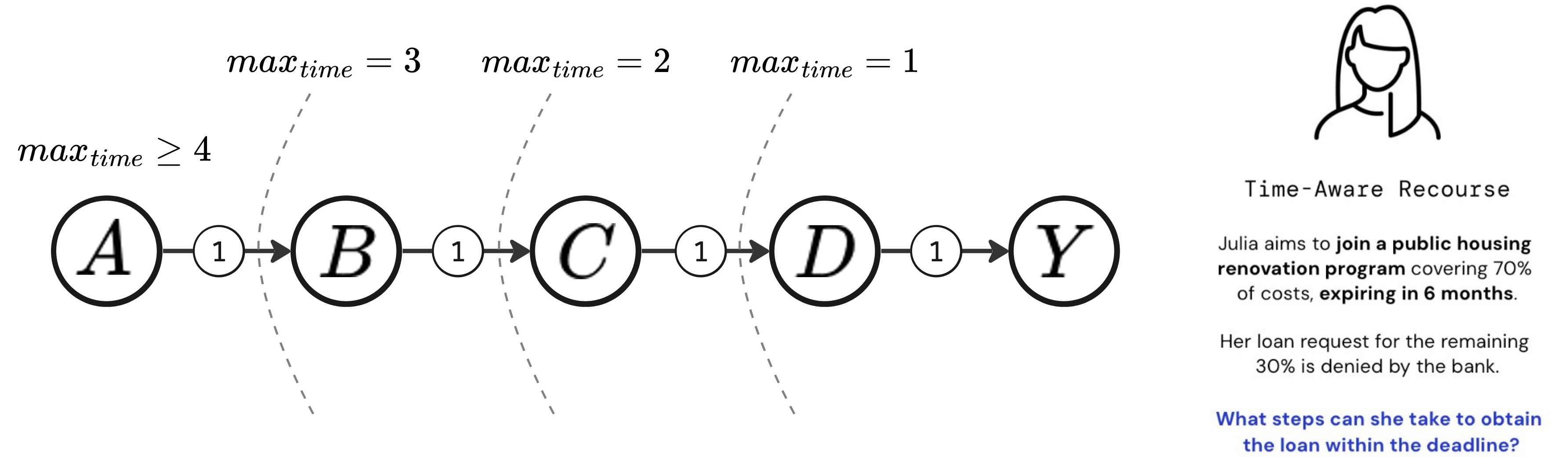}
    \caption{\textbf{Actionability as Time Constraint.} (Left) As $max_{time}$ decreases, the graph undergoes a pruning process that reduces the number of actionable variables. For example, only C and D will be actionable if the user specifies a maximum time of $2$ years for the request. (Right) A real-life application is one where an individual has finite time available.}
    \label{fig:actionable_dag}
\end{figure}

\paragraph{Actionability as a Time Constraint} \label{sec:broader}
%In the context of \ac{AR}, \textit{plausibility} refers to the perceived consistency and reasonableness of the recommendations provided by recourse approaches. 
%
%From a psychological perspective, providing plausible explanations enables users to form mental models that align with their prior knowledge~\citep{beretta2022}. 
%
%When the temporal dimension is incorporated into causal reasoning, an \ac{AR} approach could ensure that the actions suggested are psychologically congruent with human intuitions and mental frameworks. 
%
%This compatibility fosters a sense of trust and confidence in the algorithmic system, facilitating user acceptance and engagement. 
\textit{Actionability} is considered one of the crucial aspects in a counterfactual generation process, as highlighted in~\citet{guidotti2022counterfactual, , DBLP:journals/csur/KarimiBSV23}. 
We propose expanding the concept beyond the notion of \textit{being able to act upon} to include \textit{the ability to do so within a reasonable timeframe}   (~\cref{fig:actionable_dag}). 
In fact, if the action required to implement a recommendation is excessively time-consuming or impractical, the recommendation becomes unhelpful for the user. 
In the constrained optimization framework of \ac{AR}, the actionability threshold is directly controlled by the maximum time constraint, denoted as $max_{time}$. 
This parameter can be determined a priori or \textit{adapted based on the user's specific requirements each time a request is made.} 
In the latter case, the parameter enables personalized control over actionability for the applicant. 
From this perspective, we propose a new interpretation of \textit{fair} recommendations expressed as follows: 
\textit{a time-aware algorithmic recourse model is fair if its recommendations remain fair under any fixed time constraint.}

\section{Empirical Evaluation}\label{sec:exp_timecar}
\subsection{Research Questions}
To evaluate our proposed approach, we aim to answer the following questions:

\begin{itemize} 
    \item \textit{Q1}: How does the action cost influence the selection of actionable variables? 
    \item \textit{Q2}: Does the current formulation of causal \ac{AR} exhibit a bias favoring interventions on root nodes in the causal graph?
\end{itemize}

The code and the dataset are available at \url{https://github.com/marti5ini/time-car}. Experiments were run on Ubuntu 20.04LTS, 252GB RAM, 3.30GHz x 36 IntelCore i9.

\begin{figure}[!t]
\centering
\begin{tikzpicture}[scale=1, transform shape,
        vertex/.style={circle, draw, minimum size=0.8cm, inner sep=0pt}]
    \node[vertex] (g) at (-2,0) {$G$};
    \node[vertex] (a) at (0,2) {$A$};
    \node[vertex] (e) at (0,0) {$E$};
    \node[vertex] (j) at (2,0) {$J$};
    \node[vertex] (l) at (2,2) {$L$};
    \node[vertex] (d) at (4,0) {$D$};
    \node[vertex] (i) at (4,2) {$I$};
    \node[vertex] (s) at (6,0) {$S$};
    \node[vertex] (y) at (6,2) {$Y$};
    
    \draw[edge] (g) -- (e);
    \draw[edge] (a) -- (e);
    \draw[edge] (g) -- (l);
    \draw[edge] (a) -- (l);
    \draw[edge] (e) -- (j);
    \draw[edge] (a) -- (j);
    \draw[edge] (g) to [out=-30,in=210] (j);
    \draw[edge] (g) to [out=-30,in=210] (d);
    \draw[edge] (a) -- (d);
    \draw[edge] (l) -- (d);
    \draw[edge] (j) -- (i);
    \draw[edge] (g) -- (i);
    \draw[edge] (e) -- (i);
    \draw[edge] (i) -- (s);
    \draw[edge] (l) to [out=30,in=150] (y);
    \draw[edge] (d) -- (y);
    \draw[edge] (i) -- (y);
    \draw[edge] (s) -- (y);
\end{tikzpicture}
\caption{\textbf{German Credit Causal Graph}. The assumed causal graph for the German Credit dataset. Nodes are labeled with the initials of their corresponding variables, namely \textit{Gender, Education, Age, Job, Loan Amount, Loan Duration, Income} and \textit{Saving}. The node Y denotes the \textit{Credit Score}.}
\label{fig:german}
\end{figure}

\subsection{Experimental Settings}\label{sec:exp_setup}
We consider a semi-synthetic \ac{SCM} based on the German Credit dataset\footnote{\url{https://www.kaggle.com/datasets/uciml/german-credit}}. While the corresponding causal graph is shown in~\cref{fig:german}, the loan approval \ac{SCM} consists of the following structural equations and noise distributions: 
\begin{center}
    \resizebox{0.95\textwidth}{!}{%
        \begin{minipage}{\textwidth}
            \begin{align*}
            (Gender) \qquad G &:=\Noise_G, \qquad &\Noise_G \sim \operatorname{Bernoulli}(0.5) \\[0.3em]
            (Age) \qquad A &:=-35+\Noise_A, \qquad &\Noise_A \sim \operatorname{Gamma}(10,3.5)\\[0.3em]
            (Education) \qquad E &:= G+ A +\Noise_E, \quad &\Noise_E \sim \mathcal{N}(0,1)\\[0.3em]
            (Job) \qquad J &:= G + 2 A + 4 E + \Noise_J, \quad &\Noise_J \sim \mathcal{N}(0,2)\\[0.3em]
            (Loan\; Amount) \qquad L&:= A + 0.5 G+ \Noise_L, \quad &\Noise_L \sim \mathcal{N}(0,3)\\[0.3em]
            (Loan\; Duration) \qquad D&:=G - 0.5 A + 2L+\Noise_D, \quad &\Noise_D \sim \mathcal{N}(0,2)\\[0.3em]
            (Income) \qquad I&:=0.5G + A + 4 E + 5 J + \Noise_I, \quad &\Noise_I \sim \mathcal{N}(0,4)\\[0.3em]
            (Savings) \qquad S&:= 5 I +\Noise_S \quad &\Noise_S \sim \mathcal{N}(0,2)
            \end{align*}
         \end{minipage}
    }
\end{center}
The \textit{Credit Score} $Y$ is obtained according to:
$$
Y \sim \sigma (2I+3S-L-D).
$$

We model \textit{Age}, \textit{Gender} and \textit{Loan Duration} as non-actionable variables but consider the latter to be mutable, i.e., it cannot be manipulated directly but is allowed to change (e.g., as a consequence of an intervention on \textit{Loan Amount}).
Our analysis specifically focuses on \textit{Education} and \textit{Income}, as we suppose that these two variables may display the most significant bias. The reason behind our belief is that both variables lie on the same causal pathway. Moreover, from a semantic perspective, one would anticipate that education is a way slower process than an income increase. A potential illustration of this idea is depicted in~\cref{fig:timegraph}, in which edges starting from \textit{Education} have quite significant weights compared to the others.
Notably, the structural equations used in \citep{karimi2020algorithmic} differ from those defined in our work. 
The rationale for the variation is rooted in the observation that the \ac{SCM} used had excessive noise to each variable, thus rendering the causal effect of any reasonable intervention practically irrelevant and precluding the possibility of testing alternative scenarios. 
Moreover, while the original structural equations accounted for both linear and nonlinear relationships, it was deemed sufficient for us only to consider the linear ones. 
Indeed, for the scope of our work, studying systems other than linear was unnecessary because the time dimension is independent of the \ac{SCM} form.
Using the loan approval \ac{SCM} described above, we have generated a synthetic observational dataset $\mathcal{D}$ with $10000$ samples. For each feature $\Xcomp[i]$, we applied an intervention described by: 
$$\Xcomp[i] \rightarrow \Xcomp[i] + \alpha \hat{\sigma}_{i}, \quad \text{where} \quad \alpha \in \mathbb{R}^+, $$ 
generating for each intervention an interventional dataset $\mathcal{D}_{\Xcomp[i]}$\footnote{Note that the cost of the action is equal to $\frac{\alpha\hat{\sigma}_i}{\hat{\sigma}_i}=\alpha$, regardless of the variable on which the action is executed.} Given a fixed intervention cost, our goal is to determine which action yields the greatest impact on the target variable $Y$.
To this end, we estimate the Causal Effect Derivative (\texttt{CED}), which, in the linear case, can be interpreted as the causal effect produced by a unit-cost intervention on a specific variable.
The \texttt{CED} is computed using the following finite-difference expression, which captures the effect-to-cost ratio of the intervention:

\begin{equation}\label{eq:estimate} \frac{\overline{Y}_{\mathcal{D}_{\Xcomp[i]}} - \overline{Y}_{\mathcal{D}}}{\alpha}. \end{equation}

\begin{figure}[!t]
\centering
\begin{tikzpicture}[scale=1, transform shape,
        vertex/.style={circle, draw, minimum size=0.8cm, inner sep=0pt}]
    \node[vertex] (e) at (-2,0) {$E$};
    \node[vertex] (l) at (5,2) {$L$};
    \node[vertex] (j) at (0,0) {$J$};
    \node[vertex] (d) at (4,0) {$D$};
    \node[vertex] (i) at (1,2) {$I$};
    \node[vertex] (s) at (2,0) {$S$};
    \node[vertex] (y) at (3,2) {$Y$};
    \draw[snakedge] (e) -- (i)
    node[scale=1,midway,fill=white] {$5$};
    \draw[snakedge] (e) -- (j)
    node[scale=1,midway,fill=white] {$5$};
    \draw[snakedge] (j) -- (i)
    node[scale=1,midway,fill=white] {$1$};
    \draw[snakedge] (i) -- (s)
    node[scale=1,midway,fill=white] {$2$};
    \draw[snakedge] (d) -- (y)
    node[scale=1,midway,fill=white] {$0$};
    \draw[snakedge] (i) -- (y)
    node[scale=1,midway,fill=white] {$1$};
    \draw[snakedge] (s) -- (y)
    node[scale=1,midway,fill=white] {$0$};
    \draw[snakedge] (l) -- (y)
    node[scale=1,midway,fill=white] {$0$};
    \draw[snakedge] (l) -- (d)
    node[scale=1,midway,fill=white] {$0$};
\end{tikzpicture}
\caption{\textbf{Actionable Weighted Causal Graph}. German Credit actionable causal graph where the edge labels represent the response times for parent-child relationships.}
\label{fig:timegraph}
\end{figure}

\subsection{Results}

\paragraph{Q1: Determining the Optimal Cost}
As stated in~\cref{sec:why_timecar}, the cost function used in~\citet{karimi2021} is the normalized $\ell_1$ norm. 
Such normalization is required because if the distributions of the variables being manipulated have different scales, this could result in different costs. ~\cref{fig:costo} depicts a possible scenario that describes the phenomenon mentioned above. The blue-colored distribution has a smaller variance, while the red has a larger one. 
As a result, the intervention cost in the case of the first distribution is higher than that of the second, given the same amount of $\delta$.
Although this formulation is commonly used, we believe it may not be sufficient, given that it entails critical aspects that we endeavor to explain below.

For instance, we consider a simple \ac{SCM} system, such as the linear Gaussian model. In this setting, each variable $\Xcomp[i]$ is defined by the structural equation
\[
\Xcomp[i] := \sum_{\Xcomp[j] \in \parents[i]} \beta_{ji} \Xcomp[j] + \Noise_i, \quad \Noise_i \sim \mathcal{N}(0,\hat{\sigma}^2_i), \quad \beta_{ji} \in \mathbb{R}.
\]
Here, $\hat{\sigma}^2_i$ represents the variance of the exogenous noise term $\Noise_i$, which we refer to as the \textit{proper variance} of $\Xcomp[i]$. 

The total variance of $\Xcomp[i]$, denoted $\sigma^2_i = \Var[\Xcomp[i]]$, includes both the contribution from the noise and the influence of its parents. Letting $\bm{\beta}_i$ be the vector of structural coefficients $\beta_{ji}$ \textit{restricted to the set of parents of} $\Xcomp[i]$, and $\bm{\Sigma}_{\parents[i]}$ the covariance matrix of the parent variables, the variance of $\Xcomp[i]$ can be compactly expressed as
\[
\sigma^2_i = \bm{\beta}_i^\top \bm{\Sigma}_{\parents[i]} \bm{\beta}_i + \hat{\sigma}^2_i.
\]
Since $\bm{\Sigma}_{\parents[i]}$ is a covariance matrix, it is positive semi-definite, implying that $\bm{\beta}_i^\top \bm{\Sigma}_{\parents[i]} \bm{\beta}_i \geq 0$. Therefore, we obtain the inequality
\[
\sigma^2_i \geq \hat{\sigma}^2_i,
\]
meaning that the total variance of $\Xcomp[i]$ is always at least as large as its proper variance.

Furthermore, $\Xcomp[j]$ can be recursively expanded as a linear function of its ancestors, with a coefficient that reflects the cumulative effect along the causal paths.
%In most cases, $a_{ij}\geq 1$ results in an avalanche effect of the variances. 
If $\beta_{ij}\geq 1$, we have a \emph{avalanche effect} of the variances. This means that the variables become increasingly spread along the causal order. A comprehensive discussion on this topic can be found in~\citep{reisach2021beware,reisach2023}, where this property is exploited to infer the causal structure of a directed acyclic graph.
When considering \ac{AR}, we argue that using the normalized $\ell_1$ norm with ${\sigma}^2_i$ as a cost function can lead to significant distortions if applied to the system being studied. This is because interventions become less expensive as one moves down the graph's topological order, regardless of each variable's internal properties. To address this issue, \textit{we have decided to normalize the cost function based on the proper variances} $\hat{\sigma}^2_i$.
\begin{figure}[!t]
\centering
\includegraphics[width=6cm]{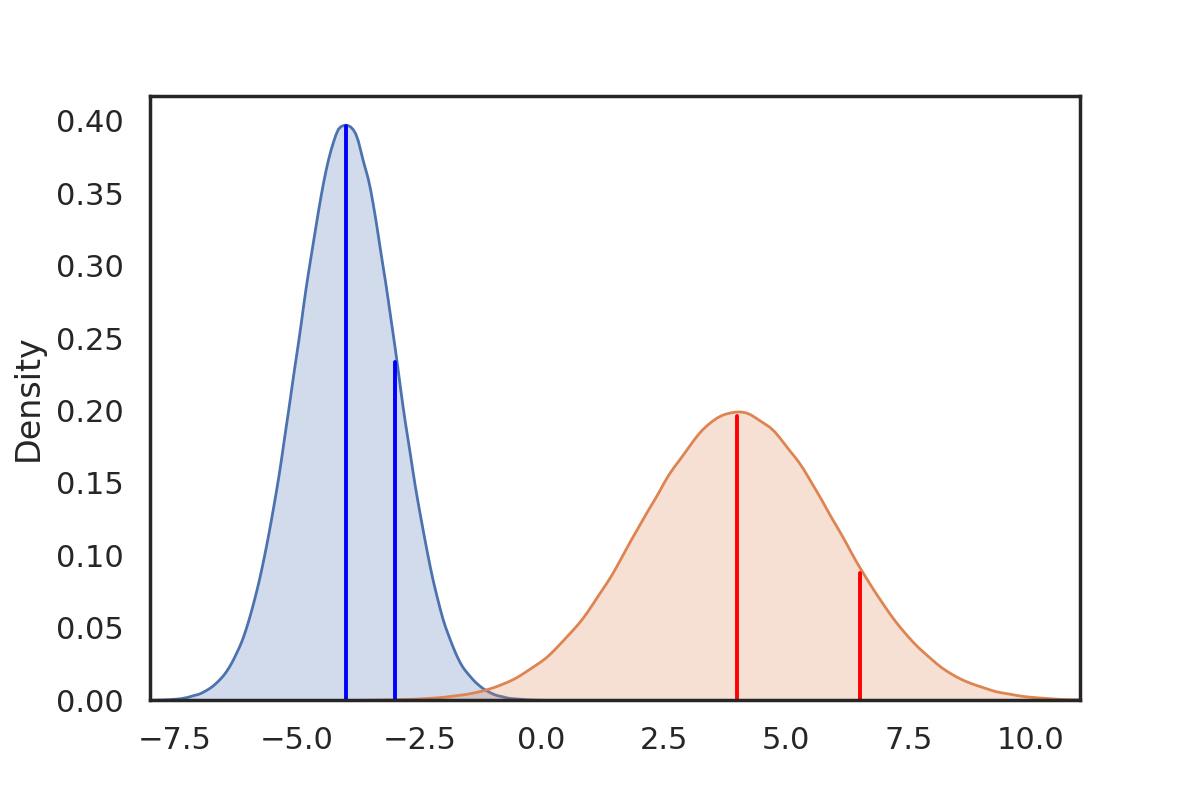}
\caption{\textbf{Impact of Distribution Scale on Costs.} An example demonstrating how the scale of distributions can lead to variations in intervention costs.}
\label{fig:costo}
\end{figure}

\paragraph{Q2: Identifying a potential bias in time-unaware Causal AR}
In~\cref{tab:causal-effect-derivative-scm}, we report each feature's estimated Causal Effects Derivative (\texttt{CED}) obtained by applying~\cref{eq:estimate} respectively to the \ac{SCM} reported in~\citep{karimi2020algorithmic} and the one proposed in this work. Concerning the former, it can be observed that nearly every \texttt{CED} is close to zero, giving rise to an unrealistic problem. 
\begin{table}[t!]
    \centering
    \caption{\textbf{Causal Effect Derivative.} Comparison between the estimated \texttt{CED} of the \ac{SCM} used in~\citep{karimi2020algorithmic} and the ones we proposed. Non-actionable variables are in red. Best-estimated actionable values are in bold.}
    \label{tab:causal-effect-derivative-scm}
    \setlength{\tabcolsep}{5pt} % Adjust column spacing
    \footnotesize
    \renewcommand{\arraystretch}{1.2} % Adjust row spacing
    \begin{tabular}{lcc}
    \toprule
    Feature & \texttt{CED} in \citep{karimi2020algorithmic} & Our \texttt{CED} \\
    \midrule
    \textcolor{red}{Gender}    & \textcolor{red}{0.004}  & \textcolor{red}{0.273} \\
    \textcolor{red}{Age}       & \textcolor{red}{-0.016} & \textcolor{red}{0.329} \\
    Education                  & 0.000                  & \textbf{0.181} \\
    Loan Amount                & -0.018                 & -0.099 \\
    Job                        & 0.015                  & 0.087 \\
    \textcolor{red}{Duration}  & \textcolor{red}{-0.021} & \textcolor{red}{-0.037} \\
    Income                     & \textbf{0.058}         & 0.137 \\
    Savings                    & 0.038                  & 0.066 \\
    \bottomrule
    \end{tabular}
\end{table}
For instance, the absence of any importance of variables such as \textit{Gender}, \textit{Age}, and \textit{Education} in the context of credit score prediction appears highly unusual. One possible explanation could be related to the previous discussion on using variance as a normalization coefficient. If the variance increases along the causal direction (multiplying at each edge), variables that are further away from the target will have a smaller variance and, therefore, a higher intervention cost. Conversely, those closer to the target will have a larger variance and a lower cost. In fact, the four variables with the highest \texttt{CED} are at a distance of $1$ from Y, while all those at a distance of at least $2$ have lower \texttt{CED} values.

\begin{figure}[h!]
\centering
\includegraphics[width=0.8\textwidth]{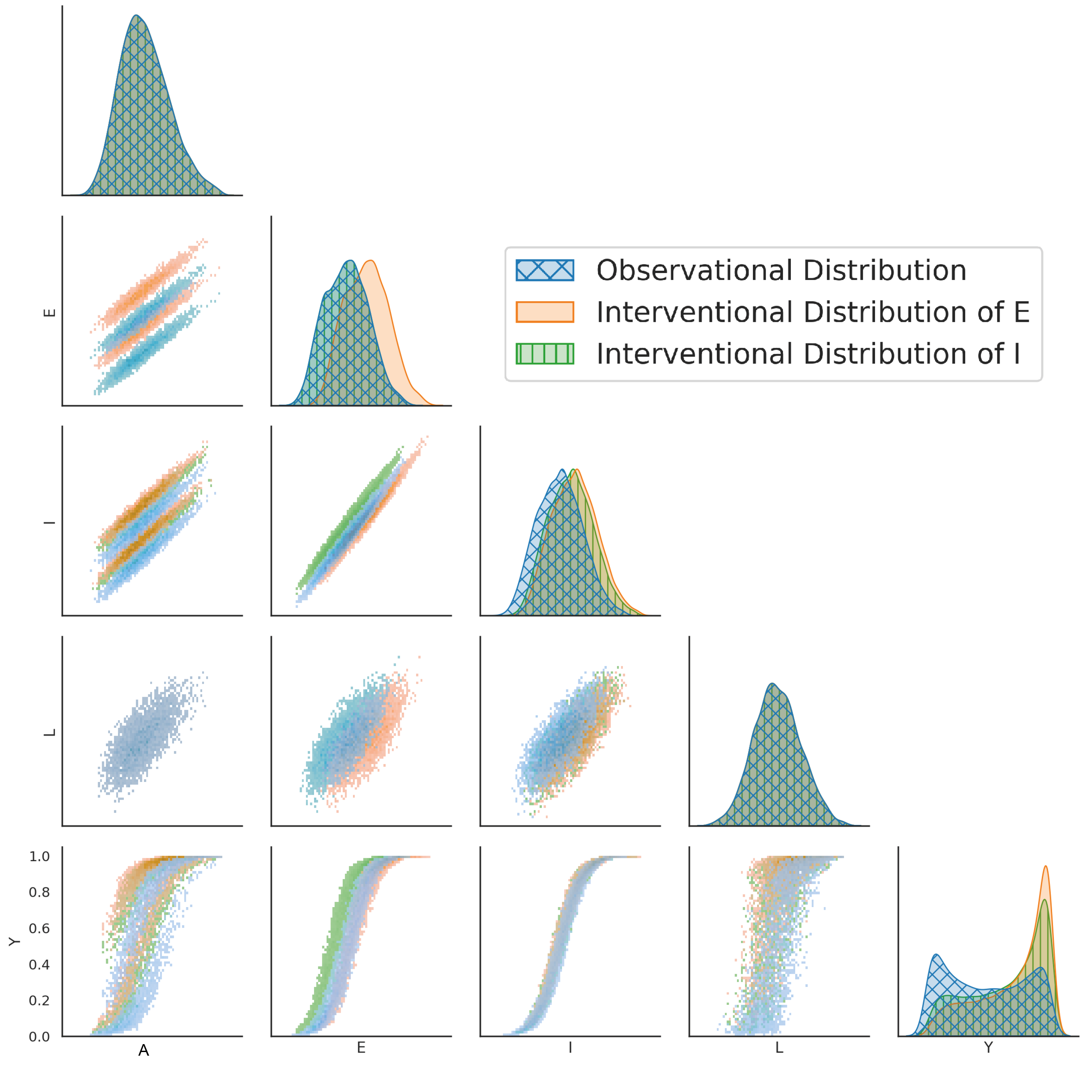}
\caption{\textbf{Observational vs. Interventional Distributions.} Pair plot illustrating observational and interventional distributions for the German Credit dataset. The results show that intervening on \textit{Education} yields a stronger impact on improving the outcome of $Y$ compared to \textit{Income}, as highlighted in the bottom-right subplot.}
\label{fig:plot}
\end{figure}

~\cref{fig:plot} presents the pair plot of the \ac{SCM} system described in~\cref{sec:exp_setup}, for three different distributions: Observational, Interventional on \textit{Education}, and Interventional on \textit{Income}. 
Only a subset of variables was selected, focusing on the two treatment variables and on $Y$. We added \textit{Age} and \textit{Loan Amount} to provide a clearer view of the system's structure and how interventions modify instances. Regarding \textit{Loan Amount}, it is observed that since it does not depend on either \textit{Education} or\textit{ Income}, its distribution remains the same in all cases. The same applies to \textit{Age}, which provides additional insight: the pair plots (\textit{Age, Education}) and (\textit{Age, Income}) show a bimodal distribution, explainable through the presence of fork paths $A\rightarrow E \leftarrow G$ and $ A\rightarrow E\rightarrow I\leftarrow G$, revealing the presence of a gender gap in \textit{Education} and \textit{Income} of individuals described by the \ac{SCM}. 
Finally, the most significant observation pertains to the \textit{Target} variable $Y$. Notably, it is evident that for the same cost, \textit{Education} has a stronger causal effect than \textit{Income}. As a result, the recommended action for addressing the unfavorable outcome through recourse would involve prioritizing the improvement of \textit{Education}, confirming the bias towards the root nodes we hypothesized in~\cref{sec:why_timecar}.

\section{Broader Impact}\label{sec:timecar_impact}
A crucial aspect emphasized by the \ac{XAI} community is the user's demand for plausible explanations~\citep{guidotti2018survey,ustun2019}. 
In the context of \ac{AR}, \textit{plausibility} refers to the perceived consistency and reasonableness of the recommendations provided by recourse approaches. 
From a psychological perspective, providing plausible explanations enables users to form mental models that align with their prior knowledge and reasoning abilities. 
When the temporal dimension is incorporated into causal reasoning, an \ac{AR} approach could ensure that the actions suggested are psychologically congruent with human intuitions and mental frameworks. 
This compatibility fosters a sense of trust and confidence in the algorithmic system, thereby facilitating user acceptance and engagement. 

In conclusion, by incorporating the temporal dimension into  \ac{AR} reasoning, the plausibility of explanations is enhanced, aligning them more closely with user expectations and cognitive processes. 
Moreover, considering actionability addresses the user's need for practical and timely actions. 
These insights highlight that time-aware \ac{AR} approaches are valuable, as they bridge the gap between human intuition, psychological congruence, and efficient decision-making processes.

\paragraph{Prediction vs. Improvement}

An important point to clarify is the distinction between the predicted outcome ($\hat{Y}$) and the actual outcome ($Y$).
Machine learning (ML) models are typically statistical in nature and do not inherently capture causal relationships. As a result, $\hat{Y}$ may behave very differently from $Y$. Consider, for example, disease diagnosis in the medical domain. Suppose we have several variables indicating the presence or absence of specific symptoms in a patient. A predictive model $\hat{Y}$ uses statistical correlations between these symptoms and the disease to enhance accuracy.
However, if the patient takes a medication that suppresses some symptoms, $\hat{Y}$ may change significantly—despite the fact that the actual disease status $Y$ remains unchanged.
From a causal perspective, intervening on effects (e.g., symptoms) does not influence their causes (e.g., the disease itself). For a more thorough discussion of this issue, see~\citep{konig2022}. In our work, we define a predictive model that is aware of the underlying causal relationships, ensuring that $\hat{Y}$ aligns more closely with $Y$, allowing us to treat them as a unified entity.
Importantly, not all variables in the system causally influence the outcome—only a relevant subset does.

\section{Conclusions}\label{sec:conclusions_timecar}
In this work, we discussed the problem of \ac{AR} from a causal perspective, focusing on incorporating the temporal dimension into evaluating the cost of an action. Firstly, we justified this integration by discussing its connection with the causal graph's topological structure and proposing a new \textit{time-aware} causal \ac{AR} formulation. Then, we tested our theoretical intuition through an experiment inspired by the credit score model on the German Credit Dataset. Our findings confirm our expectations: if the causal model is unaware of the response times between variables, it could recommend actions that, although optimal considering only the \acp{SCM}, would require too much time to be practically viable in most applications. %These results motivate future work to develop and evaluate causal algorithms that effectively incorporate temporal information to enhance the quality of recommendations.
Moreover, by showing scenarios where identical cost actions lead to disparate time requirements for different individuals and by revising actionability to include time constraints,
we identify the need for time-aware models that ensure fairness and align with human psychological expectations, encouraging trust in automated decision-making and promoting fairer outcomes.
As a final remark, we would like to highlight how incorporating the temporal dimension into \ac{AR} is a conceptually distinct problem from its causal formulation. A similar discussion to the one presented in this work could be made under different causal knowledge conditions up to its absence. In our opinion, the advantage of the causal framework stems from using the graph structure as a support for the finer estimation of the temporal relationships between the system's variables.
% 
%
%
%
%
%
%
%
%
% PART 3
\part{Time Aware Causality}\label{pt:time_aware_causality}
\chapter{Autonomous Dynamical Systems}\label{ch:dynamical_sistems_intro}

Adopting the framework introduced by~\citet{pearl2009causality}, a causal model can be understood as a set \( \X \) of observable random variables whose dependency relations are encoded through either a \ac{DAG} or a \ac{SCM} (\cref{ch:causality_math_framework}). Both representations allow for the formalization of causal relationships and support causal inference, which in turn enables the prediction of changes in the probability distribution $\Prob(\X)$ under the assumption that an external \emph{intervention} has been performed on the system.

We aim to investigate how this framework can be extended to accommodate systems that evolve over time, addressing two central objectives:

\begin{itemize}
    \item \textbf{Introducing interventions into causal time series analysis.} Traditional techniques in the literature~\citep{hamilton1994series} typically focus on identifying statistical dependencies from observational data, without explicitly modeling the effect of interventions. A paradigmatic example is \emph{Granger causality}, which estimates the causal influence of a time series \( \X_t \) on another series \( \Y_t \) by comparing the prediction errors of two linear regression models, one that includes the past values \( \X_{<t} \), and one that does not. While effective within its scope, such methods lack the formal tools to reason about interventional changes.

    \item \textbf{Enhancing the expressiveness of causal models to capture temporally distributed effects.} Interventions in dynamic systems often have effects that unfold over different temporal scales. Capturing this behavior is crucial for applications in fields such as economics, epidemiology, climate science and medicine\citep{DBLP:journals/corr/abs-2410-08816} where delayed or temporally diffuse causal effects are common. A more expressive causal modeling framework would allow for a principled treatment of such temporal complexities, improving both the interpretability and applicability of causal inference in time-dependent domains.
\end{itemize}

The most elementary approach consists of indexing the variables in $\X$ with a temporal parameter, by incorporating the time dependence directly into the \ac{SCM} equations. We consider a sequence of variables $\X_t$ indexed by time $t \in \mathbb{Z}$. A causal model with temporal structure can be described as a system of equations:
\begin{equation}\label{time_scm}
\left\{\Xcomp[i] _t = f_i(\parents[i]_t, \noise^i_t)\right\}_{i=1}^n,
\end{equation}
where $\parents[i]_t$ denotes the set of parent nodes of $X^i_t$ in the causal graph, and $\noise^i_t$ represents an exogenous noise term. We assume that the causal structure must be consistent with the temporal order, ensuring that causes necessarily precede their effects:
\begin{equation}
\parents[i]_t \subseteq \{\X_s : s < t \}.
\end{equation}
Remarkably, the strict inequality $s<t$ reflects the assumption that there are no instantaneous influences between different variables. 

\section{A Dynamical System Perspective}\label{sec:dynamical_system_perspective}
Causal analysis in this temporal formulation inherits several challenges from the broader causal modeling literature. For instance, the observed variables in $\Xspace$ may not be causally sufficient, as the underlying causal structure could involve latent confounders. Furthermore, identifying the correct causal direction among observed variables is often non-trivial. In addition to these familiar challenges, the temporal setting introduces new complexities: most notably, time series data typically consist of a single observed trajectory, which limits our ability to infer structure or validate assumptions from repeated trials. Given these limitations, the core idea of the following chapters is to approach the problem from a different perspective: instead of integrating time into causal models, we aim to \textit{embed causality within time-based models}. We argue that the greater flexibility of these models may provide a more effective solution. Moreover, this direction is grounded in a fundamental principle already mentioned above: the phenomena we study, and reality itself, unfold over time, and causal relationships should reflect this intrinsic characteristic; every action requires time to materialize and to produce an effect in the world.

At the same time, static \ac{SCM}s have proven to be highly effective in representing real-world systems and remain one of the main tools in causal analysis. This raises a crucial question: if the phenomena we study naturally evolve over time, how can a static theory successfully model them? And how can we reconcile these two perspectives? In this work, we put forward the following hypothesis: \textit{\acp{SCM} can be understood as a compression protocol for representing a dynamical model}, one that discards all information related to the temporal dimension while preserving only the structural features observable over long timescales. In other words, \acp{SCM} capture a dynamical system's equilibrium states while admitting a causal interpretation.

\textbf{Compressing Temporal Information.} To discuss the role of causality in dynamical systems, we begin by considering the framework of Statistical Mechanics. In this context, a macroscopic system $x_t$ is described through time averages of functions defined over the microscopic state at time $t$ of its constituent particles. More formally, given a microscopic observable $F$, its time average over a sufficiently long trajectory of the system is given by
\begin{equation}
\bar{F} = \lim_{T \to \infty} \frac{1}{T} \int_0^T F(x_t) dt.
\end{equation}
To make sense of the temporal average considered, it is essential to take into account the existence of two different \emph{temporal scales}: the macroscopic time scale, which corresponds to the observation times of the system, is much larger than the microscopic time scale, which governs the speed of changes at the molecular level. Yet, calculating the average in principle requires knowledge of the system's microscopic state $x_t$ and its temporal evolution.

For an \textit{ergodic system}, time averages can be replaced by phase-space averages computed through an appropriate \emph{invariant measure} $\prob(x)$:
\begin{equation}
\langle F \rangle = \int F(x) \prob(x)\, dx.
\end{equation}
This substitution allows us to describe the long-term behavior of a system using equilibrium equations that summarize its macroscopic properties. A classical example is the ideal gas law:
\begin{equation}
pV = nRT,
\end{equation}
which links pressure, volume, and temperature without explicitly modeling the movement of individual particles. This law is derived by averaging over time or microstates and thus compresses detailed dynamics into a compact summary. However, it does not directly encode causal relations. It represents a constraint on the observational distribution $\Prob(p,V,T)$, but says nothing about how interventions affect the system.

Still, under certain conditions, the equilibrium equation can acquire a causal interpretation. For instance, if we heat a gas in a container with fixed volume, the equation tells us how the pressure will change at equilibrium. In such cases, the equation becomes predictive of intervention outcomes. But to use it causally, we must carefully specify the nature of the intervention and verify that the assumptions of the model hold under manipulation.

\paragraph{SCM as a model of dynamical systems' equilibrium}
Let us consider a dynamical system, potentially involving stochastic components. We can assign an initial observational distribution to represent the state of the system, along with its associated uncertainty. This distribution evolves over time according to the inherent dynamics of the system. Under certain conditions, the system may converge to an invariant (or stationary) distribution. This limiting distribution can be interpreted as representing an equilibrium state of the system.

Now, imagine intervening on the system at a given point in time, altering its dynamics. As a result, the system may evolve toward a different invariant distribution. This new equilibrium reflects the long-term causal effect of the intervention on the system’s behavior.

Our goal is to reinterpret this shift as the causal response of the system, not just in the dynamic sense, but in terms of a corresponding intervention on a static \ac{SCM}. In other words, we aim to characterize the long-run behavior of the dynamical system as if it were governed by a static \ac{SCM} whose equilibrium distribution directly encodes the causal consequences of interventions.

This perspective allows us to bridge the gap between dynamical causal processes and the static causal models typically used in causal inference by treating the invariant distributions of the dynamical system as the key objects of interest. In this framework, long-term causal behavior can be studied through the lens of shifts in equilibrium distributions, interpreted via interventions on the corresponding static \ac{SCM}. 

\section{Autonomy} The equation \ref{time_scm} relies on another important assumption: the function \( f_i \) is invariant in time. Although its inputs, namely the set of parent variables \( \parents[i]_t \) and the noise term \( \noise^i_t \), may vary over time steps, the functional form of \( f_i \) remains fixed. This means that the same causal law governs the evolution of the system at every point in time, capturing the notion of mechanisms \emph{autonomy}: the internal workings of a system remain unaffected by changes in the distribution of other variables or by interventions applied elsewhere in the system. This point is subtle, and at least three remarks are needed.
\begin{itemize}
    \item From a methodological perspective, the condition of autonomy corresponds to the ability to conceptualize a system as distinct and separable from its surrounding environment, such that it interacts with the environment only through specific channels. This perspective allows for a clear boundary between the internal dynamics of the system and the external factors influencing it, thereby enabling meaningful causal analysis. 

    \item More specifically, \textit{autonomy can be interpreted as a modularity assumption}. That is, each mechanism $f_i$ is considered a fundamental and self-contained component of the system, responsible for determining the behavior of a particular variable given its direct causes. These mechanisms are assumed to reflect stable aspects of the world's structure, remaining invariant over time and under a range of interventions. This invariance is what makes it possible to isolate and study individual causal relationships, as well as to predict the consequences of hypothetical manipulations.
    
    \item The autonomy condition enables the modeling of long-term causal effects by ensuring that the underlying causal mechanisms \textit{remain stable over time}. This stability, in turn, allows for the system to converge toward a well-defined, time-invariant distribution as $ t \to \infty$. In other words, autonomy supports the existence of asymptotic behavior that is not merely the result of statistical regularities but reflects persistent causal structure. This is particularly important when analyzing systems whose equilibrium or steady-state properties carry meaningful causal interpretations, where long-term outcomes are shaped by sustained interactions governed by invariant mechanisms.
\end{itemize}

\subsection{Is Climate Change Caused by Humans? Causal Ambiguity in Non-Stationary Processes}

Climate change is a powerful example of how causal reasoning becomes difficult when the assumption of autonomy does not hold. In a standard causal model, the mechanisms that describe how a system evolves are assumed to remain stable over time. However, the climate system is shaped by many complex and long-term processes, such as solar activity, volcanic eruptions, and ocean circulation. These processes may change over time in ways that we do not fully understand or cannot directly control.

Due to these changes, the causal relationships in the climate system might not be stable. For example, the relationship between greenhouse gas emissions and global temperature may look different in different periods, depending on the background influence of other natural factors. This makes it hard to say with certainty whether recent temperature increases are caused mainly by human activity or by natural changes.

This challenge is related to the idea of \textit{causal sufficiency}. A model is causally sufficient if it includes all the important variables that influence the system. If some relevant factors are missing, the model’s mechanisms may appear to change over time, when in fact they are reacting to unmodeled influences. In this case, the system may still be predictable, but we lose the ability to make strong causal claims.

One possible solution would be to expand the model to include more of these hidden influences. If we could measure and include them, we could recover a model where the mechanisms remain stable, and causal inferences become more reliable. But in practice, this is often hard to do, especially when the missing factors involve complex natural systems or long-term trends. Until then, our ability to estimate the effects of intervention remains limited by this uncertainty.

\section{Not Only Long-Term Effects}\label{sec:static_vs_dynamic}

The discussion in \cref{sec:dynamical_system_perspective} on the macroscopic analysis of an ideal gas offers two alternative perspectives on the role of temporal compression.

\textbf{The first perspective} concerns cases where the fast dynamics of the system are either too complex to model or irrelevant for the researcher’s goals. In the case of a gas, the motion of individual particles fits both categories: we are unable to predict each particle’s movement, and even if we could, this information would not be practically useful. Therefore, only the slow, macroscopic time scale matters, and studying long-term effects becomes the main objective. 

This principle, common to many modeling practices involving abstraction, applies in at least two scenarios: 
\begin{itemize}
    \item When macroscopic variables emerge from the behavior of a large number of microscopic components (e.g., gas particles, macroeconomic indicators from individual agents, or biological systems from cellular processes).
    \item When a small number of components evolve across multiple time scales, such as neuronal activity (fast scale) and the slow evolution of synaptic structures, or stock market price fluctuations (fast scale) and their monthly or yearly trends (slow scale).
\end{itemize}

In the first case, compression typically occurs through some form of phase-space averaging; in the second case, it occurs through temporal averaging. Sometimes, as in the ideal gas example, both mechanisms coexist. From this perspective, \ac{SCM}s provide an effective model to describe macroscopic dynamics and, under suitable assumptions, \textit{capture all the useful information} about the system.

\textbf{The second perspective} addresses situations where the researcher is also interested in the fast dynamics of the system and wants to understand the effects of actions during the evolution itself. Typically, the fast scale is not so fast as to be practically unobservable, and its effects offer valuable insights.

\begin{example}
Consider a government policy aimed at controlling the birth rate. Suppose %— hypothetically — 
that the government could set the exact number of births per year indefinitely. After a sufficiently long period (roughly the maximum human lifespan, say 110 years), and assuming all other mechanisms remain unchanged, the result would be a completely homogeneous population pyramid, meaning the same number of individuals at each age. 

However, this long-term description would not capture all relevant aspects. The evolution of the population pyramid during the transition period would offer critical information, for example, about the available workforce or the financial pressure on pension systems. In fact, policy evaluation typically focuses on short- to medium-term effects, and very often the system is not expected to reach equilibrium at all, due to future policies or long-term social, cultural, or political changes.
\end{example}

One particularly interesting scenario occurs when the information of interest lies entirely in the microscopic description of the system, and the equilibrium itself is of little or no practical use. 

\begin{comment}Consider, for instance, weather forecasting: even if we had a model that accurately describes the global average rainfall or the typical distribution of temperatures, such information would be of little relevance for a meteorologist, whose goal is to predict the \textit{localized state} of the atmosphere in both space and time. The same holds for the study of social or ecological dynamics.
\end{comment}
\begin{example}
    Consider the \textit{Lotka–Volterra} model, which describes the interaction between two animal species that share the same environment, a prey and a predator. When the prey population increases, predators have more food available, leading to a growth in their population as well. However, an excess of predators eventually reduces the prey population, initiating a new cycle. This dynamic repeats over time. Although the system may admit a theoretical equilibrium point, the true interest lies not in identifying this equilibrium, but in studying the cyclical dynamic behavior itself.
\end{example}

In such contexts, \textit{compressing the temporal dimension would mean precisely discarding the most valuable information}. In other words, analyzing the associated \ac{SCM} would be ineffective from all practical perspectives. Access to the full underlying temporal model is essential, and both inference and causal reasoning must be performed at the level of the dynamical system itself.

\section{Objective of the Next Chapters}

In the upcoming chapters, we aim to address both perspectives, exploring their connections in greater depth.

\begin{itemize}
    \item On the one hand, we are interested in understanding how to map between the two descriptions, that is, how to relate dynamical models to their equilibrium representations. This interest is motivated by several, partly complementary, reflections. We want to better understand in what sense a \ac{SCM} can be considered causal, and whether this causal meaning can be extended to the underlying dynamical system. We argue why invoking causality at the dynamical level is meaningful and how this perspective remains consistent with the classical causal framework. At the same time, this connection offers a new perspective for interpreting \ac{SCM}s more broadly, helping to clarify their underlying assumptions and limitations. For instance, stating that the causal graph is acyclic implicitly makes assumptions about the nature of the underlying dynamical system. What does this imply? Under what conditions is this assumption justified, and when might it fail to hold?

    \item On the other hand, we are interested in extending causal inference beyond the static formulation. In many cases, it is necessary to evaluate the effects of an intervention \textit{over time}, and \ac{SCM}s are no longer a sufficient description. Actions in the real world require time to produce effects, and understanding how these effects develop demands a temporal causal model. In \cref{ch:Ci_over_time} and \cref{ch:CIcontinuous_time}, we propose a new framework for this purpose, introducing the notion of intervention in stochastic processes, modeling how interventions alter dynamics, and developing tools to estimate causal effects over time. Our approach aims to maintain as much continuity as possible with the formalism of \ac{SCM}s, identifying modular and independent mechanisms, and interpreting them as functional relationships describing differential equations (or difference equations in the discrete-time case).
\end{itemize}

\chapter{Causal Inference over Time}\label{ch:Ci_over_time}
\begin{tcolorbox}[colback=CTsemi!5!white, colframe=CTsemi, boxrule=0.5mm]
\textbf{Main Sources:} \fullcite{cinquini2024practical}.
\end{tcolorbox}

\paragraph{Author's Note}
This chapter presents a mature stage of the PhD research and brings together several research threads developed throughout the project. Building on structural causal models and temporal systems, it introduces a clear and unified framework for performing causal inference over time. The chapter connects ideas from dynamical systems, vector autoregressive models, and causal reasoning, offering both theoretical foundations and practical tools to study the effects of interventions in systems that evolve over time. The results represent an important milestone in the overall research path.
\vspace{30pt}

Dynamical systems often exhibit complex behaviors that unfold over time, leading to delayed responses and feedback loops. 
Understanding the causal effect of interventions within such systems is crucial across disciplines such as climate~\citep{runge2019inferring} and social sciences~\citep{wunsch2022time}, where different time scales play a central role. 
For instance, monetary policy adjustments may have immediate effects on consumer spending, but their impact on inflation and economic growth only becomes evident in the medium/long-term. 
Similarly, the consequences of human actions on climate change may take decades to manifest, with the risk of endorsing public policies that underestimate their relevance. To address these issues, it is essential to perform \emph{causal inference over time}. 
From the perspective of causality, \acp{SCM} provide a formal framework to enable causal inference from cross-sectional data. However, adapting existing methods to capture temporal dynamics remains a challenge~\citep{bongers2021foundations}. Alternatively, autoregressive models offer practical methods for time-series analysis and forecasting~\citep{lutkepohl2005new}, but their formalization of causal effects is limited. 
First, they model interventions as \emph{shocks} in a specific point in time, with effects that fade away after a certain period~\citep{ moneta2011causal}. Second, a common methodological stance in these works is that of Granger causality~\citep{Granger_1969}. This approach treats relationships between variables as predictive dependencies that are only potentially causal; it assesses whether one variable can improve the forecasting of another, rather than explicitly identifying a causal mechanism between them. In other words, inferring a true cause-effect relationship requires additional assumptions about the system, such as causal sufficiency. Without these, no definitive conclusions can be drawn regarding the system's causal structure. This challenge is addressed specifically within the field of Causal Discovery, which is not the central focus of this thesis. For a deeper discussion of Granger's approach and its methodological implications, we refer the reader to his original works \citep{granger1980testing, granger1988some, granger2003some} and some other relevent works \citep{comte1996noncausality,eichler2012causal,eichler2012graphical,eichler2013causal}. The specific implications of assuming causal sufficiency a priori are discussed in more recent literature \citep{eichler2010granger, white2010granger}. Our work is conceptually closer to this latter scenario.\\
Our work combines the strengths of both frameworks, i.e., \acp{SCM} and  \acp{VAR}, to enable robust reasoning about the causal effect of interventions on dynamical systems over time.
Our experiments on synthetic and real-world datasets show that the proposed framework achieves strong performance in terms of observational forecasting while enabling accurate estimation of the causal effect of interventions on dynamical systems. 

The rest of the chapter is organized as follows.   
Section~\ref{sec:preliminaries_ciovertime} introduces the key background on structural causal models and stochastic processes. 
Section~\ref{sec:causal-perspective} presents the theoretical foundations for causal inference over time, focusing on stochastic difference equations. 
Section~\ref{sec:interventions} provides a practical implementation of causal interventions and details how \acp{VAR} can be mapped to \acp{SCM}. 
Section~\ref{sec:practical_implications} discusses real-world applications and implications of the proposed framework. 
Section~\ref{sec:experiments} reports the experimental evaluation. 
Finally, Section~\ref{sec:conclusions_ciovertime} concludes the chapter and outlines directions for future research.

\section{Contributions}\label{sec:contributions_ciovertime}

This chapter makes the following contributions:

\begin{itemize}
    \item \textbf{Formalization of  Causal Inference over Time:} We introduce a framework for performing causal inference over discrete-time stochastic processes, extending \acp{SCM} to account for temporal dynamics.

    \item \textbf{Intervention Modeling in Dynamical Systems:} We define two types of interventions, additive and forcing, that allow us to study the effects of sustained actions over time, and we show how to estimate their impact on system trajectories.

    \item \textbf{Mapping \ac{VAR} to \ac{SCM}:} We provide a method for mapping \ac{VAR} models to equivalent \acp{SCM} at equilibrium, showing that long-term causal effects can be recovered using \acp{SCM}.

    \item \textbf{Theoretical Guarantees:} We prove that under stability conditions, the mapping preserves causal semantics, \ie, observational and interventional distributions commute between the time-based and equilibrium representations.

    \item \textbf{Practical Applications and Case Studies:} We apply our framework to synthetic and real-world datasets, demonstrating its ability to support both interventional and counterfactual reasoning in time-evolving settings.
\end{itemize}

\section{Preliminaries}\label{sec:preliminaries_ciovertime}
\subsection{Structural Causal Models}
\paragraph{Equation} 
A \ac{SCM} $\scm[{\rmF,\vectorNoise}]$ determines how a set of \sizex endogenous (observed) random variables $\rmX := \{\Xcompalt[1], \dots, \Xcompalt[\sizex]\}$ are obtained from a set of %are obtained via $k$ 
exogenous variables $ \vectorNoise := \{\Noise_1,\ldots, \Noise_d\}$, with prior distribution $\distribution(\vectorNoise)$, via a set of structural equations $\rmF := \{\Xcompalt[i] := f_i(\parents[i], \vectorNoise^i)\}_{i=1}^{\sizex}$. 
Each $f_i$ computes $\Xcompalt[i]$ from its causal parents\footnote{Unlike in acyclic \acp{SCM}, $\parents[i]$ loses its hierarchical interpretation since two nodes can be mutually parents.}  $\parents[i]\subseteq\X$ and a set $\vectorNoise^{i}\subseteq \vectorNoise$. 
We refer to $\X$ as a solution of $\scm$.
We assume $\parents[i]$ to be minimal, i.e.,~it only contains variables $\Xcompalt[j]$ such that$ \quad \partial_{\Xcompalt[j]}{f_i}\neq~0$. 
This formulation extends the definition in~\citep{pearl2009causality} (also called \textit{Markovian} \ac{SCM} in~\citet{bongers2018causal}) 
to include cycles. 

\paragraph{Graph}
A \ac{SCM} $\scm$ induces a directed graph $ \graph_{\scm} = (\GraphVertices, \GraphEdges)$ that describes the functional dependencies in $\rmF$: $\GraphVertices$ is the set of nodes for which $V_i$ represents $\Xcompalt[i]$ and $\GraphEdges$ is the set of the edges $\edge{i,j} \in \GraphEdges \iff \Xcompalt[i] \in \parents[j]$. 

\paragraph{Intervention}
Besides describing the observational distribution $p(\X)$, \acp{SCM} allow answering \emph{interventional queries} about the effect of external manipulations, and enable \emph{counterfactual queries} assessing what would have happened to a particular observation if one observed variable $\Xcompalt[i]$ had taken a different value. 
%on a subset of the endogenous variables $I \subseteq \{1,\dots,\sizex\}$
%
An intervention $\intervention$ on a \ac{SCM} $\scm$ yields a new \ac{SCM} $\scm^\intervention$ for which one or more mechanisms $f_i(\parents[i], \vectorNoise^i)$ change to
$\tilde{f}_i(\tilde{\parentssym}^{i}, \tilde{\vectorNoise}^{i}),$
where $ \tilde{\parentssym}^{i}\subseteq \parents[i] \: \text{ and } \:\tilde{\vectorNoise}^{i} \subseteq \vectorNoise^i$.
We refer to a \textit{hard intervention} when $f_i$ is replaced by a constant value $\alpha^{i}$, and $\tilde{\parentssym}^{i}=\tilde{\vectorNoise}^{i}=\varnothing$. This type of intervention is denoted by the do-operator $do(\Xcompalt[i] = \alpha^{i})$. 
On the other hand, we refer to a \textit{soft intervention} when at least one argument of $f_i$ is retained.
The causal effect $\ce$ of an intervention is evaluated in terms of differences between the values of the observable variables before and  after the intervention $\intervention$, i.e.,  
% $\X^\intervention$, and
% %Specifically, the effect can be considered at the population level (Average $\ce$), for subgroups (Conditional $\ce$), or individuals (Individual $\ce$). 
% %
% %Each of these cases can be expressed through the formula 
% can be expressed through the formula
\begin{equation}\label{eq:causal_effect}
\ce^\intervention =\expected{\X^\intervention -\X}.
\end{equation}
%where $C$ represents the conditioning describing the subgroup of interest, ranging from the entire population to a single realization. 

These queries can be seen as operations on $\mathcal{M}$ that yield a new \ac{SCM} $\tilde{\mathcal{M}}$. A \ac{SCM} is (interventionally/counterfactually) \textit{solvable} if each operation has a solution $\tilde{\mathcal{M}}$, while it is \textit{uniquely solvable} if, additionally, the solution is unique. In \citep{bongers2018causal}, the latter is denoted as \textit{simple} \ac{SCM}. In general, simple \acp{SCM} allow for cycles and hidden confounders. However, not all \acp{SCM} with cycles and hidden confounders are simple. 

A \ac{SCM} is said to be \textit{Markovian} if it induces a \ac{DAG} and $\Noisecompalt[i]\cap\Noisecompalt[j]=\varnothing,\:  \forall i,j \in \{1,\dots,d\}$. Markovian \acp{SCM} are always uniquely solvable, hence a subset of simple \acp{SCM}.

\subsection{Discrete-time Stochastic Processes}
\begin{definition}\label{def:dsp}
    A \ac{DSP} is a function
$$ \X : \;T \times \Omega \to \sR^\sizex$$
    %&t, \omega \mapsto \X(t,\omega)
%
where $t\in{T}$ is a time index in $\sZ$, such that $\X_t$ (denoting $\X(t,\cdot)$) is a random variable on a probability space $(\Omega,\gF,\Prob)$\footnote{The probability space $(\Omega, \gF, \Prob)$ consists of a sample space $\Omega$, a $\sigma$-algebra $\gF$ of events, and a probability measure $\Prob$ that assigns probabilities to events in $\gF$.}. 
We refer to $\X(\omega)$ (denoting $\X(\cdot,\omega)$)  
as a \textit{realization} or \textit{trajectory} of $\X$ and denote the i-th component of $\X$ with $\Xcompalt[i]$.
\end{definition}

A \ac{DSP} can be described through a \ac{dE}, i.e.,~a recurrence relation that allows computing $\X_t$ based on its past values. 
\acp{dE} can serve as discrete approximations of differential equations, sharing some fundamental theoretical properties. In many applications, the former are approximation tools to simulate the latter numerically. 
As with differential equations, it can be beneficial to introduce stochastic components into the equation to model phenomena that are not entirely predictable. 
This unpredictability may stem from incomplete knowledge of the relevant variables of the process or its intrinsic randomness. 
We will denote with \ac{dE} the class of difference equations in general, regardless of whether they are deterministic or not. 

Including stochastic components in differential equations gives rise to various classes of equations.
This section introduces the notions of \ac{OdE}, describing deterministic processes; \ac{RdE}, which involve randomness in the initial state $\X_0$ and in the evolution parameters; and \ac{SdE}, which describe inherently stochastic trajectories. Understanding these concepts is helpful for contextualizing our work, and for clarifying how it extends the ideas in~\citep{Mooij2013deterministic_case}.
\paragraph{Ordinary Difference Equation}
An \ac{OdE} is an expression that describes the evolution of an indexed variable $x_t$ through a functional relationship
\begin{equation*}
\x_t = f(\x_{<t}),
\end{equation*}
where $\x_{<t}:=\{\x_{t-1},\x_{t-2},\dots\}$ represents the past of $\x_t$. Specifically, a $p^{th}$-order \ac{OdE} is an expression where the preceding variables of $\x_t$ include up to $\x_{t-p}$, i.e, $\x_{<t}=\{\x_{t-i}\}_{i=1}^p$. 

\paragraph{Random Difference Equation} 
A \ac{RdE} is a \ac{dE} of the form
\begin{equation}\label{eq:RDE}
    \X_{t} = f(\X_{<t},\vectornoise),
\end{equation}
where $\vectornoise:\Omega\to\sR^e$ is a random variable\footnote{In ~\citep{bongers2018causal}, \(\vectornoise\) is regarded as a stochastic process that converges eventually or asymptotically to a time-independent random variable. In this work, simplifying the concept, we consider it as a random variable throughout the whole process.}  and such that for every $\omega \in \Omega$, the equation 
\begin{equation}\label{eq:RDE_sample_path_solution}
    \X_{t}(\omega) = f(\X_{<t}(\omega),\vectornoise(\omega))
\end{equation}
is an \ac{OdE}.

Note that $\X_t$ in \Eqref{eq:RDE} is a random variable and characterizes a \ac{DSP} $\X:T\times\Omega\to\sR^n$ induced by $\vectornoise$.
In this sense, we say that $\X$ is a solution of \Eqref{eq:RDE}. Equivalently, we say that $\X$ is a \textit{solution} of \Eqref{eq:RDE} if \Eqref{eq:RDE_sample_path_solution} is satisfied for \textit{almost all} $\omega\in\Omega$.
\begin{example}[\ac{RdE}]\label{ex:rde}
The system described by 
\begin{equation}\label{eq:rde_example}
    \begin{cases}
        X_t=a(\Omega)X_{t-1} + b(\Omega)\\
        X_0=X_0(\Omega)
    \end{cases}
\end{equation} 
represents a first-order linear \ac{RdE} equipped with an initial condition for $X_0$, in which the stochasticity of $\omega$ generates a distribution over $X_0$ and over the parameters of the equation $a,b$. 
The general solution of \Eqref{eq:rde_example} can be found by recursive substitution, obtaining $X_t(w) = a(\omega)^t X_0(\omega) + b(\omega)\sum_{i=0}^{t-1}a(\omega)^i$.
\end{example}

\paragraph{Stochastic Difference Equation}
A \ac{SdE} describes a \ac{DSP} via a functional relationship of the form
\begin{equation}\label{eq:sde}
\X_t = \bm{f}(\X_{<t}) + \bm{g}(\X_{<t}) \odot \vectornoise_t, 
\end{equation}
where $\X_{<t}:=\{\X_{t-1},\X_{t-2},\dots\}$ represents the trajectory up to time $t$, % past of $\X_t$, 
$\bm{f}$ represents the system's deterministic mechanism, and the Hadamard product $\bm{g} \odot \vectornoise_t$ is the inhomogeneous stochastic part, where $\vectornoise_t$ denotes \textit{white noise}, i.e.,~$\forall t,t^\prime \in T \quad \expected{\vectornoise_t}=\bm{0}, \expected{\vectornoise_t\vectornoise_{t^\prime}}=\Sigma^2_{\vectornoise}\delta_{t, t^\prime}$.  
While for \acp{OdE} and \acp{RdE} trajectories $\X(\omega)$ may asymptotically converge to an equilibrium, \acp{SdE} cannot exhibit such convergence due to the ongoing influence of $\bm{g} \odot \vectornoise_t$. 

\begin{example}[\ac{SdE} - Random Walk]
    The equation $x_t = x_{t-1}+\noise_t, \quad \noise_t\sim\normal(0,\sigma^2_\noise)$, equipped with $x_0=0$, represents a \ac{SdE} with solution $x_t = \sum_{i=1}^t \noise_i \sim \normal(0,t\sigma^2_\noise)$. Random Walk is closely connected with its continuous counterpart, the Wiener Process, also called Brownian Motion.
\end{example}

\paragraph{Graph} 
Analogously to \acp{SCM}, we can associate a directed graph $\graph_{\dequation}$ to a \ac{dE} $\dequation$, consisting of nodes $V_i$ representing individual components $\Xcompalt[i]$, while an edge $\edge{i,j}$ is present if $\exists k>0$ such that $\partial_{x^i_t}
\Xcompalt[j]_{t+k}\neq 0$ in $\dequation$.

\subsection{Vector Autoregressive models}
In this chapter we focus on linear \acp{SdE} or in other words, the \ac{VAR} model~\citep{kilian2017structural}.

\paragraph{Equation}
Consider a \sizex-dimensional vector-valued stationary time series $\{\X_0, \dots, \X_T\}$ generated by a \ac{VAR} model with \emph{lag} $\order$, where a lag represents 
%the memory of the process, i.e.,
the number of previous time steps used to predict the current value of each variable.
Specifically, the \ac{VAR}($\order$) model is defined by 
\begin{equation}\label{eq:VAR}
    \X_t = \bm{\nu} + \A_1 \X_{t-1} + \dots + \A_p \X_{t-p} + \vectornoisevar_t, 
\end{equation}
where $\bm{\nu}$ is a \sizex-dimensional vector of intercept terms, 
$\{\A_i\}_{i=1}^p$ are $\sizex \times \sizex$ matrices 
%defined by$[\A_\tau]_{ij} = \alpha_{ij,\tau}$ with $\tau = 1, \dots, \order$ 
and 
$\vectornoisevar_t$ is a \sizex-dimensional white noise term.
%
% We assume the process $\X_t$ to be stable and stationary~\citep{hamilton1994time}.
% Under this assumption, it can be expressed using an infinitive-order Moving Average (MA) representation $\X_t = \bm{\mu} + \sum_{i=0}^\infty \Phi_i \vectornoisevar_{t-i}$ 
% (the in-depth description can be found in \ref{app:svarproperties}), where $\bm{\mu}$ is the mean term and $\Phi$ is the \textit{impulse response function} of the model.  
If the process $\X_t$ is  stationary~\citep{hamilton1994series}, \ie, the unconditional joint distribution of $\X_t$ over any finite set of time indices $t_1,\dots,t_k$ does not depend on time shifting, formally  $$\prob(x_{t_1},\dots, x_{t_k}) = \prob(x_{t_1+\tau},\dots ,x_{t_k+\tau}) \quad \forall \tau\in \R,$$
%then it can be expressed using an infinitive-order Moving Average (MA) representation $\X_t = \bm{\mu} + \sum_{i=0}^\infty \Phi_i \vectornoisevar_{t-i}$ (the in-depth description can be found in \ref{app:svarproperties}), where $\bm{\mu}$ is the mean term and $\Phi$ is the \textit{impulse response function} of the model.  
%
%
\Eqref{eq:VAR} can also be written as
    \begin{equation}\label{eq:var_lagoperator}
        \A(L)\X_t = \bm{\nu} + \vectornoisevar_t, \quad
        \text{with} \quad \:  \A(L) := \bm{I}_\sizex - \A_1 L - \dots - \A_\order L^\order, 
    \end{equation}
where $L$ is the \emph{lag operator} defined as $L \X_t \equiv \X_{t-1}$ and $\bm{I}_\sizex$ is the \sizex-dimensional identity matrix.  

\paragraph{Properties}\label{app:svarproperties}
A \ac{VAR} process $\X_t$ is stable if all roots of the determinantal polynomial of the VAR operator are outside the complex unit circle. This condition can be formally expressed as:

\begin{equation}\label{eq:determinantal}
    %det(I_{Kp} - \bm{A}_z) \neq 0 \quad \text{for} \quad |z| \leq 1.
    det(I_K - \A_1z - \ldots - \A_\order z^\order) \neq 0 \quad \forall z \in \mathbb{C}, |z| \leq 1,
\end{equation}
where $\mathbb{C}$ denotes the set of complex numbers. The stability condition ensures that the process is stationary~\citep{lutkepohl2005new}. Consequently, a stable \ac{VAR}($\order$) process is stationary.

In~\citep{wold1938study}, the author introduced the Wold decomposition Theorem, which states that every covariance-stationary time series $\X_t$ can be written as the sum of two uncorrelated processes, that is $\X_t = \bm{Z}_t + \bm{Y}_t$
where $\bm{Z}_t$ is a deterministic component that can be forecast from its past, and $\bm{Y}_t$ is a purely stochastic process with an infinite-order Moving Average ($MA(\infty)$) representation: 
\begin{equation}\label{eq:moving_average}
\bm{Y}_t = \sum_{i=0}^\infty \Phi_i \vectornoisevar_{t-i}.
\end{equation}
Suppose the $\Phi_i$ are absolutely summable and that there exists an operator $A(L)$ with absolutely summable coefficient matrices satisfying $A(L)\Phi(L) = I_K$.
Then $\Phi(L)$ is invertible with $\Phi(L)^{-1} = A(L)$ and $\bm{Y}_t$ can be approximated arbitrarily well by a finite-order \ac{VAR}($\order$) with $\order$ sufficiently large, and $A(L) = I - \A_1 L - \dots - \A_p L^\order$, where $I$ is the identity matrix. 

In particular, the matrices $\Phi_i$ of a \ac{VAR}$(p)$ model can be calculated by recursively applying the formula
\begin{equation}\label{eq:recursive_calcultation_phi}
    \Phi_0 = I, \quad \Phi_i = \sum_{j=1}^i \Phi_{i-j}\A_j, \quad \text{where} \; \A_j = 0 \; \text{for} \; j>p.
\end{equation}
\begin{remark*}
    Consider the model \( A(L)\X_t = \bm{\nu} + \vectornoisevar_t \), where \( \bm{\nu} \) is a (constant) intercept term. Using Wold's Theorem, we can express it by linearity of $\response$ as \( \X_t = \response(L)\bm{\nu} + \response(L)\vectornoisevar_t \). Noting that \( L\bm{\nu}=\bm{\nu} \), we obtain  
    \begin{equation*}
        \X_t=\bm{\mu} + \response(L)\vectornoisevar_t
    \end{equation*}
    where \( \bm{\mu}=\response(1)\bm{\nu}:=\sum_{l=0}^\infty\response_l\bm{\nu} \). Due to the invertibility of \( \response(L) \), we can express the same relation through \( \bm{\nu}=\response(1)^{-1}\bm{\mu}=A(1)\bm{\mu}=[1-\A_1-\dots-\A_p]\bm{\mu} \). From this observation, we automatically obtain the equality
    \begin{equation}\label{eq:phi1a1}
        \sum_{l=0}^\infty \response_l = \response(1) = A(1)^{-1} = [1-\A_1 - \dots - \A_p]^{-1}.
    \end{equation}
    
\end{remark*}

A limitation of \acp{VAR} is the inability to interpret the system in causal terms since the components of $\vectornoisevar_t$ are cross-correlated and act as hidden confounders.
A common approach to overcome this issue is to orthogonalize the noise terms. %To that end
In this context, the process of \ac{CD} is analogous to the one of \acp{SCM}~\citep{hyvarinen2010estimation, moneta2013causal, geiger15, malinskyS18}, and identifies a triangular matrix $\hat{\A}_0$ such that $\vectornoisesvar_t = \hat{\A}_0 \vectornoisevar_t$ consists of mutually uncorrelated elements. The transformed \ac{VAR}, known as the Structural VAR (SVAR) model in the literature~\citep{kilian2017structural}, is defined by
$\hat{\A}_0 \X_t = \hat{\A}_0 \bm{\nu} + \hat{\A}_1 \X_{t-1} + \dots + \hat{\A}_\order \X_{t-\order} + \vectornoisesvar_t,
$ where $\hat{\A}_i = \hat{\A}_0 \A_i$. %In this work, \textit{we assume that the causal order is known and causal discovery will be not considered in the pipeline}. \IV{be as explicit as possible. Applications of SVAR has been on causal discovery, i.e., estimating Azero. In contrast we provide a complemenbtary approach to perform causal inference over time, i.e.,e estimate the causal effects of interventions on a dynamical system. }
%
%\end{equation*}
%\sr{It seems here that a solution to the causal interpretation is already given by the SVAR. So, it should be explicitely stated why it is not enough ($\hat{\A}_0$ is difficult to compute?).}
%
%\begin{remark*}
From a modeling perspective, \ac{VAR} and SVAR are equivalent, %differing only in their representation. Specifically,  
as any SVAR can be expressed in its reduced-form \ac{VAR} by computing $\A_i = \hat{\A}_0^{-1}\hat{\A}_i$ for $i=0, \ldots, p$ in \ref{eq:VAR}. Notably, choosing one over the other does not affect its causal interpretation, provided that $\hat{\A}_0$ is known. For simplicity, in this work, we adopt the \ac{VAR} notation, %and  provide a step forward in the causal interpretation of \VAR{s}, by
to introduce a \emph{novel framework for causal inference over time, which complements the SVAR's causal discovery approach}. 
%Exploiting this property, all the results in this work are presented for the sake of clarity within the \ac{VAR} framework.
%\end{remark*}

\paragraph{Graph}
An edge $(i, V)$ is present iff $\exists k$ s.t. $\A_k[i, j] \neq 0$.

\section{Causal perspective on Discrete-time Stochastic Processes}\label{sec:causal-perspective}
In this section, we present the theoretical foundations for causal inference over time. Our main focus is on \acp{SdE}, which represent the most complex type among the equations considered. 

First, we briefly summarize the results from \citet{Mooij2013deterministic_case,bongers2018causal,DBLP:journals/corr/abs-2106-12430} regarding \acp{OdE} and \acp{RdE}(\cref{sec:equilibration}). Then, we extend these results to \acp{SdE}, providing a formal definition of causal interventions in this context (\cref{subsec:causal_int_sde}). Finally, we demonstrate how a \ac{SCM} can serve as a simplified representation of the asymptotic behavior of an underlying dynamical system (
\cref{subsec:mapping_sde_scm}).%extending from~\citep{bongers2018causal} from continuous-time to discrete-time systems and further relaxing its assumptions to tackle \acp{SdE}. 
%
% \paragraph{Motivating example} 
% Answering interventional queries formulated over time, such as ``If I receive a vaccination, what level of protection can I expect after one week and over six months?'' requires a higher expressiveness compared to \acp{SCM}, which can address most ``If I receive a vaccination, what level of protection can I expect?''. Causal inference over time aims to precisely cover this gap, i.e., being able to thoroughly reason on this type of question by adopting a perspective that embraces the temporal evolution. 
\subsection{Stochastic Processes Equilibration - \acp{dE}}\label{sec:equilibration}
Let $\X$ be a \ac{DSP}.  
We say that $\X$ \textit{(strongly) equilibrates} if $\X_t$ converges to a random variable $\Xinf$ for $t\to\infty$ almost surely, i.e., 
\begin{equation*}
    \Prob(\limsup_{t\to\infty} \{\omega\in\Omega:|\X_t(\omega)-\Xinf(\omega)|>\varepsilon\})=0\quad \forall \varepsilon>0, 
\end{equation*}
\text{denoted by $\X_t\overset{a.s}{\to}\Xinf$}. %
We call $\Xinf$ the \textit{equilibrium state} of $\X$.
Moreover, we say that $\X$ \textit{weakly equilibrates} if $\X_t$ converges in distribution to a random variable $\Xinf$, i.e., for all $x$ for which $F_\X$ (the CDF of $\X$) is continuous, 
\begin{equation*}
    \lim_{t\to\infty}F_{\X_t}(x)=F_{\Xinf}(x),\quad \text{denoted by $\X_t\indist\Xinf$}.
\end{equation*}
In general, $\X_t\overset{a.s.}{\to}\Xinf \Rightarrow \X_t\indist\Xinf$. 
\begin{remark}
 If $\X$ is a solution of an \ac{RdE}, almost sure convergence is achieved if and only if, for almost all $\omega\in\Omega$, the solution of the \ac{OdE} in \Eqref{eq:RDE_sample_path_solution} converges asymptotically to a fixed value $\Xinf(\omega)$. 
 On the other hand, a \ac{DSP} that is a solution of a \ac{SdE} (for convenience, we recall \Eqref{eq:sde}: $
\X_t = \bm{f}(\X_{<t}) + \bm{g}(\X_{<t}) \odot \vectornoise_t$) cannot equilibrate unless the term $g(\X_{<t})$ converges as $t\to\infty$ to $0$, thus transforming the equation into a \ac{RdE}. Apart from this scenario, no solutions of a \ac{SdE} can strongly equilibrate.
\end{remark}
\begin{example}[\ac{RdE} equilibration]
    Consider \cref{ex:rde}. Assuming  that the inequality $a(\omega)<1$ holds $\forall \omega\in\Omega$, we can write 
    \begin{equation*}
        X_t\overset{a.s}{\to}X_\infty=b(\Omega)\sum_{i=0}^{\infty}a(\Omega)^i = \frac{b(\Omega)}{1-a(\Omega)}.
    \end{equation*}
\end{example}

\subsection*{Equilibration as a map from \acp{RdE} to \acp{SCM}}\label{app:equilibration_RDE_SCM}
The process equilibration allows defining a map that associates each \ac{RdE} with an \ac{SCM}. 
Furthermore, this map preserves the semantic of intervention, in the same sense of Section~\ref{subsec:mapping_sde_scm}. Visually, the diagram in Figure~\ref{fig:simple_diagram_commutes} commutes, i.e., applying equilibration before or after intervention doesn't change the resulting $\X^\intervention_\infty$.

\begin{figure}[t!]
    \centering
    \begin{tikzpicture}
    [scale=0.8, every node/.style={rectangle split, rectangle split parts=2,rectangle split horizontal=true, rounded corners=0.1cm, minimum height=1cm, align=center,scale=0.8}]

    % Nodes
    \node[draw] (1) at (0,2) {\nodepart[text width=0.6cm]{one}$\dequation$\nodepart[text width=0.6cm]{two}$\X_t$};
    \node[draw] (2) at (4,2) {\nodepart[text width=0.6cm]{one}$\scm$\nodepart[text width=0.6cm]{two}$\X_\infty$};
    \node[draw] (3) at (0,0) {\nodepart[text width=0.6cm]{one}$\dequation^\intervention$\nodepart[text width=0.6cm]{two}$ \X_t^\intervention$};
    \node[draw] (4) at (4,0) {\nodepart[text width=0.6cm]{one}$\scm^\intervention$\nodepart[text width=0.6cm]{two}$ \X_\infty^\intervention$};

    \node [] at (2.14,1) {$=?$};

    % Arrows
    \draw[|->,shorten >=4pt,shorten <=4pt,color=MidnightBlue] (1) -- (2) node[midway, above] {$t \to \infty$};
    \draw[|->,shorten >=4pt,shorten <=4pt,color=BrickRed] (1) -- (3) node[midway, left] {$\intervention$};
    \draw[|->,shorten >=4pt,shorten <=4pt,color=MidnightBlue] (2) -- (4) node[midway, right,xshift=0.33cm] {$\intervention$};
    \draw[|->,shorten >=4pt,shorten <=4pt,color=BrickRed] (3) -- (4) node[midway, below] {$t \to \infty$};

    % Curved arrows
    \draw[thick,-stealth,in=0,out=90,shorten >=4pt,shorten <=4pt,color=MidnightBlue] (1.3,1.5) .. controls +(0:1.1cm) and +(90:1.1cm) .. (2.7,0.5);
    \draw[thick,-stealth,in=-90,out=180,shorten >=4pt,shorten <=4pt,color=BrickRed] (1.3,1.5) .. controls +(-90:1.1cm) and +(180:1.1cm) .. (2.7,0.5);
\end{tikzpicture}
\caption{\textbf{\ac{RdE} Equilibration.} Illustration of the relationship between \acp{RdE} \& \acp{SCM} and the effect of causal interventions on them. Applying equilibration (here denoted by $t \to \infty$) before or after intervention does not change the final resulting distribution $\X_\infty^\intervention$, i.e., the diagram commutes. For \acp{SdE}, when variables are localized in times,  commutation is not preserved, that is $(\X_\infty)^{\intervention_{\scm}} \neq (\X^{\intervention_{\dequation}})_\infty$.}\label{fig:simple_diagram_commutes}
\end{figure}

%We treat the cases of \acp{RdE} and \acp{SdE} separately to clarify their differences and avoid using overly general formalism.

Consider a Simple \ac{RdE} \(\dequation\) and its solution \(\X\). Let \(\Xinf\) be the random variable which \(\X\) equilibrates to. 
Then it is possible to associate an \ac{SCM} \(\scm_{\dequation}=(\rmF,\vectorNoise)\) where \(\vectorNoise=\vectornoise_{\dequation}\) and \(\rmF\) is defined through the equilibrium relation
\begin{equation}
    \Xinf = f_{\dequation}(\Xinf(<t,\cdot),\vectorNoise)=\tilde{f}(\Xinf,\vectorNoise),
\end{equation}
where \(\Xinf(t,\cdot)=\Xinf \: \forall t\) is the constant stochastic process associated with \(\Xinf\). In other words, \(\scm_\dequation\) fully inherits the functional relationships of \(\dequation\), substituting dynamic variables with their stationary counterparts. A proof for the general case in continuous-time can be found in ~\citep{bongers2018causal}. The discrete-time variant follows directly by applying methods of numerical approximation~\citep{hildebrand1987introduction,jornet2023theory}. For further discussion on the commutativity of the diagram in \ref{fig:simple_diagram_commutes}, see \cite{dash2003caveats, DBLP:conf/aistats/Dash05}.

\subsection{Causal Interventions on \acp{SdE}}\label{subsec:causal_int_sde}

We define an intervention $\intervention$ on a \ac{SdE} $\dequation$ as a modification of one or more component equations denoted by the mapping:
\begin{equation}\label{eq:sde_intervention}
\intervention:  \, f_i(\parents[i]_{<t}) + g_i(\parents[i]_{<t})\odot\vectornoise_t \quad \longmapsto \quad
\tilde{f_i}(\tilde{\parentssym}_{<t}^{i}) + \tilde{g_i}(\tilde{\parentssym}_{<t}^{i})\odot \vectornoise_t, \; \forall t\geq t_\intervention
\end{equation}
where \(\tilde{\parentssym}_{<t}^{i}\subseteq\parents[i]_{<t}\). 
Unlike \acp{SCM}, the intervention applies \textit{starting from a specific time} $t_\intervention$. 
In other words, the process follows the original equations for $t<t_\intervention$ and the modified ones for \(t\geq t_\intervention\). We denote the modified \ac{SdE} as $\dequation^\intervention$ to 
%We 
generalize \cref{eq:causal_effect} to account for time. To differentiate between interventions on a \ac{SCM} $\scm$ and on a \ac{SdE} $\dequation$, $\intervention_{\scm}$ and $\intervention_{\dequation}$ will be respectively adopted when necessary.

\begin{definition}[Causal Effect over time ($\ce{}_t$)]\label{def:ce_t_discrete}
    Let \(\X\) be a solution of a specific \ac{SdE} $\dequation$. We define the causal effect at time $t$ of an intervention $\intervention$ as
    \begin{equation}\label{eq:causal_effet_time}
        \ce{}^\intervention_t := \expected{\X^\intervention_t-\X_t|\X_{<t_\intervention}}, %\:|\: C},
    \end{equation}
where \(\X^\intervention\) is the solution of the modified \ac{SdE} $\dequation^\intervention$.
\end{definition}
The interpretation of  $\ce{}^\intervention_t$ is closely related to the causal effect of an intervention on a \ac{SCM} (\ref{eq:causal_effect}), $\ce{}^\intervention$: while the latter measures the causal effect of an exogenous intervention, \(\ce_t\) does so for any time step $t$ of the \ac{DSP}, i.e.,~\textit{it measures the causal effect of an intervention over time}. 
%, \textit{considering a certain time has passed since the intervention}.  
%
%on the causal decentants of on the observed variables regarding a specific exogenous intervention to the system, \(\ce_t\) does so on a finer scale, \textit{considering a certain time has passed since the action}. 
%
Importantly, as we will show in the next section, \(\ce{}^\intervention_t \to \ce^\intervention{}\) as \(t\to\infty\), i.e.,
there is an asymptotic correspondence between the two quantities.

\subsection*{Equilibration as a map from \acp{SdE} to \acp{SCM}}\label{app:equilibration_SDE_SCM}
Associating an \ac{SdE} with an \ac{SCM} involves two key steps. The first concerns the conditions under which the dynamical system converges to a stationary (observational) distribution. The second involves constructing a mapping between interventions in the two models and establishing their equivalence.   

In this chapter, we focus on the first aspect and discuss the second only in the linear case. While the existence of a suitable \ac{SCM} appears intuitively plausible in general, explicitly characterizing it requires mathematical tools beyond the scope of this chapter. For nonlinear systems, we will investigate this issue further in the next chapter (\ref{ch:CIcontinuous_time}).  

\paragraph{Existence of an observational limit distribution}
Several works discuss the existence and uniqueness of a limit distribution for stochastic systems ~\citep{arnold1995random, bhattacharya2003random, loskot1995limit}.
%and  positive results are obtained also where systems with much weaker assumptions than those within the scope of this work are considered, e.g., non i.i.d. $\vectornoise_t$. 
Here, we adapt the following theorem from \citep{lasota1989stochastic} to our formulation in \Eqref{eq:sde}:
%see also ~\citep{loskot1995limit}),
%
\begin{theorem}\label{thm:sdetodistribution}[Adapted \Eqref{eq:sde} from~\citep{lasota1989stochastic}]
Consider a \ac{SdE}\footnote{
Theorem~\ref{thm:sdetodistribution} considers Markovian systems, while our formulation allows for $f$ and $g$ to depend on multiple past states. As long as the states considered are finite in number, i.e., $s$ depends on $\X_{t-1},\X_{t-2},\dots,\X_{t-p},\: p<\infty$, it is always possible to represent the system as if it were Markovian, through the transformation $\bm{Z}_t := [\X_t,\X_{t-1},\dots,\X_{t-p+1}]$. Given this fact, for ease of exposition, we will consider strictly Markovian systems in the discussion.} 
described by 
\begin{equation*}
    \X_t = f(\X_{t-1}) + g(\X_{t-1})\odot\vectornoise_t.
\end{equation*}
If the following conditions hold: i) $f,g:\mathcal{X}\to\mathcal{X}$ are continuous in $\X_{t-1}$, and $\mathcal{X}\subset\R^{\sizex}$ is closed; ii) 
     $\vectornoise_t$ is i.i.d. white noise; and iii)
     for every $t$,
% \begin{enumerate}
%     \item $f,g:\mathcal{X}\to\mathcal{X}$ are continuous in $\X_{t-1}$, and $\mathcal{X}\subset\R^{\sizex}$ is closed.
%     \item $\vectornoise_t$ is i.i.d. white noise.
%     \item For every $t$,
    \begin{align*}
        &(\text{iii}.1) 
        \quad \forall x,y\in\mathcal{X},\: x\neq y\\\quad&\expected{\norm{f(x)+g(x)\odot\vectornoise_t-f(y)-g(y)\odot\vectornoise_t}}< \norm{x-y} \\\\
        &(\text{iii}.2) 
        \quad \forall x\in\mathcal{X}\\
        &\expected{\norm{f(x) + g(x)\odot\vectornoise_t}^2}\leq \alpha \norm{x}^2 + \beta ,\: \alpha<1,
    \end{align*}
%\end{enumerate}
where $\norm{\cdot}$ is the Euclidean norm in $\sR^\sizex$. 
Then, there exists a unique invariant distribution $p(\Xinf)$ to which $p(\X_t)$ converges as $t\to\infty$, regardless of the initial state $\X_0$.
\end{theorem}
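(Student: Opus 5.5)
We regard the \ac{SdE} as a Markov chain on the closed set $\mathcal{X}$ with transition operator $P$, where $(P\varphi)(x) = \expected{\varphi(f(x)+g(x)\odot\vectornoise_t)}$, and study the induced action $\mu \mapsto \mu P$ on the space of probability measures on $\mathcal{X}$. Condition (ii) makes this chain time-homogeneous. Condition (i), together with dominated convergence, makes $P$ Feller: it maps bounded continuous functions to bounded continuous functions. The strategy follows the classical Lasota--Szarek scheme: use (iii.2) to get tightness and hence existence of an invariant measure, then use (iii.1) to get uniqueness and convergence from every initial state.

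\textbf{Step 1 (moment bound and tightness).} Taking expectations in (iii.2) and iterating gives
\[
\expected{\norm{\X_t}^2} \le \alpha^t \expected{\norm{\X_0}^2} + \frac{\beta}{1-\alpha},
\]
so second moments are bounded uniformly in $t$. By Chebyshev's inequality the laws $\{p(\X_t)\}_t$, and their Ces\`aro averages $\frac1T\sum_{t<T} p(\X_t)$, are tight. Prokhorov's theorem then yields a weakly convergent subsequence of the Ces\`aro averages. The Feller property lets us pass to the limit in $\mu P$, so the limit is invariant (Krylov--Bogoliubov). Closedness of $\mathcal{X}$ guarantees that this limit is supported in $\mathcal{X}$.

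\textbf{Step 2 (contraction in the Wasserstein distance).} Run two copies $\X_t$ and $\Y_t$ driven by the \emph{same} noise $\vectornoise_t$ (synchronous coupling). Condition (iii.1) then reads $\expected{\norm{\X_t-\Y_t}\mid \X_{t-1},\Y_{t-1}} < \norm{\X_{t-1}-\Y_{t-1}}$. Hence $D_t := \norm{\X_t-\Y_t}$ is a nonnegative supermartingale and converges almost surely to some limit $D_\infty$.

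We then need $D_\infty = 0$, and this is the main obstacle. The inequality in (iii.1) is strict but not uniform, so there is no contraction constant and no direct geometric Wasserstein contraction. The plan is as follows.
\begin{itemize}
\item Restrict attention to a compact set $K$ on which, by Step 1, both chains spend a positive fraction of time.
\item On $\{(x,y)\in K\times K : \norm{x-y}\ge \varepsilon\}$, continuity (from (i) and dominated convergence) turns the strict inequality into a uniform gap $\expected{D_t\mid\cdot} \le D_{t-1} - \eta(\varepsilon)$ with $\eta(\varepsilon)>0$.
\item If $D_\infty \ge \varepsilon$ with positive probability, the chains return to $K\times K$ infinitely often and each return removes at least $\eta(\varepsilon)$ in expectation. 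Summing these decrements contradicts $D_t \ge 0$ together with the bounded expectation of $D_0$.
\end{itemize}
It follows that $D_t \to 0$ almost surely. If this argument stalls, the fallback is to invoke the Lasota--Szarek ``asymptotic stability of Feller semigroups'' criterion directly, using (iii.1) as its nonexpansiveness hypothesis.

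\textbf{Step 3 (uniqueness and convergence).} Start $\Y_0$ from the invariant law $\pi$ of Step 1, so that $\Y_t \sim \pi$ for all $t$, and start $\X_0$ arbitrary. Step 2 gives $\X_t - \Y_t \to 0$ almost surely. For every bounded Lipschitz $\varphi$ we then have $|\expected{\varphi(\X_t)} - \int\varphi\, d\pi| \to 0$, which means $p(\X_t) \to \pi$ weakly. Uniqueness follows at once: two invariant laws would both be limits of the same sequence of laws. Finally, the footnote's state augmentation $\bm{Z}_t$ extends the result to finite-lag systems.
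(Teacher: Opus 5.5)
Your proof is correct, and it does more than the paper, which gives no argument for this statement. The paper imports it (``adapted'') from the cited 1989 Lasota result and afterwards only checks that Lipschitz bounds on $f,g$ imply (iii); your stated fallback is in effect the paper's route.

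Your main route is a self-contained synchronous-coupling proof. Condition (iii.2) serves only for uniform tightness and Krylov--Bogoliubov existence. Condition (iii.1) plus continuity gives a uniform gap $h(x,y):=\|x-y\|-\mathbb{E}\|f(x)-f(y)+(g(x)-g(y))\odot\xi\|\ge\eta>0$ on the compact sets $\{(x,y)\in(K\cap\mathcal{X})^2:\|x-y\|\ge\varepsilon\}$. Compared with citing a measure-level stability theorem, this shows where each hypothesis enters. It also gives a stronger, pathwise conclusion: two trajectories driven by the same noise merge almost surely, not only their laws. Neither route gives a rate, since (iii.1) is not uniform. Under the paper's own sufficient condition $\kappa:=L_f+L_g\mathbb{E}\|\xi\|<1$, your delicate Step~2 collapses to $\mathbb{E}D_t\le\kappa^t\mathbb{E}D_0$, a geometric contraction in the Wasserstein-1 distance.

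When you write it out, make two things precise.

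\textbf{The decrement argument.} The ``summing decrements'' step is the Doob decomposition $D_t=D_0+M_t-A_t$ with compensator $A_t=\sum_{s\le t}h(X_{s-1},Y_{s-1})$. Since $\mathbb{E}A_\infty\le\mathbb{E}D_0$, you get $A_\infty<\infty$ a.s. What Step~1 must supply is not a ``positive fraction of time'' in $K$. It is the bound $P\bigl((X_t,Y_t)\in K\times K\text{ i.o.}\bigr)\ge1-\sup_tP\bigl((X_t,Y_t)\notin K\times K\bigr)$. Because $A_\infty=\infty$ on $\{D_\infty>\varepsilon\}\cap\{\text{i.o. in }K\times K\}$, you obtain $P(D_\infty>\varepsilon)\le\sup_tP\bigl((X_t,Y_t)\notin K\times K\bigr)$, which tends to $0$ as $K$ grows. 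Continuity of $h$ uses dominated convergence with $\mathbb{E}\|\xi\|<\infty$, and compactness uses closedness of $\mathcal{X}$.

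\textbf{Initial conditions.} The moment iteration and $\mathbb{E}D_0<\infty$ need integrable initial data independent of the noise. Couple a deterministic $X_0=x$ with $Y_0\sim\pi$, which satisfies $\int\|y\|^2\,d\pi\le\beta/(1-\alpha)$ by lower semicontinuity. Then pass to an arbitrary initial law $\mu$ via $\mu P^t=\int\delta_xP^t\,\mu(dx)$ and dominated convergence. Argue uniqueness the same way, as $\pi'=\int\delta_xP^t\,\pi'(dx)\to\pi$, since an arbitrary invariant $\pi'$ is not known in advance to have a finite first moment.
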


%We identify the family of \acp{SCM} that share the same graph of the \SDE{} and are consistent with $\Xinf$. By considering the interventional distributions one by one, we progressively narrow down the permissible family of \acp{SCM} until obtaining a class of interventional equivalence. 
%
%To verify that interventions are consistent across the two frameworks, 
%\textit{it is necessary to specify the chosen intervention class}. In the next section, we will proceed in this direction, translating the theoretical formulation into a practical implementation.
%To asymptotically relate \ac{DSP}s and \acp{SCM}, the theorem's assumptions must be verified for both the observational and interventional \acp{DSP}. 
%Under this condition, we obtain a set of invariant observational and interventional distributions  $p(\Xinf)$ and $p(\Xinf^\intervention)$.  %\Xinf,\{\Xinf^\intervention\}_\intervention\}$. 
%
%In other words, it is in general \textit{possible to identify an \ac{SCM} consistent at the observational and interventional levels with the causal dynamics of $\X_t$ ad $\X^\intervention_t$ at equilibrium}. \textcolor{red}{Proving commutation in non linear setting is future work. for now we have just the first step (existence of a limit distribution and set of candidate SCMs. Think how to conclude this. probably make this point clear at start of section A.4.}

\paragraph{Rediscussing \cref{thm:sdetodistribution} in terms of Lipschitz continuity}

For a more intuitive interpretation of \cref{thm:sdetodistribution}, we show that condition (iii) holds under Lipschitz continuity\footnote{A function $f$ is Lipschitz with constant $L$ if for all $x,y$, it holds that $\|f(x)-f(y)\| \leq L \|x-y\|$.} of $f$ and $g$. %In particular, assuming $f$ and $g$ are Lipschitz with constants $L_f$ and $L_g$, 
% %
Given $f,g:\sR^\sizex\to\sR^\sizex$ Lipschitz continuous with Lipschitz constants $L_f,L_g$, we want to find sufficient conditions on $L_f$ and $L_g$ such that, for every fixed $x,y$,
\begin{align*}
         (1) \quad &\expected{\norm{f(x)+g(x)\odot\vectornoise-f(y)-g(y)\odot\vectornoise}}< \norm{x-y}. \\
         (2) \quad &\expected{\norm{f(x) + g(x)\odot\vectornoise}^2}\leq \alpha \norm{x}^2 + \beta ,\: \alpha<1.
    \end{align*}
    $\vectornoise$ being a random vector s.t. each of its component satisfy $\expected{\noise}=0,\expected{|\noise|^2}<\infty$. For $(1)$ we calculate
    \begin{align*}
        &\expected{\norm{f(x)+g(x)\odot\vectornoise-f(y)-g(y)\odot\vectornoise}}\leq \\
        &\norm{f(x)-f(y)}+\norm{g(x)-g(y)}\: \expected{\norm{\vectornoise}}\leq \\
        &L_f \norm{x-y} + L_g \norm{x-y} \: \expected{\norm{\vectornoise}}= \\
        &(L_f + L_g \: \expected{\norm{\vectornoise}})\norm{x-y}<\norm{x-y} \\
        &\iff \quad \boxed{L_f + L_g \: \expected{\norm{\vectornoise}}< 1}
    \end{align*}

    The same procedure for $(2)$ yields
    \begin{align*}
        &\expected{\norm{f(x) + g(x)\odot\vectornoise}^2} = \\
        &\expected{\norm{f(x)}^2} + \expected{\norm{g(x)\odot\vectornoise}^2}= \quad \{\hat{f}(x):=f(x)-f(0)\} \\
        &\expected{\norm{\hat{f}(x)+f(0)}^2}+\expected{\norm{\hat{g}(x)\odot \vectornoise+g(0)\odot \vectornoise}^2}\leq \\
        &L_f^2 \norm{x}^2 + 2 |\innerproduct{\hat{f}(x)}{f(0)}|  + \norm{f(0)}^2 + \\
        &L_g^2\norm{x}^2 \sigma^2 + 2|\innerproduct{\hat{g}(x)}{g(0)}|\sigma^2 + \norm{g(0)}^2 \sigma^2 \leq \quad \{\sigma^2:=\expected{\norm{\vectornoise}^2}\} \\
        &(L_f^2+L_g^2\sigma^2)\norm{x}^2+2(L_f\norm{f(0)} +L_g\norm{g(0)}\sigma^2)\norm{x} + \\
        &\norm{f(0)}^2 + \norm{g(0)}^2 \sigma^2 =: a\norm{x}^2+m\norm{x}+b.
    \end{align*}
    Observing that every polynomial $ax^2 + mx + b$ with $a > 0$ can be bounded above by $\alpha x^2 + \beta$ assuming $\alpha > a$ and provided that $\beta$ is sufficiently large, we finally obtain
    \begin{equation*}
    \begin{split}
        &a\norm{x}^2+m\norm{x}+b \leq \alpha x^2 + \beta, \: \alpha<1 \\
        &\iff \quad \boxed{L_f^2+L_g^2\expected{\norm{\vectornoise}^2} < 1}
    \end{split}
    \end{equation*}

In the simple case in which $g\equiv 1$, we obtain the inequality $L_f<1$, i.e.,~$f$ \textit{is a contraction}. 
Condition \((2)\) disappears as $L_f^2<L_f<1$.
In the general case, we interpret $f$ as a contraction map on $\Xinf$ and $g \odot \vectornoise_t$ as a diffusion process. The balance between these processes generates the invariant distribution.

\subsection{Mapping \acp{SdE} to \acp{SCM} - The linear case}\label{subsec:mapping_sde_scm}
Given an \ac{SdE} $\dequation$ and its solution $\X$, we study the conditions on $\dequation$ such that:
i) $\X_t$ converges in distribution to $\X_\infty$ as $t \to \infty$; and
ii) there exists an \ac{SCM} $\scm$ such that $\X_\infty$ is a solution of $\scm$ and, for every intervention $\intervention$, it holds that $(\X_\infty)^{\intervention_{\scm}} =(\X^{\intervention_{\dequation}})_\infty$. While i) is automatically satisfied by any finite memory stationary process, ii) requires more careful analysis, as discussed below. 

%its application on $\scm$ gives rise to the same intervened distribution that would be obtained by first intervening on $\dequation$ and letting the process converge as $t\to\infty$. In other words, the diagram shown in~\ref{fig:diagram_not_commute} must commute

%, it preserves the same convergence property after any intervention, i.e., $\X^\intervention_t$ converges in distribution to $\X_\infty^\intervention$. 
%
\paragraph{A negative result from \citep{janzing2018structural}} 
Consider the stable bivariate system defined by the equations $X_t = \varepsilon^{x}_t, Y_t= 0.5 \cdot X_{t-1} + \varepsilon^{y}_t$. For every \( t \), \( X_t \) and \( Y_t \) are independent of each other. Consequently, the joint distribution $p(X_t,Y_t)$ cannot capture the causal dependencies of the system (\textit{$X$ causes $Y$}). The lack of causal information in the cross-sectional dimension arises because the variables are \textit{localized in time}; their values change rapidly, leading to minimal or no correlation with their past values. On this specific point \citet{janzing2018structural} provide an explicit negative result: without first making the variables de-localized in time, there is no \ac{SCM} that can capture the \ac{SdE}. In fact, our definition of intervention (\ref{eq:sde_intervention}) \textit{acts on a variable of the system for a prolonged and indefinite period}.

\paragraph{$\gT$-transformation} To overcome this limitation  \citet{janzing2018structural} propose a transformation of $\X_t$ based on a frequency analysis of the time series.\footnote{Our $\Z_t$ (\ref{eq:long_run_normalized_mean}) can also be interpreted as a form of discrete Fourier transform of the time-series $\X_{1:t}$ with frequency zero.} Instead, our choice is inspired by the long-run normalized mean via the transformation $\gT: \text{\ac{DSP}} \mapsto \text{\ac{DSP}}$ defined by
\begin{equation}\label{eq:long_run_normalized_mean}
\gT(\X)_t := \Z_t = \bm{\mu} + \frac{1}{\sqrt{t}}\sum_{i=1}^t (\X_i-\bm{\mu}),
\end{equation}
%
%In contrast we need to treat variables that are \textit{de-localized in time},  and only under this condition we can hope to map the process into a causal informed \ac{SCM}
%
where $\bm{\mu} := \expected{\X}$.\footnote{The expectation here is taken over time as well. Nonetheless, for stationary processes, this simplifies to \(\expected{\X} = \expected{\X_t}\) for all \(t\).} 
Moreover, $\expected{\Z_\infty} = \expected{\X_\infty}= \bm{\mu}$ so that for every intervention $\intervention$, $\ce{}^\intervention_\infty$ (\ref{eq:causal_effet_time}) \textit{yields the same values}. However, unlike $\X$, $\Z$ can be mapped into an \ac{SCM} that precisely models \textit{its distribution shift over any intervention}, thereby satisfying property ii), represented as the commutativity of the diagram in \ref{fig:diagram_z_commutes}. \\

\begin{figure}[t!]
    \centering
\begin{tikzpicture}
    [scale=1.0, every node/.style={rectangle split, rectangle split parts=2,rectangle split horizontal=true, rounded corners=0.1cm, minimum height=1cm, align=center,scale=0.8}]

    % Nodes
    \node[draw] (1) at (0,2) {\nodepart[text width=0.6cm]{one}$\dequation$\nodepart[text width=0.6cm]{two}$\X_t$};
    \node[draw] (2) at (3.5,2) {\nodepart[text width=0.6cm]{one}$\dequation_\gT$\nodepart[text width=0.6cm]{two}$\Z_t$};
    \node[draw] (3) at (7,2) {\nodepart[text width=0.6cm]{one}$\scm$\nodepart[text width=0.6cm]{two}$ \Z_\infty$};
    \node[draw] (4) at (0,0) {\nodepart[text width=0.6cm]{one}$\dequation^\intervention$\nodepart[text width=0.6cm]{two}$\X_t^\intervention$};
    \node[draw] (5) at (3.5,0) {\nodepart[text width=0.6cm]{one}$\dequation^\intervention_\gT$\nodepart[text width=0.6cm]{two}$ \Z_t^\intervention$};
    \node[draw] (6) at (7,0) {\nodepart[text width=0.6cm]{one}$\scm^\intervention$\nodepart[text width=0.6cm]{two}$ \Z_\infty^\intervention$};

    % Arrows
    \draw[|->,shorten >=4pt,shorten <=4pt,color=MidnightBlue] (1) -- (2) node[midway, above] {$\gT$};
    \draw[|->,shorten >=4pt,shorten <=4pt,color=MidnightBlue] (2) -- (3) node[midway, above] {$t \to \infty$};
    \draw[|->,shorten >=4pt,shorten <=4pt,color=BrickRed] (4) -- (5) node[midway, below] {$\gT$};
    \draw[|->,shorten >=4pt,shorten <=4pt,color=BrickRed] (5) -- (6) node[midway, below] {$t \to \infty$};
    \draw[|->,shorten >=4pt,shorten <=4pt,color=BrickRed] (1) -- (4) node[midway, left] {$\intervention$};
    \draw[|->,shorten >=4pt,shorten <=4pt,color=MidnightBlue] (3) -- (6) node[midway, right, xshift=0.33cm] {$\intervention$};

    % Curved arrows
    %\draw[thick,-stealth,in=0,out=90,shorten >=4pt,shorten <=4pt] (1.3,1.5) .. controls +(0:1.1cm) and +(90:1.1cm) .. (2.7,0.5);
    %\draw[thick,-stealth,in=-90,out=180,shorten >=4pt,shorten <=4pt] (1.3,1.5) .. controls +(-90:1.1cm) and +(180:1.1cm) .. (2.7,0.5);
\end{tikzpicture}
\caption{\textbf{$\gT$-transformation.} $\gT$-transformation transfers causal information from the temporal to the cross-sectional dimension, and thus to the joint distribution $P(\Z_t)$. The diagram commutes, i.e., red and blue paths produce the same result.}
\label{fig:diagram_z_commutes}
\end{figure}

It is important to clarify that $\Z_t$ is not the process of interest, and the focus of the causal analysis remains on $\X_t$. However, due to the equivalence of long-run causal effects calculated in both processes and the ability to associate $\Z_t$ with the \ac{SCM} that models these effects, $\Z_t$ serves as a convenient intermediate mathematical tool. To demonstrate how this transformation ensures these desirable properties, we will focus on the subclass of linear systems, particularly on \ac{VAR} models. The reason for this choice is twofold. First, linear models, despite their simplicity, are still on par performance-wise with state-of-the-art \ac{ML} based forecast techniques~\citep{linear_analysis_toner_2024}, in particular when dealing with stochastic time series~\citep{parmezan2019evaluation}. Second, the mathematical treatment of interventions and the estimation of causal effects is particularly straightforward to implement and interpret, making this a useful first step for a possible extension to the nonlinear case.

%%Notably, it converges to a zero mean process with covariance matrix that is the discrete fourier transform of the Autocovariance Generating Function valutata in zero, ossia la varianza della componente costante del processo in analisi}

\section{From VARs to SCMs} \label{sec:interventions}
In this section, we provide implementations of two types of causal interventions, leveraging the strengths of the \ac{VAR} framework (\cref{subsec:causal_interventions}).
Then, we show that linear \acp{SCM} can model the long-term effects of stable \acp{VAR}, explaining the properties of its \ac{DSP} equilibrium (\cref{subsec:mapping_var_scm}). 

\subsection{Implementation of Causal Interventions}\label{subsec:causal_interventions}

Different application scenarios may need different types of interventions. Consider a government's fiscal policy. 
% Requiring that all citizens pay the same amount, say $10,000$ euros per year, is unrealistic. 
In such a setting, a feasible approach would be to implement an \emph{additive intervention} in the form of an annual tax increase of, e.g., $300$ euros per household on top of existing taxes. %This type of intervention can be modeled by $\intervention^{a}$. 
Alternatively, in other scenarios, e.g., when studying the effect of the \textit{key European Central Bank's interest rate}~\citep{belke2007ecb}, a more natural choice is to implement a \emph{forcing intervention} that enforces the convergence of an observed variable (e.g., interest rate) to a target value. 
%Conversely, when studying the effect of the \textit{key European Central Bank's interest rate}\footnote{\url{https://www.ecb.europa.eu/stats/policy_and_exchange_rates/key_ecb_interest_rates}} ~\citep{belke2007ecb}, fixed at a specific value for a certain period, $\intervention^{}$ would be the most natural choice.
In the following, we propose an implementation for \acp{VAR} of these two forms of interventions, % for linear autoregressive models in increasing order of complexity, 
showing their effects on the system and discussing their stability conditions.

\subsection*{Additive Interventions}\label{subsec:forcing}

Given a stable \ac{VAR}(\order) as in \cref{eq:var_lagoperator}, we define an additive intervention $\intervention_{a}$ at time $t_\intervention$ with force $\bm{F}$ as the mapping:
\begin{equation}\label{eq:forcing_intervention}
\begin{split}
   \intervention_{a}: \,  &\A(L)\X_t = \bm{\nu} + \vectornoisevar_t \quad \longmapsto \\
    &\A(L)\X_t = \bm{\nu} + \vectornoisevar_t + \mathds{I}(t\geq t_\intervention)\bm{F},
    \end{split}
\end{equation}
where $\mathds{I}(t\geq t_\intervention)$ is the indicator function, which equals $1$ if $t\geq t_\intervention$, otherwise $0$. 
In other words, we perform a translation while keeping the process dynamics unchanged.
%When necessary, we denote this type of intervention as $\intervention^F_{t_0}(\bm{F})$. 
%
In such case, the temporal causal effect $\ce{}_t$ is deterministic and takes values $\ce_{t}=0$  for $t <t_\intervention$ while, for %$t = t_\intervention +k$ with 
$k\geq 0$:
\begin{equation*}
    \ce_{t_\intervention+k} %= \expected{\X_{t_0+k}^\intervention-\X_{t_0+k}} 
    = \sum_{l=0}^k \response_l \bm{F}, 
\end{equation*}
where $\Phi$ is the \emph{impulse response function} of the \ac{VAR} model. Refer to \cref{app:svarproperties} for further details.
%
%$\intervention^F_{t_0}(\bm{F})$ is well-defined because it is %compositional and commutative. 
%
%Formally, the relation $\intervention^F_{t_0}(\bm{F}+\bm{F}') = \intervention^F_{t_0}(\bm{F})\odot \intervention^F_{t_0}(\bm{F}') = \intervention^F_{t_0}(\bm{F}')\odot \intervention^F_{t_0}(\bm{F})$ holds.
\begin{remark*}
    For this type of intervention, $\ce_t$ is deterministic and does not depend on the specific trajectory. The same property can be observed on the linear \ac{SCM} associated with the process, defining the intervention in a similar way: $\tilde{\X}=\tilde{\A} \tilde{\X}+\tilde{\vectornoisevar}$ changes into $\tilde{\X}=\tilde{\A} \tilde{\X}+\bm{F}+\tilde{\vectornoisevar}$.
\end{remark*}

\paragraph{Stability}\label{sec:stability_additive_discrete}
Additive intervention preserves the stability regardless of the value of $\bm{F}$, since $\A(L)$ does not change. See \cref{app:svarproperties} for stability conditions of \acp{VAR}.

\subsection*{Forcing Interventions}\label{subsec:do-like}

We define a forcing intervention $\intervention_f$ at time $t_\intervention$ with force $\bm{F}$ and target value $\hat{\X}$ as:
\begin{equation}\label{eq:do-like}
\begin{split}
    \intervention_{f}&: \, \A(L)\X_t = \bm{\nu} + \vectornoisevar_t \quad \longmapsto \\
    &\A(L)\X_t = \bm{\nu} + \vectornoisevar_t + \mathds{I}(t\geq t_\intervention)\bm{F}\odot(\hat{\X}-\X_t).
\end{split}
\end{equation}
We assume $\bm{F}$ to be positive in each component. 
This intervention acts as an attraction towards $\hat{\X}$, and $\bm{F}$ modulates the intensity of the attraction force. Applying an intervention on a single component $\Xcomp[i]$ toward the fixed value $\hat{X}$ and letting $F^{i}\to +\infty$ yields the do operator $do(\Xcomp[i] = \hat{X})$. We refer to~\citep{Mooij2013deterministic_case} for a detailed discussion of this point.
%$\intervention^{\tilde{do}}$ is both compositional and commutative in the same manner as $\intervention^F$.

\paragraph{Stability}
Forcing interventions $\intervention_{f}$ perturb the system dynamics by modifying the operator $\A(L)$. 
Specifically, by shifting the term $\bm{F} \odot \X_t$ to the left of the equation and rewriting it in matrix form as $\bm{F}_{diag}\X_t$, we obtain $\tilde{\A}(L):=\A(L)+\bm{F}_{diag}$. 
Hence, the stability of the intervened system is not guaranteed, and it is necessary to verify \textit{that all the eigenvalues of} $\tilde{\A}(L)$ \textit{are still inside the unit circle.} Intuitively, the stability of an observational system often relies on negative feedback loops. Fixing one variable can disrupt this balance, leading to runaway behavior. For example, turning off a pressure release valve in a pressurized tank can cause the pressure to build up uncontrollably, eventually leading to an explosion.

\begin{example}[Inverted Pendulum]\label{example:stability_not_granted}
When applying a forcing intervention, the intensity of $\bm{F}$ can play an important role. To show this, consider the toy model depicted in \cref{fig:inverted_pendulum} and described by the\ac{VAR}$(1)$ equation
\begin{equation}\label{eq:pendulum_matrix}
    \begin{bmatrix}
        \Xcomp[1]\\\Xcomp[2]
    \end{bmatrix}_t
    = 
    \sqrt{2}\begin{bmatrix}
        0 & -0.5 \\
        0.5 & 1
    \end{bmatrix} 
    \begin{bmatrix}
        \Xcomp[1]\\\Xcomp[2]
    \end{bmatrix}_{t-1}
    + 
    \begin{bmatrix}
        \vectornoise^{1}\\\vectornoise^{2}
    \end{bmatrix}_t.
\end{equation}
the roots of $det(I-\A_1z)$ are $z_{1,2}=\sqrt{2}>1$, indicating that the system is stable. Applying $\intervention_f$ to component $\Xcomp[1]$, we need to examine
\begin{equation*}
    det\left(
    \begin{bmatrix}
        1+F & 0\\
        0 & 1
    \end{bmatrix}-
    \sqrt{2}\begin{bmatrix}
        0 & -0.5\\
        0.5 & 1
    \end{bmatrix}z
    \right)
\end{equation*}
which have roots $z_{1,2}=\sqrt{2}\left(1+F\pm \sqrt{F^2+F}\right)$. It is easy to verify that for $F=1$, one root is less than $1$, and thus the model is no longer stable.
On the other hand, interveening on $\Xcomp[2]$ preserves stability regardless of $F$ intensity; the same procedure yields $z_{1,2}=\sqrt{2}\left(1\pm\sqrt{-F}\right)$, so that $|z_{1,2}|\geq \sqrt{2}>1$ $\forall F>0$. 
\end{example}

\paragraph{Stable \acp{VAR} with at most self-loops are interventionally stable}
It is possible to identify sufficient conditions for $\dequation$ to be always stable under intervention. In particular, it is easy to see that if all matrices $\A_i$ of the model are strictly lower triangular, and thus $\graph_\dequation$ is acyclic, $\X_t$ has a finite Moving Average representation. This property is preserved by any intervention $\intervention_f$.  

Acyclicity can be weakened to allow \textit{self-loops}. Given a \ac{VAR} model such that all its matrices $\A_i$ are (non-strictly) lower triangular, the stability of the intervened system directly follows from the stability of the observational one. To verify this, note that the determinantal polynomial of the model $$det(I-\A_1z-\dots-\A_pz^p) \quad (\text{as in} \: \text{\cref{eq:determinantal}})$$ reduces to the product of diagonal terms since all matrices are triangular, and each i-th root satisfies the relation $p_i(z)=1-\A_1[i,i]z-\dots-\A_p[i,i]z^p=0$. Observing that $p_i(0)=1>0$ and considering the root with the smallest norm $z_{min}$, using the continuity of $p_i$, we can deduce that $p_i(z)>0 \: \forall z$ such that $|z|<|z_{min}|$. Intervening on component $i$ yields $\tilde{p_i}(z)=p_i(z)+F$, from which we can conclude $\tilde{p_i}(z)>F>0$ for all $z$ with norm lower than $z_{min}$, and in particular for all those within the unit circle.

\begin{figure}
    \centering
    \begin{subfigure}[t]{0.49\textwidth}
        \centering
    \begin{tikzpicture}[scale=1.1,
            transform shape,
            vertex/.style={circle, draw, minimum size=1.1cm, inner sep=0pt, font=\small}]
    \node[vertex] (x1) at (0, 2.5) {$X^{1}$};
    \node[vertex] (x2) at (2.5, 2.5) {$X^{2}$};
    
    \draw[edge] (x1) to [bend left=25] node[midway, above] {$\frac{\sqrt{2}}{2}$} (x2);
    \draw[edge] (x2) to [bend left=25] 
    node[midway, below] {$-\frac{\sqrt{2}}{2}$}
    (x1);
    \draw[edge] (x2) to [loop right] 
    node[midway, right] {$\sqrt{2}$}
    (x2);
    \end{tikzpicture}
    \end{subfigure}%
    \begin{subfigure}[t]{0.49\textwidth}
        \centering
        \includegraphics[height=1.5in]{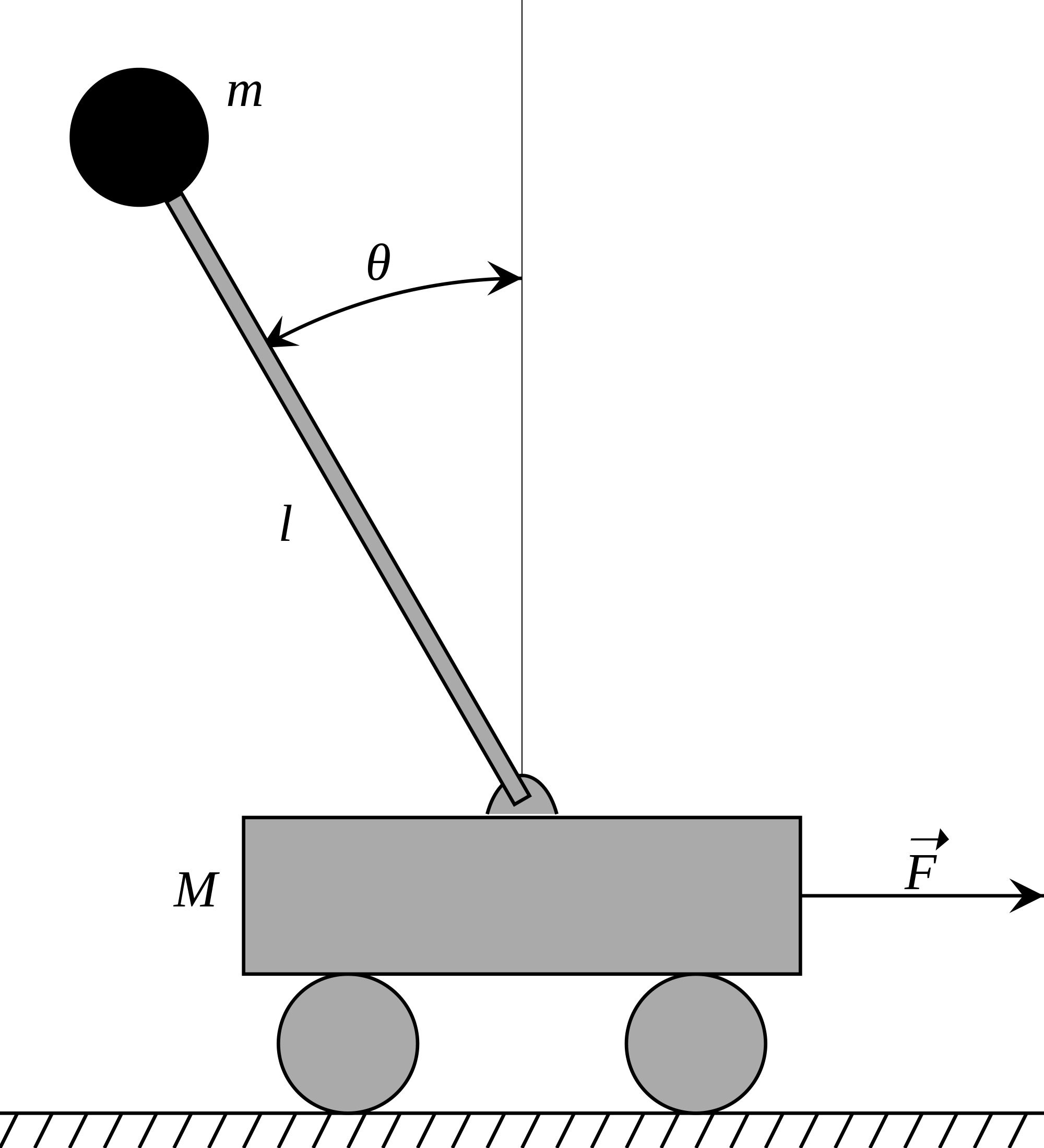}
    \end{subfigure}
    \caption{\textbf{Inverted Pendulum. } (Left) A simple model that lacks interventional stability: \( \Xcomp[1] \) functions as a stabilizer for \( \Xcomp[2] \), which exhibits a divergent dynamic. Intervening on \( \Xcomp[1] \) disrupts the negative feedback loop, making the system unstable. (Right) A concrete example of the model from control theory, the inverted pendulum: \( \Xcomp[1] \) is the position of the cart, while \( \Xcomp[2] \) is the angle \( \theta \); fixing the cart makes the system unstable, i.e., the pendulum falls. As a practical example, consider optimal control applications to nuclear reactors management~\citep{edwards1992robust}.}
    \label{fig:inverted_pendulum}
\end{figure}

%Generally, suppose that a sufficient condition to ensure intervention stability is, in addition to observational stability, the absence of negative feedback loops among the variables of the system.

% \paragraph{Why different interventions}
% Different scenarios may need different types of interventions. Consider, for example, a government's fiscal policy. Requiring that all citizens pay the same amount, say $10,000$ euros per year, is unrealistic. A more feasible approach would be to implement a tax increase, such as an additional $300$ euros annually per household on top of existing taxes. This type of intervention can be modeled by $\intervention^{a}$. Conversely, when studying the effect of the \textit{key European Central Bank's interest rate}\footnote{\url{https://www.ecb.europa.eu/stats/policy_and_exchange_rates/key_ecb_interest_rates}} ~\citep{belke2007ecb}, fixed at a specific value for a certain period, $\intervention^{do}$ would be the most natural choice.
\subsection{Mapping \acp{VAR} to \acp{SCM}}\label{subsec:mapping_var_scm}
%Given a stable \( \ac{VAR}$(p)$ \) model \(\dequation\) that describes the dynamics of a process \( \X_t \), we formulate the following theorem.
We provide the explicit mapping from \acp{VAR} to linear \acp{SCM} in the following theorem.
\begin{theorem}\label{thm:var_to_scm}
    Given a stable \ac{VAR}$(p)$ $\dequation$ defined by \cref{eq:VAR}, 
    there exists a linear \ac{SCM} $\scm$ with structural equations %given by
    \footnote{For simplicity we set $\bm{\nu}=0$, i.e.,~we assume $\expected{\X_t}=0$. The theorem applies in the general case up to a translation of both the \ac{VAR} and the associated \ac{SCM}.} 
    \begin{equation}\label{eq:var_to_scm}
    \begin{split}
        &\tilde{\bm{X}} = \tilde{\A} \tilde{\bm{X}} + \tilde{\vectornoisevar}, 
        \\\text{where }\quad &\tilde{\A} := \left[ \A_1 + \dots + \A_p \right] \text{ and } \tilde{\vectornoisevar} \sim \normal(\bm{0},\boldsymbol{\Sigma}_{\vectornoisevar}),  
    \end{split}
    \end{equation}
    such that, given the transformation $\bm{Z}_t = \frac{1}{\sqrt{t}}\sum_{i=1}^t\X_i$, the following properties hold: 
    \begin{enumerate}
    \item  $\dequation$ and  $\scm$ share the same \emph{causal graph}, i.e., $\graph_{\scm}=\graph_{\dequation}$;\label{samegraph}
    \item The \emph{observational distribution} induced by $\dequation$ at equilibrium $p (\bm{Z}_\infty)$ is equal to the one induced by $\scm$, $p( \tilde{\bm{X}})$;\label{scmissolution}
     \item The \emph{interventional distribution}
     $p (\bm{Z}_\infty^{\intervention_\dequation})$
     is equal to the one induced by the same intervention on $\scm$,
         $p( \tilde{\bm{X}}^{\intervention_{\scm}})$. \label{scminterventioncomutes}
    % \item The \emph{interventional distribution} induced by an intervention ${\intervention_\dequation}$ on the \ac{DSP} at equilibrium 
     % $p (\Xinf^{\intervention_\dequation})$, 
     % agrees with an interventional distribution of the \ac{SCM} 
     %     $p( \tilde{\bm{X}}^{\intervention_{\scm}})$, i.e., 
     % $p (\Xinf^{\intervention_\dequation}) = p( \tilde{\bm{X}}^{\intervention_{\scm}})$. \label{scminterventioncomutes}
    %\item $p (\Xinf) = p_{\dequation} (\Xin) $ is a solution of $\scm$, i.e., $\Xinf= \left[ \A_1 + \dots + \A_p \right]\Xinf+\vectornoise\ $;\label{scmissolution}
    %
    %\item the equilibrium distribution after an intervention over $\dequation$ and the one obtained by the same intervention over $\scm$ are equal, formally $\X^{\intervention_{\scm}} = \Xinf^{\intervention_\dequation}$.
    \end{enumerate}
\end{theorem}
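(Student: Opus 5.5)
Our approach is to pass through the Wold/MA$(\infty)$ representation of the stable \ac{VAR}, and then read off the limiting distribution of $\bm{Z}_t$ with a central limit argument. Set $\bm{\nu}=0$ as in the statement. Stability gives $\X_t=\response(L)\vectornoisevar_t$ with absolutely summable $\response_l$, and \cref{eq:phi1a1} gives $\response(1)=[\I-\A_1-\dots-\A_p]^{-1}=[\I-\tilde{\A}]^{-1}$.

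Property~\ref{samegraph} is then immediate. By the definition of the \ac{VAR} graph, an edge $(i,j)$ exists iff $\A_k[j,i]\neq 0$ for some $k$. The \ac{SCM} edge exists iff $\tilde{\A}[j,i]=\sum_k\A_k[j,i]\neq0$. These agree except under exact cancellation across lags. We handle this case either by a genericity/faithfulness-type assumption on the coefficients or by noting it explicitly. We regard it as a degenerate case that we exclude, in the spirit of the faithfulness discussion in \cref{ch:causality_math_framework}.

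For property~\ref{scmissolution}, we write
\[
\bm{Z}_t=\tfrac{1}{\sqrt t}\sum_{i=1}^t\sum_{l\ge0}\response_l\vectornoisevar_{i-l}.
\]
Reordering the sums, this equals $\response(1)\,\tfrac{1}{\sqrt t}\sum_{i=1}^t\vectornoisevar_i$ plus a remainder $R_t$. The remainder collects the truncated tails $\sum_{l>t-i}\response_l$ and the pre-sample noise. We will bound $\expected{\norm{R_t}^2}\to0$ using absolute summability of $\response_l$; this is the standard Beveridge--Nelson-type decomposition. Since the $\vectornoisevar_i$ are i.i.d. with covariance $\boldsymbol{\Sigma}_{\vectornoisevar}$, the multivariate CLT and Slutsky give
\[
\bm{Z}_\infty\sim\normal\bigl(0,[\I-\tilde{\A}]^{-1}\boldsymbol{\Sigma}_{\vectornoisevar}[\I-\tilde{\A}]^{-\top}\bigr).
\]
This is exactly the law of the unique solution $\tilde{\X}=[\I-\tilde{\A}]^{-1}\tilde{\vectornoisevar}$ of the \ac{SCM}. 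Invertibility of $\I-\tilde{\A}$ follows from the determinantal condition \cref{eq:determinantal} at $z=1$.

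For property~\ref{scminterventioncomutes}, we treat the two intervention classes of \cref{subsec:causal_interventions} separately. For an additive intervention, $\A(L)$ is unchanged and a deterministic drift $\bm{F}$ is added for $t\ge t_\intervention$. The same computation then applies to the centered process. The mean shift is $\response(1)\bm{F}=[\I-\tilde{\A}]^{-1}\bm{F}$, which matches the \ac{SCM} $\tilde{\X}=\tilde{\A}\tilde{\X}+\bm{F}+\tilde{\vectornoisevar}$. Here $\bm{\mu}$ in $\gT$ must be the post-intervention mean, and the finitely many pre-intervention terms vanish after division by $\sqrt t$.

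For a forcing intervention, we rewrite the system as $\tilde{\A}(L)\X_t=\bm{F}\odot\hat{\X}+\vectornoisevar_t$ with $\tilde{\A}(L)=\A(L)+\bm{F}_{diag}$. This is again a \ac{VAR} in the form of \cref{eq:var_lagoperator} after left-multiplying by $(\I+\bm{F}_{diag})^{-1}$. We assume it remains stable, as discussed after \cref{example:stability_not_granted}. Then $\tilde{\A}(1)=\I-\tilde{\A}+\bm{F}_{diag}$, and the previous argument yields the equilibrium of the \ac{SCM} $\tilde{\X}=\tilde{\A}\tilde{\X}+\bm{F}\odot(\hat{\X}-\tilde{\X})+\tilde{\vectornoisevar}$. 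This is precisely the same forcing intervention applied to $\scm$.

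The main obstacle is the remainder control $R_t\to0$ in $L^2$. It is needed uniformly enough to also absorb the switch at $t_\intervention$, where the process is non-stationary for a transient period. We will handle this through exponential decay of $\response_l$ under stability, which makes the transient contribution $O(1/\sqrt t)$. A secondary subtlety is the normalization in the forcing case: the noise is rescaled by $(\I+\bm{F}_{diag})^{-1}$, and we must check that the limiting covariance still matches the \ac{SCM} with the unscaled $\tilde{\vectornoisevar}$. It does, because the factor cancels when $\tilde{\A}(1)^{-1}$ is expressed in the original scaling.
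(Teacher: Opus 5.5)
Your proposal is correct and follows essentially the paper's route: the MA$(\infty)$ representation with $\Phi(1)=[\I-\tilde{\A}]^{-1}$, a CLT giving $\bm{Z}_\infty\overset{d}{=}\Phi(1)\tilde{\vectornoisevar}$, the shift $\Phi(1)\bm{F}$ for additive interventions, and the reduction of forcing to an additive intervention on $\A(L)+\bm{F}_{diag}$. Your Beveridge--Nelson remainder bound, the centring with the post-intervention mean, and the explicit post-intervention stability assumption tighten steps the paper only sketches (it swaps limit and infinite sum by an informal dominated-convergence and termwise-CLT argument). Your caveat that property~1 fails under cross-lag cancellation in $\sum_k\A_k$ is also correct, and the paper's proof does not address the graph property at all.
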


\begin{proof}
Consider the $MA(\infty)$ representation of $\X_t$ (since we can always apply a translation to it without modifying the dynamic of the stochastic part, let us assume for simplicity that \( \expected{\X_t} = 0 \)):
\begin{equation*}
    \X_t = \sum_{l=0}^{\infty} \response_l \vectornoisevar_{t-l}.
\end{equation*}
To capture the long-run relationships of the process, we apply the transformation 
\begin{equation}\label{eq:t-trasformation} \quad 
    \gT: \X_t \mapsto \bm{Z}_t = \frac{1}{\sqrt{t}} \sum_{i=1}^t \X_i,
\end{equation}

By examining the equilibration $\bm{Z}_t \to \bm{Z}_\infty$, we can link a stable \ac{VAR}$(p)$ process to the following \ac{SCM}:
\begin{equation*}
    \tilde{\X} =\tilde{\A} \tilde{\X} + \tilde{\vectornoisevar}, \quad \text{where} \quad  \tilde{\A} := \left[ \sum_{i=1}^p \A_i \right]
\end{equation*} 

By computing the limit of $\Z_t$ we get:
\begin{align*}
    \lim_{t\to\infty}\bm{Z}_t &= \lim_{t\to\infty} \frac{1}{\sqrt{t}}\sum_{i=1}^t \sum_{l=0}^{\infty} \response_l \vectornoisevar_{i-l} = \\
    &\text{(by a.s. absolute convergence)}\\
    &= \lim_{t\to\infty} \frac{1}{\sqrt{t}} \sum_{l=0}^{\infty} \sum_{i=1}^t \response_l \vectornoisevar_{i-l}= \\
    &= \lim_{t\to\infty} \sum_{l=0}^{\infty} \response_l \frac{1}{\sqrt{t}} \sum_{i=1}^t  \vectornoisevar_{i-l}= \\
    &\text{(by dominated convergence)}\\
    &=  \sum_{l=0}^{\infty} \response_l \lim_{t\to\infty} \frac{1}{\sqrt{t}} \sum_{i=1}^t  \vectornoisevar_{i-l}= \\ 
    &\text{(by central limit theorem)}\\
    &\overset{d}{=}\left[ \sum_{l=0}^{\infty} \response_l \right] \tilde{\vectornoisevar}, \quad \tilde{\vectornoisevar} \sim \normal (0,\Sigma_{\vectornoisevar})= \\
    &= \response(1) \tilde{\vectornoisevar} = A(1)^{-1} \tilde{\vectornoisevar} = \left[1-\A_1-\dots-\A_p\right]^{-1} \tilde{\vectornoisevar},
\end{align*}
that is 
\begin{equation}\label{eq:explicit_scm}
\begin{split}
    \bm{Z}_\infty \overset{d}{=} A(1)^{-1} \tilde{\vectornoisevar}, \quad \text{or equivalently} \\
    \bm{Z}_\infty \overset{d}{=} [\A_1+\dots+\A_p] \bm{Z}_\infty + \tilde{\vectornoisevar}
\end{split}
\end{equation}
\end{proof}
For an alternative proof see~\citep[194]{hamilton1994series} and~\citep[429]{anderson1994statistical}, from which we report the main statement:
\begin{theorem}\label{thm:teorem_hamilton}~\citep[429]{anderson1994statistical}
Let $\X$ be a \ac{DSP} defined by
\begin{equation}
    \X_t = \bm{\mu} + \sum_{l=0}^\infty \Phi_i \vectornoisevar_{t-l}
\end{equation}
such that $\vectornoisevar_t$ is white noise and $\sum_{l=0}^\infty |\Phi_i|<\infty$. Then
\begin{equation*}
    \bm{Z}_t = \frac{1}{\sqrt{t}}\sum_{i=1}^{t} (\X_i-\bm{\mu}) \quad \indist \quad \normal(0,\sum_{i=-\infty}^{\infty}\gamma_i) \quad \text{as} \quad t\to\infty,
\end{equation*}

where $\gamma_i$ represents the autocovariance $\Var[\X_t,\X_{t-i}]$. 
\end{theorem}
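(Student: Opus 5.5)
The plan is a truncation argument: approximate the $MA(\infty)$ process by finite moving averages, for which a central limit theorem is easy, and control the tail uniformly in $t$. Since centring does not affect anything, I take $\bm{\mu}=0$. I also read ``white noise'' in the i.i.d.\ sense, as in condition (ii) of Theorem~\ref{thm:sdetodistribution}, with finite covariance $\Sigma_{\vectornoisevar}$. Mere uncorrelatedness would not be enough for a CLT, so this is an added assumption. A martingale-difference version would go through the same way.

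\textbf{Step 1 (truncated process).} Fix $m$ and split $\X_t = \X^{(m)}_t + \bm{R}^{(m)}_t$, where $\X^{(m)}_t=\sum_{l=0}^m \Phi_l \vectornoisevar_{t-l}$. For the truncated part I would use the Beveridge--Nelson rearrangement of the finite polynomial, $\Phi^{(m)}(L)=\Phi^{(m)}(1)-(1-L)\Psi^{(m)}(L)$. This gives
\[
\textstyle\sum_{i=1}^t \X^{(m)}_i=\Phi^{(m)}(1)\sum_{i=1}^t\vectornoisevar_i+\Psi^{(m)}(L)(\vectornoisevar_0-\vectornoisevar_t).
\]
The last term is $O_p(1)$ for fixed $m$, so it vanishes after division by $\sqrt t$. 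The multivariate i.i.d.\ CLT, via Cram\'er--Wold, then yields
\[
\bm{Z}^{(m)}_t \indist \normal\bigl(0,\Phi^{(m)}(1)\Sigma_{\vectornoisevar}\Phi^{(m)}(1)^\top\bigr).
\]
Alternatively, one could invoke the CLT for $m$-dependent stationary sequences directly.

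\textbf{Step 2 (uniform tail control).} The remainder $\bm{R}^{(m)}$ is itself a linear process. A direct computation of the covariance of its normalized partial sum gives
\[
\textstyle\mathbb{E}\bigl\|\frac{1}{\sqrt t}\sum_{i=1}^t \bm{R}^{(m)}_i\bigr\|^2\le \operatorname{tr}(\Sigma_{\vectornoisevar})\bigl(\sum_{l>m}\|\Phi_l\|\bigr)^2 .
\]
This works because the cross-covariances sum to at most the square of the coefficient $\ell_1$-tail. The bound is uniform in $t$ and tends to $0$ as $m\to\infty$, by absolute summability.

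\textbf{Step 3 (passing to the limit).} The truncated limiting laws converge as $m\to\infty$ to $\normal(0,\Phi(1)\Sigma_{\vectornoisevar}\Phi(1)^\top)$, since $\Phi^{(m)}(1)\to\Phi(1)$. Combined with Step~2 and Chebyshev, the standard approximation lemma (Billingsley, Thm.~3.2) transfers the convergence to $\bm{Z}_t$ itself. The main obstacle is exactly this step: getting the remainder bound uniformly in $t$, which is the only place absolute summability is genuinely used. If the bound in Step~2 turned out delicate, I would instead bound $\sup_t\operatorname{Var}$ through the spectral density of the tail process at frequency zero, which is at most $(\sum_{l>m}\|\Phi_l\|)^2\|\Sigma_{\vectornoisevar}\|$.

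\textbf{Step 4 (identifying the variance).} Finally, I would check that $\sum_{i=-\infty}^{\infty}\gamma_i=\Phi(1)\Sigma_{\vectornoisevar}\Phi(1)^\top$. Writing $\gamma_i=\sum_l \Phi_{l+i}\Sigma_{\vectornoisevar}\Phi_l^\top$ and summing over $i$ is justified by Fubini under absolute summability. This matches the stated limit, and is consistent with the expression $A(1)^{-1}\tilde{\vectornoisevar}$ in \cref{eq:explicit_scm}.
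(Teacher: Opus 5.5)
Your proof is correct, but it does not follow the paper's route. The paper gives no proof of this statement beyond the citation to Anderson. Its own derivation of the same limit is the chain in the proof of \cref{thm:var_to_scm}. There it swaps the sums to write $\bm{Z}_t=\sum_{l}\response_l\,t^{-1/2}\sum_{i=1}^t\vectornoisevar_{i-l}$, moves $t\to\infty$ inside $\sum_l$ ``by dominated convergence'', and applies the CLT to each inner sum to read off $\response(1)\tilde{\vectornoisevar}=A(1)^{-1}\tilde{\vectornoisevar}$. The variance is then identified exactly as in your Step~4 (\cref{autocovariance_to_response}).

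That chain is short and exhibits the SCM form of \cref{eq:explicit_scm} directly, but its interchange step is only formal. The normalized sums $t^{-1/2}\sum_{i=1}^t\vectornoisevar_{i-l}$ have no almost-sure limit; they are a.s.\ unbounded by the law of the iterated logarithm. So dominated convergence cannot be applied to them, and one would also need all the shifted sums to converge jointly to the \emph{same} $\tilde{\vectornoisevar}$. Your lag-$m$ truncation, the uniform-in-$t$ $L^2$ tail bound and Billingsley's Theorem~3.2 supply exactly this missing control, at the cost of more machinery.

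The paper's chain can also be repaired more cheaply. Since $\mathbb{E}\|\sum_{i=1}^t(\vectornoisevar_{i-l}-\vectornoisevar_i)\|^2=2\min(l,t)\operatorname{tr}(\Sigma_{\vectornoisevar})$, Minkowski's inequality gives $(\mathbb{E}\|\bm{r}_t\|^2)^{1/2}\le\sqrt{2\operatorname{tr}(\Sigma_{\vectornoisevar})}\sum_l\|\response_l\|\sqrt{\min(l,t)/t}$. Dominated convergence applied to these \emph{deterministic} weights then yields $\bm{Z}_t=\response(1)\,t^{-1/2}\sum_{i=1}^t\vectornoisevar_i+\bm{r}_t$ with $\mathbb{E}\|\bm{r}_t\|^2\to0$. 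The i.i.d.\ CLT and Slutsky finish the proof without truncation or Beveridge--Nelson.

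Your explicit i.i.d.\ reading of ``white noise'' is correct and necessary. It is also the assumption the paper tacitly relies on (``exploiting the i.i.d assumption'').
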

Exploiting the i.i.d assumption for $\vectornoisevar_t$, $\gamma_i$ admits the representation
$$\gamma_i=\sum_{l=0}^{\infty}\response_{l+i} \Sigma_{\vectornoisevar} \response_l'= \sum_{k=0}^{\infty} \sum_{l=0}^{\infty} \delta_{k,l+i} \response_{k} \Sigma_{\vectornoisevar} \response_l'.$$
Using $\sum_{i=-\infty}^{\infty} \delta_{i,j} = 1 \: \forall j$, we rewrite
\begin{align}
\begin{split}\label{autocovariance_to_response}
    \sum_{i=-\infty}^{\infty}\gamma_i &= \sum_{i=-\infty}^{\infty} \sum_{k=0}^{\infty} \sum_{l=0}^{\infty} \delta_{k,l+i} \response_{k} \Sigma_{\vectornoisevar} \response_l'= \\
    &\text{(absolute convergence)}\\
    &= \sum_{k=0}^{\infty} \sum_{l=0}^{\infty} \sum_{i=-\infty}^{\infty}\delta_{k,l+i} \response_{k} \Sigma_{\vectornoisevar} \response_l'=\\
    &= \sum_{k=0}^{\infty} \sum_{l=0}^{\infty} \response_{k} \Sigma_{\vectornoisevar} \response_l'=\\
    &= \left[ \sum_{k=0}^{\infty} \response_{k} \right] \Sigma_{\vectornoisevar} \left[ \sum_{l=0}^{\infty}\response_l' \right] = \response(1) \Sigma_{\vectornoisevar} \response(1)',
\end{split}
\end{align}
that is compatible with \cref{eq:explicit_scm}. 
Additionally, it is worth noting that given a positive definite covariance matrix  $\Sigma_{\bm{Z}}$ and a fixed order of variables in $\bm{Z}$, Cholesky decomposition~\citep{lutkepohl2005new} guarantees the existence and uniqueness of a triangular matrix $L$ such that $\Sigma_{\bm{Z}} = LL'$. 
The same holds for LDL decomposition~\citep{lutkepohl2005new}, namely $\Sigma_{\bm{Z}} = LDL'$. 
Setting $L=\response(1)$ and $D=\Sigma_{\vectornoisevar}$, we can deduce \cref{eq:explicit_scm} from~\cref{thm:teorem_hamilton}.
\begin{remark*}
    Note that due to the influence of time in \acp{VAR}, the equivalent  \acp{SCM} at equilibrium, while linear, may lead to cycles in the causal graph (see, e.g., \cref{fig:census_graph}) and correlations between the exogenous variables, captured by the full covariance matrix $\boldsymbol{\Sigma}_{\vectornoisevar}$ in \cref{eq:var_to_scm}. Note also that the above Theorem implies that there is a direct relationship between interventions on \acp{DSP} $\intervention_\dequation$, and interventions on \acp{SCM} $\intervention_{\scm}$. 
\end{remark*}

% \begin{remark*}
%     It is possible to check condition \ref{scmissolution} through the $MA(\infty)$ representation of $\X_t$ (\cref{eq:moving_average}) and by replacing $\vectornoise_t$ with a fixed random variable $\vectornoise$. Indeed, we get $\X_t = \sum_{l=0}^\infty\response_l\vectornoise = \response(1)\vectornoise$.
%     Using the equality $\response(L) = A(L)^{-1}$ (see \cref{eq:phi1a1}), 
%     %(See \cref{app:svar_properties}) 
%     we rewrite it as $$A(1)\X_t = \left[I-\A_1 - \dots - \A_p \right]\X_t=\vectornoise \quad \Rightarrow \quad \X_t = \left[ \A_1 + \dots + \A_p \right]\X_t + \vectornoise,$$ that is  \cref{eq:var_to_scm}. We obtain the same result by removing time pedices from \cref{eq:VAR}.
% \end{remark*}

\paragraph{Equilibration commutes with additive intervention}

We show that applying the same additive intervention on the \ac{VAR} and its associated \ac{SCM} yields the same interventional distribution. We start by studying the long term effect of the additive term $\bm{F}$ over the \ac{VAR} model:

\begin{equation*}
\begin{split}
    &A(L)\X_t = \vectornoisevar_t + \bm{F} \quad \Rightarrow \quad \X_t = \Phi(L)\vectornoisevar_t + \Phi(1)\bm{F}, \\
    &\text{where} \quad \Phi(1)\bm{F} = \sum_{l=0}^\infty\Phi_l \bm{F}.
\end{split}
\end{equation*}
By applying $\gT$ (\cref{eq:t-trasformation}) and \cref{eq:explicit_scm}, we obtain
\begin{equation*}
    \bm{Z}^\intervention_\infty = \Phi(1) \tilde{\vectornoisevar} + \Phi(1)\bm{F}.
\end{equation*}

Using \cref{thm:var_to_scm}, we study the same intervention of the associated \ac{SCM}:

\begin{align*}
    &\tilde{\X}=\tilde{\A}\tilde{\X} + \noisescm + \bm{F} \Rightarrow \\
    & [I-\tilde{\A}]\X =\noisescm+ \bm{F} \Rightarrow \\
    & \tilde{\X} = [I-\tilde{\bm{A}}]^{-1}\tilde{\vectornoisevar} + [I-\tilde{\bm{A}}]^{-1}\bm{F}.
\end{align*}

The correspondence follows directly from $\Phi(1) = \A(1)^{-1} = [I- \tilde{\A}]^{-1}.$

\paragraph{Equilibration commutes with forcing intervention}

We report \cref{eq:do-like} for clarity of exposition:
\begin{equation*}
    \A(L)\X_t = \bm{\nu} + \vectornoisevar_t + \mathds{I}(t\geq t_0)\bm{F}\odot(\hat{\X}-\X_t)
\end{equation*}

We can rewrite the equation by shifting the term $\bm{F}\odot \X_t$ to the left, reformulating it in matrix form $\bm{F}_{diag}\X_t$, and obtaining $[\A(L)+\bm{F}_{diag}]\X_t = \bm{\nu} + \vectornoisevar_t + \mathds{I}(t\geq t_0)\bm{F}\odot\hat{\X}$. 
By defining $\A_\intervention(L):=\A(L)+\bm{F}_{diag}$ and $\tilde{\bm{F}}:= \bm{F}\odot\hat{\X}$, we obtain $\A_\intervention(L) \X_t = \bm{\nu} + \vectornoisevar_t + \mathds{I}(t\geq t_0)\tilde{\bm{F}}$, which is again an additive intervention, but on the intervened dynamic. For this reason, we can prove the case $\hat{\X}=\bm{0}$ and directly applying the result of additive intervention afterwards.
By applying $\gT$ (\cref{eq:t-trasformation}) and equilibrating, we obtain
\begin{equation*}
    \bm{Z}^\intervention_\infty = \A_\intervention(1)^{-1} \noisescm
\end{equation*}

Consider the analogue of forcing intervention over an \ac{SCM}:
\begin{equation*}
    \tilde{\X} = \tilde{\bm{A}} \tilde{\X} + \noisescm - \bm{F} \odot \tilde{\X}, \quad \text{where we fixed} \quad \hat{\X}=\bm{0}.
\end{equation*}

We can rewrite it as $[I-\tilde{\bm{A}}+\bm{F}_{diag}]\tilde{\X}=\noisescm$, that is $\A_\intervention(1)\tilde{\X} = \noisescm \Rightarrow \tilde{\X} = \A_\intervention(1)^{-1} \noisescm.\qed{}$

\section{Practical implications}\label{sec:practical_implications}

\begin{comment}- expressiveness - theorem of SVARs and linear but potentially cyclic and with conf. SCMs

- fitting : literature and libraries from time series data. cross'sectional data, future work. 

- causal inference over time: retrospective CF, what would have happen to an individual trajectory of an intervention would have been applied at a point in past
- observational counterfactual \& interventional counterfactual - remark the individual aspect as it is different as in SCMs
\end{comment}

\paragraph{Causal queries}
%In the time-series econometric literature, causal analysis typically relies on the concept of Granger causality. This approach has received various criticisms over the years, as it is highly sensitive to the presence of hidden confounders and can fail to capture ``structural causality''~\citep{demiralp2008bootstrap}, as intended in the context of \ac{SCM}. \SVAR{}s represent one attempt to incorporate this structure into autoregressive models and have many similarities with \ac{SCM}, including the causal discovery methods used ~\citep{hyvarinen2010estimation, moneta2011causal}. Causal inference in this framework is performed differently: while \ac{SCM} focus on estimating the \textit{permanent effects} of a particular action, \AR{} models estimate the effects of system shocks, i.e., interventions localized in time that temporarily perturb the process from its equilibrium state, causing \textit{transient effects that disappear after a certain period}~\citep{stock2001vector}. 
Our formulation of causal interventions on \acp{VAR} differs from the standard approach based on Granger causality by being closer %, we have proposed a different way of understanding interventions in \ac{DSP}, 
%much closer 
to that of \acp{SCM}. %, showing its utility even for time-series analysis.
%In particular, we show the possibility of formulating the following causal questions:
Consequently, it enables the generalization of interventional and counterfactual queries %, common in the \ac{SCM} framework, 
to account for time (see~\cref{app:queries_practice}).
That is, it allows for answering the following causal questions: 

\begin{itemize}
    \item \textbf{Forecasted Interventions} What are the expected effects on an individual trajectory (or a population) when intervening in the present, and how do they vary over time?
    
    \item \textbf{Retrospective Counterfactuals} What would have happened to an individual trajectory if an intervention had been applied at a specific point in the past? What state would it be in now?
\end{itemize}
Both causal queries acquire a meaning embedded in the temporal dimension in terms of \textit{forecasting for the future} (\cref{subsec:interventional_forecasting}) and \textit{retrospection for the past} 
 (\cref{app:counterfactuals}), respectively. %\sr{Something is missing here. The reader expects the 'how' the questions can be solved. It could simply be a forward reference to the empirical evaluation section.}

\paragraph{Expressiveness and Universality}
\acp{VAR}, although linear models, are remarkably expressive~\citep{kilian2017structural}. By the Wold Decomposition Theorem~\citep{wold1938study}, \textit{any purely nondeterministic, covariance-stationary process can be approximated arbitrarily well by an autoregressive model}. In this sense, VARs act as universal approximators for stationary dynamics.

However, as with many “universal consistency” results in machine learning, this theorem comes with important caveats. It does not quantify how quickly the approximation error decays with the sample size, nor how the complexity of the model (e.g., the autoregressive order) scales with the complexity of the underlying process. In practice, such missing rates make the theorem more of a conceptual reassurance than a constructive modeling principle.

To illustrate the gap between theory and practice, consider the following example:

\begin{example}{Discrete Gaussian Random Walk in a One-Dimensional Box}
The system is defined as:
\begin{equation}
\begin{split}
X_t' &= X_{t-1} + \varepsilon_t, \\
X_t &= \text{Reflect}(X_t'),
\end{split}
\end{equation}
where
\begin{equation}
\text{Reflect}(x) =
\begin{cases}
2 - x & \text{if } x > 1, \\
-2 - x & \text{if } x < -1, \\
x & \text{otherwise}.
\end{cases}
\end{equation}

This describes, for example, the motion of a particle confined to a one-dimensional box. For most of its trajectory, the process behaves like a simple AR(1) system; but whenever the boundary is reached, the reflection introduces a strongly nonlinear effect. Although the Wold theorem guarantees a valid representation, the corresponding MA($\infty$) expansion is essentially intractable, and it fails to capture the localized nonlinearity created by the reflections—thereby missing a defining structural feature of the process.

More generally, processes whose stationary distributions deviate sharply from Gaussianity are difficult to represent faithfully with the Wold decomposition. Unless the moving-average coefficients are chosen in a highly structured way, the infinite linear combination of Gaussian innovations tends to induce Gaussian marginals for $X_t$, by the Central Limit Theorem. This makes VAR-based representations ill-suited for processes with fundamentally non-Gaussian steady states.
\end{example}

we intend to explore non-linear \acp{DSP} in future work, as they may lead to better convergence rates and allow for %aiming to propose a 
causal interpretation of a broader family of dynamical models.

\paragraph{Feedback loops}
To properly understand complex systems, it is often useful to model feedback loops between their variables. 
Time-series models naturally capture this property, while \acp{SCM} require significant reformulation. 
The theory of cyclic \acp{SCM} has seen a significant advancement in recent years \citep{bongers2021foundations}, but
%The theory of cyclic \acp{SCM} has seen significant development in recent years \citep{bongers2021foundations}, however, 
practical approaches, both for causal discovery and causal inference, are still underdeveloped~\citep{bongers2016cyclic, Lorbeer2023cycliccausaldiscovery}. 
%
%For this reason, we believe that studying causal inference over time could be a promising direction.
%
Our formalization of causal inference on \acp{VAR} is a step forward in this direction.

\paragraph{Fitting}
\acp{VAR} estimation is typically performed using ordinary least squares. Various alternative methods are available, both in terms of constrained optimization (e.g., to use prior knowledge about some coefficients of the \ac{VAR} matrices ~\citep{sims1980macroeconomics}) and within a Bayesian framework ~\citep{koop2010bayesian}. Refer to ~\citet[chapters~3,4,5]{lutkepohl2005new} for a comprehensive discussion. 
%Several software libraries exist for studying autoregressive models: \texttt{statsmodels}~\citep{seabold2010statsmodels} and \texttt{PyFlux}~\citep{taylor2016pyflux} for \textit{Python}, \texttt{vars}~\citep{vars2008}, and \texttt{MTS}~\citep{tsay2022package} for \textit{R}. 
%
Importantly, although %the most common application is the study of 
\acp{VAR} are most commonly used on
time-series data (i.e., data from one single unit across a period of time), there are approaches tailored to the analysis of \emph{panel data} (i.e., the evolution of many units over time) ~\citep{sigmund2021panel}; and \emph{cross-sectional data} (i.e.,~many individuals at a single point of time), provided that they have at least some proxy variables of time ~\citep{deaton1985panel}. 
%The ability to study a system based on observing a single trajectory using temporal information distinguishes \AR{} models from \ac{SCM}, which require a population to study their distribution. In this regard, \AR{} models may have superior generalization capabilities. While this point is not explored in the current work, we believe it merits further investigation.
%
Such approaches open up a promising line of future work that can further generalize \acp{VAR} applicability for causal reasoning over time. 

\subsection{Answering causal queries in practice}\label{app:queries_practice}

\subsection*{Forecasting}\label{app:forecasting}
Forecasting is one of the main objectives of time series analysis in general~\citep{montgomery2015introduction}. We discuss predictors based on \ac{VAR} processes that minimize the Mean Square Error (MSE). 

Suppose we have a time series $\X_t$ up to time $t_0$ and we want to predict the state of the system after $h$ steps $\X_{t_0+h}$. We call a predictor for this quantity the $h$-steps predictor. In particular, the optimal predictor with respect to MSE is 
\begin{equation*}
    \expected{\X_{t_0+h}|\X_{\leq t_0}}.
\end{equation*}
The optimality of the expected value relies on the assumption that $\vectornoisevar_t$ in \cref{eq:VAR} is \textit{i.i.d. white noise}.
For a \ac{VAR}$(p)$ as in \cref{eq:VAR}, the optimal $h$-step predictor can be evaluated by recursively applying
\begin{equation}\label{eq:recursive_expected_var}
\begin{split}
    &\expected{\X_{t+k}|\X_{\leq t}} =\bm{\nu} +\A_1\expected{\X_{t+k-1}|\X_{\leq t}} + \\
    &\cdots + \A_p \expected{\X_{t+k-p}|\X_{\leq t}},
\end{split}
\end{equation}
where $\expected{\X_{t'}|\X_{\leq t}} = \X_{t'}$ if $t' \leq t$. In other words, it is possible to forecast all subsequent values of $\X_{t_0}$ by applying the original equation of the \ac{VAR} without considering the noise $\vectornoisevar_t$. For example, considering the $1$-step predictor, we get
\begin{equation*}
    \expected{\X_{t_0+1}|\X_{\leq t_0}} = \bm{\nu} + \A_1 \X_{t_0} + \cdots + \A_p \X_{t_0+1-p}.
\end{equation*}
Similarly, by exploiting the possibility of recursively applying \cref{eq:recursive_expected_var}, we obtain the estimate of the forecast error covariance (or MSE) matrix as
\begin{equation*}
\Sigma_{\X}(k) = \sum_{i=0}^{h-1} \Phi_i \Sigma_{\vectornoisevar} \Phi'_i.
\end{equation*}
Note that each term in the sum is a positive definite matrix, and consequently, the MSE matrix is monotonically non-decreasing. $\Sigma_{\X}(k)$ approaches the covariance matrix of $\X_t$, $\Sigma_{\X}$, for $k\to\infty$. In fact, the optimal long-range forecast $h \to \infty$ is just the process mean. In other words, the past of the process contains no information on the development of the process in the distant future.

\subsection*{Interventional Forecasting}\label{app:interventional_forecast}

Consider a \ac{VAR}$(p)$ model and suppose an intervention is applied to the system at time $t_0$. We report the procedure for performing interventional forecasts for additive and forcing interventions separately.

\paragraph{Additive intervention}
The model equations are modified only by adding the term $\bm{F}$, consequently, \cref{eq:recursive_expected_var} simply becomes
\begin{equation}\label{eq:forecast_additive_intervention}
\begin{split}
    &\expected{\X_{t+k}|\X_{\leq t}} = \hat{\bm{\nu}}+ \A_1\expected{\X_{t+k-1}|\X_{\leq t}} +\\
    &\cdots + \A_p \expected{\X_{t+k-p}|\X_{\leq t}}, \quad \text{where} \quad  \hat{\bm{\nu}} = \bm{\nu}+\bm{F}
    \end{split}
\end{equation}
The MSE matrix remains unchanged, as both $\Sigma_{\vectornoisevar}$ and all the matrices (and consequently the matrices $\Phi_i$, see \cref{eq:recursive_calcultation_phi}) are not modified.

Alternatively, exploiting the linearity of the predictor, it is possible to directly estimate the expected causal effect at time $h$ through
\begin{equation*}
    \ce_{t_0+h} %= \expected{\X_{t_0+k}^\intervention-\X_{t_0+k}} 
    = \sum_{l=0}^h \response_l \bm{F}, 
\end{equation*}
and add this quantity to the observational forecast. The resulting interventional forecast is equivalent.

\paragraph{Forcing intervention}
In this case, the intervention modifies the matrices $\A_i$ of the process, and the procedure is more complex. Let us proceed step by step. The intervened model is described by (see \cref{eq:do-like})

\begin{equation}\label{eq:forecast_forcing_intervention}
\begin{split}
    & \X_t = \bm{\nu} + \bm{F}\odot(\hat{\X}-\X_t) + \sum_{i=1}^p \A_i \X_{t-i} + \vectornoisevar_t \\
    \Rightarrow \quad & [I+\bm{F}_{diag}] \X_t = \bm{\nu} + \bm{F}\odot\hat{\X} + \sum_{i=1}^p \A_i \X_{t-i} + \vectornoisevar_t \\
    & \text{where} \quad \bm{F}_{diag} := I \cdot \bm{F} \\
    \Rightarrow \quad & [I+\bm{F}_{diag}] \X_t = \hat{\bm{\nu}} + \sum_{i=1}^p \A_i \X_{t-i} + \vectornoisevar_t \\
    & \text{where} \quad \hat{\bm{\nu}} := \bm{\nu} + \bm{F}\odot\hat{\X} \\
    \Rightarrow \quad & \X_t = [I+\bm{F}_{diag}]^{-1} \left(\hat{\bm{\nu}} + \sum_{i=1}^p \A_i \X_{t-i} + \vectornoisevar_t \right) \\
    \Rightarrow \quad & \X_t = \tilde{\bm{\nu}} + \sum_{i=1}^p \tilde{\A}_i \X_{t-i} + \tilde{\vectornoisevar}_t,
\end{split}
\end{equation}

where $\{\tilde{\bm{\nu}}, \tilde{\A}_i, \tilde{\vectornoisevar}_t\} := [I+\bm{F}_{diag}]^{-1}\{\hat{\bm{\nu}}, \A_i, \vectornoisevar_t\}$.

The resulting equation has the form of a standard \ac{VAR}$(p)$ model, so we can apply the forecasting exactly as in \cref{app:forecasting}. The same applies to the MSE matrix, calculated using the modified matrices $\Sigma_{\tilde{\vectornoisevar}}$ and $\tilde{\Phi}_i$.

Unlike the additive intervention, there is no way to directly calculate the causal effect. The estimate is therefore obtained through the difference.
\begin{equation*}
    \ce{}_{t_0+h} = \expected{\tilde{\X}_{t_0+h}|\X_{\leq t}} - \expected{\X_{t_0+h}|\X_{\leq t}},
\end{equation*}
that is \textit{the difference between the interventional forecast and the observational one}.

\subsection{Retrospective Counterfactuals}\label{app:counterfactuals_retro}
In the \textit{ladder of causality}~\citep{pearl2009causality,bareinboim2022pearl}, counterfactual statements belong to the last level, distinct from the interventional level, and a mature causal model should be able to address both. Embracing this perspective, we show how counterfactual queries can be formulated for causal inference over time. The procedure follows the same steps used in the \ac{SCM} framework.
\begin{itemize}
\item \textbf{Abduction} Based on the observed trajectory $\X_t$, we estimate the values of $\vectornoisevar_t$ by calculating the residuals $\hat{\vectornoisevar}_t = \X_t - \expected{\X_t|\X_{<t}}$. For a \ac{VAR}$(p)$ model, in particular, $\expected{\X_t|\X_{<t}} = \A_1 \X_{t-1} + \cdots + \A_p \X_{t-p}$.
\item \textbf{Action} We apply a specific intervention $\intervention$ by modifying the model's equations.
\item \textbf{Prediction} We simulate the evolution of the new process $\X^{\intervention}_t$ using $\hat{\vectornoisevar}_t$ in the equation.
\end{itemize}

Let $\X_t$ be a time series generated by a \ac{VAR}$(p)$ model. We want to evaluate what the trajectory of $\X_t$ would have been up to the present time $t_1$ assuming a specific intervention was applied at a particular past time $t_0$. We proceed as follows:
\begin{enumerate}
    \item Perform the abduction step and retrieve the residuals $\{\hat{\vectornoisevar}_t\}_{t=t_0}^{t_1}$.
    \item Apply the intervention to the model as in \cref{app:interventional_forecast}.
    \item Simulate the counterfactual trajectory from $\X_{t_0}$ to $\X_{t_1}$ through the recursive application of
    \begin{equation*}
        \X_t = \hat{\bm{\nu}} + \A_1 \X_{t-1} + \dots + \A_p \X_{t-p} + \hat{\vectornoisevar}_t.
    \end{equation*}
    where $\hat{\bm{\nu}}$ is defined as in \cref{eq:forecast_additive_intervention} or \cref{eq:forecast_forcing_intervention}, depending on the type of intervention. Note that \textit{if using forcing intervention}, the whole term on the right must be left-multiplied by $[I+\bm{F}_{diag}]^{-1}$ before performing the simulation.
\end{enumerate} 
\paragraph{Observation} Performing retrospective counterfactual without applying any intervention results in the perfect recovery of the factual trajectory, so that the difference with the interventional one can be interpreted in the same way as in \ref{app:interventional_forecast}, i.e. as the causal effect over time along the trajectory.

%\paragraph{Motivating Example.} 
%Counterfactual reasoning is often used to determine responsibility. Consider, for instance, the economic losses due to delays in the construction of a public work or the late diagnosis of a serious illness. In these cases, to accurately estimate the actual damages incurred, the temporal dimension is essential.

%Riportiamo qui alcuni risultati qualitativi ottenuti applicando counterfactuals nell'ambito dei loans, utilizzando nello specifico un dataset sintetico ispirato al German Dataset. 
\begin{comment}\ibnote{marti qui continua tu, rischio di ripetere o anticipare cose.

- l'importanza dei controfattuali nell'algorithmic recourse, diritto ad una spiegazione (cosa mi è mancato per, come avrei potuto ottenerlo)

- mostrare controfattuale su expertise, o comunque una qualche variabile difficilmente modificabile in futuro (avresti potuto, ormai è andata)

- un esempio su real indagando quanta sarebbe la forza lavoro oggi se l'italia avesse adottato una politica migratoria più aperta dagli anni 80

}
\end{comment}

%Molte discipline scientifiche, compresa la computer science, soffrono di uno sbilanciamento tra i generi in favore dei maschi in termini di presenza nei dipartimenti universitari. Negli ultimi anni sono state attuate diverse politiche per sopperire a questo gap. Tuttavia, data la lunghezza del percorso accademico, gli effetti di queste risultano ancora modesti. Sarebbe interessante investigare quale sarebbe oggi la situazione se queste politiche fossero state attuate a partire da 30 anni fa. 

\section{Empirical evaluation}\label{sec:experiments}
We evaluate \ac{VAR} models' accuracy and expressiveness in multivariate time series, focusing on two forecasting dimensions: observational (\ref{subsec:observational_forecasting}) and interventional (\ref{subsec:interventional_forecasting}). 

\subsection{Research Questions}
The experiments address the following three questions:
\begin{itemize}
     \item \textbf{RQ1.} How does the \ac{VAR} performance compare with SOTA models for forecasting multivariate time series?

     \item \textbf{RQ2.} How do additive and forcing interventions affect the system dynamics? 
     
     \item \textbf{RQ3.} How accurate is the causal \ac{VAR} framework in estimating the causal effect of interventions over time?  \end{itemize}

Code is publicly available on \url{https://github.com/marti5ini/ci-over-time}. 

%Specifically, we evaluate \acp{VAR} models' accuracy and expressiveness in multivariate time series, focusing on three  forecasting dimensions: (i) observational, (ii) additive intervention, and (iii) forcing intervention.
%
%\textbf{Q1} pertains to the efficacy of \ac{VAR} models given historical observations, we predict future values 
%
%In \textbf{Q2} we examine the \ac{VAR}'s capacity to forecast outcomes when introducing additive or forcing interventions into the system.

\begin{figure}[t]
    \centering
    \begin{subfigure}[b]{0.32\columnwidth}
        \centering
        \begin{tikzpicture}[
            scale=0.7,
            transform shape,
            vertex/.style={circle, draw, minimum size=1.2cm, inner sep=0pt, font=\normalsize}
        ]
        \node[vertex] (x1) at (0, 1.5) {$X^{1}$};
        \node[vertex] (x2) at (2.2, 1.5) {$X^{2}$};
        
        %\draw[edge] (x1) to [bend left=25] node[midway, above] {\scriptsize $\frac{\sqrt{2}}{2}$} (x2);
        %\draw[edge] (x2) to [bend left=25] node[midway, below] {\scriptsize $-\frac{\sqrt{2}}{2}$} (x1);
        %\draw[edge] (x2) to [loop right] node[midway, right] {\scriptsize $\sqrt{2}$} (x2);
        \draw[edge] (x1) to [bend left=25] (x2);
        \draw[edge] (x2) to [bend left=25] (x1);
        \draw[edge] (x2) to [loop right] (x2);
        \end{tikzpicture}
        \caption{\footnotesize Pendulum}
        \label{fig:inverted_pendulum_graph}
    \end{subfigure}%
    %\hfill
    \begin{subfigure}[b]{0.32\columnwidth}
        \centering
        \begin{tikzpicture}[
            scale=0.7,
            transform shape,
            vertex/.style={circle, draw, minimum size=1cm, inner sep=0pt, font=\footnotesize}
        ]
        \node[vertex] (r) at (0,1.5) {$R$};
        \node[vertex] (e) at (0,0) {$E$};
        \node[vertex] (l) at (1.7,0) {$L$};
        \node[vertex] (d) at (3.4,0) {$D$};
        \node[vertex] (i) at (1.7,1.5) {$I$};
        \node[vertex] (s) at (1.7,3) {$S$};
        \node[vertex] (y) at (3.4,1.5) {$C$};
        
        \draw[edge] (e) -- (r);
        \draw[edge] (l) -- (d);
        \draw[edge] (r) -- (i);
        \draw[edge] (e) -- (i);
        \draw[edge] (i) -- (s);
        \draw[edge] (l) -- (y);
        \draw[edge] (d) -- (y);
        \draw[edge] (i) -- (y);
        \draw[edge] (s) -- (y);
        \end{tikzpicture}
        \caption{\footnotesize German}
        \label{fig:german_graph}
    \end{subfigure}%
    %\hfill
    \begin{subfigure}[b]{0.32\columnwidth}
        \centering
        \begin{tikzpicture}[
            scale=0.7,
            transform shape,
            vertex/.style={circle, draw, minimum size=1.1cm, inner sep=0pt, font=\small}
        ]
        \node[vertex] (b) at (0, 1.5) {Birth};
        \node[vertex] (d) at (3.4, 3) {Death};
        \node[vertex] (m) at (3.4,0) {Migra.};
        \node[vertex] (a-0) at (1.7,1.5) {0-14};
        \node[vertex] (a-1) at (3.4,1.5) {15-64};
        \node[vertex] (a-2) at (5.1,1.5) {65-99};
        
        \draw[edge] (b) -- (a-0);
        \draw[edge] (a-1) to [out=120,in=60](b); 
        \draw[edge] (d) -- (a-2);
        \draw[edge] (d) -- (a-1);
        \draw[edge] (d) -- (a-0);
        \draw[edge] (m) -- (a-0);
        \draw[edge] (m) -- (a-1);
        \draw[edge] (m) -- (a-2);
        \draw[edge] (a-0) -- (a-1);
        \draw[edge] (a-1) -- (a-2);
        \draw[edge] (m) to [out=190,in=-60](b);
        \draw[edge] (a-2) to [out=90, in=0](d);
        \end{tikzpicture}
        \caption{\footnotesize Census}
        \label{fig:census_graph}
    \end{subfigure}
    \caption{\textbf{Causal graphs}. The causal graph for (a) and (b) is known, while for (c), it is assumed.
    In (b), nodes are labeled with the initials of each feature: Expertise, Responsibility, Loan Amount, Duration, Income, Savings, and Credit Score. In (c), $0-14$, $15-64$, and $65-99$ represent age groups.}
    \label{fig:datasets_graphs}
    \vspace{-0.2cm}
\end{figure}

\subsection*{Experimental Settings}
\paragraph{Datasets} We rely on two synthetic datasets, German\footnote{This dataset is inspired on \url{https://archive.ics.uci.edu/dataset/144/statlog+german+credit+data}} and Pendulum,
and the real-world Census dataset\footnote{\url{https://data.census.gov/}}.
German simulates a loan approval scenario with seven variables.
%: Expertise, Responsibility, Loan Amount, Duration, Income, Savings, and Credit Score.
%
Pendulum is a two-variable system where
$\Xcomp[1]$ operates as a stabilizer for $\Xcomp[2]$, which exhibits a divergent dynamic. 
%Intervening on $X^{(1)}$ disrupts the negative feedback loop, making the system unstable. 
%
Census includes demographic variables across three age groups, along with migration, birth, and death rates from $1992$ to $2023$ for $50$ countries. 
\ref{fig:datasets_graphs} illustrates the causal graphs for all datasets.
%
%\sr{Do we assume they are known or correctly discovered, right? We should mention this here} 
%we put this information in the caption of figure 2 
%

\paragraph{Metrics}
We measure the discrepancy between the \textit{h}-steps forecast $\hat{\X}_{t+h}|\X_{<t}$ and the true value $\X_{t+h}$ on the test set $\X_{test}$. %using as metrics $R^2$ score, Mean Absolute Error (MAE), and
%Symmetric Mean Absolute Percentage Error (SMAPE)~\citep{montgomery2015introduction}. 
%
We report Mean Absolute Error (MAE) focusing on the target variables (i.e., \textit{Credit Score} for German, $\Xcomp[1]$ for Pendulum, and age groups for Census). All results are averaged over ten runs.

\subsection{Observational Forecasting}\label{subsec:observational_forecasting}
%\textcolor{blue}{add definition of observational forecasting}
\paragraph{Baselines}
We compare the \ac{VAR} framework with three relevant works: i) DLinear~\citep{zeng2023dlinear}, a decomposition-based linear model that separates trend and seasonal components;
ii) TSMixer~\citep{chen2023tsmixer}, a Multi-layer Perceptron (MLP) based model that focuses on mixing time and feature dimensions; iii) TiDE~\citep{das2023tide}, a MLP based encoder-decoder model. 
To assess the effectiveness of the forecasting methods, we introduce an observational oracle forecaster that has full knowledge about the true data generating process and produces the optimal predictor, i.e., $\hat{\X}_{t+h}|\X_{<t} = \expected{\X_{t+h}|\X_{<t}}$. 

%For a fair comparison, all model hyperparameters have been tuned to optimize the performance on the smallest datasets (see~\ref{app:obs_forecasting}). 

\begin{table}[t]
    \centering
    \caption{\textbf{Observational Forecasting.}
    MAE scores (\emph{lower is better}) for 
    \ac{VAR}, DLinear~\citep{zeng2023dlinear},  TiDE~\citep{das2023tide} and TSMixer~\citep{chen2023tsmixer}.   
    Results averaged over $10$ runs. 
    Best model in bold, Oracle in $\mathtt{typewriter}$.
    For Census, size equals the number of countries times the number of years.}
    %\IV{Do you have here a scaling factor similar to Table 2?} \textcolor{blue}{No, we don't have it} \sr{what is the other number in subscript? Size is the size of the training set?}}
    \label{tab:comparative_forecasting}
    \setlength{\tabcolsep}{1.5pt}
    \footnotesize
    \renewcommand{\arraystretch}{1.18}
    \begin{tabular}{lcccccccc}
    \toprule
    & & & \multicolumn{5}{c}{Observational Forecasting} \\
    \cmidrule(lr){4-8}
    Dataset & Size & Horizon & Oracle & VAR & DLinear & TiDE & TSMixer \\
\midrule
    German & \multirow{4}{*}{\centering100} & \multirow{2}{*}{\centering1} & $\mathtt{.004}$ & $\mathbf{.008}$ & $.009$ & $.011$ & $.014$ \\
Pendulum & & & $\mathtt{.042}$ & $\mathbf{.043}$ & $\mathbf{.043}$ & $.218$ & $.217$ \\
\cmidrule(lr){3-8}
German & & \multirow{2}{*}{\centering10} & $\mathtt{.014}$ & $\mathbf{.055}$ & $\mathbf{.055}$ & $.094$ & $.139$ \\
Pendulum & & & $\mathtt{.399}$ & $\mathbf{.420}$ & $.440$ & $1.43$ & $1.43$ \\
\midrule
German & \multirow{4}{*}{\centering500} & \multirow{2}{*}{\centering1} & $\mathtt{.004}$ & $\mathbf{.004}$ & $\mathbf{.004}$ & $.011$ & $.014$ \\
Pendulum & & & $\mathtt{.042}$ & $\mathbf{.042}$ & $\mathbf{.042}$ & $.218$ & $.217$ \\
\cmidrule(lr){3-8}
German & & \multirow{2}{*}{\centering10} & $\mathtt{.014}$ & $\mathbf{.015}$ & $\mathbf{.015}$ & $.093$ & $.135$ \\
Pendulum & & & $\mathtt{.399}$ & $\mathbf{.401}$ & $.405$ & $1.43$ & $1.43$ \\
\hline \hline
\multirow{2}{*}{Census} &\multirow{2}{*} {\centering{$50\times32$}} & 1 & - & $\mathbf{.001}$ & $.006$ & $\mathbf{.001}$ & $.008$ \\
& & 5 & - & $.017$ & $.025$ & $\mathbf{.014}$ & $.024$\\
\bottomrule
    \end{tabular}
%\vspace{-0.19cm}
\end{table}

%\paragraph{Results} 
\paragraph{RQ1: \ac{VAR} vs SOTA models}
The observational forecasting results in Table~\ref{tab:comparative_forecasting} show performance across varying data sizes (i.e., number of instances) and forecast horizons for all datasets. 
\ac{VAR} emerges as the top-performing model, consistently matching or closely approaching Oracle's scores for all datasets. 
DLinear usually achieves predictive accuracy close to \ac{VAR} for $1$-step forecasts, presumably due to the common linear nature of both models. 
TiDE and TSMixer consistently underperform compared to \ac{VAR} and DLinear for German and Pendulum. 
For all models (including Oracle), performance on the Pendulum dataset is uniformly worse than on the German, highlighting the greater challenge in forecasting given the system's stronger stochasticity and variables changing more rapidly over time.
On Census, \ac{VAR} and TiDE provide the best results,  TiDE slightly outperforming \ac{VAR} in $5$-step horizon.

\begin{figure*}[t!]
  \centering
  \includegraphics[width=\textwidth]{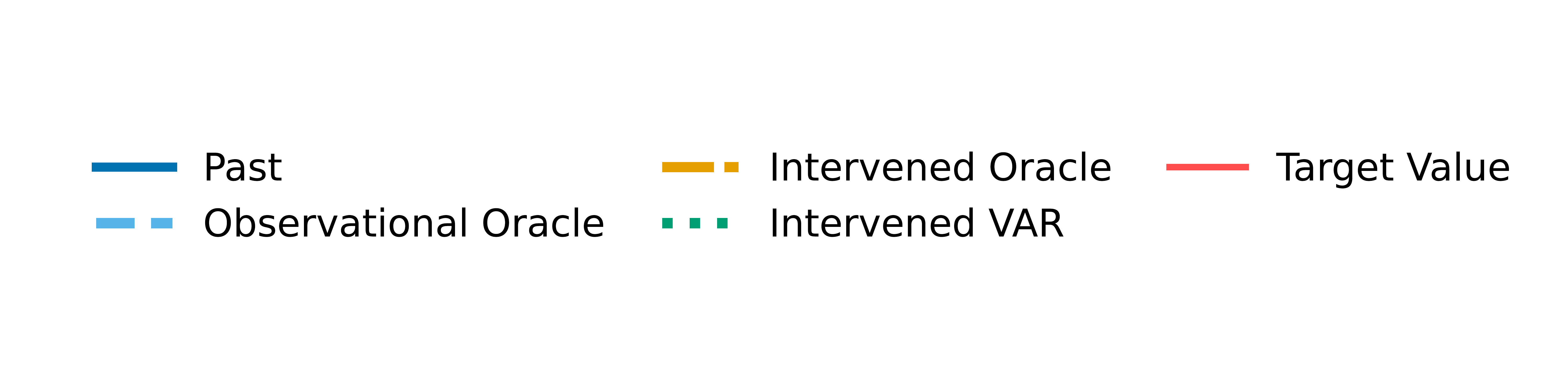}
  % Prima riga di figure
  \begin{subfigure}[t]{0.49\textwidth}
    \centering
    \vspace{-1.3cm}
    \includegraphics[width=0.85\textwidth]{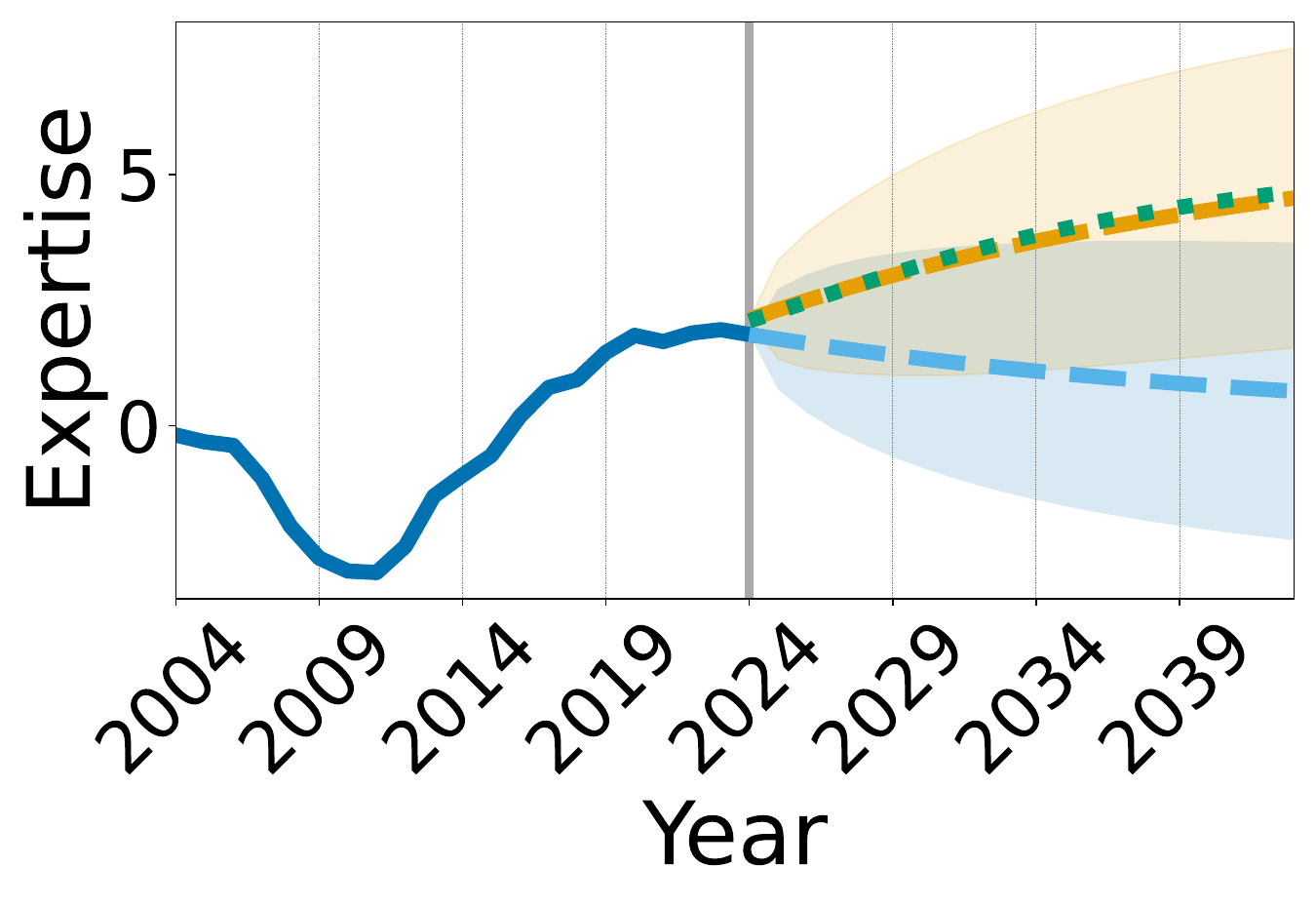}
  \end{subfigure}
  \hfill
  \begin{subfigure}[t]{0.49\textwidth}
    \vspace{-1.3cm}
    \centering
    \includegraphics[width=0.85\textwidth]{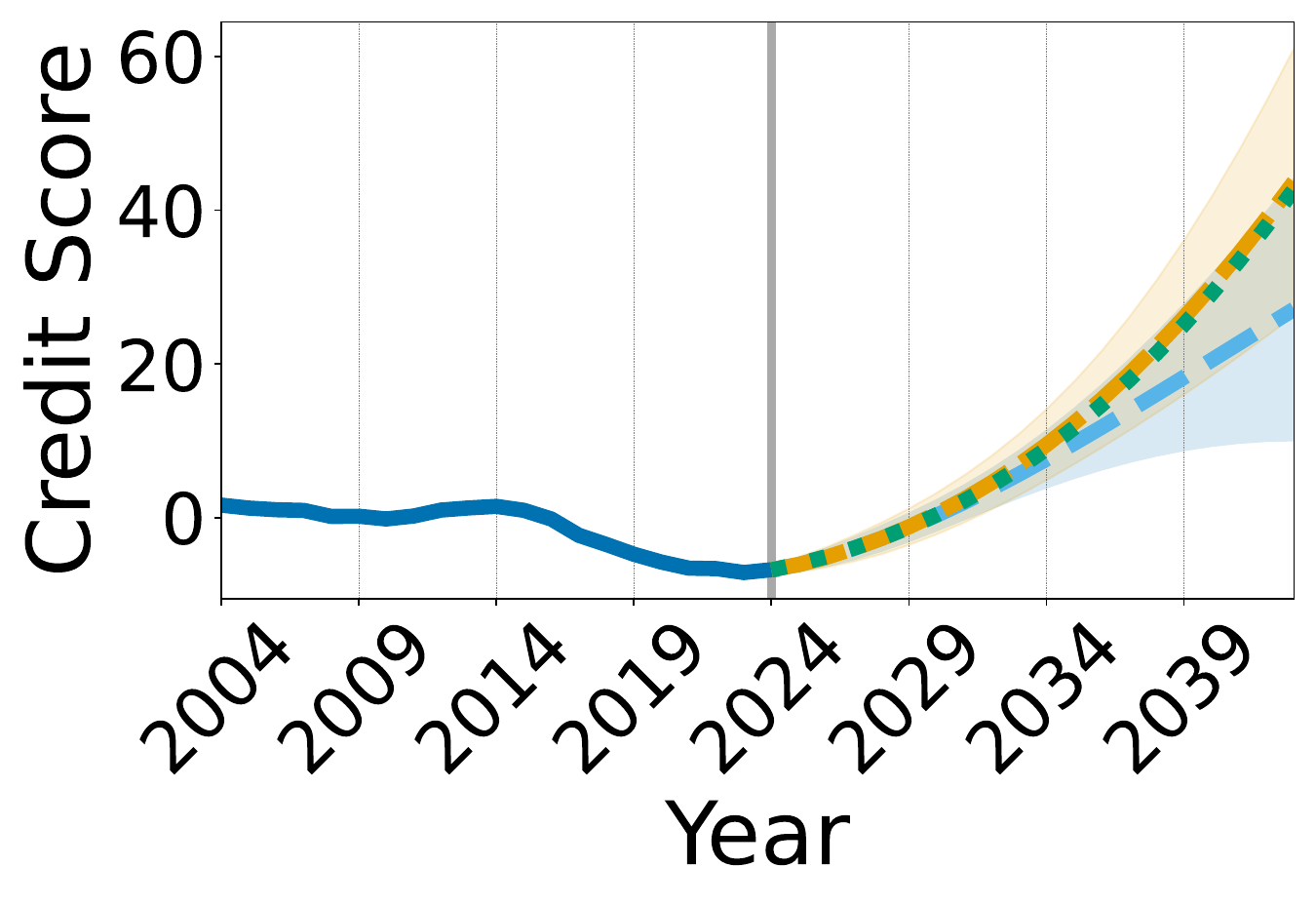}
  \end{subfigure}
  
  % Caption per la prima riga
  \caption{\textbf{Additive Intervention.} (Left) Intervention on \textit{Expertise} with $\bm{F} = 0.2$. (Right) Effect on \textit{Credit Score}. Shaded regions denote $95\%$ confidence bounds.}
  \label{fig:additive_int}
  
  \vspace{0.5cm} % Spaziatura tra le righe

  % Seconda riga di figure
  \begin{subfigure}[t]{0.49\textwidth}
    \centering
    \includegraphics[width=.85\textwidth]{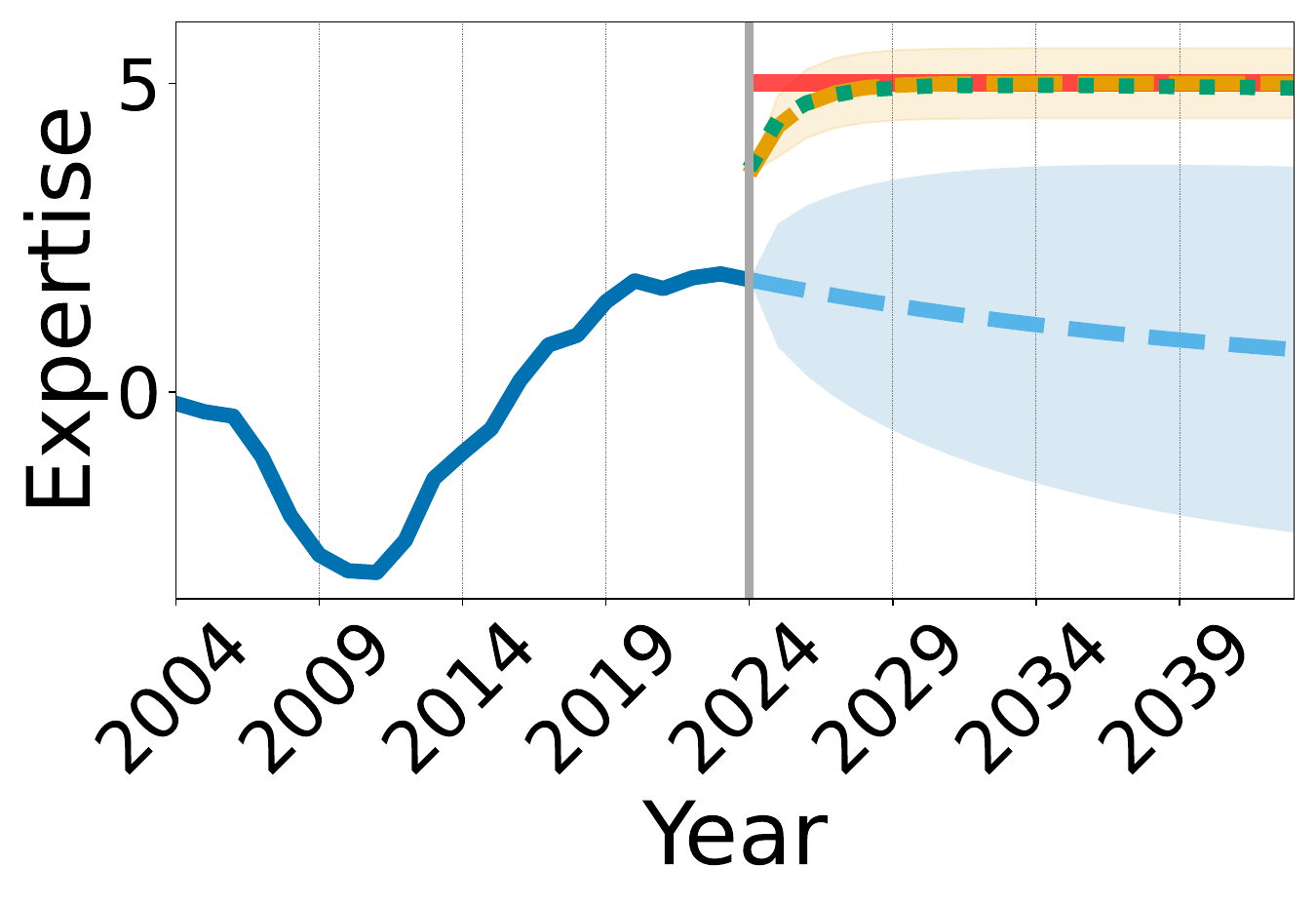}
  \end{subfigure}
  \hfill
  \begin{subfigure}[t]{0.49\textwidth}
    \centering
    \includegraphics[width=.85\textwidth]{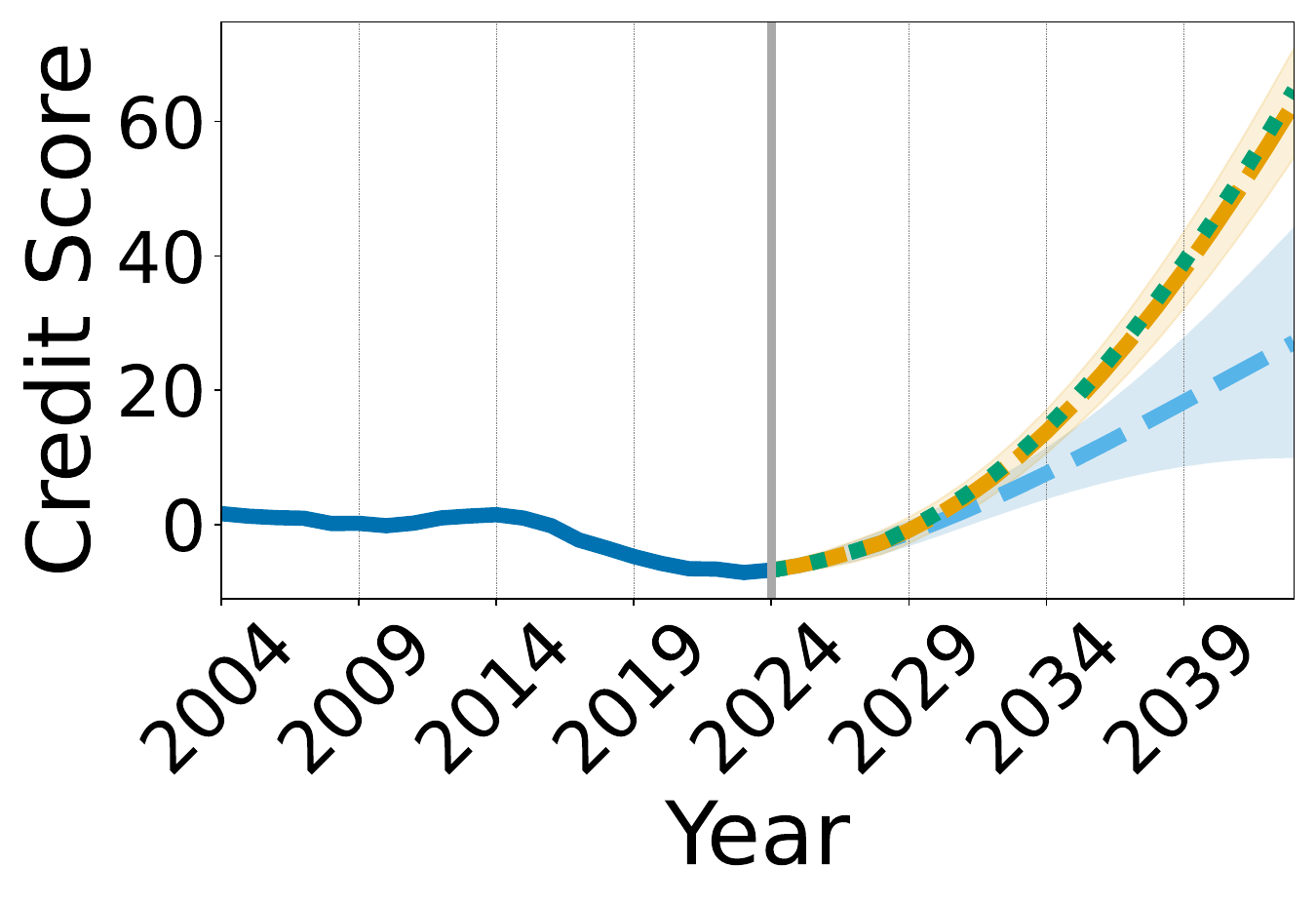}
  \end{subfigure}
  
  % Caption per la seconda riga
  \caption{\textbf{Forcing Intervention.} (Left) Intervention on \textit{Expertise} with $\bm{F} = 1$ and target $\hat{E}=5$. (Right) Effect on \textit{Credit Score}. Confidence bounds are narrower compared to Fig.~\ref{fig:additive_int}.}
  \label{fig:forcing_int}
\end{figure*}

\subsection{Interventional Forecasting}\label{subsec:interventional_forecasting}
We evaluate the causal \ac{VAR}'s forecast in estimating the causal effects on German.
%We measure the forecast accuracy of the causal \ac{VAR} framework in estimating the causal effect of an intervention over time. We focus on the German dataset. Refer to~\ref{app:int_forecasting} for the analysis of Census and Pendulum datasets. 
%

\paragraph{Baselines} 
Since state-of-the-art methods do not allow computing the causal effect of interventions on dynamical systems, we use an oracle forecaster 
as a benchmark for theoretically optimal performance.
Specifically, the ground truth values are estimated as $\ce_{t+h} = \expected{\X^\intervention_{t+h} - \X_{t+h}|\X_{<t}}$, while the predicted values from the proposed \ac{VAR} framework as $\hat{\ce}_{t+h} = (\hat{\X}^\intervention_t - \hat{\X}_{t+h})|\X_{<t}$.

\paragraph{Interventions} 
We perform causal interventions on the root node \textit{Expertise} and observe the effect on the target variable \textit{Credit Score}.
For the additive case, we apply $\bm{F}=0.2$, while for forcing, we use $\bm{F}=1$ with a target value of $\hat{E} = 5$. These values are selected for illustrative purposes such that the long-term expected value of \textit{Expertise} is the same for both interventions (i.e., $5$).

\paragraph{RQ2: Additive and forcing interventions}
Figure~\ref{fig:additive_int} illustrates the additive intervention.
%on \textit{Expertise} (one of the root nodes of the causal graph) with $\bm{F}=1$, while \ref{fig:add_credit} shows the resulting effect on the target variable \textit{Credit Score}.
\textit{Expertise} is a variable that typically necessitates several years for acquisition in practical scenarios. 
The causal \ac{VAR} accurately captures such delayed impact as its effect on \textit{Credit Score} appears after several years.
As the system maintains its dynamic characteristics unchanged, the forecasted covariance remains the same even after the intervention.
Figure~\ref{fig:forcing_int} shows the forcing intervention, 
%on $\textit{Expertise}$ with $\bm{F}=1$ where we compel the variable to converge to a target value $\hat{E}=2$, while \ref{fig:forcing_credit} shows the resulting effect on \textit{Credit Score}.
 where the interventional forecasting exactly aligns with the target value. %This result stems from the intervention's purpose of simulating scenarios that required fixing a specific value (\ref{subsec:do-like}). 
Moreover, we stress that even for a low value of $\bm{F}$, the forcing intervention resembles a do-intervention (shrinking the variances significantly) even though theoretically convergence is guaranteed only for $F \to \infty$.
%This can also be observable from the near-zero variance on the intervened variable.

\begin{table}[t!]
    \centering
    \caption{\textbf{Interventional Forecasting.} 
    MAE scores (\emph{lower is better}) for the proposed causal \ac{VAR} framework on the German dataset. Results averaged over $10$ runs, with standard deviation in subscript. Scores are scaled by a factor of $10^2$ to ease readability.}
    \label{tab:interventional_forecasting_german}
    \setlength{\tabcolsep}{4pt}
    \footnotesize
    \renewcommand{\arraystretch}{1.05}
    \begin{tabular}{ccccc}
    \toprule
    & & & \multicolumn{2}{c}{Interventional Forecasting} \\
    \cmidrule(lr){4-5}
    Dataset & Size & Horizon & Additive & Forcing \\
    \midrule
    \multirow{4}{*}{German} & \multirow{2}{*}{100} & 1 & $.000_{.000}$ & $.000_{.000}$ \\
& & 10 & $.043_{.028}$ & $.364_{.297}$ \\
\cmidrule(lr){2-5}
 & \multirow{2}{*}{500} & 1 & $.000_{.000}$ & $.000_{.000}$ \\
& & 10 & $.018_{.014}$ & $.115_{.081}$ \\
\bottomrule
\end{tabular}
%\vspace{-0.1cm}
\end{table}

\paragraph{RQ3: Estimating the causal effect of interventions over time} 
Table~\ref{tab:interventional_forecasting_german} summarizes results on interventional forecasting, showing  errors with varying data size %(i.e., number of instances in the training set) 
and forecast horizons.  
At $1$-step, both interventions lead to perfect performance since \textit{Credit Score} is a slow-changing variable and requires at least $3$ time steps for an intervention to take effect.
At $10$-step, our causal effect estimates remain highly accurate.

%for forcing interventions result in lower forecasting error than the additive interventions due to their mechanism of driving the intervened variable toward a specific target. 
%the forcing mechanism drives the intervened variable toward a specific target value.
%\IV{ be more specific}
%, thereby reducing its variance. This constraint leads to more predictable system behavior, resulting in estimates that more closely align with the oracle's predictions. 
%In contrast, additive interventions shift the variable by a constant amount, allowing for greater variability and potential divergence from the oracle's forecast

\subsection{Experimental details and extra results}
\label{app:extra_results}

\definecolor{myorange}{HTML}{E69F00}
\definecolor{mypurple}{HTML}{8B008B}

In this section, we expand the description of the experimental section from Section~\ref{sec:experiments}, and provide the reader with additional results in the following subsections.
First, we outline the details that are consistent across all experiments and then explore the specifics of each experiment in their respective subsections.

\paragraph{Hardware}
To run all experiments, we used a machine with 18 cores, 36 threads, equipped with Intel(R) Core (TM) i9-10980XE CPU @ 3.00GHz, OS Ubuntu 20.04.3, programming language Python 3.12.0. 

\paragraph{Datasets}\label{app:dataset}
This section provides all the information on datasets used in the empirical evaluation of Section~\ref{sec:experiments} of the main manuscript and the following subsections. 

\vspace{0.2cm}
\textbf{German.} We aimed to capture relationships between the variables that appeared intuitive to us and, to a certain extent, reflected a real-world loan approval scenario. Therefore, we selected the following variables from the German Credit dataset\footnote{\url{https://www.kaggle.com/datasets/uciml/german-credit}}: 
\textit{Expertise} ($E$), 
\textit{Responsibility} ($R$), 
\textit{Loan Amount} ($L$), 
\textit{Loan Duration} ($D$), 
\textit{Income} ($I$), 
\textit{Savings} ($S$) 
and \textit{Credit Score} ($C$). 
We have indexed the variables from $0$ to $6$, following the sequence in which they are mentioned.
The corresponding causal graph is designed as in Figure~\ref{fig:german_graph}. 
To generate a $7$-dimensional vector-valued stationary time series, we simulate a stable \svar{}$(4)$ process, considering four lags to be a reasonable timeframe for conducting our analysis.
In modeling the autoregressive behavior of socio-economic variables, we use different matrices tailored to capture temporal dependencies in the data. Below, we provide interpretations for such matrices and specify all the non-zero coefficients:
\begin{itemize}
    \item $A_0$ denotes the instantaneous impact between variables and affects only those nominal features (theoretical value of a good or financial instrument defined by a consensus or standard), e.g., the \textit{Credit Score}. 
    This is highlighted because the tangible attributes of an applicant require a certain period to change. In our model, we set $A_0[3, 6] = 0.5$; $A_0 [4, 6] = -0.3$; $A_0 [5, 6] = -0.5$. 
    Furthermore, all diagonal entries of $A_0$ are set to $1$;
    \item $A_1$ has diagonal values that are all $0.95$ except for the last one (ie., \textit{Credit Score)}, which is $0$. This matrix gives the dynamic system a memory function.
    \item $A_2$ includes interactions among variables we classify as rapid. These effects are considered short-term and are typically the most direct and intense, as the system's memory extends only one step back in time. 
    In our model, we set $A_2 [1, 4] = 0.3$; $A_2 [2, 6] = 0.5$; $A_2 [4, 5] = 0.2$;
    \item $A_3$ contains medium-term effects, illustrating how conditions from three steps back affect the current state of the system. In our model, we set $A_3 [2, 3] = 0.5$;
    \item $A_4$ reflects long-term relationships within the system. In our model, we set $A_4 [0, 1] = 0.3$; $A_4 [0, 4] = 0.8$.
\end{itemize}

\textbf{Pendulum.} For the description of the model see \cref{example:stability_not_granted}. To generate a $2$-dimensional vector-valued stationary time series, we simulate a stable SVAR$(1)$ process. In this case, we set $A_0$ as an identity matrix, while $A_1$ is set as in~\Eqref{eq:pendulum_matrix}. 

\vspace{0.15cm}

\textbf{Census.} We consider open data extracted from the United States Census Bureau\footnote{\url{https://www.census.gov}}.
This source provides a panel dataset spanning yearly demographic characteristics for $227$ countries, starting from $1950$ up to $2023$.
We conducted a preprocessing procedure to align the data with our experimental requisites. Specifically, we selected the $50$ countries with high income\footnote{\url{https://blogs.worldbank.org/en/opendata/world-bank-country-classifications-by-income-level-for-2024-2025}}. 
Since most countries had missing values in the initial decades of data collection, we decided to start the dataset from $1991$, considering the $32$-year period sufficient for our work. The total number of instances is therefore $1600$.
Then, we chose only the essential variables for our analysis, which led to the construction of the ground truth causal graph. The chosen variables include: \textit{Total Population} for three age groups namely $0-14$, $15-64$, and $65-99$, \textit{Net Migration}, \textit{Total Births}, \textit{Total Deaths}. The assumed causal graph is shown in Figure~\ref{fig:census_graph}. 
To adapt the data to the autoregressive model, we modified the assignment of values for \textit{Net Migration}, \textit{Total Births}, and \textit{Total Deaths} by shifting them to the values of the previous year. %As number of lags, we set $1$.

\paragraph{Metrics} 
We measure the discrepancy between the \textit{h}-step forecast $\hat{\X}_{t+h}|\X_{<t}$ and the true value $\X_{t+h}$ in the test set using the root mean square error (RMSE) and the symmetric mean absolute percentage error (SMAPE) metrics~\citep{montgomery2015introduction}.
We report scores focusing on the target variables (i.e., \textit{Credit Score} for German, $\Xcomp[1]$ for Pendulum, and age groups for Census).

\paragraph{Training and evaluation methodology}
We generated synthetic time-series datasets with varying training samples. We used $300$ validation and $2400$ test samples for German while $100$ validation and $2200$ test samples for Pendulum.

We implemented DLinear, TiDe, and TSMixer models using the Darts library~\citep{herzen2022darts}. Each model was trained for $200$ epochs, and final evaluations were performed on the test set. 
We employed an early stopping mechanism to prevent overfitting, monitoring the validation loss with patience set to $10$ and a minimum delta of $1e-5$. 
We used the Adam optimizer~\citep{KingmaB14} with an initial learning rate of $1e-3$ and an exponential learning rate scheduler\footnote{\url{https://pytorch.org/docs/stable/generated/torch.optim.lr_scheduler.ExponentialLR.html}} with a decay rate of $0.99$.
For hyperparameter tuning, we always select the best hyperparameter combination according to validation loss, reporting results from the test dataset in the manuscript. 
%The input chunk length, representing the number of historical time steps used as model input, was set to the number of lags for each dataset. 
%The output chunk length, denoting the number of time steps predicted simultaneously by the model, was fixed at $1$.
We applied Mean Squared Error as the loss function for DLinear and quantile regression as the likelihood function for TiDE and TSMixer.

For the Census dataset, we constrained the fitting of the \ac{VAR} framework by setting the matrix coefficients without dependencies in the causal graph to zero. All experiments were repeated $10$ times, with results reported as averages and standard deviations.

\begin{table}[t]
    \centering
    \caption{\textbf{Observational forecasting.} MAE scores in Table~\ref{tab:comparative_forecasting}. Here, we also report the standard deviation in subscript.}\label{tab:observational_forecasting_MAE}
    \setlength{\tabcolsep}{1.9pt}
    \footnotesize
    \renewcommand{\arraystretch}{1.2}
    \begin{tabular}{lcccccccc}
    \toprule
    & & & \multicolumn{5}{c}{MAE} \\
    \cmidrule(lr){4-8}
    Dataset & Size & Horizon & Oracle & VAR & DLinear & TiDE & TSMixer \\
    \midrule
    German & \multirow{4}{*}{\centering100} & \multirow{2}{*}{\centering1} & $\mathtt{.004}$ & $\mathbf{.008}_{.004}$ & $.009_{.004}$ & $.011_{.000}$ & $.014_{.001}$ \\
Pendulum & & & $\mathtt{.042}$ & $\mathbf{.043}_{.001}$ & $\mathbf{.043}_{.001}$ & $.218_{.000}$ & $.217_{.000}$ \\
\cmidrule(lr){3-8}
German & & \multirow{2}{*}{\centering10} & $\mathtt{.014}$ & $\mathbf{.055}_{.030}$ & $\mathbf{.055}_{.027}$ & $.094_{.003}$ & $.139_{.005}$ \\
Pendulum & & & $\mathtt{.399}$ & $\mathbf{.420}_{.018}$ & $.440_{.009}$ & $1.43_{.002}$ & $1.43_{.001}$ \\
\midrule
German & \multirow{4}{*}{\centering500} & \multirow{2}{*}{\centering1} & $\mathtt{.004}$ & $\mathbf{.004}_{.000}$ & $\mathbf{.004}_{.000}$ & $.011_{.000}$ & $.014_{.000}$ \\
Pendulum & & & $\mathtt{.042}$ & $\mathbf{.042}_{.000}$ & $\mathbf{
.042}_{.000}$ & $.218_{.001}$ & $.217_{.000}$ \\
\cmidrule(lr){3-8}
German & & \multirow{2}{*}{\centering10} & $\mathtt{.014}$ & $\mathbf{.015}_{.001}$ & $\mathbf{.015}_{.000}$ & $.093_{.002}$ & $.135_{.004}$ \\
Pendulum & & & $\mathtt{.399}$ & $\mathbf{.401}_{.002}$ & $.405_{.010}$ & $1.43_{.003}$ & $1.43_{.001}$ \\
%\midrule
%German & \multirow{4}{*}{\centering1000} & \multirow{2}{*}{\centering1} & $\mathtt{.004}$ & $\mathbf{.004}_{.000}$ & $.004_{.000}$ & $.011_{.000}$ & $.012_{.000}$ \\
%Pendulum & & & $\mathtt{.042}$ & $\mathbf{.042}_{.000}$ & $.042_{.000}$ & $.218_{.000}$ & $.217_{.000}$ \\
%\cmidrule(lr){3-8}
%German & & \multirow{2}{*}{\centering10} & $\mathtt{.014}$ & $\mathbf{.014}_{.000}$ & $.014_{.000}$ & $.093_{.003}$ & $.125_{.004}$ \\
%Pendulum & & & $\mathtt{.399}$ & $\mathbf{.400}_{.001}$ & $.402_{.006}$ & $1.43_{.002}$ & $1.43_{.001}$ \\
\bottomrule
    \end{tabular}
\end{table}

\subsection*{Observational Forecasting}\label{app:obs_forecasting}
\paragraph{How does the VAR performance compare with SOTA
models for forecasting multivariate time series?}
Table~\ref{tab:observational_forecasting_MAE} presents results that
mirror those in Table~\ref{tab:comparative_forecasting} with standard deviation in subscript. 
In addition to the results from Section~\ref{subsec:observational_forecasting}, Table~\ref{tab:observational_forecasting_combined} provides the forecasting accuracy, measured using RMSE and SMAPE scores, across different data sizes and forecast horizons for German and Pendulum datasets. 
The patterns observed here are consistent with those in the main text.
\ac{VAR} continues to lead in performance, often aligning closely with the Oracle. DLinear remains competitive for shorter forecast horizons thanks to its linear structure, occasionally surpassing \ac{VAR} by a small margin.%, though generally trailing slightly overall.

\begin{table*}[ht!]
    \centering
    \caption{\textbf{Observational forecasting.} RMSE and SMAPE scores (\emph{lower is better}) for \ac{VAR}, DLinear~\citep{zeng2023dlinear}, TiDE~\citep{das2023tide} and TSMixer~\citep{chen2023tsmixer}, benchmarked against the oracle forecaster. Results averaged over ten runs, with standard deviation in subscript.}
    \label{tab:observational_forecasting_combined}
    \setlength{\tabcolsep}{1.5pt}
    \footnotesize
    \renewcommand{\arraystretch}{1.18}
    \begin{tabular}{lcccccccccccc}
    \toprule
    & & & \multicolumn{5}{c}{RMSE} & \multicolumn{5}{c}{SMAPE} \\
    \cmidrule(lr){4-8} \cmidrule(lr){9-13}
    Dataset & Size & Horizon & Oracle & VAR & DLinear & TiDE & TSMixer & Oracle & VAR & DLinear & TiDE & TSMixer \\
    \midrule
    German & \multirow{4}{*}{\centering100} & \multirow{2}{*}{\centering1} & $\mathtt{.005}$ & $\mathbf{.010}_{.004}$ & $.011_{.005}$ & $.063_{.009}$ & $.071_{.002}$ & $\mathtt{2.01}$ & $\mathbf{3.81}_{1.44}$ & $4.12_{1.56}$ & $14.6_{2.16}$ & $20.4_{0.33}$ \\
    Pendulum & & & $\mathtt{.053}$ & $\mathbf{.053}_{.001}$ & $.054_{.001}$ & $.333_{.033}$ & $.264_{.000}$ & $\mathtt{15.7}$ & $\mathbf{15.8}_{0.22}$ & $15.8_{0.23}$ & $62.0_{3.73}$ & $54.2_{0.00}$ \\
    \cmidrule(lr){3-13}
    German & & \multirow{2}{*}{\centering10} & $\mathtt{.018}$ & $.069_{.038}$ & $\mathbf{.068}_{.033}$ & $.306_{.053}$ & $.284_{.008}$ & $\mathtt{7.33}$ & $17.2_{6.14}$ & $\mathbf{17.0}_{5.36}$ & $53.7_{3.75}$ & $58.8_{0.46}$ \\
    Pendulum & & & $\mathtt{.494}$ & $\mathbf{.520}_{.022}$ & $.545_{.013}$ & $1.89_{.069}$ & $1.74_{.000}$ & $\mathtt{82.7}$ & $\mathbf{87.2}_{4.61}$ & $88.9_{3.21}$ & $175_{2.32}$ & $180_{0.03}$ \\
    \midrule
    German & \multirow{4}{*}{\centering500} & \multirow{2}{*}{\centering1} & $\mathtt{.005}$ & $\mathbf{.005}_{.000}$ & $\mathbf{.005}_{.000}$ & $.037_{.003}$ & $.074_{.002}$ & $\mathtt{2.01}$ & $2.15_{0.08}$ & $\mathbf{2.13}_{0.08}$ & $8.45_{0.46}$ & $17.7_{0.44}$ \\
    Pendulum & & & $\mathtt{.053}$ & $\mathbf{.053}_{.000}$ & $\mathbf{.053}_{.000}$ & $.352_{.014}$ & $.264_{.000}$ & $\mathtt{15.7}$ & $\mathbf{15.7}_{0.02}$ & $\mathbf{15.7}_{0.03}$ & $63.8_{1.72}$ & $54.2_{0.00}$ \\
    \cmidrule(lr){3-13}
    German & & \multirow{2}{*}{\centering10} & $\mathtt{.018}$ & $\mathbf{.019}_{.001}$ & $\mathbf{.019}_{.001}$ & $.200_{.014}$ & $.320_{.008}$ & $\mathtt{7.33}$ & $\mathbf{7.55}_{0.19}$ & $7.68_{0.20}$ & $40.9_{1.63}$ & $56.5_{0.35}$ \\
    Pendulum & & & $\mathtt{.494}$ & $\mathbf{.496}_{.002}$ & $.502_{.011}$ & $1.93_{.030}$ & $1.74_{.000}$ & $\mathtt{82.7}$ & $\mathbf{83.3}_{0.75}$ & $83.8_{1.08}$ & $174_{0.70}$ & $180_{0.04}$ \\
    \midrule
    German & \multirow{4}{*}{\centering1000} & \multirow{2}{*}{\centering1} & $\mathtt{.005}$ & $\mathbf{.005}_{.000}$ & $\mathbf{.005}_{.000}$ & $.025_{.001}$ & $.051_{.002}$ & $\mathtt{2.01}$ & $2.09_{0.03}$ & $\mathbf{2.07}_{0.04}$ & $6.57_{0.21}$ & $12.4_{0.41}$ \\
    Pendulum & & & $\mathtt{.053}$ & $\mathbf{.053}_{.000}$ & $\mathbf{.053}_{.000}$ & $.373_{.018}$ & $.264_{.000}$ & $\mathtt{15.7}$ & $\mathbf{15.7}_{0.02}$ & $\mathbf{15.7}_{0.03}$ & $65.9_{1.89}$ & $54.1_{0.01}$ \\
    \cmidrule(lr){3-13}
    German & & \multirow{2}{*}{\centering10} & $\mathtt{.018}$ & $\mathbf{.018}_{.000}$ & $\mathbf{.018}_{.000}$ & $.161_{.006}$ & $.264_{.007}$ & $\mathtt{7.33}$ & $\mathbf{7.38}_{0.09}$ & $7.43_{0.13}$ & $34.2_{0.66}$ & $51.8_{0.84}$ \\
    Pendulum & & & $\mathtt{.494}$ & $\mathbf{.495}_{.001}$ & $.498_{.006}$ & $1.97_{.037}$ & $1.74_{.000}$ & $\mathtt{82.7}$ & $\mathbf{82.8}_{0.54}$ & $83.2_{0.66}$ & $173_{0.90}$ & $180_{0.01}$ \\
    \bottomrule
    \end{tabular}
\end{table*}
\begin{table*}[ht!]
    \centering
    \caption{\textbf{Observational forecasting.} RMSE and SMAPE scores (\emph{lower is better}) for \ac{VAR}, DLinear, TiDE and TSMixer on the Census dataset.}
    \label{tab:census_forecasting}
    \setlength{\tabcolsep}{3pt}
    \footnotesize
    \renewcommand{\arraystretch}{1.18}
    \begin{tabular}{lcccccccccc}
    \toprule
    & & & \multicolumn{4}{c}{RMSE} & \multicolumn{4}{c}{SMAPE} \\
    \cmidrule(lr){4-7} \cmidrule(lr){8-11}
    Dataset & Size & Horizon & VAR & DLinear & TiDE & TSMixer & VAR & DLinear & TiDE & TSMixer \\
    \midrule
    \multirow{2}{*}{\centering Census} & \multirow{2}{*}{\centering50$\times$32} & \multirow{1}{*}{\centering1} & $\mathbf{.002}$ & $.007$ & $\mathbf{.002}$ & $.009$ & $.409$ & $1.76$ & $\mathbf{.401}$ & $1.79$ \\
    \cmidrule(lr){3-11}
     & & \multirow{1}{*}{\centering5} & $.018$ & $.026$ & $\mathbf{.013}$ & $.024$ & $3.25$ & $6.36$ & $\mathbf{2.92}$ & $6.28$ \\
    \bottomrule
    \end{tabular}
\end{table*}

In contrast, TSMixer and TiDE generally underperform in comparison, with their scores consistently following behind \ac{VAR} and DLinear. The challenging nature of the Pendulum dataset is apparent, with all models (including Oracle) showing greater difficulty in maintaining the accuracy as for German.
Table~\ref{tab:census_forecasting} presents the results for the Census dataset.  
The performance gap between the models narrows. For $1$-step forecasts, both \ac{VAR} and TiDE achieve similar, strong results, while for $5$-step forecasts, TiDE slightly outperforms \ac{VAR} in both metrics.

\subsection*{Interventional Forecasting}\label{app:int_forecasting}

\begin{figure*}[t!]
  \centering
  \includegraphics[width=1\textwidth]{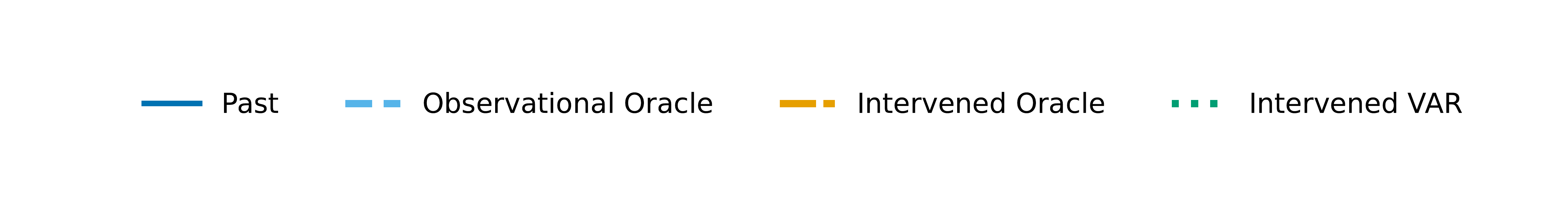}
  \begin{minipage}[t]{0.49\textwidth}
    \centering
    \begin{minipage}{\textwidth}
    \vspace{-0.6cm} 
    \begin{subfigure}[b]{0.49\textwidth}
      \centering
      \includegraphics[width=\textwidth]{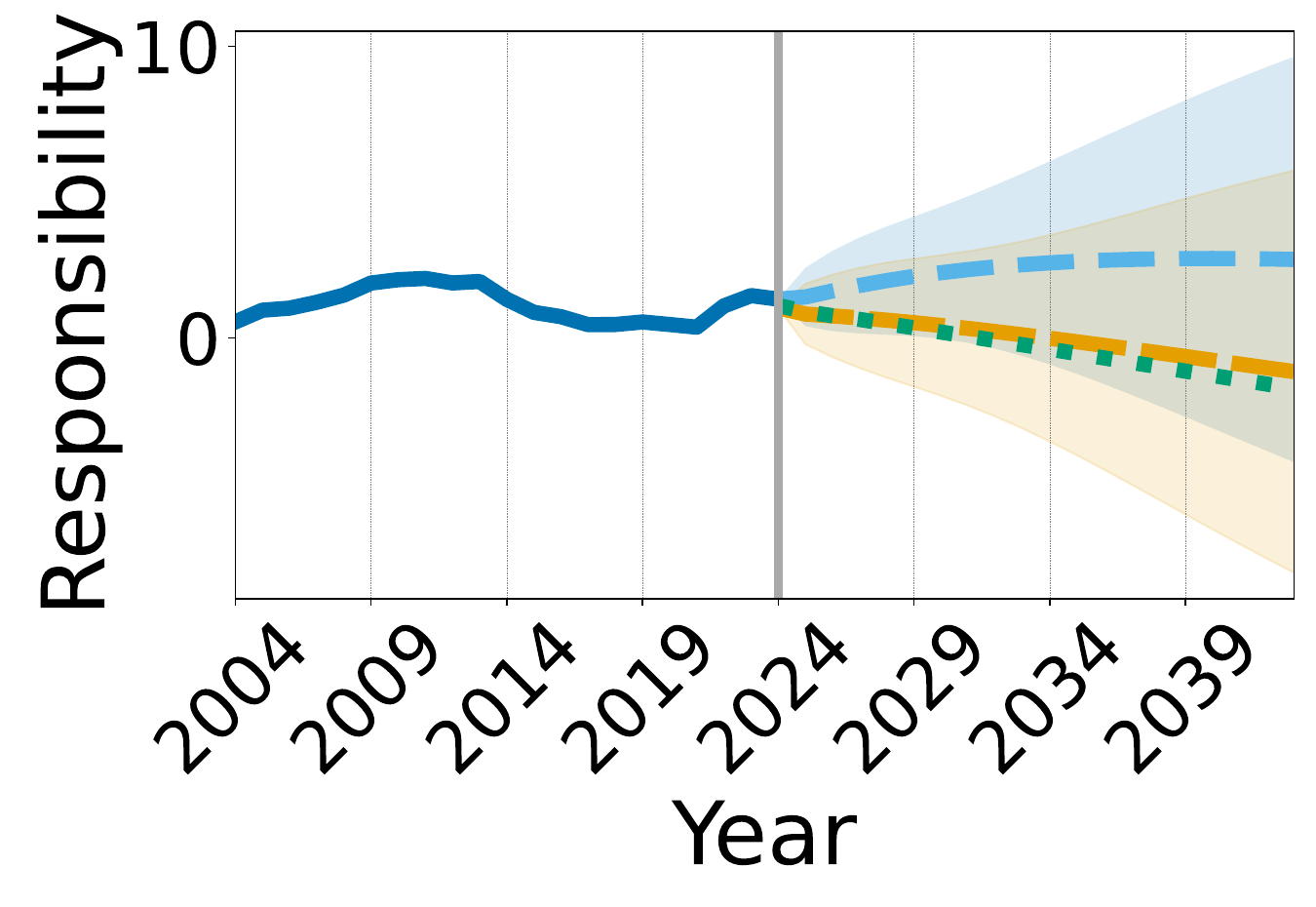}
      \caption{Intervention on $R$}
    \end{subfigure}
    %\hfill
    \begin{subfigure}[b]{0.49\textwidth}
      \centering
      \includegraphics[width=\textwidth]{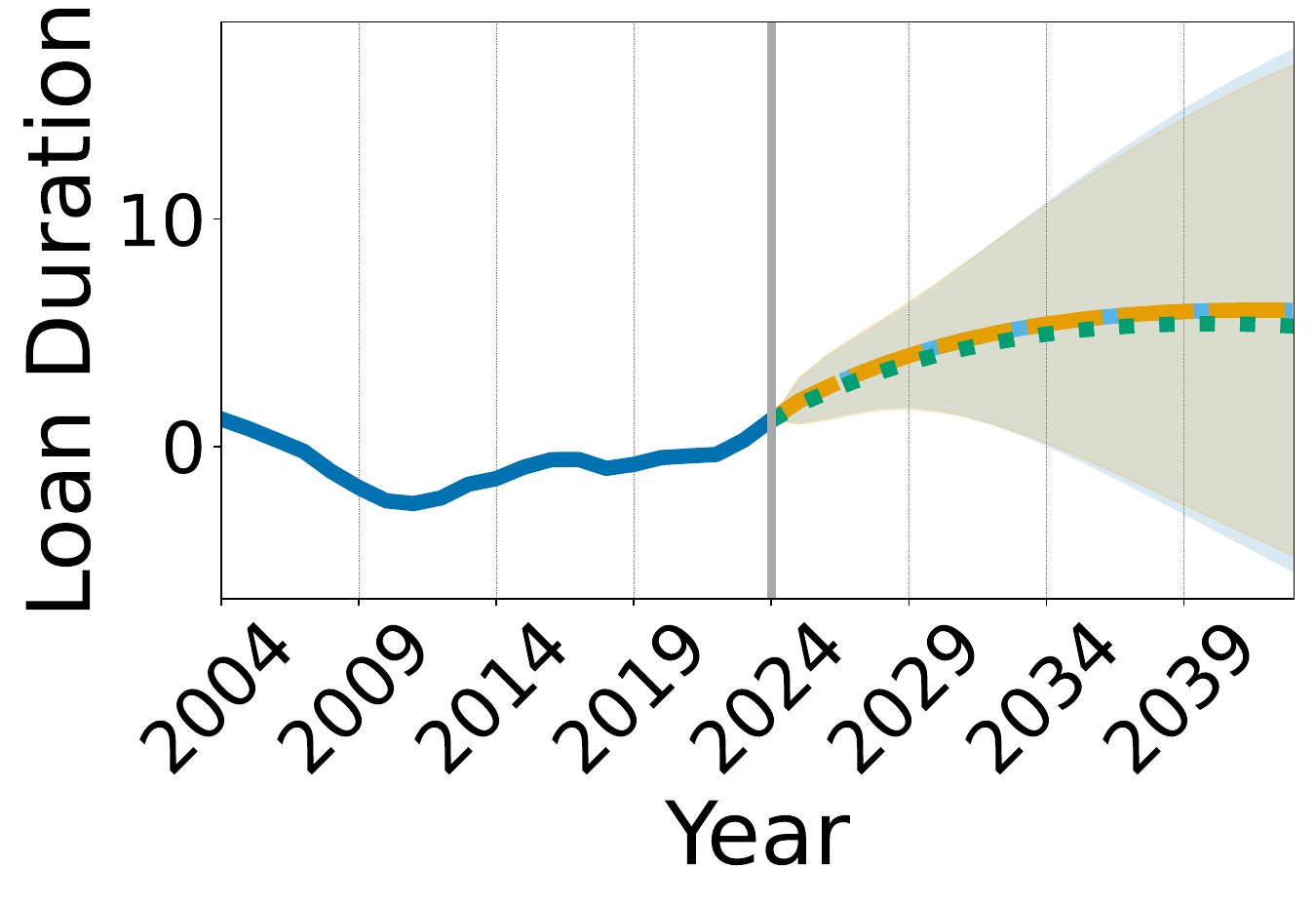}
      \caption{\centering Effect on $D$}
    \end{subfigure}
    \end{minipage}
  \end{minipage}
  \begin{minipage}[t]{0.49\textwidth}
    \centering
    \begin{minipage}{\textwidth}
    \vspace{-1cm} 
    \begin{subfigure}[b]{0.49\textwidth}
      \centering
      \includegraphics[width=\textwidth]{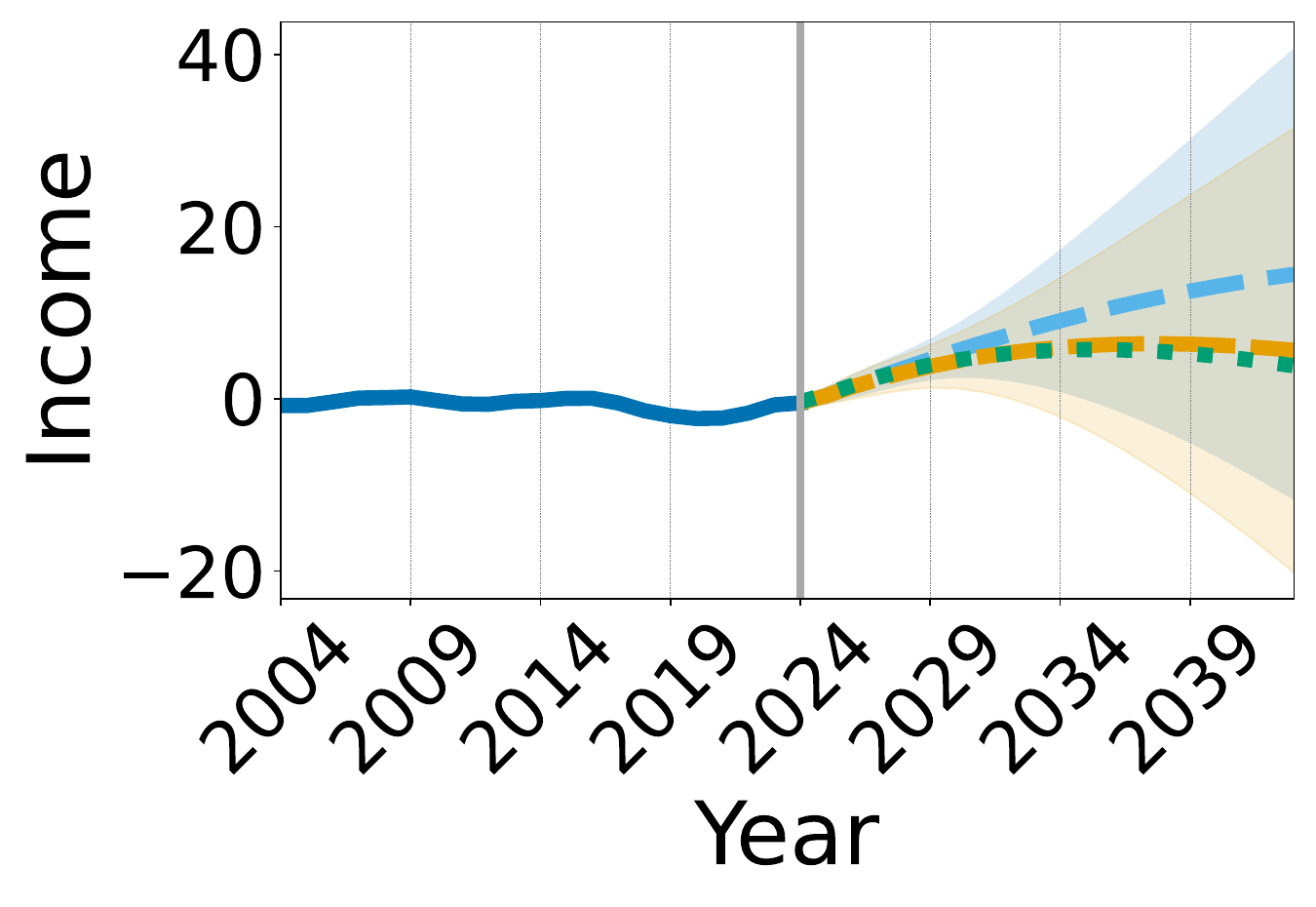}
      \caption{\centering Effect on $I$}
    \end{subfigure}
    \hfill
    \begin{subfigure}[b]{0.49\textwidth}
      \centering
      %\vspace{-0.5cm}
      \begin{tikzpicture}[
            scale=0.6,
            transform shape,
            vertex/.style={circle, draw, minimum size=1cm, inner sep=0pt, font=\footnotesize}
        ]
        \node[vertex, fill=myorange!85!white] (r) at (0,1.5) {$R$};
        \node[vertex] (e) at (0,0) {$E$};
        \node[vertex] (l) at (1.7,0) {$L$};
        \node[vertex] (d) at (3.4,0) {$D$};
        \node[vertex, fill=myorange!30!white] (i) at (1.7,1.5) {$I$};
        \node[vertex, fill=myorange!30!white] (s) at (1.7,3) {$S$};
        \node[vertex, fill=myorange!30!white] (y) at (3.4,1.5) {$C$};
        
        \draw[edge] (e) -- (r);
        \draw[edge] (l) -- (d);
        \draw[edge] (r) -- (i);
        \draw[edge] (e) -- (i);
        \draw[edge] (i) -- (s);
        \draw[edge] (l) -- (y);
        \draw[edge] (d) -- (y);
        \draw[edge] (i) -- (y);
        \draw[edge] (s) -- (y);
        \end{tikzpicture}
        \caption{German causal graph}
    \end{subfigure}
     \end{minipage}
  \end{minipage} 
  \centering
  \begin{minipage}[t]{0.49\textwidth}
    \centering
    \begin{minipage}{\textwidth}
    \vspace{0.5cm} 
    \begin{subfigure}[b]{0.49\textwidth}
      \centering
      \includegraphics[width=\textwidth]{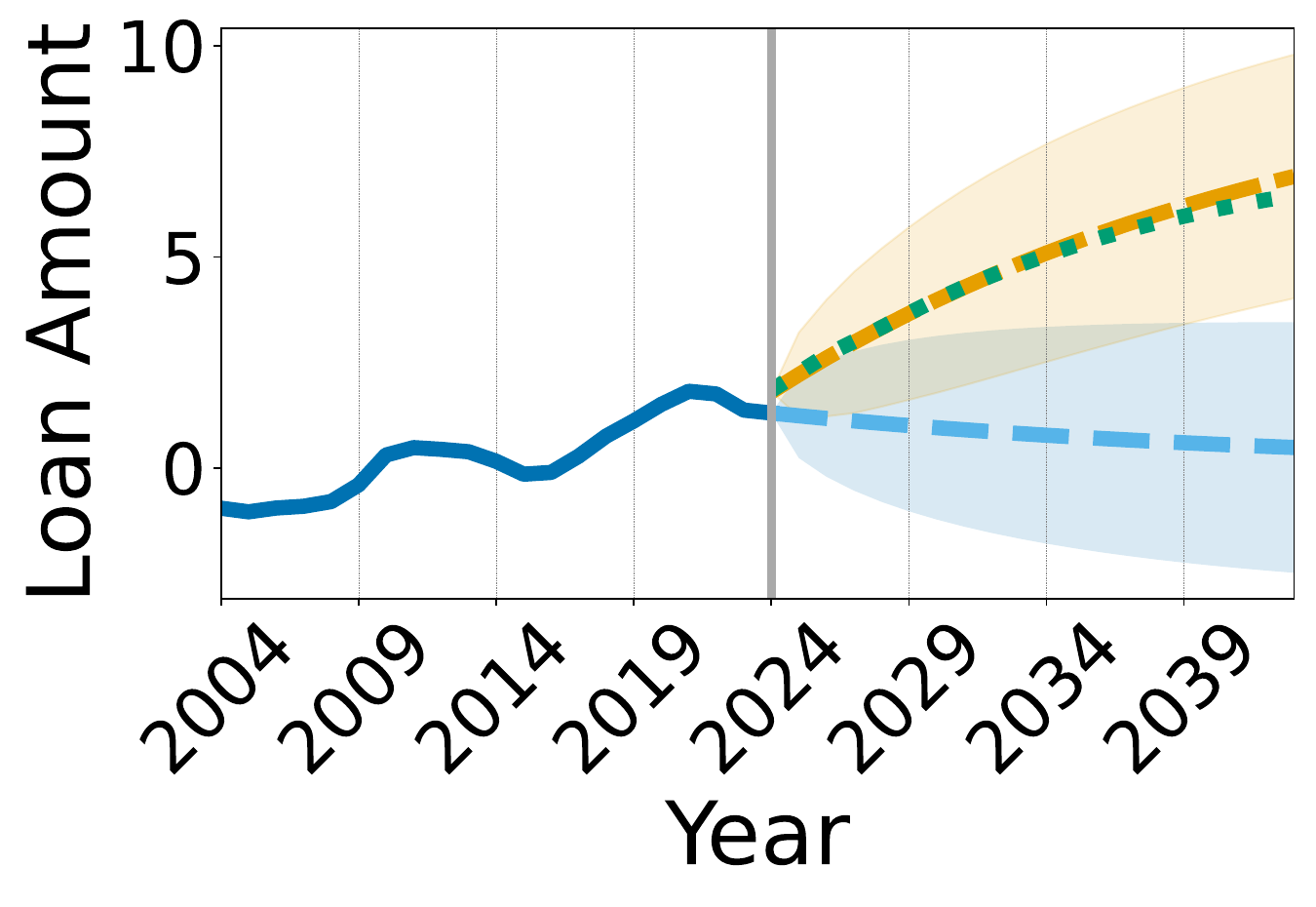}
      \caption{Intervention on $L$}
    \end{subfigure}
    %\hfill
    \begin{subfigure}[b]{0.49\textwidth}
      \centering
      \includegraphics[width=\textwidth]{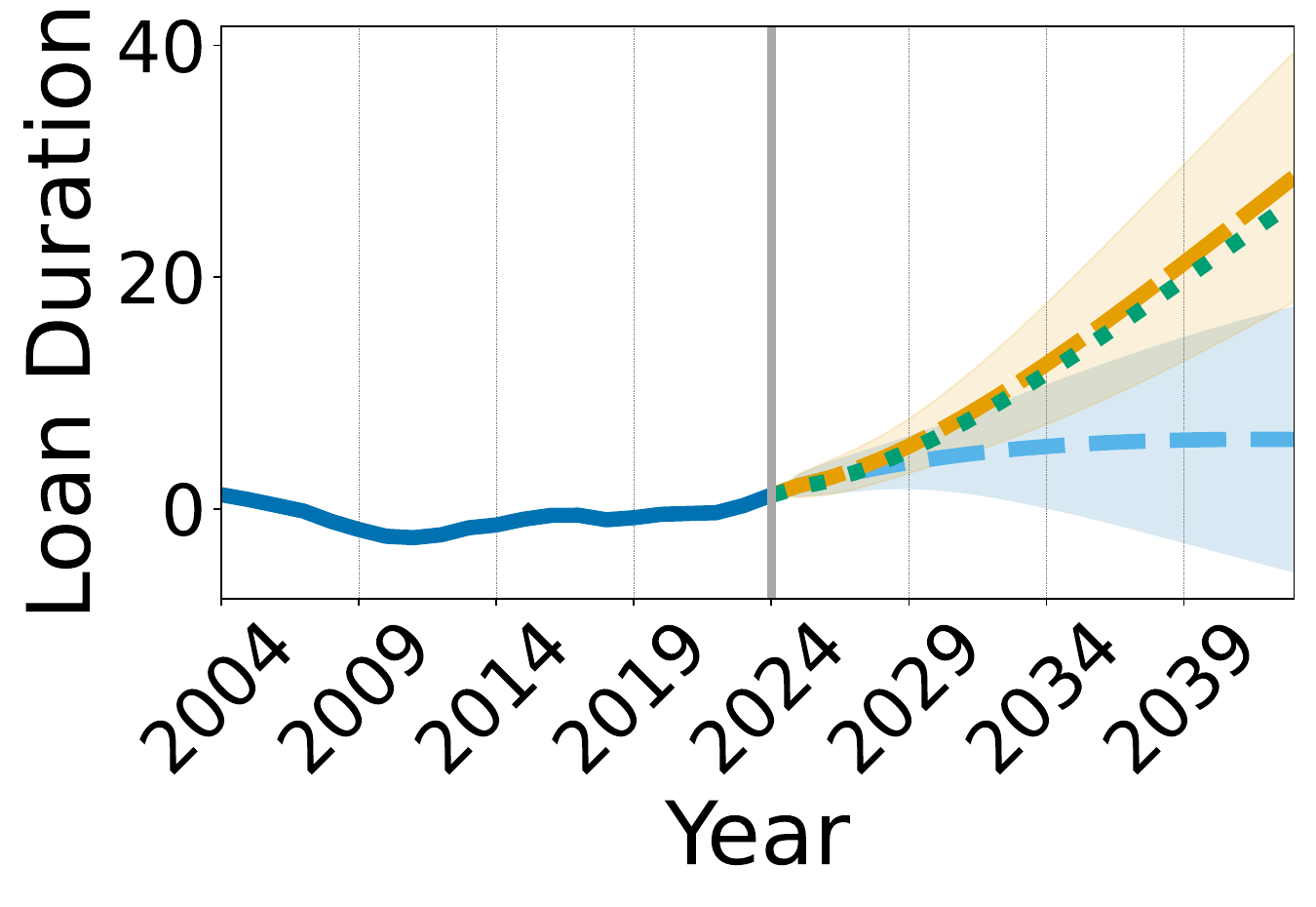}
      \caption{\centering Effect on $D$}
    \end{subfigure}
    \end{minipage}
  \end{minipage}
  \begin{minipage}[t]{0.49\textwidth}
    \centering
    \begin{minipage}{\textwidth}
    \vspace{0.5cm} 
    \begin{subfigure}[b]{0.49\textwidth}
      \centering
      \includegraphics[width=\textwidth]{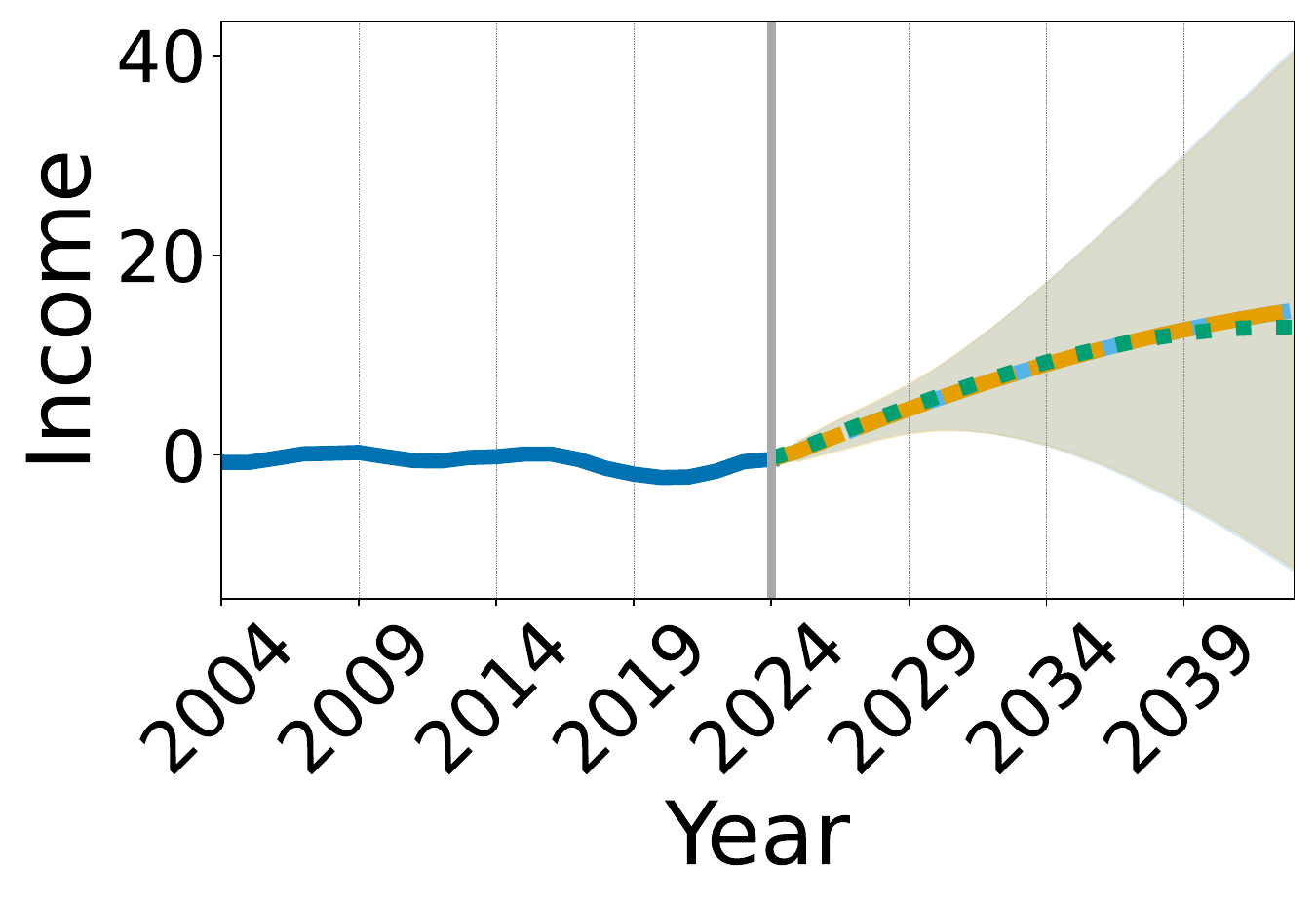}
      \caption{\centering Effect on $I$}
    \end{subfigure}
    \hfill
    \begin{subfigure}[b]{0.49\textwidth}
      \centering
      \begin{tikzpicture}[
            scale=0.6,
            transform shape,
            vertex/.style={circle, draw, minimum size=1cm, inner sep=0pt, font=\footnotesize}
        ]
        \node[vertex] (r) at (0,1.5) {$R$};
        \node[vertex] (e) at (0,0) {$E$};
        \node[vertex, fill=myorange!85!white] (l) at (1.7,0) {$L$};
        \node[vertex, fill=myorange!30!white] (d) at (3.4,0) {$D$};
        \node[vertex] (i) at (1.7,1.5) {$I$};
        \node[vertex] (s) at (1.7,3) {$S$};
        \node[vertex, fill=myorange!30!white] (y) at (3.4,1.5) {$C$};
        
        \draw[edge] (e) -- (r);
        \draw[edge] (l) -- (d);
        \draw[edge] (r) -- (i);
        \draw[edge] (e) -- (i);
        \draw[edge] (i) -- (s);
        \draw[edge] (l) -- (y);
        \draw[edge] (d) -- (y);
        \draw[edge] (i) -- (y);
        \draw[edge] (s) -- (y);
        \end{tikzpicture}
        \caption{German causal graph}
    \end{subfigure}
     \end{minipage}
  \end{minipage} 
  %\caption{\textbf{Additive Intervention.}  (a) Intervention on \textit{Loan Amount} with $\bm{F} = 0.5$. The resulting effect on: (b) \textit{Loan Duration} and (c) \textit{Income}.}
  %  \label{fig:additive_int_loan_amount}
    %\vspace{-0.1cm}
\caption{\textbf{Additive Interventions.} 
(a) Intervention on \textit{Responsibility} with $\bm{F} = -0.3$ and the resulting effect on: (b) \textit{Loan Duration} and (c) \textit{Income}. 
(e) Intervention on \textit{Loan Amount} with $\bm{F} = 0.5$ and the resulting effect on: (f) \textit{Loan Duration} and (g) \textit{Income}.
In (d) and (h), the deep orange node represents the intervened variable, while the lighter orange nodes are those influenced by the intervention. In all plots, shaded regions denote $95\%$ confidence bounds. 
} \label{fig:additive_int_appendix}
\end{figure*}

\begin{figure*}[th!]
  \centering
  \includegraphics[width=0.98\textwidth]{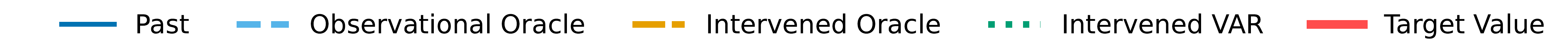}
  \begin{minipage}[t]{0.49\textwidth}
    \centering
    \begin{minipage}{\textwidth}
    \vspace{0.25cm} 
    \begin{subfigure}[b]{0.49\textwidth}
      \centering
      \includegraphics[width=\textwidth]{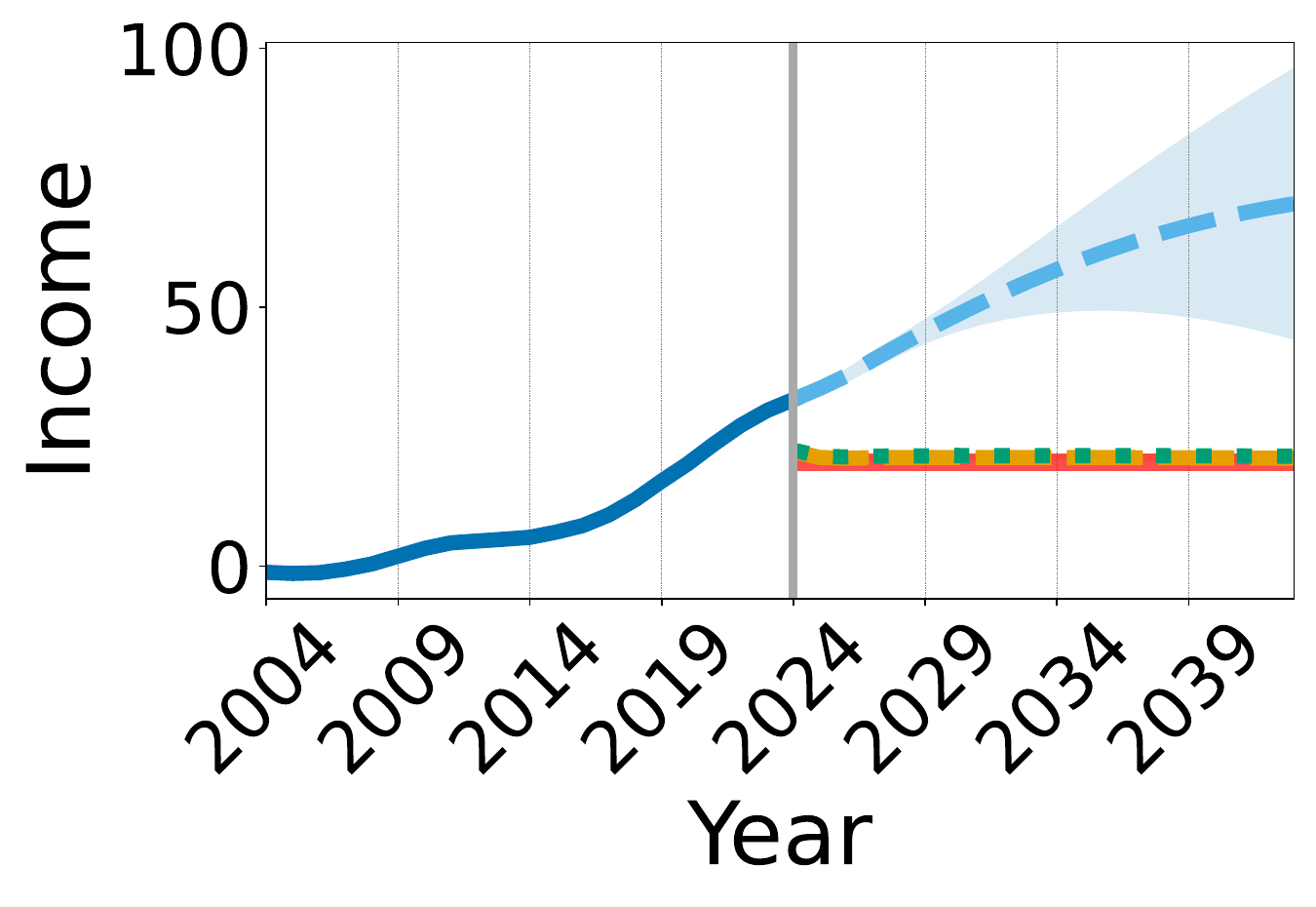}
      \caption{Intervention on $I$}
    \end{subfigure}
    %\hfill
    \begin{subfigure}[b]{0.49\textwidth}
      \centering
      \includegraphics[width=\textwidth]{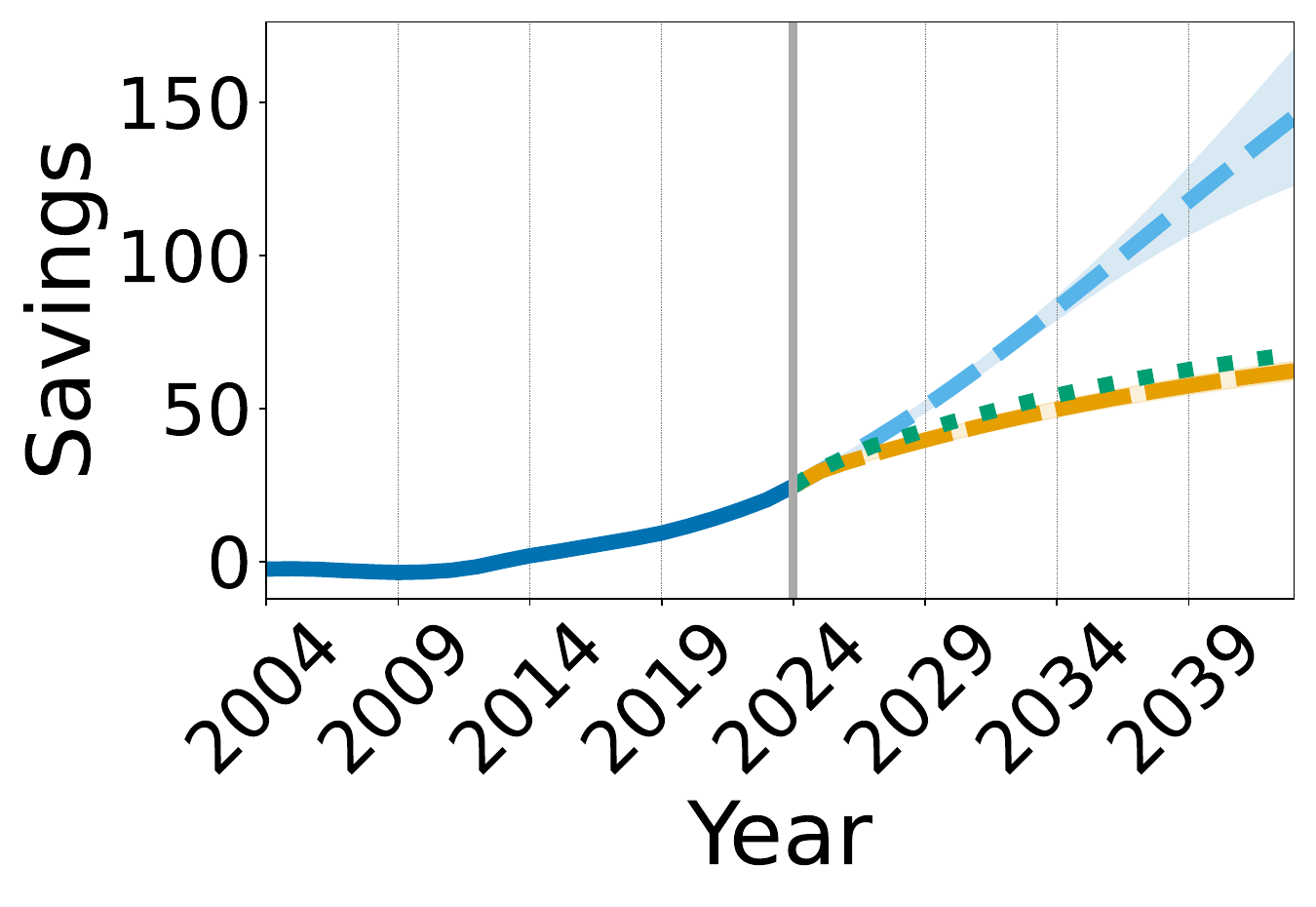}
      \caption{\centering Effect on $S$}
    \end{subfigure}
    \end{minipage}
  \end{minipage}
  \begin{minipage}[t]{0.49\textwidth}
    \centering
    \begin{minipage}{\textwidth}
    \vspace{0.25cm} 
    \begin{subfigure}[b]{0.49\textwidth}
      \centering
      \includegraphics[width=\textwidth]{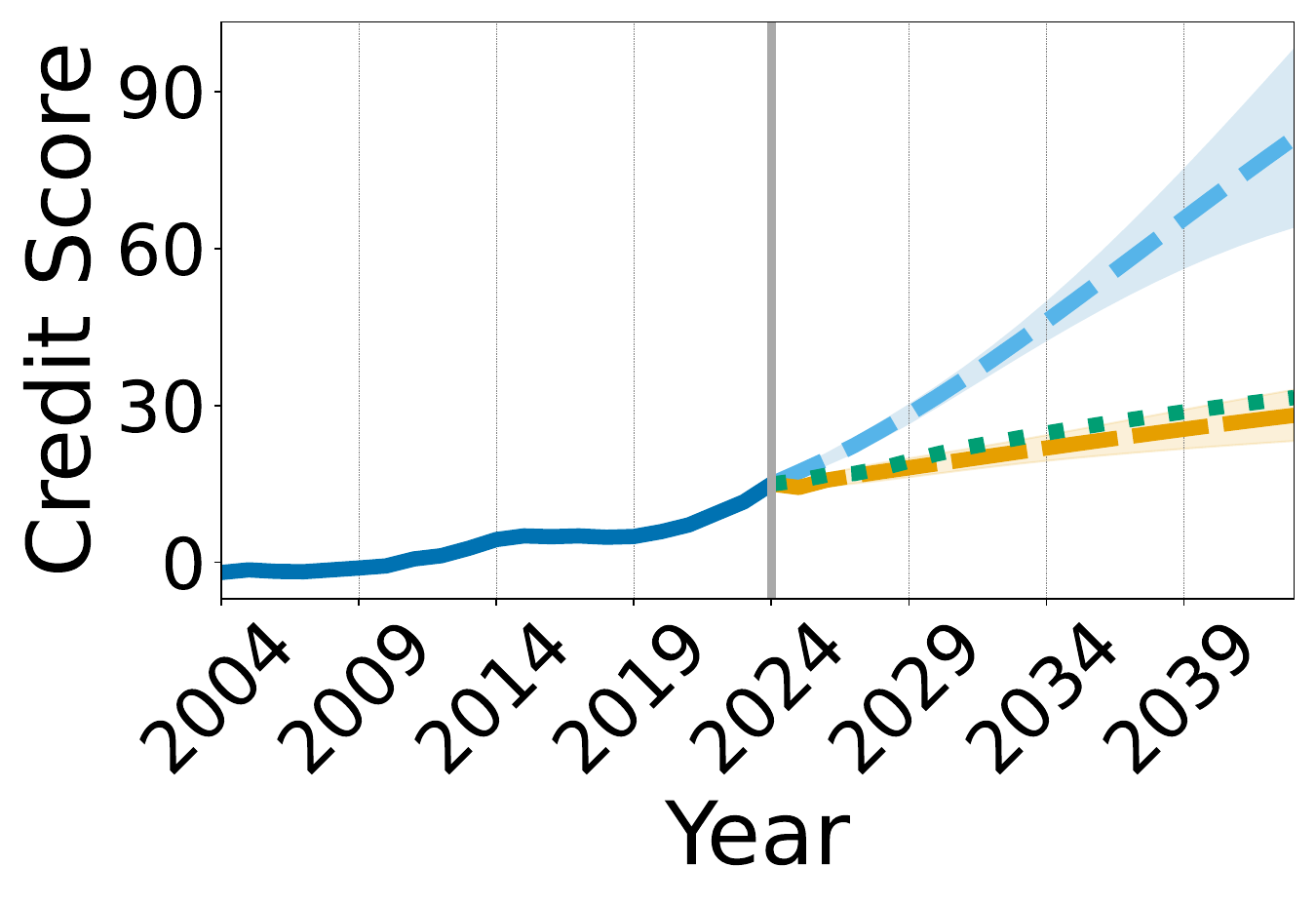}
      \caption{\centering Effect on $C$}
    \end{subfigure}
    \hfill
    \begin{subfigure}[b]{0.49\textwidth}
      \centering
      \begin{tikzpicture}[
            scale=0.6,
            transform shape,
            vertex/.style={circle, draw, minimum size=1cm, inner sep=0pt, font=\footnotesize}
        ]
        \node[vertex] (r) at (0,1.5) {$R$};
        \node[vertex] (e) at (0,0) {$E$};
        \node[vertex] (l) at (1.7,0) {$L$};
        \node[vertex] (d) at (3.4,0) {$D$};
        \node[vertex, fill=myorange!85!white] (i) at (1.7,1.5) {$I$};
        \node[vertex, fill=myorange!30!white] (s) at (1.7,3) {$S$};
        \node[vertex, fill=myorange!30!white] (y) at (3.4,1.5) {$C$};
        
        \draw[edge] (e) -- (r);
        \draw[edge] (l) -- (d);
        \draw[edge] (r) -- (i);
        \draw[edge] (e) -- (i);
        \draw[edge] (i) -- (s);
        \draw[edge] (l) -- (y);
        \draw[edge] (d) -- (y);
        \draw[edge] (i) -- (y);
        \draw[edge] (s) -- (y);
        \end{tikzpicture}
        \caption{German causal graph}
    \end{subfigure}
     \end{minipage}
  \end{minipage} 
  \centering
  \begin{minipage}[t]{0.49\textwidth}
    \centering
    \vspace{0.5cm}
    \begin{minipage}{\textwidth}
    \begin{subfigure}[b]{0.49\textwidth}
      \centering
      \includegraphics[width=\textwidth]{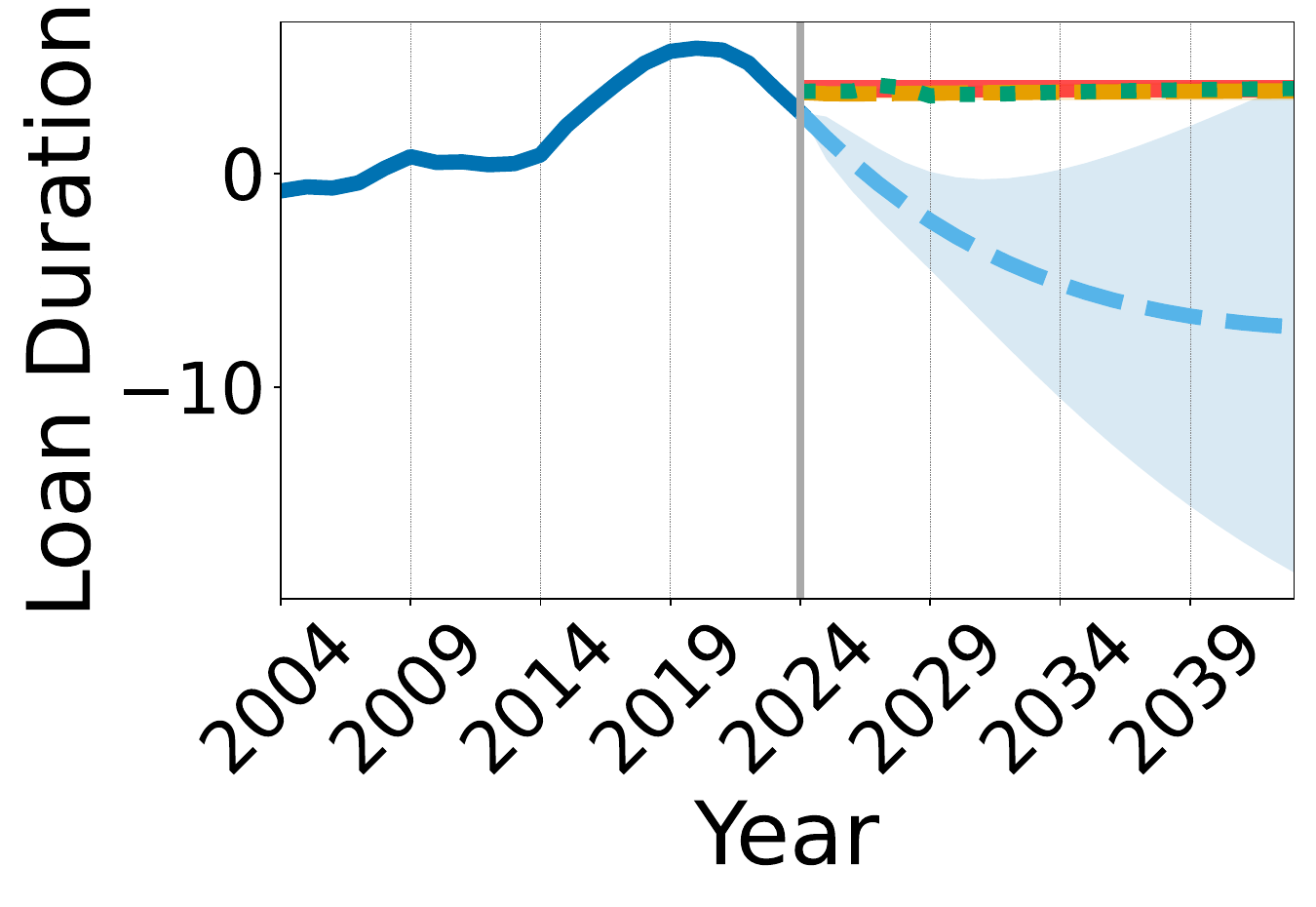}
      \caption{Intervention on $D$}
    \end{subfigure}
    %\hfill
    \begin{subfigure}[b]{0.49\textwidth}
      \centering
      \includegraphics[width=\textwidth]{figures/ciovertime/Forcing_Intervention_on_Income_with_Force_20.pdf}
      \caption{\centering Intervention on $I$}
    \end{subfigure}
    \end{minipage}
  \end{minipage}
  \begin{minipage}[t]{0.49\textwidth}
    \centering
    \vspace{0.5cm}
    \begin{minipage}{\textwidth}
    \begin{subfigure}[b]{0.49\textwidth}
      \centering
      \includegraphics[width=\textwidth]{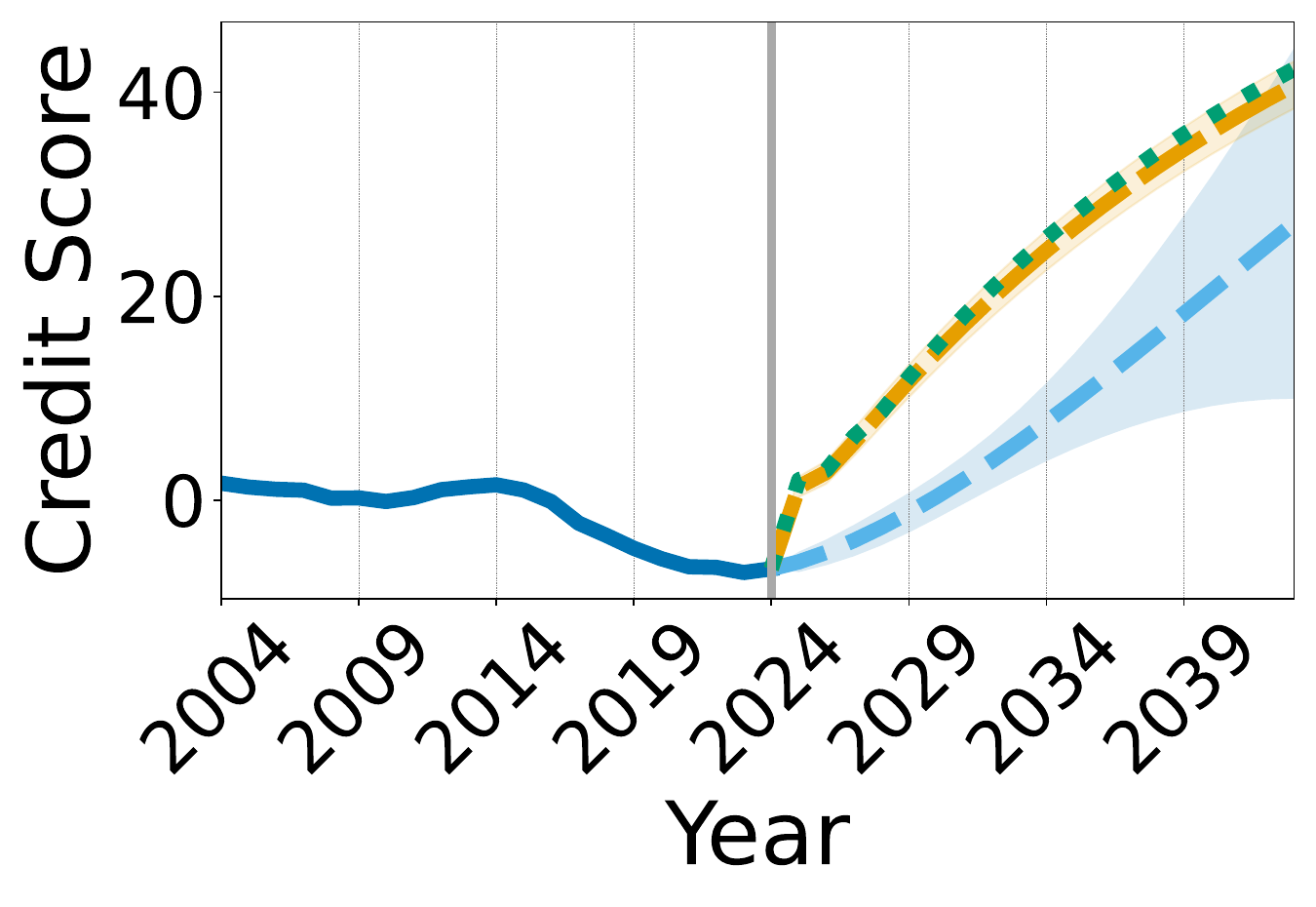}
      \caption{\centering Effect on $C$}
    \end{subfigure}
    \hfill
    \begin{subfigure}[b]{0.49\textwidth}
      \centering
      \begin{tikzpicture}[
            scale=0.6,
            transform shape,
            vertex/.style={circle, draw, minimum size=1cm, inner sep=0pt, font=\footnotesize}
        ]
        \node[vertex] (r) at (0,1.5) {$R$};
        \node[vertex] (e) at (0,0) {$E$};
        \node[vertex] (l) at (1.7,0) {$L$};
        \node[vertex, fill=myorange!85!white] (d) at (3.4,0) {$D$};
        \node[vertex, fill=myorange!85!white] (i) at (1.7,1.5) {$I$};
        \node[vertex, fill=myorange!30!white] (s) at (1.7,3) {$S$};
        \node[vertex, fill=myorange!30!white] (y) at (3.4,1.5) {$C$};
        
        \draw[edge] (e) -- (r);
        \draw[edge] (l) -- (d);
        \draw[edge] (r) -- (i);
        \draw[edge] (e) -- (i);
        \draw[edge] (i) -- (s);
        \draw[edge] (l) -- (y);
        \draw[edge] (d) -- (y);
        \draw[edge] (i) -- (y);
        \draw[edge] (s) -- (y);
        \end{tikzpicture}
        \caption{German causal graph}
    \end{subfigure}
     \end{minipage}
  \end{minipage} 
  \caption{\textbf{Forcing Interventions.}  
  (a) Intervention on \textit{Income} with $\bm{F} = 5$ and target value $\hat{I} = 20$.  The resulting effect on (b) \textit{Savings} and (c) \textit{Credit Score}. (e) Intervention on \textit{Loan Duration} with $\bm{F} = 3$ and target value $\hat{D} = 4$. (b) Intervention on \textit{Income} with $\bm{F} = 3$ and target value $\hat{I} = 20$. (g) Effect of interventions in (e-f) on \textit{Credit Score}.
  In (d) and (h), the deep orange node represents the intervened variable, while the lighter orange nodes are those influenced by the intervention. In all plots, shaded regions denote $95\%$ confidence bounds.
  }
\label{fig:forcing_int_appendix}
\end{figure*}

\paragraph{How do additive and forcing interventions affect the system dynamics?}
Figure~\ref{fig:additive_int_appendix} illustrates two additive interventions applied to German. Panel (a) depicts an intervention on \textit{Responsibility} with $\bm{F} = -0.3$, while panel (e) shows an intervention on \textit{Loan Amount} with $\bm{F} = 0.5$.
The causal graphs in panels (d) and (h) show how these interventions propagate through the network. Panels (b) and (c) display the effects of the \textit{Responsibility} intervention on \textit{Loan Duration} and \textit{Income}, respectively. Panels (f) and (g) illustrate the corresponding effects for the \textit{Loan Amount} intervention.
Specifically, in panels (a-c), we observe that the intervention on \textit{Responsibility} affects \textit{Income} while having no impact on \textit{Loan Duration}. 
This behavior aligns with the causal relationships depicted in panel (d) as \textit{Responsibility} is not an ancestor of \textit{Loan Duration}. Similarly, panels (e-g) illustrate that the intervention on \textit{Loan Amount} affects only \textit{Loan Duration}, as \textit{Loan Amount} is not an ancestor of \textit{Income}.

Figure~\ref{fig:forcing_int_appendix} shows two forcing interventions applied to German.
Panel (a) displays an intervention on \textit{Income} with $\bm{F} = 5$ and target value $\hat{I} = 20$. Panels (b) and (c) show how this intervention affects \textit{Savings} and \textit{Credit Score}, respectively.
Panel (e) presents an intervention on \textit{Loan Duration} with $\bm{F} = 3$ and target value $\hat{D} = 4$, while panel (f) shows a different intervention on \textit{Income} with $\bm{F} = 3$ and target value $\hat{I} = 20$. %Panel (g) shows the combined effect of these two interventions on \textit{Credit Score}.
Panels (a-c) present results that are consistent with those demonstrated in the manuscript, namely that interventional forecasting precisely aligns with the specified target value.
Panels (e-f) demonstrate the capability of the causal \ac{VAR} framework to implement multiple interventions within the system simultaneously. Panel (g) displays the combined impact of these interventions on \textit{Credit Score}. 
By visualizing this cumulative effect, we could gain insights into the complex interplay between multiple variables and their joint influence on the target variable.

\begin{table*}[th!]
    \centering
    \caption{\textbf{Interventional Forecasting.} RMSE and SMAPE scores (\emph{lower is better}) for the proposed causal \ac{VAR} framework on German and Inverted Pendulum datasets. RMSE scores are scaled by a factor of $10^2$ to ease readability.
    Results averaged over ten runs, with standard deviation in subscript.}
    \label{tab:interventional_forecasting_combined}
    \setlength{\tabcolsep}{6pt}
    \footnotesize
    \renewcommand{\arraystretch}{1.18}
    \begin{tabular}{lccccccc}
    \toprule
    & & & \multicolumn{2}{c}{RMSE (scaled)} & \multicolumn{2}{c}{SMAPE} \\
    \cmidrule(lr){4-5} \cmidrule(lr){6-7}
    Dataset & Train Steps & Horizon & Additive & Forcing & Additive & Forcing \\
    \midrule
    German & \multirow{4}{*}{\centering100} & \multirow{2}{*}{\centering1} & $.000_{.000}$ & $.000_{.000}$ & $.000_{.000}$ & $.000_{.000}$ \\
    Pendulum & & & $.042_{.031}$ & $.062_{.046}$ & $.146_{.107}$ & $.319_{.236}$ \\
    \cmidrule(lr){3-7}
    German & & \multirow{2}{*}{\centering10} & $.446_{.295}$ & $1.53_{1.27}$ & $.829_{.547}$ & $5.80_{4.57}$ \\
    Pendulum & & & $6.66_{4.47}$ & $.102_{.051}$ & $2.19_{1.57}$ & $.262_{.140}$ \\
    \midrule
    German & \multirow{4}{*}{\centering500} & \multirow{2}{*}{\centering1} & $.000_{.000}$ & $.000_{.000}$ & $.000_{.000}$ & $.000_{.000}$ \\
    Pendulum & & & $.024_{.023}$ & $.035_{.034}$ & $.083_{.080}$ & $.180_{.173}$ \\
    \cmidrule(lr){3-7}
    German & & \multirow{2}{*}{\centering10} & $.173_{.139}$ & $.464_{.338}$ & $.353_{.276}$ & $1.89_{1.33}$ \\
    Pendulum & & & $2.02_{1.52}$ & $.049_{.035}$ & $.690_{.451}$ & $.111_{.077}$ \\
    \midrule
    German & \multirow{4}{*}{\centering1000} & \multirow{2}{*}{\centering1} & $.000_{.000}$ & $.000_{.000}$ & $.000_{.000}$ & $.000_{.000}$ \\
    Pendulum & & & $.010_{.007}$ & $.014_{.011}$ & $.034_{.026}$ & $.074_{.056}$ \\
    \cmidrule(lr){3-7}
    German & & \multirow{2}{*}{\centering10} & $.101_{.079}$ & $.299_{.220}$ & $.210_{.163}$ & $1.24_{.892}$ \\
    Pendulum & & & $1.11_{.853}$ & $.023_{.012}$ & $.376_{.257}$ & $.054_{.031}$ \\
    \bottomrule
    \end{tabular}
\end{table*}

\begin{table}[th!]
    \centering
    \caption{\textbf{Interventional Forecasting.} MAE scores (\emph{lower is better}) for the proposed causal \ac{VAR} on the Pendulum dataset. Results averaged over $10$ runs, with standard deviation in subscript. Scores are scaled by a factor of $10^3$ to ease readability.}\label{tab:interventional_forecasting_pendulum}
    \setlength{\tabcolsep}{5pt}
    \footnotesize
    \renewcommand{\arraystretch}{1.2}
    \begin{tabular}{ccccc}
    \toprule
    & & & \multicolumn{2}{c}{MAE} \\
    \cmidrule(lr){4-5}
    Dataset & Train Steps & Horizon & Additive & Forcing \\
    \midrule
    \multirow{4}{*}{Pendulum} &  \multirow{2}{*}{100} & 1 & $.094_{.069}$ & $.310_{.230}$ \\
& & 10 & $6.32_{4.44}$ & $.264_{.142}$ \\
\cmidrule(lr){2-5}
 & \multirow{2}{*}{500} & 1 & $.053_{.052}$ & $.174_{.169}$ \\
& & 10 & $1.95_{1.39}$ & $.112_{.078}$ \\
%\cmidrule(lr){2-5}
% & \multirow{2}{*}{1000} & 1 & $.000_{.000}$ & $.000_{.000}$ \\
%& & 10 & $.011_{.008}$ & $.075_{.056}$ \\
\bottomrule
    \end{tabular}
    \end{table}

\paragraph{Interventions} 
We provide the intervention type and the force values used to evaluate the interventional forecasting described in the following paragraph. 
For German, we perform causal interventions on root node \textit{Expertise}, %~one 
and observe the resulting effect on the target variable \textit{Credit Score}.
For the additive case, we apply $\bm{F}=0.2$, while for the forcing case, we use $\bm{F}=1$ with a target value of $\hat{E} = 5$. 
For Pendulum, we perform causal interventions on $\Xcomp[2]$, and observe the resulting effect on $\Xcomp[1]$.
For the additive case, we apply $\bm{F}=0.4$, while for forcing, we use $\bm{F}=1$ with a target value of $\hat{\Xcomp[2]}= 1$.

\paragraph{How accurate is the causal VAR framework in estimating the causal effect of interventions over time?}
Table~\ref{tab:interventional_forecasting_combined} provides RMSE and SMAPE scores for interventional forecasting across different data sizes and forecast horizons on the German and Pendulum datasets. These scores further validate the effectiveness of the proposed causal \ac{VAR} framework.
For $1$-step forecasts, the Pendulum dataset shows the intervention effect immediately, in contrast to the German dataset, where variables change slowly and require several time steps to exhibit measurable effects. 
For $10$-step forecasts, the scores demonstrate that the model achieves high accuracy. Notably, the RMSE values remain low across both datasets and intervention types, indicating the robustness of the model in estimating causal effects over longer horizons. Additionally, while the SMAPE scores show some variation, the results suggest that the model is well-suited for handling more complex forecasting scenarios. 

Table~\ref{tab:interventional_forecasting_pendulum} outlines the results of interventional forecasting, detailing errors across different data sizes and forecast horizons for the Pendulum dataset. 
Our causal effect estimates remain highly accurate for both forecast horizons. In contrast to the German dataset in Table~\ref{tab:interventional_forecasting_german}, the intervention effect is observable even at the $1$-step forecast.
These results reinforce the findings presented in the main paper, highlighting the effectiveness of the proposed causal \ac{VAR} framework in interventional forecasting tasks.

\begin{figure*}[th!]
  \centering
  \includegraphics[width=0.98\textwidth]{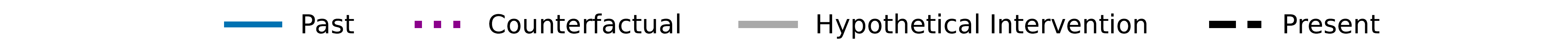}
  \begin{minipage}[t]{0.49\textwidth}
    \centering
    \begin{minipage}{\textwidth}
    \vspace{0.5cm} 
    \begin{subfigure}[b]{0.49\textwidth}
      \centering
      \includegraphics[width=\textwidth]{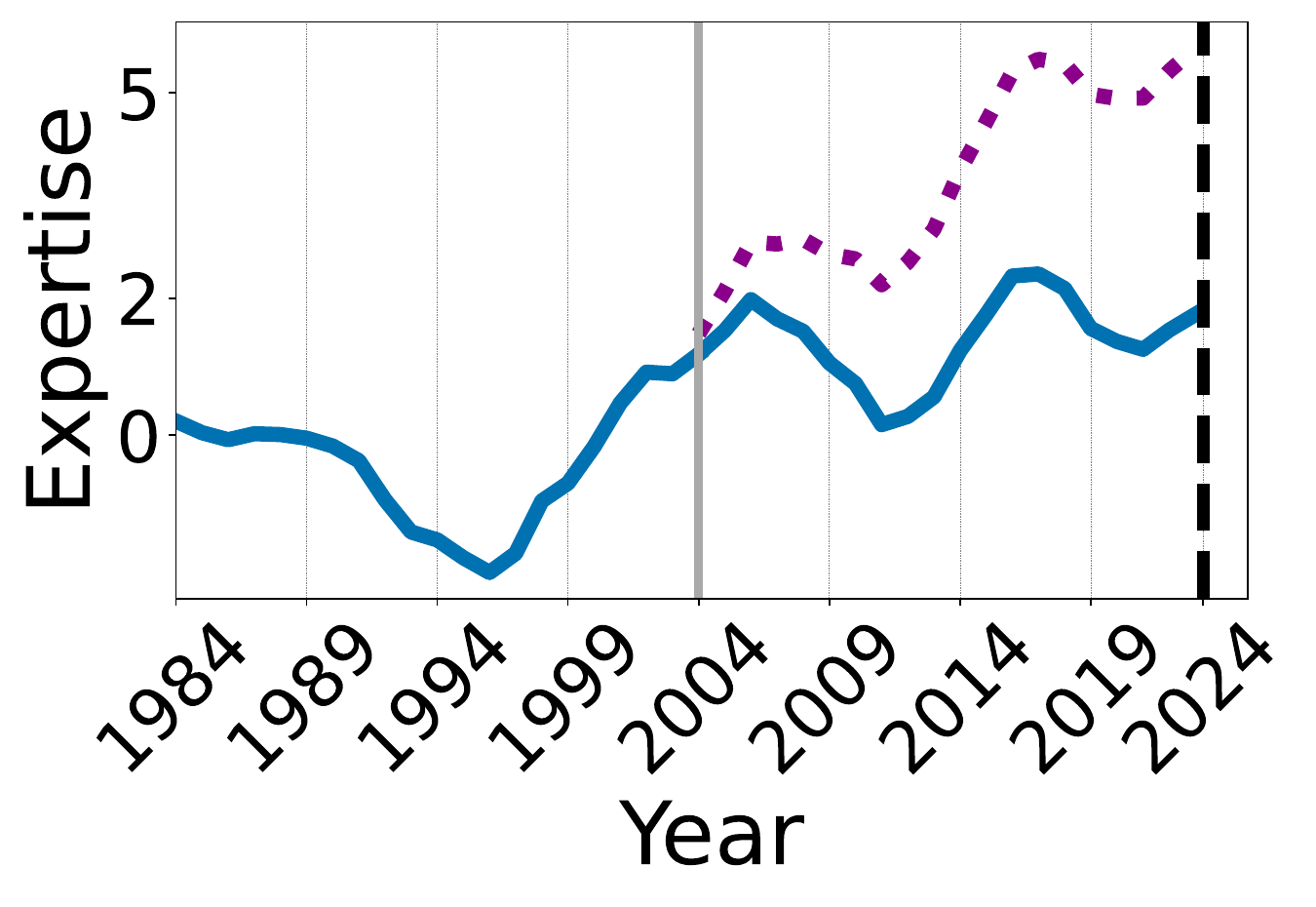}
      \caption{Intervention on $E$}
    \end{subfigure}
    %\hfill
    \begin{subfigure}[b]{0.49\textwidth}
      \centering
      \includegraphics[width=\textwidth]{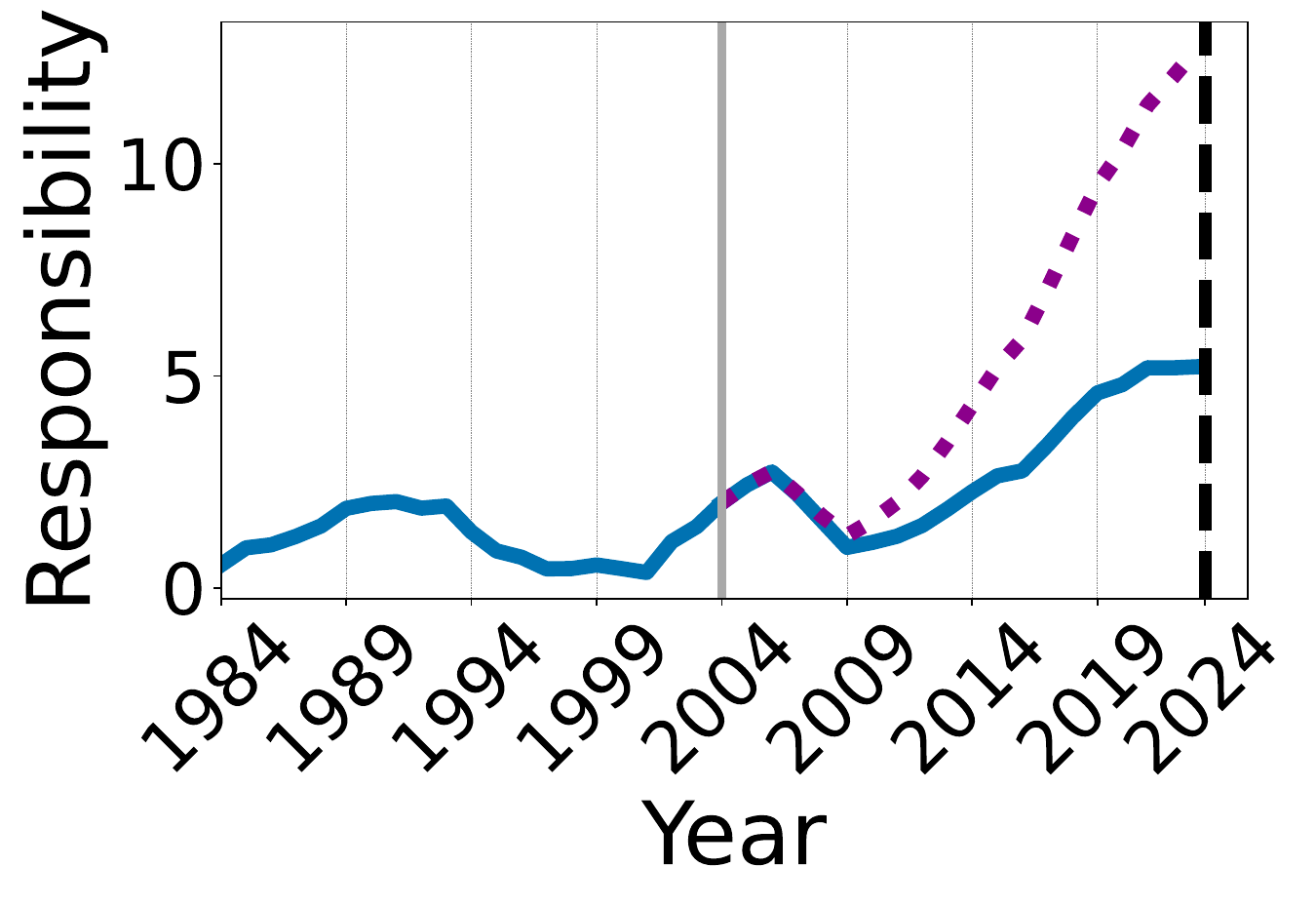}
      \caption{\centering Effect on $R$}
    \end{subfigure}
    \end{minipage}
  \end{minipage}
  \begin{minipage}[t]{0.49\textwidth}
    \centering
    \begin{minipage}{\textwidth}
    \vspace{0.5cm} 
    \begin{subfigure}[b]{0.49\textwidth}
      \centering
      \includegraphics[width=\textwidth]{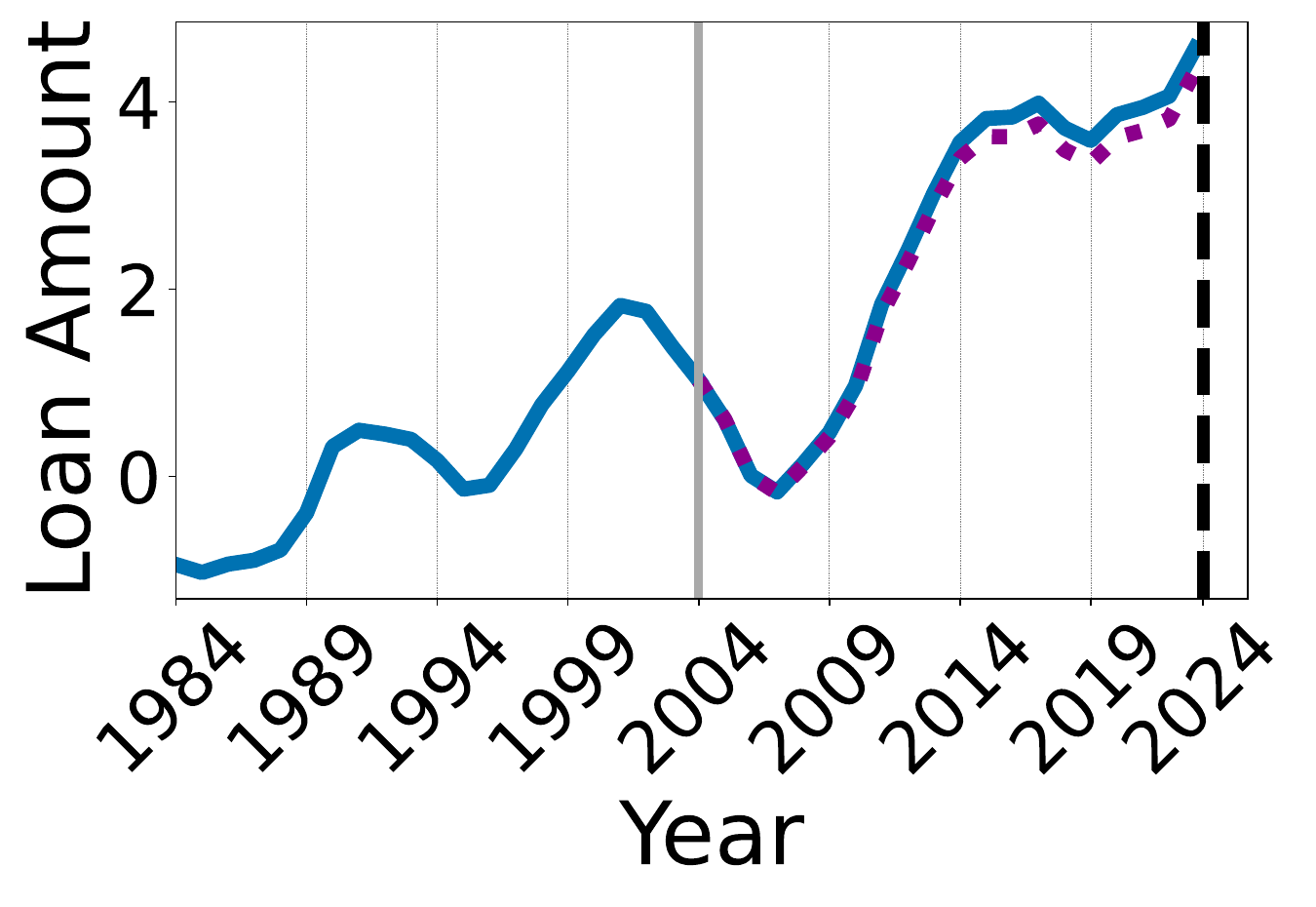}
      \caption{\centering Effect on $L$}
    \end{subfigure}
    \hfill
    \begin{subfigure}[b]{0.49\textwidth}
      \centering
      %\vspace{-0.5cm}
      \begin{tikzpicture}[
            scale=0.6,
            transform shape,
            vertex/.style={circle, draw, minimum size=1cm, inner sep=0pt, font=\footnotesize}
        ]
        \node[vertex, fill=mypurple!15!white] (r) at (0,1.5) {$R$};
        \node[vertex, fill=mypurple!45!white] (e) at (0,0) {$E$};
        \node[vertex] (l) at (1.7,0) {$L$};
        \node[vertex] (d) at (3.4,0) {$D$};
        \node[vertex, fill=mypurple!15!white] (i) at (1.7,1.5) {$I$};
        \node[vertex, fill=mypurple!15!white] (s) at (1.7,3) {$S$};
        \node[vertex, fill=mypurple!15!white] (y) at (3.4,1.5) {$C$};
        
        \draw[edge] (e) -- (r);
        \draw[edge] (l) -- (d);
        \draw[edge] (r) -- (i);
        \draw[edge] (e) -- (i);
        \draw[edge] (i) -- (s);
        \draw[edge] (l) -- (y);
        \draw[edge] (d) -- (y);
        \draw[edge] (i) -- (y);
        \draw[edge] (s) -- (y);
        \end{tikzpicture}
        \caption{German causal graph}
    \end{subfigure}
     \end{minipage}
  \end{minipage} 
  \centering

  \begin{minipage}[t]{0.49\textwidth}
    \centering
    \begin{minipage}{\textwidth}
    \vspace{0.5cm} 
    \begin{subfigure}[b]{0.49\textwidth}
      \centering
      \includegraphics[width=\textwidth]{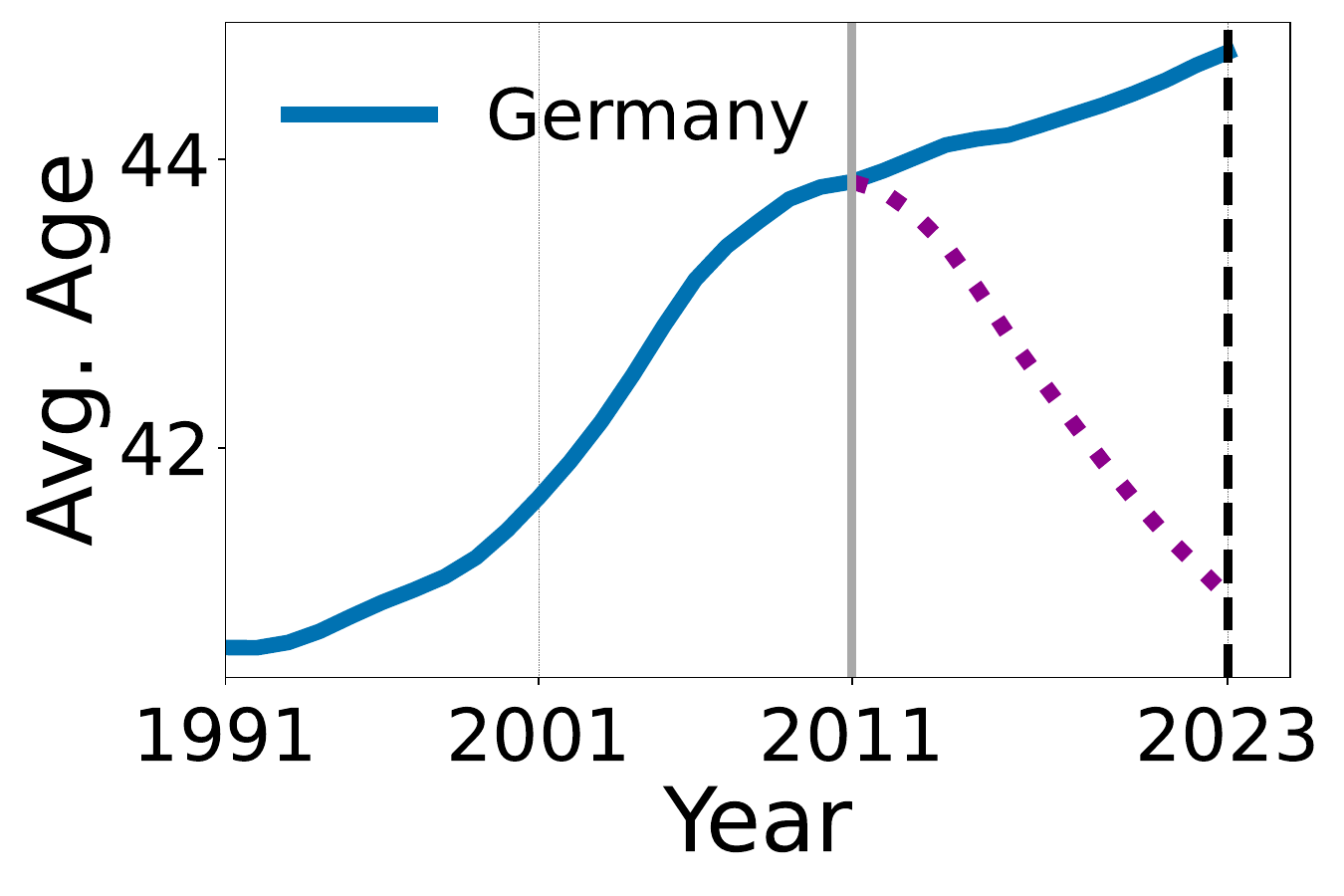}
      \caption{\centering Germany}
    \end{subfigure}
    %\hfill
    \begin{subfigure}[b]{0.49\textwidth}
      \centering
      \includegraphics[width=\textwidth]{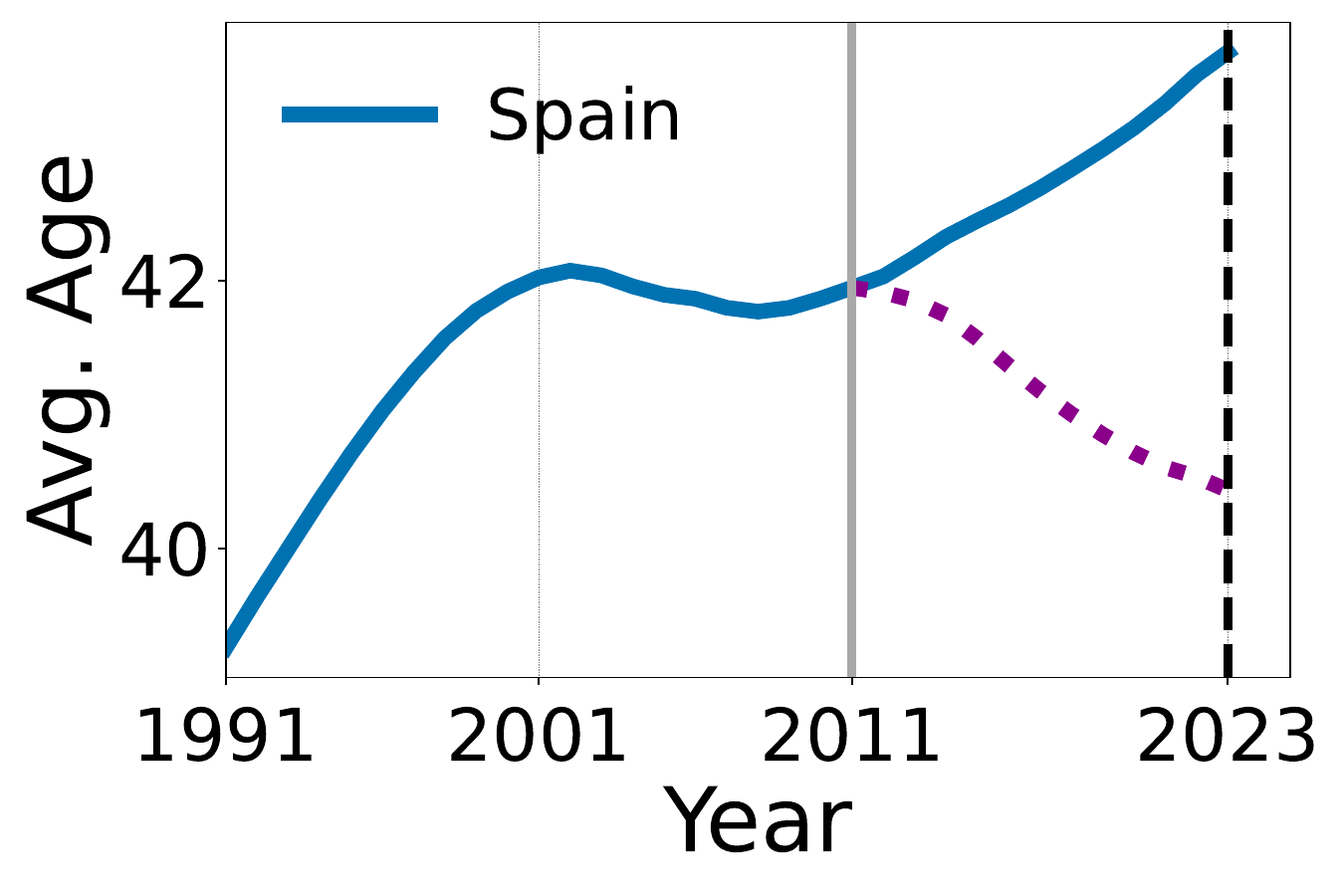}
      \caption{\centering Spain}
    \end{subfigure}
    \end{minipage}
  \end{minipage}
  \centering
  \begin{minipage}[t]{0.49\textwidth}
    \centering
    \begin{minipage}{\textwidth}
    \vspace{0.5cm} 
    \begin{subfigure}[b]{0.49\textwidth}
      \centering
      \includegraphics[width=\textwidth]{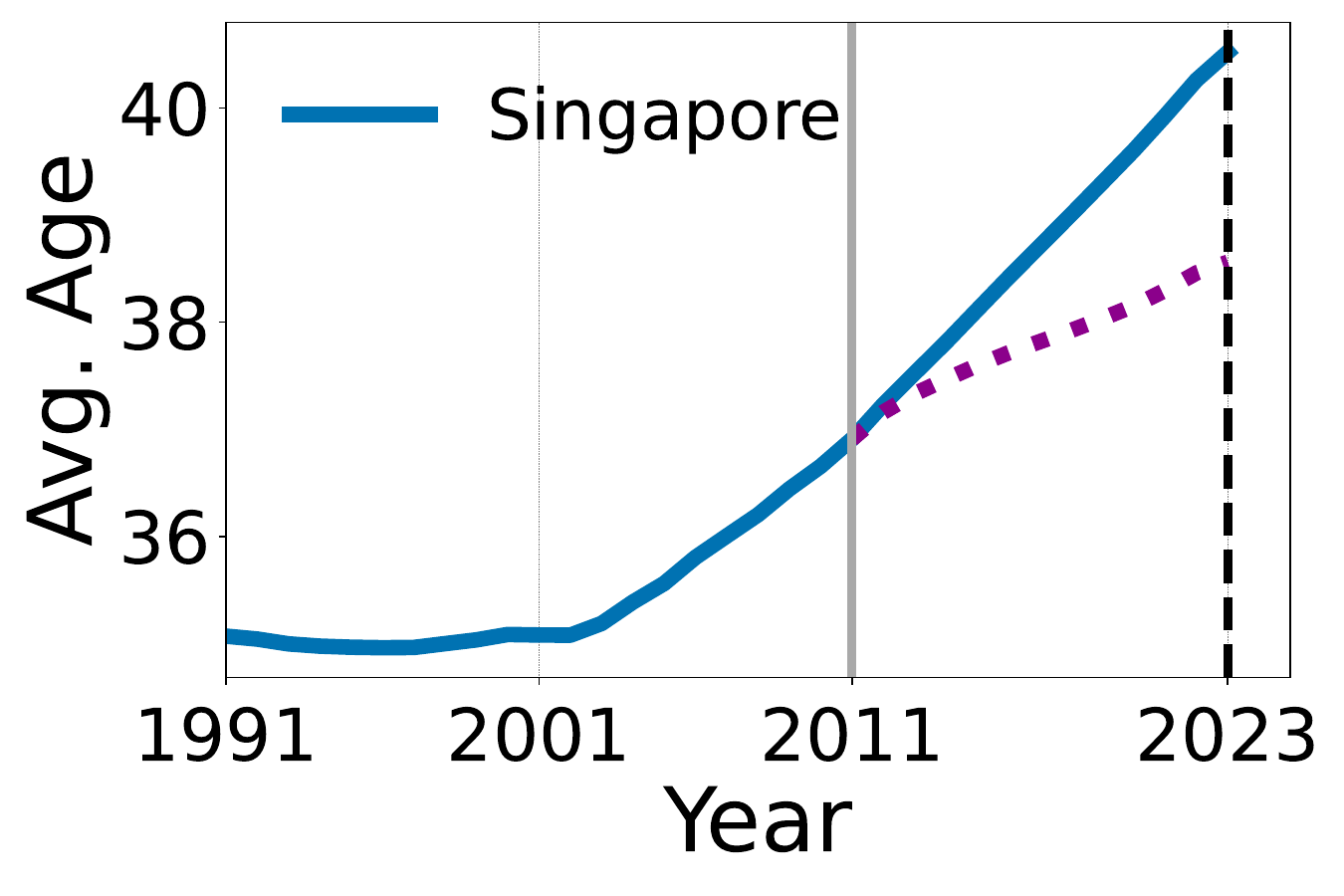}
      \caption{\centering Singapore}
    \end{subfigure}
    \hfill
    \begin{subfigure}[b]{0.49\textwidth}
      \centering
      %\vspace{-0.5cm}
      \begin{tikzpicture}[
            scale=0.6,
            transform shape,
            vertex/.style={circle, draw, minimum size=1.1cm, inner sep=0pt, font=\small}
        ]
        \node[vertex, fill=mypurple!45!white] (b) at (0, 1.5) {Birth};
        \node[vertex, fill=mypurple!15!white] (d) at (3.4, 3) {Death};
        \node[vertex] (m) at (3.4,0) {Migra.};
        \node[vertex, fill=mypurple!15!white] (a-0) at (1.7,1.5) {0-14};
        \node[vertex, fill=mypurple!15!white] (a-1) at (3.4,1.5) {15-64};
        \node[vertex, fill=mypurple!15!white] (a-2) at (5.1,1.5) {65-99};
        
        \draw[edge] (b) -- (a-0);
        \draw[edge] (a-1) to [out=120,in=60](b); 
        \draw[edge] (d) -- (a-2);
        \draw[edge] (d) -- (a-1);
        \draw[edge] (d) -- (a-0);
        \draw[edge] (m) -- (a-0);
        \draw[edge] (m) -- (a-1);
        \draw[edge] (m) -- (a-2);
        \draw[edge] (a-0) -- (a-1);
        \draw[edge] (a-1) -- (a-2);
        \draw[edge] (m) to [out=190,in=-60](b);
        \draw[edge] (a-2) to [out=90, in=0](d);
        \end{tikzpicture}
        \caption{Census causal graph}
    \end{subfigure}
     \end{minipage}
  \end{minipage} 
  \caption{\textbf{Retrospective Counterfactuals.}
  Panels (a-d) focus on the German dataset.  
  (a) Intervention on \textit{Expertise} with $\bm{F} = 0.3$. 
  Effect of the intervention on (b) \textit{Responsibility} and (c) \textit{Loan Amount}. 
  Panels (e-h) examine the Census dataset with an intervention on \textit{Births} with $\bm{F} = 0.004$. Effect of the intervention on the population's average age of (e) Germany, (f) Spain, and (g) Singapore. 
  In (d) and (h), the deep purple node represents the intervened variable, while the lighter purple nodes are those influenced by the intervention.}
  \label{fig:counterfactuals}
\end{figure*}

\subsection*{Retrospective Counterfactuals}\label{app:counterfactuals}
In this section, we demonstrate that our causal \ac{VAR} framework can also address the causal queries discussed Section~\ref{sec:practical_implications} in terms of retrospective counterfactuals.

Figure~\ref{fig:counterfactuals} illustrates two counterfactual interventions, with panels (a-d) related to German and panels (e-h) focused on Census.
For German, panel (a) shows an intervention in 2004 on \textit{Expertise} with $\bm{F} = 0.3$ and the effect of this intervention on (b) \textit{Responsibility} and (c) \textit{Loan Amount}. 
This scenario could potentially answer the question: ``What would be the outcome in 2024 if an applicant had improved their expertise in 2004?''. 
Panel (b) shows that an increase in \textit{Expertise} would have led to a corresponding rise in \textit{Responsibility}. Notably, panel (c) reveals the \textit{Loan Amount} remains unchanged, reflecting the structure of our causal graph where \textit{Expertise} is not an ancestor of this node.
For the Census dataset, we examine an intervention in 2011 on \textit{Births} with $\bm{F} = 0.004$ and its effect on the population's average age (computed as a
weighted mean of age groups) of three countries: (e) Germany, (f) Spain, and (g) Singapore. 
The force value indicates that \textit{Births} increase by $0.4\%$ of each country's total population annually.
This analysis could potentially explore the question: ``How would a policy promoting increased births in 2011 have affected a country’s current average population age?''. 
Panels (e-f) display that such an intervention would have significantly lowered the average age in Germany and Spain, given their relatively older population structures. 
In contrast, panel (g) shows that the effect on Singapore would have been more moderate, consistent with its younger demographic profile at the intervention time.

\section{Use cases} \label{sec:use-cases}

\begin{figure*}[t!]
  \centering
  % Prima riga di figure: German
  \includegraphics[width=\textwidth]{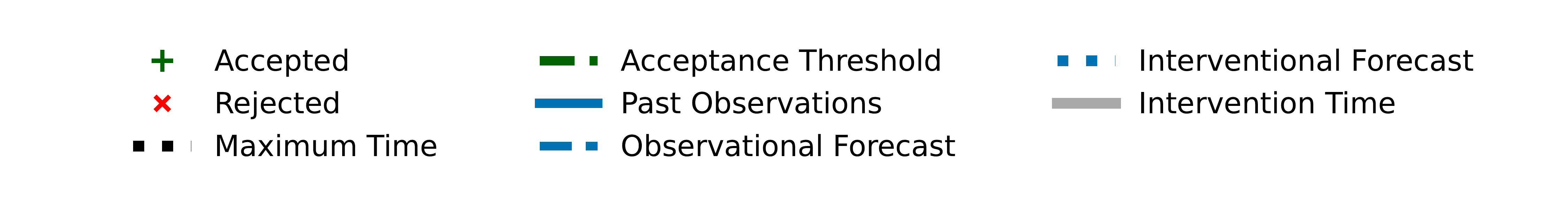}
  \begin{subfigure}[t]{0.49\textwidth}
    \centering
    \includegraphics[width=.9\textwidth]{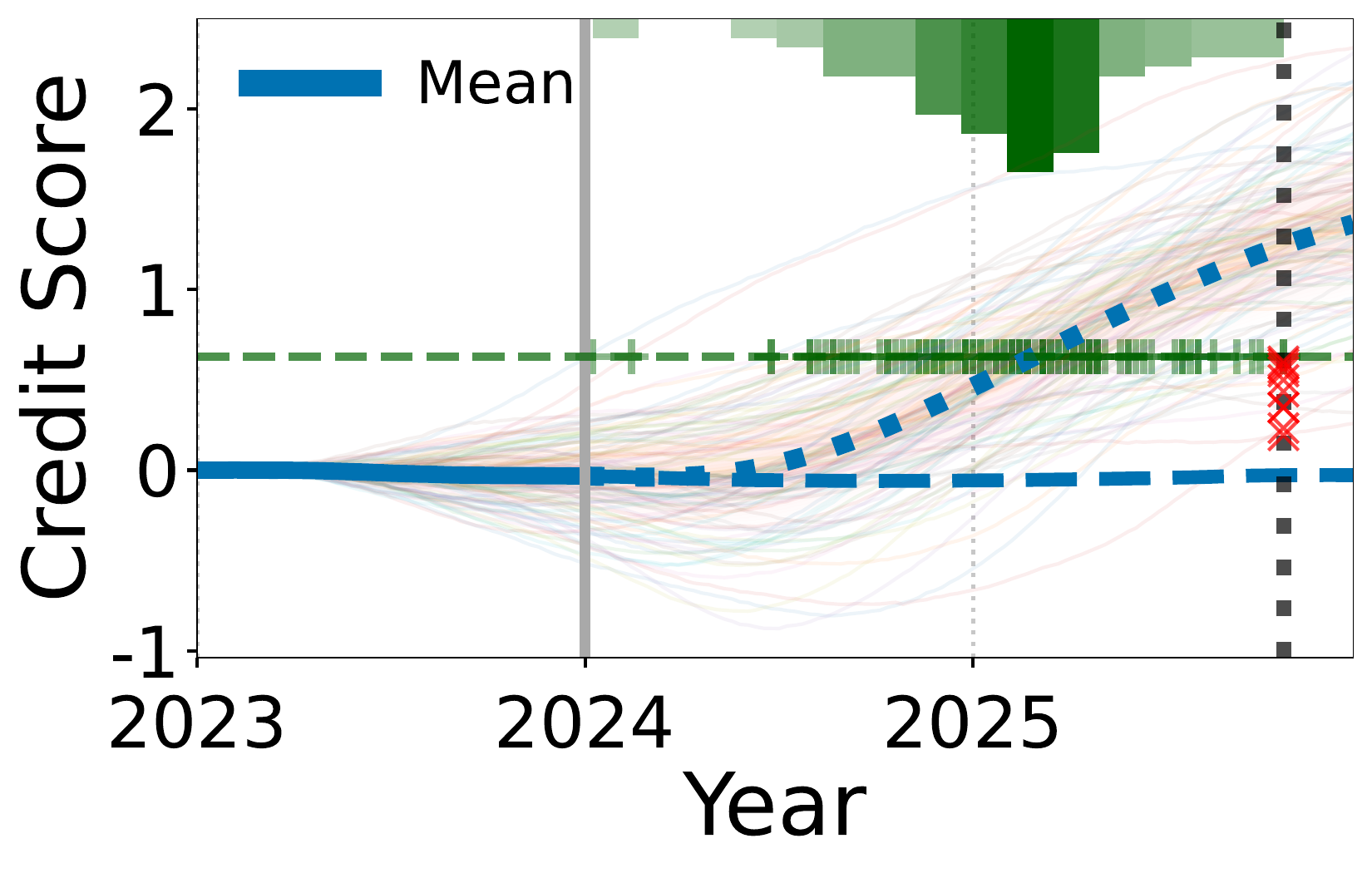}
  \end{subfigure}
  \hfill
  \begin{subfigure}[t]{0.49\textwidth}
    \centering
    \includegraphics[width=.9\textwidth]{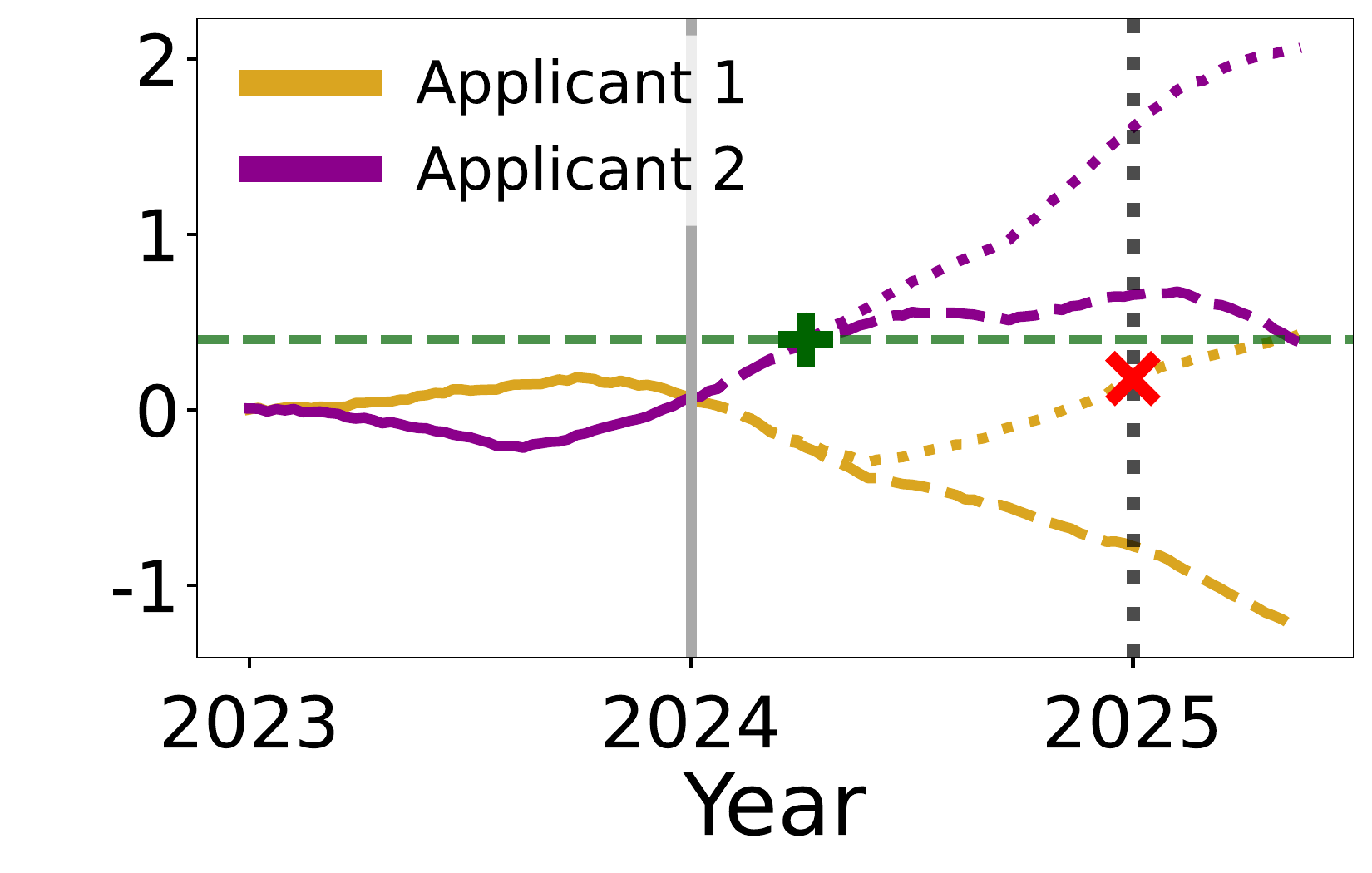}
  \end{subfigure}
  \caption{\textbf{German}. Effect of increasing \textit{Expertise} on \textit{Credit Score}. (Left) The time each loan applicant takes to cross or not the acceptance threshold. The histogram shows the distribution of crossing times. (Right) Comparison of two loan applicants, i.e., trajectories, with similar scores at intervention time. Forecasts are dashed for observational and dotted for interventional.}
  \label{fig:use_case_german}

  %\vspace{0.3cm} % Spazio tra le righe di figure

  % Seconda riga di figure: Census
  \includegraphics[width=\textwidth]{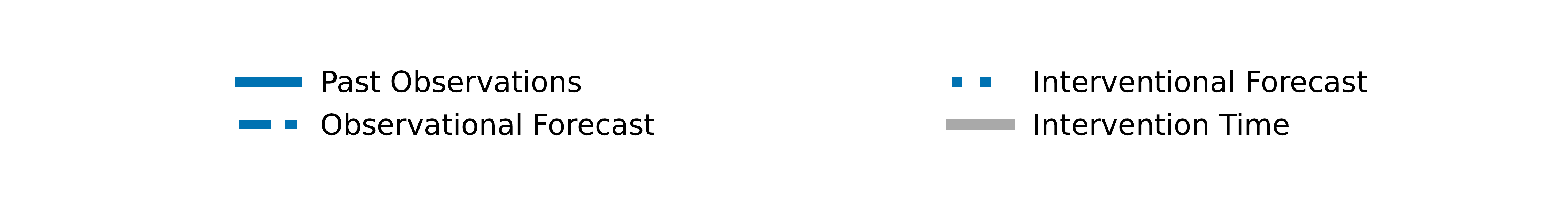}
  \begin{subfigure}[t]{0.49\textwidth}
    \centering
    \vspace{-0.5cm}
    \includegraphics[width=.9\textwidth]{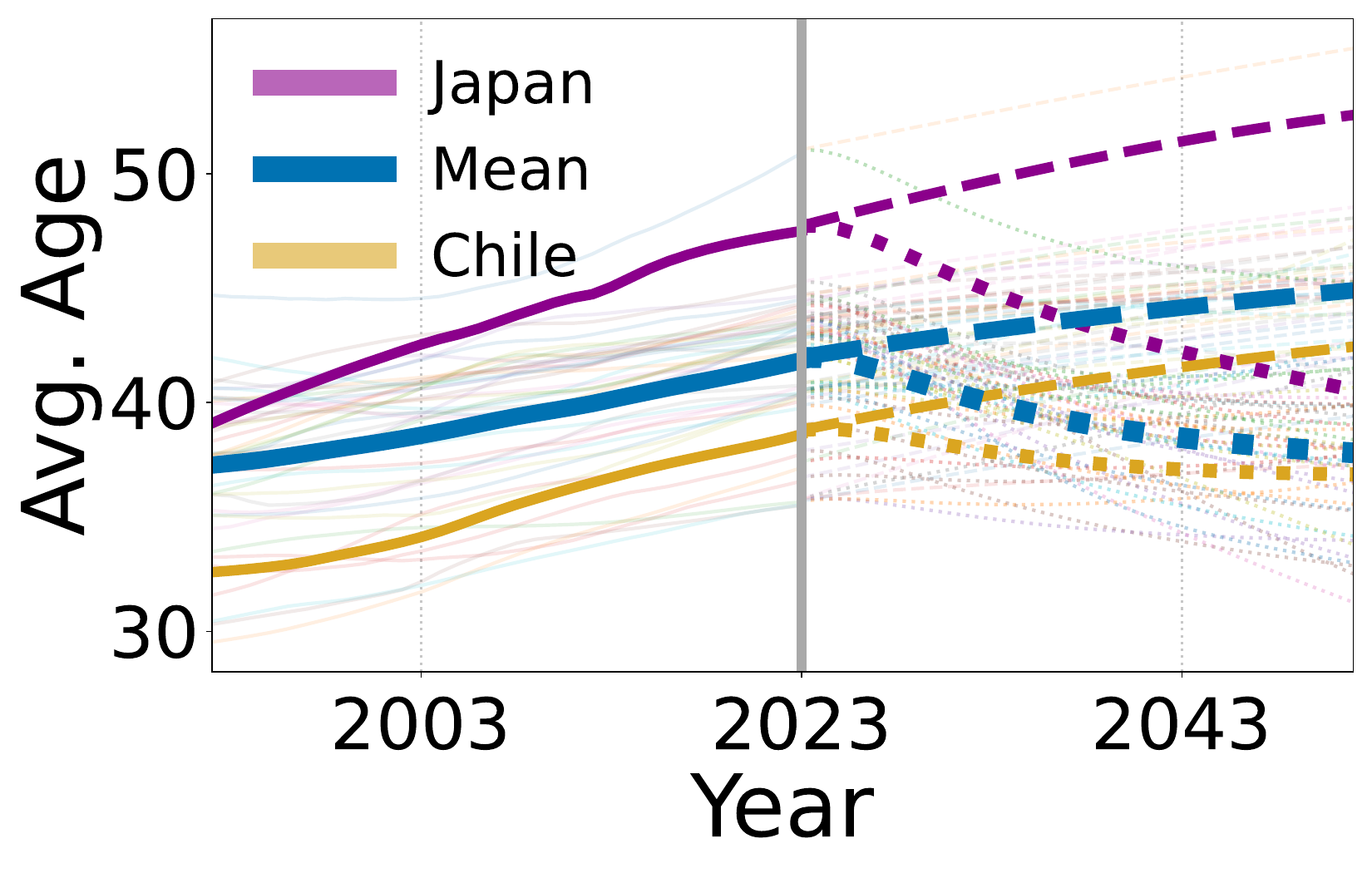}
  \end{subfigure}
  \hfill
  \begin{subfigure}[t]{0.49\textwidth}
    \centering
     \vspace{-0.5cm}
    \includegraphics[width=.9\textwidth]{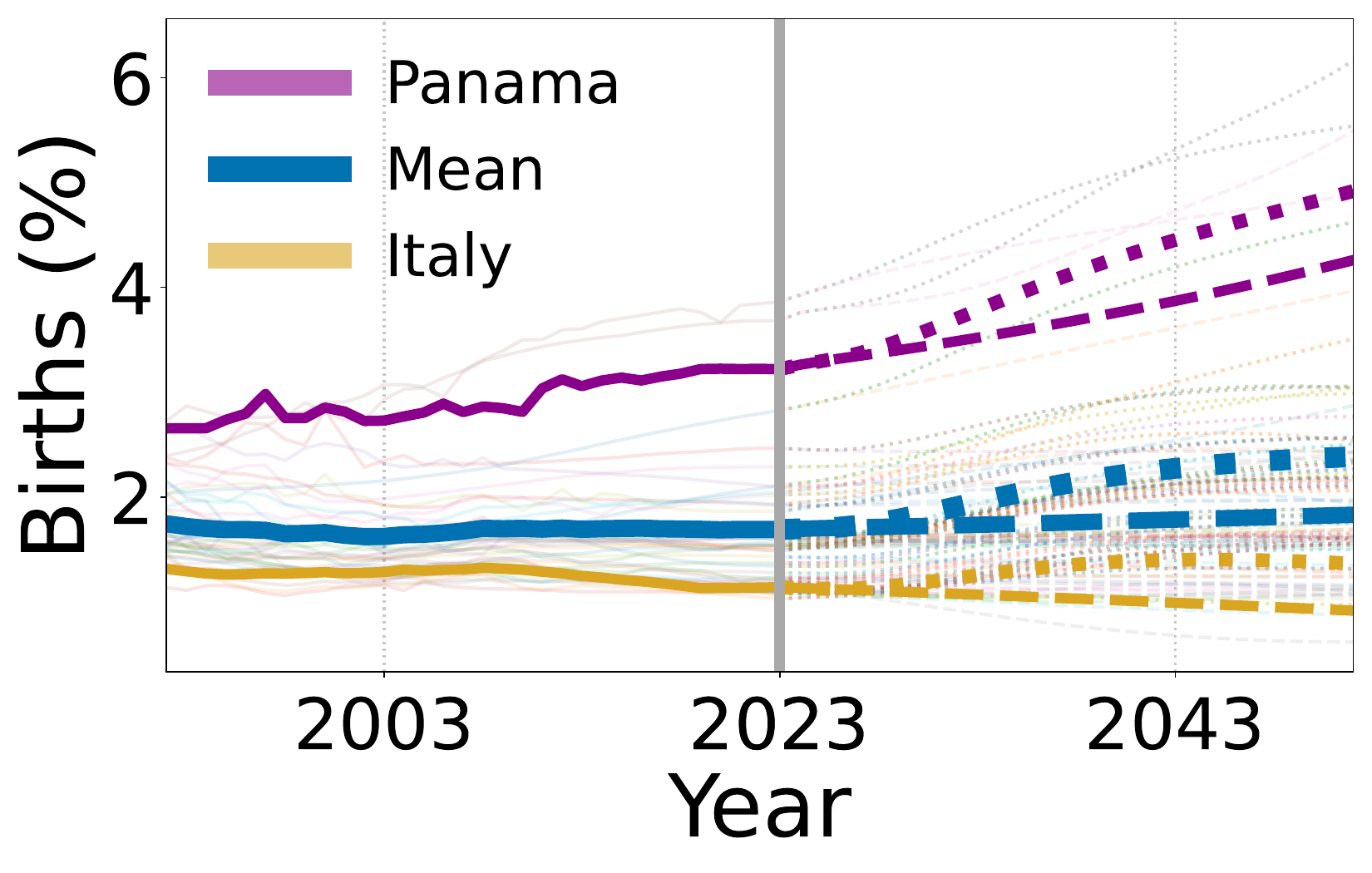}
  \end{subfigure}
  \caption{\textbf{Census}. Additive intervention across all countries. (Left) Intervention on \textit{Births} with $\bm{F}=0.004$ and its effect on the population's average age. (Right) Intervention on \textit{Migration} with $\bm{F}=0.04$ and its effect on \textit{Births}.}
  \label{fig:use_case_census}
\end{figure*}
In this section, we show real-world scenarios where estimating causal effects over time represents a useful step toward a 
realistic modeling of the phenomena.
We focus on the German and Census datasets, presenting two analyses for each. For German, Figure~\ref{fig:use_case_german} illustrates the effect of increasing \textit{Expertise} by $\bm{F} = 0.38$ on \textit{Credit Score}.

\paragraph{German 1 -- Same intervention, different outcomes}
The left panel of Figure~\ref{fig:use_case_german} shows the distribution of the time required for loan applicants to cross or not the acceptance threshold after the intervention. 
Access to this information allows quantification of intervention efficacy, identification of credit-building patterns, and infer the key factors influencing loan eligibility outcomes.
It can also inform the recommendation of actions (\eg{} in algorithmic recourse~\citep{DBLP:journals/csur/KarimiBSV23}) 
within a reasonable timeframe, fostering trust in the system and promoting user acceptance.

\paragraph{German 2 -- Similar cross-sectional values, different causal effects over time}%Do you really need an intervention?} 
%We point out how the reliance on instantaneous state information is insufficient for generating effective recommendations. 
The right panel of Figure~\ref{fig:use_case_german} shows trajectories that, while seeming similar at a given time, may have significantly divergent historical and future behaviors. For instance, the purple trajectory may have autonomously crossed the threshold without intervention, whereas in the case of the yellow one, the applied intervention may be inadequate to ensure the desired outcome. 
Such divergence highlights the importance of moving beyond models that rely only on cross-sectional data, motivating the need for techniques, such as the proposed causal \ac{VAR} framework, that %more accurately
capture individual applicant behavior over time.

For Census, Figure~\ref{fig:use_case_census} presents two additive interventions across all countries. % with Case 1 {(Left)} and Case 2 {(Right)}.

\paragraph{Census 1 -- Impact of Births on Average Age} 
The left panel of Figure~\ref{fig:use_case_census} shows the increase on \textit{Births} with $\bm{F} = 0.004$ and its effect on the population’s average age (computed as a weighted mean of age groups). The force value means that \textit{Births} increase by $0.4\%$ of each country's total population every year.
Examining how they affect population age over time could allow policymakers to identify which countries might benefit most from specific types of demographic interventions and evaluate the long-term viability of systems. 
We can also observe that the intervention in Japan caused a more evident decrease in the average age than in Chile.

\paragraph{Census 2 -- Impact of Migration on Births} 
The right panel of Figure~\ref{fig:use_case_census} reports how a $4\%$ growth in \textit{Migration} w.r.t. the total population influences \textit{Birth} rates. 
We observe that increased migration leads to a rise in births. However, its impact is less evident (observational and interventional forecasting trajectories are closer) compared to the result on the average age shown in Figure~\ref{fig:use_case_census} (left panel).

\section{Conclusions} \label{sec:conclusions_ciovertime}
In this chapter, we have established a link between discrete-time dynamical systems at equilibrium and \acp{SCM}. 
Moreover, we have provided an explicit procedure for mapping \acp{VAR} to linear \acp{SCM} and demonstrated that, under specific model stability conditions, interventions on the dynamical system and the \ac{SCM} yield equivalent results.
To conduct causal inference over time, we have introduced two classes of interventions (additive and forcing) for \acp{VAR}.

\paragraph{Limitations} 
When systems exhibit strongly nonlinear dynamics, linear \acp{VAR} may prove less effective than alternative nonlinear approaches. 
Moreover, our framework requires prior knowledge of the causal graph.  In scenarios where this information is lacking, the process of causal discovery can present significant challenges.

\paragraph{Future work} In the next chapter, we will investigate the use of non-linear \acp{DSP}, as they may lead to better convergence rates and allow for %aiming to propose a 
causal interpretation of a broader family of dynamical models.
Moreover, %we  believe that 
our work opens several interesting research directions (~\cref{sec:practical_implications} for concrete examples) and applications (\eg{} causal inference over time in high-dimensional contexts such as climate science).

\chapter{Continuous Time and Non-linear Systems}\label{ch:CIcontinuous_time}

\paragraph{Author's Note}
This chapter represents a major advancement in the PhD research, extending the discrete-time framework introduced earlier to the more general setting of continuous-time and non-linear systems. It constitutes a natural evolution of~\cref{ch:Ci_over_time} and reformulates its key ideas within the theoretical framework of Itô calculus and stochastic differential equations. This is an original contribution, developed specifically during the writing of the thesis and not yet published. \cref{sec:differential_equations} provides a general overview of the mathematical theory used and does not contain original results.
\vspace{30pt}

In~\cref{ch:Ci_over_time}, we discussed the implementation of \emph{causal inference over time} using discrete-time models, represented by \acp{dE}. From a practical perspective, we built a linear framework to estimate the evolution equations of a system under study and to perform causal queries on it. In this chapter, we aim to extend the discussion to more general models by removing the two limitations of the previous approach. In particular, we focus on continuous-time models, using \acp{SDE} as the fundamental theoretical framework, and we consider equations that are not necessarily linear. 

The chapter is structured as follows.
%In \ref{sec:continuos_time_motivation}, we provide a broad perspective to motivate the extension of the framework, discussing both theoretical and methodological advantages. We will also identify connections with \cref{sec:causal-perspective} and highlight the limitations of the general framework in practical implementations. 

In \cref{sec:differential_equations}, we introduce the necessary mathematical foundation to describe the problem at hand using \acp{SDE}. 

In \cref{sec:gradient_flow}, we propose a reformulation of the evolution equations based on the theory of \ac{GF}, allowing us to generalize the concept of \emph{causal inference over time} (\cref{ch:Ci_over_time}) to the broader setting. 

In \cref{sec:causal_sdes}, we develop a causal interpretation of \acp{SDE}, introducing the formalization of interventions and their effects in continuous-time dynamical systems.

In \cref{sec:long_run_continuous}, we focus on estimating the asymptotic causal effects of interventions, analyzing the stationary behavior of the system under different intervention strategies.

In \cref{sec:connection_discrete_continuous}, we establish a formal connection between discrete-time and continuous-time frameworks, showing how results can be transferred between \acp{SdE} and \acp{SDE}.
A schematic summary of the overall chapter's contributions is provided in Table~\ref{tab:results_recap}.

%Finally, in \cref{sec:S_gradient_descent}, we will discuss how discrete-time and continuous-time models are related, showing that the former can be seen as an approximation of the latter. This approximation is particularly useful for simulating underlying continuous dynamics through algorithmic solutions.

%\section{Why Non-Linearity is a Relevant Step Forward}\label{sec:continuos_time_motivation}

\section{Differential Equations}\label{sec:differential_equations}
Consider a dynamical system described by the (deterministic) \ac{ODE}\footnote{For the sake of simplicity, throughout this chapter, we will consider one-dimensional systems. We will extend the discussion to the \( n \)-dimensional case only when necessary for the exposition.}

\begin{equation}\label{eq:differential_equation}
\diff{x}{t} = f(x),
\end{equation}

which models the evolution over time of a quantity of interest, $x$. We call $f(x)$ the vector field of the system, as it represents the direction and speed of $x$, given its position or state. Typically, one wants to study how $x$ evolves starting from a specific initial state $x_0$. If $f(x)$ satisfies certain regularity conditions, knowing both $x_0$ and $f(x)$ is enough to determine a unique solution $x(t)$, which describes the complete trajectory of $x$ as a function of time. In this sense, $f(x)$ characterizes the entire system and allows us to analyze its behavior\footnote{for a causal interpretation of \acp{ODE} see \citet{DBLP:journals/corr/abs-2402-18477}}.

Similarly, we can consider the case where $x_0$ is not precisely known. Here, the initial state may belong to a set of possible values, requiring us to apply the evolution described by \cref{eq:differential_equation} to each case. More generally, if we treat $x_0$ as a random variable $X_0$, the solution of \cref{eq:differential_equation} becomes a \ac{SP} $X: T \times \Omega \mapsto \R$ (\cref{def:dsp}), with the key difference that $t$ is now a continuous parameter in $\R$ rather than a discrete index. Recall that $\Omega$ denotes the sample space on which we define our probability space.

Moreover, we assume that $f$ can also behave probabilistically, depending on a random variable $\noise : \Omega \mapsto \R^d$. In other words, the equation might have additional random parameters that are separate from the initial condition $X_0$. We can then rewrite \cref{eq:differential_equation} in the following form:

\begin{equation}\label{eq:random_differential_equation}  
    \diff{X}{t} = f(X,\noise),  
\end{equation}  

and refer to it as a \ac{RDE}\footnote{See \citep{bongers2018causal}, $f$  for further clarification on \acp{RDE}}.

\begin{example}[Random Logistic Differential Equation]
Consider the equation 
\begin{equation}\label{eq:logistic_equation}
    \diff{X}{t} = X (1-X),
\end{equation}
which is the standard form of the logistic differential equation, commonly used to describe models such as population growth. It is straightforward to show that the solution to this equation is
$$X(t) = \frac{1}{1+\frac{1-X_0}{X_0}e^{-t}}.$$

Notice that, regardless of the initial state, all trajectories converge to the value $1$. Indeed, \cref{eq:logistic_equation} admits two equilibrium points for $x$ ($0$ and $1$), i.e., values at which its velocity $f(x)$ is zero. 
\begin{figure}
\centering
\includegraphics[width=0.95\linewidth]{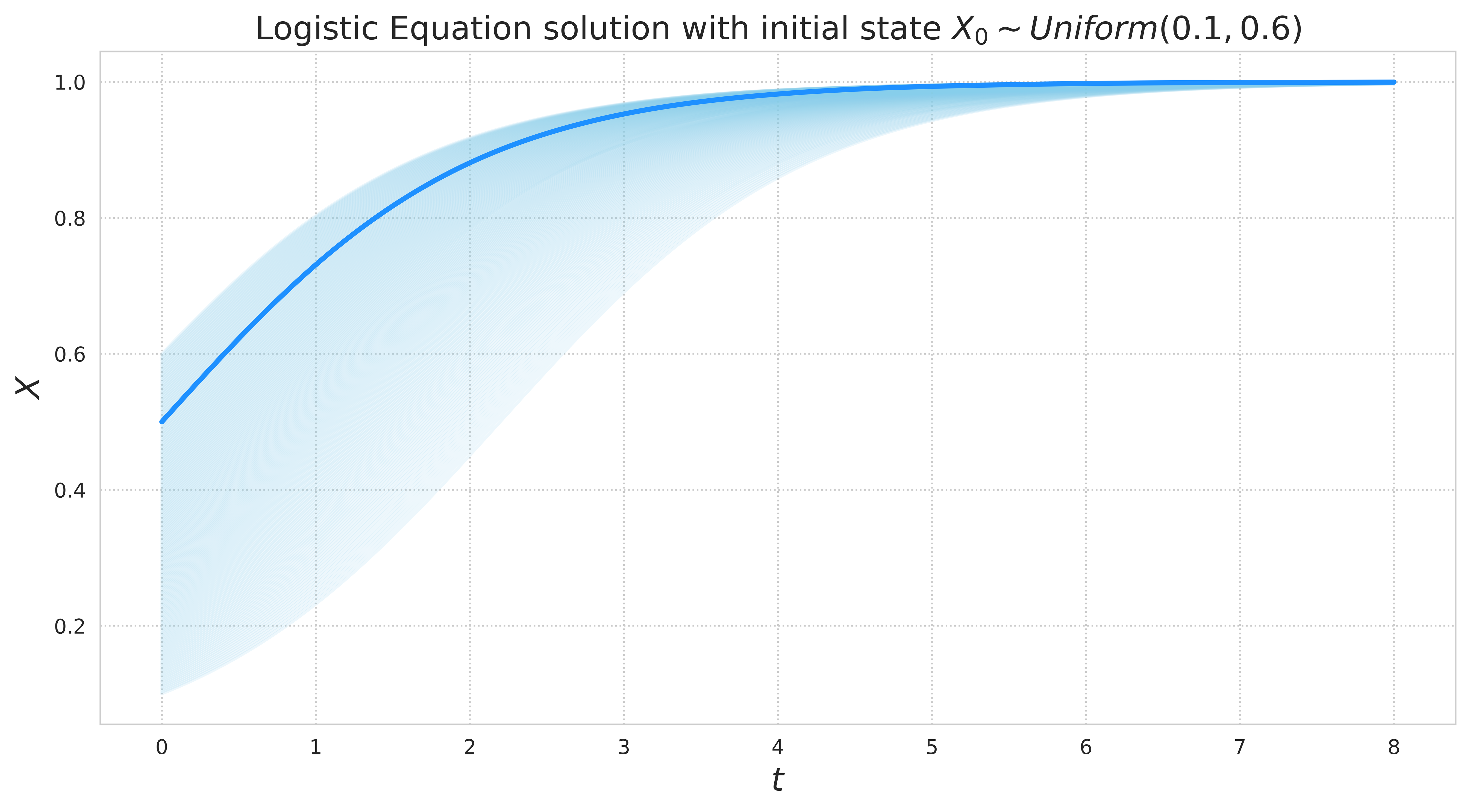}
\caption{\textbf{Logistic Equation.} Representation of the evolution of a \ac{SP} driven by the logistic equation. The initial distribution $X_0$ has a uniform density between $0.1$ and $0.6$. A particular solution of this equation is the sigmoid function $\sigma(t) = 1/(1+e^{-t})$ (darker blue line), obtained by setting $x_0=1/2$.}
\label{fig:logistic_equation}
\end{figure}

As a slight variation, consider the equation
\begin{equation}\label{eq:logistic_equation_random_coeff}
\diff{X}{t} = X (k - X).
\end{equation}
where $k$ is a random variable representing different limit values. In this case, rather than converging to a single limit value, we observe a limiting distribution: as $t \to \infty$, $X(t)$ increasingly resembles a specific random variable, which we denote by $X_\infty$.

\begin{figure}
    \centering
    \includegraphics[width=0.9\linewidth]{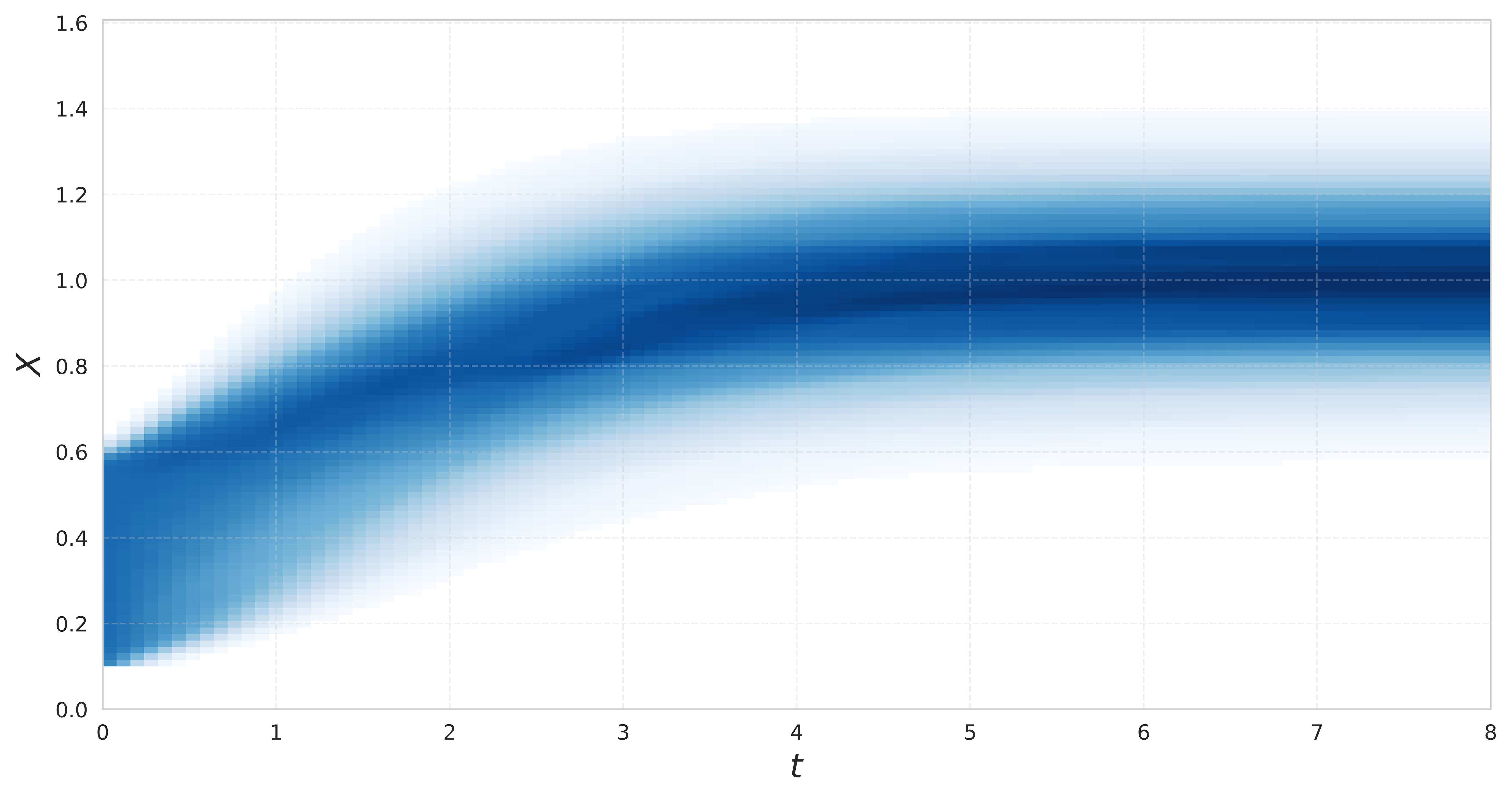}
    \caption{\textbf{Random Logistic Equation.} Evolution of the \ac{SP} driven by \cref{eq:logistic_equation_random_coeff}, with $k\sim \normal(1,0.1)$ and $X_0 \sim \uniform(0.2,0.6)$. Note that, assuming $k$ and $X_0$ are independent of each other, $X(t)$ converges to $k$.}
    \label{fig:random_coeff_logistic_process}
\end{figure}
\end{example}

\ac{RDE}s are generally easy to handle, as the randomness primarily affects the \quotes{initial conditions} of the process. Once a specific realization $\omega \in \Omega$ is fixed and both $X_0(\omega)$ and $\noise(\omega)$ are determined, $X(\omega,t)$ becomes the solution of an \ac{ODE}. In general, the solution of an \ac{RDE} can be expressed in integral form (we are explicitly indicating the dependence of $X$ and $\noise$ on $\Omega$ to highlight where stochastic elements appear in the equation):  

\begin{equation}\label{eq:rde_solution}
X(t,\Omega) = X_0(\Omega) + \int_0^t f(X(s,\Omega),\noise(\Omega)) \dd s;
\end{equation}  

By fixing a specific $\omega \in \Omega$ and introducing the notations $\hat{x_0} := X_0(\omega)$ and $\hat{f}(x) := f(x,\noise(\omega))$, we can rewrite the equation as  

\begin{equation}\label{eq:rde-to-ode}
    x(t) = \hat{x_0} + \int_0^t \hat{f}(x(s)) \dd s,
\end{equation}  

which corresponds to the classical solution of an initial value problem with initial condition $\hat{x}_0$ and an \ac{ODE} as in \cref{eq:differential_equation}.

In \citep{bongers2018causal}, it is shown how a direct link can be established between \acp{RDE} and \acp{SCM}. Similar to what was discussed in \cref{sec:causal-perspective}, the latter can be interpreted as the compressed version of the former, removing all information about the temporal component of the process and considering only the invariant distribution, obtained in the limit $t \to \infty$. In other words, \ac{SCM} model the long-run behavior of \acp{RDE}.

In this work, we aim to extend the discussion to \acp{SDE}, which provide a more sophisticated framework for modeling inherently stochastic systems. Unlike \acp{RDE}, where randomness is primarily confined to the initial conditions, \acp{SDE} incorporate continuous random perturbations. This implies that the system is influenced by both deterministic dynamics and stochastic fluctuations at every point in time, making it impossible to reduce the problem to an initial condition formulation of an \ac{ODE}. Instead, the stochastic component introduces a time-dependent randomness that fundamentally shapes the system's evolution.

To rigorously analyze and solve \acp{SDE}, advanced mathematical tools are required. These include Brownian motion, which models continuous random perturbations, and Itô calculus, a specialized framework that extends standard calculus to stochastic processes. In the following sections, we introduce these concepts in detail.

\subsection{Brownian Motion}
\ac{BM} refers to the random, irregular movement of pollen particles suspended in water, first observed by the botanist Robert Brown in 1828. 
This movement is caused by the continuous collisions of water molecules with the pollen, leading to the gradual spreading or diffusion of the particles in the water. 
The concept of \ac{BM} has broad applications beyond the study of microscopic particles in liquids. It is used to model various phenomena, such as stock price fluctuations, thermal noise in electrical circuits, and random disturbances in physical, biological, economic, and management systems. 
Its universality stems from the minimality of the assumptions required for its theoretical framework. 
Essentially, any process involving random disturbances at each moment in time can be analyzed using \ac{BM}. 
In the following, we introduce its definition and briefly discuss its fundamental properties.

\begin{definition}[Brownian Motion]\label{def:brownian_motion}
A continuous-time stochqstic process $W$ is called a Brownian motion if it satisfies the following properties\footnote{The definition provided here presents the properties in a simplified form, as we do not intend to introduce the concept of filtration on a probability space in this thesis for the sake of simplicity. For a rigorous definition, we refer to \citet{pascucci2024probability}.}:

\begin{enumerate}
    \item $W(0) = 0$ almost surely,
    \item $W$ is continuous almost surely,
    \item $W(t) - W(s)$ is independent of $W(s)$ for every $t > s > 0$,
    \item $W(t) - W(s) \sim \normal(0, t-s)$ for every $t > s > 0$.
\end{enumerate}

\begin{figure}
    \centering
    \includegraphics[width=0.9\linewidth]{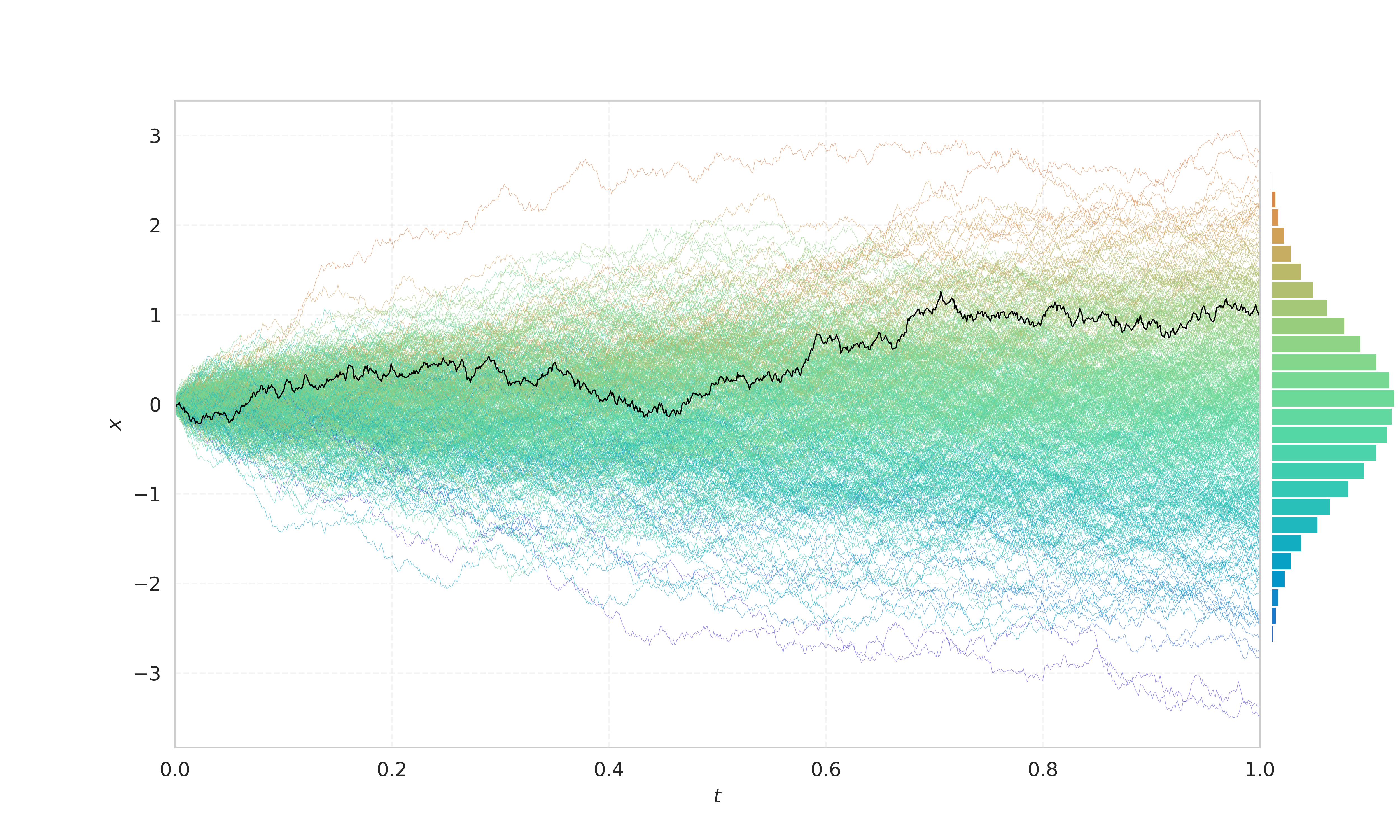}
    \caption{\textbf{Brownian Motion.} Depiction of a standard \ac{BM} over the time interval \([0, 1]\). The black line represents a single trajectory. The distribution of the process at the final time \(t = 1\) follows a \(\normal(0, 1)\) (standard normal) distribution, as illustrated by the histogram on the right.}
    \label{fig:brownian_motion}
\end{figure}
\end{definition}

The above properties can be interpreted as follows:
$(1)$ states that the trajectories of a \ac{BM} always start from the origin; 
$(2)$ ensures that each individual trajectory is continuous (although, as can be intuitevely expected from \cref{fig:brownian_motion}, it is almost surely not differentiable); 
$(3)$ represents the property of \textit{independent increments}: the process, considered over separate intervals, is completely memoryless;
$(4)$ states two important facts: the first one is that the process is \textit{time-invariant}, indeed it holds that $W(t) - W(s) = W(t-s)$; the second one is that the distribution of any increment is normal. This property provides an additional interpretative insight \ie{}, the observed variation over any time interval can be understood as the macroscopic effect of infinitely many microscopic perturbations occurring on much shorter time scales. Regardless of their specific nature, assuming these perturbations are i.i.d., their cumulative sum follows a Gaussian distribution, as stated by the Central Limit Theorem. This further clarifies why \ac{BM} is so prevalent across diverse scientific disciplines. 

\begin{remark*}
\ac{BM} is a Markov process, meaning that its future values depend only on its present state and not on past observations. Formally, for any sequence $t_1<t_2<\dots<t_{n-1}<t_n$,
\begin{equation}
P(W(t_n) | W(t_{n-1}), \dots, W(t_1)) = P(W(t_n) | W(t_{n-1})).
\end{equation}
Furthermore, \ac{BM} is also a martingale, e.g., for any $s<t$,
\begin{equation}
\mathbb{E}[W(t) | W(s)] = W(s).
\end{equation}
This last property can be directly derived from (3) and (4).
\end{remark*}

\paragraph{\ac{BM} is not the solution of any \ac{RDE}}  

Suppose we want to model \ac{BM}. Instead of  considering~\cref{eq:random_differential_equation} and trying to find a process that satisfies it, we aim to identify the vector field $f$ such that  

$$
\diff{W}{t} = f(W,\noise).
$$

We immediately observe that the left-hand side term is ill-posed since \ac{BM} is \textit{almost everywhere non-differentiable} w.r.t. $t $. Even if we interpret the equation in a weak sense, we would still need to transfer the irregularity of $W$ onto $f$.  

Moreover, it is easy to recognize an expressivity issue. Firstly, since $W_0 = 0$ almost surely, there is no randomness in the initial state of the process. Additionally, we have assumed that the random parameters of $f$ are finite, which contradicts the properties of \ac{BM}: no matter how many observations we have at times $t_1, \dots, t_m$, the next increment will always be independent of the entire history.  

Thus, we must consider a new class of processes specifically designed to handle such irregularities. In the next section, we introduce Itô processes.

\subsection{Itô Processes and Stochastic Differential Equations}

A (time-homogeneous) Itô process is a \ac{SP} defined by\footnote{For clarity and consistency with the literature on Itô calculus \citep{pascucci2024probability}, we switch the notation from $X(t)$ to $X_t$, to indicate the process at time $t$.}  

\begin{equation}\label{eq:ito-process}  
X_t = X_0 + \int_0^t \mu(X_s) \dd s + \int_0^t \sigma(X_s) \dd W_s,  
\end{equation}  

where the random variable $X_0$ is the initial state of $X$, $\mu$ and $\sigma$ are called respectively \textit{the drift and the diffusion coefficients} of the process. As a first observation, notice that as $\sigma\to 0$, the equation reduces to \cref{eq:random_differential_equation}. In fact, the integral on the right represents the irregular perturbation of the process due to the contribution of \ac{BM}.

We can express \cref{eq:ito-process} in its differential form as  

\begin{equation}\label{eq:stochastic_differential_equation}  
dX_t = \mu(X_t) \dd t + \sigma(X_t)\dd W_t.  
\end{equation}  

This equation is what is referred to as a \acf{SDE}. It is important to note that it is not a precise mathematical statement but rather a compact way of writing \cref{eq:ito-process}.

The terms \textit{drift} and \textit{diffusion} have a precise meaning: Considering an infinitesimal time interval in which the process evolves, say from $\tau$ to $\tau+\delta$, we obtain  

$$\expected{X_{\tau+\delta}-X_{\tau}}=\mu(X_{\tau})\delta,$$  

which means that $\mu$ controls the \textit{average infinitesimal displacement} of the process. Similarly, we get  

$$\Var[X_{\tau+\delta}-X_{\tau}]=\sigma^2(X_t)\delta,$$  

which means that $\sigma$ controls the \textit{infinitesimal rate of diffusion} of the process.  

In fact, for small time intervals, we have  

$$X_{\tau+\delta}-X_{\tau} \sim \normal(\delta\mu(X_{\tau}),\delta\sigma^2(X_{\tau})). $$
\begin{example*}[\ac{BM} with constant drift and diffusion]
Suppose $\mu$ and $\sigma$ are constants and $X_0 = 0$. then Equation \eqref{eq:stochastic_differential_equation} simplifies to  
$$X_t = \mu t + \sigma W_t. \quad (\text{differential form:} \quad \dd X_t = \mu \dd t + \sigma \dd W_t)$$  
It is easy to verify that, starting from \cref{def:brownian_motion}, property $(4)$, we immediately obtain
$$\expected{X_t}=\expected{\mu t} + 
\underbrace{\expected{\sigma W_t}}_{=0}=\mu t$$
and 
$$\Var[X_t] = \underbrace{\Var[\mu t]}_{=0} + \Var[\sigma W_t] = \sigma^2 \Var[W_t] = \sigma^2 t.$$
\end{example*}
\subsection{Stochastic Integrals}  

One of the main challenges in stochastic calculus is to provide a precise meaning to the integral  

$$\int_0^t \sigma(X_s) \dd W_s.$$  

The main difficulty arises from the fact that \ac{BM} is not a rectifiable function. As a result, it is not possible to naturally rewrite the integral in the form  

$$\int_0^t \sigma(X_s) \diff{W_s}{t} \dd t.$$  

The classical approach to defining this integral begins with simple (\eg, piecewise constant) processes, and then progressively extends to more general classes of integrands through successive refinements. Eventually, the integral is well-defined for the class of locally square-integrable processes. The final result is encapsulated in the well-known Itô formula. Without delving into the technical details of the construction, we discuss the implications of this result for estimating key quantities of the process. Let us begin by introducing Itô's formula in the simplest case, where the reference \ac{SP} is precisely \ac{BM}.

\begin{theorem}[Itô formula - Brownian Motion]\label{thm:ito_formula_BM}
Let $f:\R \to \R$ be a $C^2$ function (e.g., with a continuous second derivative), then the following relation holds:
\begin{equation}\label{eq:ito_formula_BM}
    f(W_t) = f(0) + \int_0^t f'(W_s) \dl{W_s} + \frac{1}{2} \int_0^t f''(W_s) \dl{s}  
\end{equation}
    
\end{theorem}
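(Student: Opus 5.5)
The natural route is the classical one: discretize $[0,t]$, expand $f$ by Taylor's theorem on each subinterval, and identify the limits of the three resulting sums. Fix a sequence of partitions $0=t_0<t_1<\dots<t_N=t$ with mesh $\max_k (t_{k+1}-t_k)\to 0$. Write $\Delta W_k := W_{t_{k+1}}-W_{t_k}$ and telescope:
\begin{equation*}
f(W_t)-f(0)=\sum_{k=0}^{N-1}\left[f(W_{t_{k+1}})-f(W_{t_k})\right].
\end{equation*}
Taylor's theorem with Lagrange remainder gives $f(W_{t_{k+1}})-f(W_{t_k}) = f'(W_{t_k})\Delta W_k + \frac12 f''(\xi_k)(\Delta W_k)^2$, where $\xi_k$ lies between $W_{t_k}$ and $W_{t_{k+1}}$. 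The proof then reduces to treating the first-order sum and the second-order sum separately.

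The first-order sum $\sum_k f'(W_{t_k})\Delta W_k$ is a Riemann--Itô sum with the integrand evaluated at left endpoints. By the construction of the stochastic integral recalled above (simple processes extended to locally square-integrable ones), it converges in probability to $\int_0^t f'(W_s)\dl{W_s}$. To make this clean, I would first reduce to $f$ with $f'$ and $f''$ bounded. This is done by localization: stop at $\tau_R:=\inf\{s:|W_s|\ge R\}$ and replace $f$ by a $C^2$ function agreeing with it on $[-R,R]$. Since $W$ is continuous (property (2) of \cref{def:brownian_motion}), $\tau_R\to\infty$ almost surely, so the identity for bounded derivatives transfers to general $f$. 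With bounded $f'$, the $L^2$ isometry gives convergence of the sum to the integral.

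The second-order sum carries the real content and is the main obstacle, because the correction $\frac12\int_0^t f''(W_s)\dl{s}$ comes from the fact that the quadratic variation of $W$ is $t$. I would split the sum as
\begin{equation*}
\sum_k f''(\xi_k)(\Delta W_k)^2=\sum_k f''(W_{t_k})(t_{k+1}-t_k)+\sum_k f''(W_{t_k})\left[(\Delta W_k)^2-(t_{k+1}-t_k)\right]+\sum_k\left[f''(\xi_k)-f''(W_{t_k})\right](\Delta W_k)^2.
\end{equation*}
The first piece is an ordinary Riemann sum and converges to $\int_0^t f''(W_s)\dl{s}$ by continuity of the paths. The second piece is handled in $L^2$. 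By properties (3)--(4), the terms $(\Delta W_k)^2-(t_{k+1}-t_k)$ have mean zero, are independent of the past, and have variance $2(t_{k+1}-t_k)^2$. The cross terms therefore vanish, and the second moment is bounded by $2\|f''\|_\infty^2\sum_k(t_{k+1}-t_k)^2\le 2\|f''\|_\infty^2\, t\cdot\text{mesh}\to 0$. For the third piece, bound it by $\max_k|f''(\xi_k)-f''(W_{t_k})|\cdot\sum_k(\Delta W_k)^2$. The maximum tends to zero almost surely, by uniform continuity of $f''$ on the compact range of the path together with uniform continuity of the path itself. The sum $\sum_k(\Delta W_k)^2$ converges to $t$ in $L^2$ by the same variance computation, so the product tends to zero in probability.

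Combining the three limits along a subsequence converging almost surely yields \cref{eq:ito_formula_BM} for each fixed $t$ almost surely. Since both sides are continuous in $t$, the identity holds simultaneously for all $t$. The quadratic-variation computation in the second piece is where the extra drift term is generated, and I expect it to require the most care.
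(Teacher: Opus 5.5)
Your proof is correct; it is the standard partition and quadratic-variation argument. The paper does not actually prove this theorem. It defers to the classical construction of the stochastic integral and offers only the formal heuristic that the formula is a second-order Taylor expansion in which $(\mathrm{d}W_t)^2$ is replaced by $\mathrm{d}t$. These are the same multiplication rules it later uses to ``prove'' the Itô formula for Itô processes.

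Your argument is the rigorous counterpart of that heuristic:
\begin{itemize}
\item the telescoped Lagrange expansion is the discrete form of the differential Taylor expansion;
\item your second piece, with variance $2(t_{k+1}-t_k)^2$ and vanishing cross terms, is exactly what justifies $(\mathrm{d}W_t)^2=\mathrm{d}t$, even when weighted by the adapted factor $f''(W_{t_k})$;
\item your third piece controls the remainder that the formal calculus simply drops.
\end{itemize}
The formal route is fast and extends mechanically to Itô processes and to several dimensions. However, it never gives $(\mathrm{d}W_t)^2$ a meaning and says nothing about the sense in which the identity holds. Your route yields an almost-sure identity, simultaneously for all $t$, at the cost of a localization step and a few structural facts about the stochastic integral.

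Those facts should be stated explicitly rather than left implicit; none of them is a gap.
\begin{itemize}
\item The cancellation of cross terms needs $\Delta W_k$ to be independent of the whole past $\sigma(W_u : u\le t_k)$. This is stronger than property (3) of the paper's simplified definition read literally, which gives independence from $W(s)$ alone. Your phrase ``independent of the past'' is right, but it comes from the full definition of Brownian motion.
\item The transfer from bounded derivatives to general $f$ uses the stopping property $\int_0^{t\wedge\tau_R} H_s\,\mathrm{d}W_s=\int_0^t H_s\mathbf{1}_{\{s\le\tau_R\}}\,\mathrm{d}W_s$, together with $f_R'(W_s)=f'(W_s)$ for $s\le\tau_R$. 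This is also what gives meaning to $\int_0^t f'(W_s)\,\mathrm{d}W_s$ when $f'(W)$ is only locally square-integrable, e.g.\ $f'(x)=e^{x^4}$.
\item In the bounded case, $L^2$ convergence of the first-order sum needs dominated convergence in addition to the isometry.
\item Passing from fixed $t$ to all $t$ uses the continuous version of $t\mapsto\int_0^t f'(W_s)\,\mathrm{d}W_s$ and a countable dense set of times.
\end{itemize}
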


A first observation is that, in the special case where $f$ is the identity function $f(x)=x$, we obtain  

$$\int_0^t \dl{W_s} = W_t - W_0,$$

since the last integral vanishes. Moreover, the expected value of this integral is $0$, which follows from the properties of \ac{BM}. Less intuitively, under fairly general conditions on $f$, it also holds that  

$$\expected{\int_0^t f(W_s) \dl{W_s}}=0.$$  

Informally, this result follows from interpreting the integral as a weighted sum of many small increments $\dl{W_s}$, each of which has mean zero due to the properties of \ac{BM}.  

The rightmost integral is the most interesting feature of the relation. Without it, the formula would be entirely equivalent to the fundamental theorem of calculus. In fact, it represents the fundamental difference between stochastic calculus and classical calculus.  

Rewriting \cref{eq:ito_formula_BM} in differential form, we obtain  
$$\dl{f(W_t)} = f'(W_t)\dl{W_t} + \frac{1}{2}f''(W_t)\dl{t}.$$  
We can observe a strong similarity with the second-order Taylor expansion of $f$ in its differential form, which is written as  
$$\dl{f(W_t)} = f'(W_t)\dl{W_t} + \frac{1}{2}f''(W_t)\dl{W_t^2}.$$  
The formulas would be entirely identical if the identity $\dl{W_t^2}=\dl{t}$ held. In fact, it turns out that, symbolically, this identity incorporates a rule consistent with Itô's formula, allowing us to reach the same conclusions. Specifically, using the relations  
\begin{equation}\label{eq:ito_formula_differential}
\dl{t^2}=0, \qquad \dl{t}\dl{W_t}=0, \qquad \dl{W_t^2}=\dl{t}    
\end{equation}  
we can formulate the extended version of Itô's formula for any process defined as in \cref{eq:ito-process}.  

\begin{theorem}[Itô formula for Itô Processes]\label{thm:ito_formula_for_ito-processes}  
Let $f:\R^2 \to \R$ be a $C^2$ function and $X_t$ an Itô process as in \cref{eq:ito-process}, defined by the \ac{SDE}  
$$dX_t = \mu(X_t) \dd t + \sigma(X_t)\dd W_t,$$  
then the following relation holds:  
\begin{equation}\label{eq:ito_formula_ito_process}  
    \dd f(X_t,t) = \left[ \difcp{f}{t} + \difcp{f}{x} \mu + \frac{1}{2}\difcp[2]{f}{x}\sigma^2 \right] \dl{t} + \difcp{f}{x} \sigma \dl{W_t}.  
\end{equation}  
\end{theorem}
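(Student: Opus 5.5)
The plan is to derive the general formula by a second-order Taylor expansion of $f$ in both arguments, combined with the symbolic multiplication rules of \cref{eq:ito_formula_differential}, and then justify the heuristic by the same discretization argument that underlies \cref{thm:ito_formula_BM}. Concretely, fix a partition $0=t_0<t_1<\dots<t_N=t$ with mesh tending to zero and write the telescoping sum
\[
f(X_t,t)-f(X_0,0)=\sum_{k}\bigl[f(X_{t_{k+1}},t_{k+1})-f(X_{t_k},t_k)\bigr].
\]
Expanding each summand by Taylor's theorem to second order (with remainder) around $(X_{t_k},t_k)$ gives terms in $\Delta t_k$, $\Delta X_k$, $(\Delta t_k)^2$, $\Delta t_k\Delta X_k$, $(\Delta X_k)^2$, and a remainder.

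Next, substitute $\Delta X_k\approx \mu(X_{t_k})\Delta t_k+\sigma(X_{t_k})\Delta W_k$, which follows from \cref{eq:ito-process}, and treat the terms order by order. The first-order terms converge to $\int_0^t(\partial_t f+\partial_x f\,\mu)\,\dd s+\int_0^t\partial_x f\,\sigma\,\dd W_s$, by Riemann sums and by the definition of the stochastic integral via simple integrands. Among the second-order terms, $(\Delta t_k)^2$ and $\Delta t_k\Delta W_k$ sum to zero in $L^2$, since $\sum|\Delta t_k|^{2}\to0$ and $\expected{(\Delta W_k)^2}=\Delta t_k$ makes the cross term $O(\text{mesh}^{1/2})$. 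This is exactly the content of $\dl{t^2}=0$ and $\dl{t}\dl{W_t}=0$. The remaining term is $\tfrac12\sum\partial_{xx}f\,\sigma^2(\Delta W_k)^2$.

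The main obstacle is showing that $\sum_k g_k(\Delta W_k)^2\to\int_0^t g_s\,\dd s$ in $L^2$, with $g_k=\partial_{xx}f(X_{t_k},t_k)\sigma^2(X_{t_k})$. This is the rigorous form of $\dl{W_t^2}=\dl{t}$. I will write $(\Delta W_k)^2=\Delta t_k+\bigl((\Delta W_k)^2-\Delta t_k\bigr)$. The corrections have mean zero, are independent of $g_k$ (by property (3) of \cref{def:brownian_motion}), are mutually uncorrelated, and have variance $2(\Delta t_k)^2$. As a result, the error has second moment $2\sum\expected{g_k^2}(\Delta t_k)^2\to0$.

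To make this work, first localize with a stopping time so that $X$ stays bounded, which makes $g_k$ and the derivatives of $f$ bounded. The Taylor remainder is then $o(|\Delta X_k|^2+|\Delta t_k|)$ uniformly, and it vanishes using $\sum(\Delta X_k)^2=O_P(1)$. Finally, removing the localization and passing to the differential form yields \cref{eq:ito_formula_ito_process}.
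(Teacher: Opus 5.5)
Your argument has the same skeleton as the paper's proof: a second-order Taylor expansion of $f$ in both variables, substitution of $\mu\,dt+\sigma\,dW_t$ for $dX_t$, and the multiplication table $dt^2=0$, $dt\,dW_t=0$, $dW_t^2=dt$. The difference is that the paper stays entirely at the formal level. It expands $df$ symbolically and applies the table, which it only motivates by matching the Brownian It\^o formula (itself stated without proof) against Taylor's formula. You instead turn each entry of the table into a limit theorem on a partition:
\begin{itemize}
\item the $(\Delta t_k)^2$ and $\Delta t_k\Delta W_k$ sums vanish;
\item the quadratic-variation lemma $\sum_k g_k(\Delta W_k)^2\to\int_0^t g_s\,ds$ in $L^2$ replaces $dW_t^2=dt$;
\item localization and uniform control of the Taylor remainder close the argument.
\end{itemize}
The paper's route buys brevity and shows at a glance why $\tfrac12\partial_{xx}f\,\sigma^2$ is the only surviving second-order term. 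Yours buys an actual proof; with $\mu=0$, $\sigma=1$ it also proves the Brownian It\^o formula the paper leaves unproved. The price is hypotheses the paper never states: continuity (local boundedness) of $\mu,\sigma$, and $X_0$ independent of $W$.

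Two points to tighten.

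First, property (3) of the paper's definition of Brownian motion only gives independence of $W(t)-W(s)$ from $W(s)$. Your variance computation needs $\xi_k=(\Delta W_k)^2-\Delta t_k$ to be independent of the whole past $\sigma(X_0,W_u:u\le t_k)$, because $g_k$ depends on $X_{t_k}$. It is this filtration-level independence, not mere uncorrelatedness of the $\xi_k$, that kills the cross terms $\mathbb{E}[g_j\xi_j g_k\xi_k]$ for $j<k$.

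Second, the replacement $\Delta X_k\approx\mu(X_{t_k})\Delta t_k+\sigma(X_{t_k})\Delta W_k$ must be justified both in the first-order sum and inside $(\Delta X_k)^2$. The It\^o isometry, together with continuity of $\mu$, $\sigma$ and of the paths of $X$ under localization, does this. Likewise, the remainder estimate uses $\max_k|\Delta X_k|\to0$, not only $\sum_k(\Delta X_k)^2=O_P(1)$.
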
  
\begin{proof}  
    Consider the (differential) second-order Taylor expansion of $f$  
    $$\dd f(X_t,t) = \difcp{f}{x}\dl{X_t} + \difcp{f}{t} \dl{t} + \frac{1}{2}\left[ \difcp[2]{f}{x} \dl[2]{X_t} + 2 \difcp{f}{x,t} \dl{X_t}\dl{t} + \difcp[2]{f}{t} \dl[2]{t}\right].$$  
    Substituting $\dl{X_t}$ with $\mu\dl{t}+\sigma \dl{W_t}$ we get  
    \begin{equation*}  
    \begin{split}  
        \dd f(X_t,t) = &\difcp{f}{x}\mu \dl{t} + \difcp{f}{x}\sigma \dl{W_t} + \difcp{f}{t} \dl{t} + \\  
        & \frac{1}{2}\left[ \difcp[2]{f}{x} (\mu\dl{t}+\sigma\dl{W_t})^2 + 2 \difcp{f}{x,t} (\mu \dl{t} + \sigma \dl{W_t})\dl{t} + \difcp[2]{f}{t} \dl[2]{t}\right].  
    \end{split}  
    \end{equation*}  
    Applying \cref{eq:ito_formula_differential} to the expressions in square brackets, the only nonzero contribution is $$\frac{1}{2} \difcp[2]{f}{x}\sigma^2 \dl[2]{W_t} = \frac{1}{2} \difcp[2]{f}{x}\sigma^2 \dl{t}.$$  
    Grouping all the remaining terms, we obtain \cref{eq:ito_formula_ito_process}.  
\end{proof}  

In practice, Itô’s formula allows us to access relevant observables of the system under study, represented as an \ac{SP}. For example, by considering $f(x,t)=x$ and $f(x,t)=x^2$, we can estimate the mean and variance of $X_t$, respectively, and analyze their evolution over time. More generally, it enables us to solve fairly general \acp{SDE}, often finding explicit closed-form solutions. By extending classical calculus to the stochastic setting, Itô’s formula serves as a cornerstone of modern applied probability, providing powerful tools for both theoretical analysis and practical applications.  

\subsection{Fokker Plack Equation}

The study of a \ac{SP} through its \ac{SDE} provides very detailed information about its evolution, potentially describing the behavior of each individual trajectory. However, we are often interested in a more macroscopic description of the system—a coarser but simpler resolution. To this end, we can focus only on the probability density of $X_t$, which we denote by $\prob(x,t)$. 

If our goal, for example, is to estimate the evolution of the mean and variance of $X_t$ over time, this object is as good as the original \ac{SP}. Remarkably, the Fokker–Planck equation (also called the Forward Kolmogorov equation) allows us to translate an \ac{SDE} into an associated Partial Differential Equation, which models the evolution of $\prob(x,t)$.

\begin{theorem}[Fokker Planck Equation]
Let $X_t$ be a \ac{SP} modeled by the \ac{SDE} as in \cref{eq:stochastic_differential_equation}, and let $\prob(x,t)$ be its probability density. The following Partial Differential Equation holds:
\begin{equation}\label{eq:fokker_planck}
\difcp{\prob(x,t)}{t} = - \difcp{[\mu(x)\prob(x,t)]}{x} + \frac{1}{2} \difcp[2]{[\sigma^2(x)\prob(x,t)]}{x}.
\end{equation}
\end{theorem}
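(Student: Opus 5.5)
The plan is the standard weak-formulation argument via a test function, using the Itô formula for Itô processes (\cref{thm:ito_formula_for_ito-processes}) as the only stochastic ingredient. Fix a smooth, compactly supported test function $\varphi:\R\to\R$ (so $\varphi\in C^2$ and it vanishes outside a bounded set). Applying \cref{eq:ito_formula_ito_process} with $f(x,t)=\varphi(x)$, which has no explicit time dependence, gives
\[
\dd \varphi(X_t) = \Big[\varphi'(X_t)\mu(X_t) + \tfrac12\varphi''(X_t)\sigma^2(X_t)\Big]\dl{t} + \varphi'(X_t)\sigma(X_t)\dl{W_t}.
\]
Integrating from $0$ to $t$ and taking expectations removes the stochastic integral. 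This uses the observation made after \cref{thm:ito_formula_BM} that $\expected{\int_0^t g\,\dl{W_s}}=0$. Here the integrand $\varphi'(X_s)\sigma(X_s)$ is bounded whenever $\sigma$ is continuous, since $\varphi'$ has compact support. Differentiating in $t$ then yields
\[
\diff{}{t}\expected{\varphi(X_t)} = \expected{\varphi'(X_t)\mu(X_t) + \tfrac12\varphi''(X_t)\sigma^2(X_t)}.
\]

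Next, rewrite both sides in terms of the density $\prob(x,t)$. The left side is $\diff{}{t}\int\varphi(x)\prob(x,t)\dl{x}$. Assuming enough regularity to differentiate under the integral sign, this equals $\int\varphi\,\difcp{\prob}{t}\dl{x}$. The right side is $\int\big[\varphi'\mu\prob+\tfrac12\varphi''\sigma^2\prob\big]\dl{x}$. Integrate by parts once in the drift term and twice in the diffusion term. The boundary terms vanish because $\varphi$ has compact support. The result is
\[
\int\varphi(x)\Big[-\difcp{(\mu\prob)}{x}+\tfrac12\difcp[2]{(\sigma^2\prob)}{x}\Big]\dl{x}.
\]

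Equating the two sides gives $\int\varphi\,[\difcp{\prob}{t}-\mathcal{L}^*\prob]\dl{x}=0$ for every test function $\varphi$, where $\mathcal{L}^*$ is the bracketed operator. By the fundamental lemma of the calculus of variations, the integrand is zero, which is exactly \cref{eq:fokker_planck}.

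The main obstacle is justifying the regularity assumptions rather than the algebra. The argument needs a density to exist and to be $C^{2,1}$, with $\mu\prob$ and $\sigma^2\prob$ differentiable in $x$. It also needs differentiation under the integral sign and a vanishing expected Itô integral, which is a true-martingale property. Since the thesis deliberately avoids filtrations, I would state these as standing assumptions: $\mu,\sigma$ smooth with at most linear growth, which gives existence and uniqueness of the SDE solution, and the density smooth. Without these, one reads \cref{eq:fokker_planck} in the weak (distributional) sense, and the test-function identity above already proves that version unconditionally.
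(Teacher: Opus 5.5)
Your proposal is correct and follows essentially the same route as the paper. Both apply It\^o's formula to a time-independent test function, take expectations so the $\mathrm{d}W_t$ term drops, rewrite both sides as integrals against $\prob(x,t)$, integrate by parts once in the drift and twice in the diffusion term, and conclude from the arbitrariness of the test function.

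Your restriction to compactly supported $\varphi$, together with your explicit regularity and true-martingale assumptions, makes rigorous two steps the paper only asserts. These are the vanishing of the boundary terms and the vanishing of the expected stochastic integral.
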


\begin{proof}
Let us consider an arbitrary function $f(X_t)$. By applying \cref{eq:ito_formula_ito_process}, we obtain\footnote{Note that $f$ depends only on $X_t$ in this case, so the partial derivative with respect to $t$ is $0$.}
\begin{equation*}
\dd f = \left[ \difcp{f}{x} \mu + \frac{1}{2}\difcp[2]{f}{x}\sigma^2 \right] \dl{t} + \difcp{f}{x} \sigma \dl{W_t}.  
\end{equation*}
Taking the expectiation and differentiating w.r.t to $t$  we get
\begin{equation*}
\diff**t{\expected{f}} = \expected{\difcp{f}{x} \mu  + \frac{1}{2}\difcp[2]{f}{x}\sigma^2}.
\end{equation*}
Note that the term with $\dl{W_t}$ disappears, since the expectiation of $\dl{W_t}$ is zero. Rewriting expectation as an integral over $\prob(x,t)$, we get
\begin{equation*}
    \diff**t{\int_{-\infty}^{\infty}f(x)\prob(x,t)\dl{x}} = \int_{-\infty}^{\infty} \left[ \mu(x)\difcp{f(x)}{x}+ \frac{1}{2}\sigma^2(x)\difcp[2]{f(x)}{x} \right] \prob(x,t) \dl{x}.
\end{equation*}
Consider the drift part of the integral on the right part. Integrating by parts, we obtain
\begin{equation*}
    \int_{-\infty}^{\infty}\mu(x)\difcp{f(x)}{x}\prob(x,t) \dl{x} = \underbrace{\mu(x)f(x)p(x,t)\big|^\infty_{-\infty}}_{=0} - \int_{-\infty}^{\infty} f(x) \difcp{[\mu(x)\prob(x,t)]}{x} \dl{x}.
\end{equation*}
Applying the same procedure for the diffusion term, intergating by parts two times, we obtain in the end
\begin{equation*}
    \int_{-\infty}^{\infty} f(x) \left[ \difcp{\prob(x,t)}{t} +\difcp{[\mu(x)\prob(x,t)]}{x}-\frac{1}{2}\difcp[2]{[\sigma^2(x)\prob(x,t)]}{x}\right]\dl{x} = 0
\end{equation*}
Since $f(x)$ is arbitrary, the term in square brackets must be identically zero.  
\end{proof}

\subsection{Multidimensional Stochastic Calculus}

In this section, we extend the definitions and results from the previous sections to the multidimensional setting.

\begin{definition}[$d$-dimensional \ac{BM}]\label{def:d-dim_BM}
    A $d$-dimensional \ac{BM}, denoted by $\W$, is a $d$-dimensional vector of independent \acp{BM}. Specifically, it satisfies
    $$\W_t - \W_s \sim \normal(0, (t-s)\I_d), \quad \text{for all} \ t>s>0,$$
    where $\I_d$ is the $d \times d$ identity matrix.
\end{definition}

\begin{definition}[$n$-dimensional \ac{SDE}]\label{def:d-dim_SDE}
An $n$-dimensional \ac{SDE} is a system of \acp{SDE} of the form
\begin{equation}\label{eq:d-dim-SDE}
    \dl{\X_t} = \bm{\mu}(\X_t) \dl{t} + \bm{\sigma}(\X_t) \dl{\W_t},
\end{equation}
where:
\begin{enumerate}
    \item $\X_t$ is an $n$-dimensional \ac{SP}.
    \item $\bm{\mu}: \R^n \to \R^n$ is a vector-valued function, referred to as the \textit{drift vector}.
    \item $\bm{\sigma}: \R^n \to \R^{n \times d}$ is a matrix-valued function (\textit{diffusion matrix}).
    \item $\W_t$ is a $d$-dimensional \ac{BM}.
\end{enumerate}
\end{definition}

\begin{definition}\label{ito_formal_rules_d_dimensional}
The formal rules for computing products of differentials (\cref{eq:ito_formula_differential}) extend to the multidimensional case as follows:
$$\dl{t^2}=0, \qquad \dl{t}\dl{W_t^i}=0, \qquad \dl{W_t^i}\dl{W_t^j}=\delta_{ij}\dl{t},$$
where $\delta_{ij}$ denotes the Kronecker delta. These rules arise from the fact that the components of $\W_t$ are mutually independent.
\end{definition}

\begin{definition}[$n$-dimensional Itô formula for Itô processes]\label{def:ito_formula-n-dimension}
    Let $\X_t$ be a \ac{SP} governed by an $n$-dimensional \ac{SDE}. The Itô formula (\cref{thm:ito_formula_for_ito-processes}) extends to
    \begin{equation}\label{eq:ito-formula-n-dimensional}
        \dd {f(\X_t,t)} = \left[ \difcp{f}{t} + \sum_{i=1}^n\mu^i\difcp{f}{x_i} + \frac{1}{2}\sum_{i,j=1}^n \Sigma^{ij}\difcp{f}{x_i x_j} \right] \dl{t} + \sum_{i=1}^n \sum_{j=1}^d \sigma^{ij}\difcp{f}{x_i} \dl{W_t^j}, 
    \end{equation}
    where $\bm{\Sigma} := \bm{\sigma}\bm{\sigma}^T$. 
\end{definition}

\begin{definition}[$n$-dimensional Fokker–Planck equation]
Consider an $n$-dimensional \ac{SDE} with drift vector $\bm{\mu}(x)$ and diffusion matrix $\bm{\sigma}(x)$. Assume that $\bm{\Sigma}$ is positive definite \ie, $\bm{\Sigma} \succ \rho \bm{I}$ for some $\rho>0$. The Fokker–Planck equation (\cref{eq:fokker_planck}) generalizes to
\begin{equation}\label{eq:fokker_planck_n_dimensional}
    \difcp{\prob(\x,t)}{t} = - \sum_{i=1}^n \difcp{[\mu^i(\x)\prob(\x,t)]}{x_i} + \frac{1}{2} \sum_{i,j=1}^n\difcp{[\Sigma^{ij}(\x)\prob(\x,t)]}{x_i x_j}.
\end{equation}
Using the divergence operator notation, $\nabla \cdot a := \sum_{i=1}^n \difcp{a}{x_i}$, the equation can be written in a more compact form as
$$\difcp{\prob}{t} = - \nabla \cdot (\bm{\mu}\prob) + \frac{1}{2} \nabla \cdot (\nabla \cdot(\bm{\Sigma}\prob)).$$
\end{definition}

\begin{example}[Constant, isotropic diffusion]
    Consider the $d$-dimensional \ac{SDE}
    $$ \dl{\X_t} = \bm{\mu}(\X_t) \dl{t} + \sigma \I_d \dl{\W_t}.$$
    Here, each component of $\X_t$ is influenced by an independent one-dimensional \ac{BM}, meaning that diffusion occurs uniformly in all directions. 
    In this case, we have $\bm{\Sigma} = \sigma^2 \I_d$, and the divergence term simplifies to $\nabla \cdot(\bm{\Sigma}p) = \sigma^2 \nabla p$. Consequently, \cref{eq:fokker_planck_n_dimensional} reduces to
    \begin{equation}\label{eq:fokker_planck_isotropic}
        \difcp{\prob}{t} = - \nabla \cdot (\bm{\mu}\prob) + \frac{\sigma^2}{2} \Delta \prob,
    \end{equation}
    where $\Delta a := \nabla \cdot \nabla a = \sum_{i=1}^d \difcp[2]{a}{x_i}$ is the Laplacian operator. 

    This equation contains two important special cases:
    \begin{itemize}
        \item Setting $\bm{\mu} = 0$ yields $\difcp{\prob}{t} = \frac{\sigma^2}{2} \Delta \prob$, which corresponds to the well-known \textit{heat equation} with diffusive constant $\sigma^2/2$.
        \item Setting $\sigma = 0$ results in $\difcp{\prob}{t} = - \nabla \cdot (\bm{\mu}\prob)$, which is the \textit{continuity equation} with vector field $\bm{\mu}$.
    \end{itemize}  
\end{example}

\section{Gradient Flow}\label{sec:gradient_flow}
In \cref{sec:causal-perspective}, we discussed how to interpret a discrete-time dynamical system through a causal lens, providing a meaning for causal interventions on \acp{SdE} and demonstrating the existence of a link between these and \acp{SCM}. However, the existence of an explicit mapping between the two frameworks has only been established for linear systems (\cref{thm:var_to_scm}). The goal of this section is to lay the foundation for extending this result to nonlinear and/or continuous-time systems. 

A key assumption in our approach was the existence of an invariant distribution for the \ac{DSP} under consideration. Indeed, invariant distributions in both the observational and interventional dynamical systems served as the bridge between the temporal framework (\ac{SdE}) and the static framework (\ac{SCM}). 

This leads to the following question: What conditions must the drift vector $\bm{\mu}(\x)$ and the diffusion matrix $\bm{\sigma}(\x)$ satisfy for the associated \ac{SDE} to admit an invariant distribution? In general, this problem is challenging to address \citep{berglund2021long, locherbach2015ergodicity, hairer2010convergence}. However, the solution is particularly elegant and well-structured for an important special case \textit{Stochastic Gradient Flow}, also known as the \textit{Overdamped Langevin Equation},  while remaining sufficiently general for the purposes of our study.

\subsection{Stochastic Gradient Flow}
Consider the $d$-dimensional \ac{SDE}\footnote{Compared to \cref{eq:fokker_planck_isotropic}, we apply a slight transformation to the diffusion coefficient to simplify the equations that will follow. Specifically, we leave the identity matrix implicit and replace the term $\sigma$ with $\sqrt{2\sigma}$.}  
\begin{equation}\label{eq:langevin_equation}
    \dl{\X_t} = - \nabla V(\X_t) \dl{t} + \sqrt{2}\sigma \dl{\W_t}.
\end{equation}  
This equation has two important features. First, the diffusion matrix is constant and isotropic. Second, the drift vector corresponds to the (negative) gradient of a function $V$, which we refer to as the \textit{potential function}.

\begin{remark*}  
    Under the condition  
    $$\mathcal{Z} := \int_{\R^d} e^{\frac{-V(\x)}{\sigma^2}} \dl{\x} < \infty,$$  
    meaning that \( V(\x) \) grows faster than logarithmically as \( |\x| \to \infty \), the stochastic differential equation in \cref{eq:langevin_equation} has a well-defined invariant distribution, known as the \textit{Gibbs distribution}:  
\begin{equation}\label{eq:langevin_invariant_distribution}  
        \prob_\infty(\x) = \frac{1}{\mathcal{Z}}e^{\frac{-V(\x)}{\sigma^2}}.  
    \end{equation}  
\end{remark*} 

\begin{proof}
    The Fokker-Planck equation for \cref{eq:langevin_equation} is given by  
    $$ \difcp{\prob}{t} =  \nabla \cdot (\nabla V \prob) + \nabla \cdot ( \sigma^2\nabla\prob).$$  
    To prove that \cref{eq:langevin_invariant_distribution} is indeed an invariant distribution, we need to verify that \( \difcp{\prob_\infty}{t} = 0 \), which leads to the condition  
    $$ \nabla \cdot (\nabla V \prob_\infty + \sigma^2 \nabla \prob_\infty) = 0.$$  
    Computing explicitly, we find  
    $$ \nabla \prob_\infty = - \frac{1}{\mathcal{Z} \sigma^2} \nabla V e^{\frac{-V(\x)}{\sigma^2}} = - \frac{\nabla V \prob_\infty}{\sigma^2}.$$  
    Substituting this into the equation, we obtain  
    $$\nabla \cdot (\nabla V \prob_\infty -\frac{\sigma^2}{\sigma^2} \nabla V \prob_\infty) = \nabla \cdot 0 = 0.$$  
\end{proof}

We are now ready to introduce the concept of causal intervention on a \ac{SDE}. The key idea behind our approach is that $ V(\x)$ fully determines the equilibrium distribution of the dynamical system. \textit{Any intervention modifies its properties, leading to a distributional shift, which manifests through a new Gibbs distribution once the new equilibrium is reached}.
\section{A Causal Perspective on SDEs}\label{sec:causal_sdes}
The concept of a causal system varies significantly across scientific theories. For example, in the context of dynamical systems, particularly in control theory, a system is considered causal if its governing function depends only on its past or present states, never on future states. This is why such systems are also referred to as non-anticipative or physical systems. This interpretation is so broad that it is difficult to conceive of a system that could be meaningfully modeled in a non-causal manner. As a result, rather than serving as an epistemological principle, causality in this sense becomes a methodological prerequisite for describing the natural constraints of evolutionary models.

In contrast, Pearl’s theory of causality defines causal relationships as inherently asymmetric, making a clear distinction between interaction-based phenomena and those governed by causation. The assumption of acyclicity in causal graphs is not merely a mathematical convenience but a fundamental principle of how causal relationships are conceptualized. In this framework, the system's variables are arranged within a rigid hierarchical structure in which root causes influence their descendants, propagating information along chains of cause and effect. 

In this work, we take an intermediate stance, adopting a classical definition that is compatible with both perspectives. We define a variable as causal with respect to another if intervening on it results in a corresponding change in the second variable. The key aspect of this definition is that the intervention is conceived as external to the system under study, originating from an external agent's deliberate choice. In other words, causality is established through experimentation. This methodological approach has already been considered in the past, for example in \cite{sokol2014causal},\cite{mogensen2022graphical},\cite{commenges2009general}.

However, even this perspective has limitations, as the boundary between it and the weaker notion of causality in control theory is not always well defined. Consider, for instance, a gas composed of a vast number of interacting particles. Changing the state of a single particle would, in principle, influence the others. Yet, in high-dimensional systems, this view of causality becomes less meaningful. Throughout the following sections, we will explore this issue from various angles, gradually developing a causal perspective. 

One key aspect of our discussion will be the following: The \emph{causalization} of a system can be understood as a form of coarse-grained representation. This representation preserves essential system behavior while omitting unnecessary details, offering a form of representational compression that aligns with an intelligence \emph{embedded} in the environment. This perspective accounts for an agent’s ability to intervene and generalize when facing distribution shifts. In fact, the causal graph can be regarded as the most compressed representation that remains consistent with the system’s probability distribution \citep{PetersJanzingSchoelkopf17}.

\subsection{Causal Interventions on \acp{SDE}}

Consider a $d$-dimensional \ac{SDE} as in \cref{eq:langevin_equation}, written as a system of equations
\begin{equation}
    \left\{ \dl{X^{i}_t} = \mu^i(\X) \dl{t} + \sqrt{2}\sigma\dl{W^i_t} \right\}_{i=1}^d  
\end{equation}
We define an intervention $\intervention$ on the \ac{SDE} at time $t_\intervention$ as a modification of one or more drift functions $\mu^i(\x)$ according to the rule:
\begin{equation}
    \mu^i_\intervention(\x) =\begin{cases}
    \mu^i(\x), & \text{if $t<t_\intervention$}.\\
    \tilde{\mu}^i(\x), & \text{if $t\geq t_\intervention$}.
  \end{cases}
\end{equation}

\begin{comment}
[Why Force?]
In the overdamped regime, viscous forces dominate over inertial effects. When this occurs, any motion rapidly stabilizes at a terminal velocity, effectively rendering acceleration negligible.  

As a simple model, consider an object moving through a highly viscous medium under the influence of a constant force $F$. Newton's second law yields the differential equation:  
$$m \diff[2]{x}{t} = F - b \diff{x}{t}$$
where $m$ is the mass of the object, and $b$ is the damping coefficient.  

The terminal velocity of the system \ie, the one that one eventually reaches applying the same force indefintely, is given by $F / b$, and the characteristic time scale for reaching this velocity is $\tau = m / b$. In the overdamped limit, where $m/b$ becomes very small, $\tau$ becomes negligible. This implies that the system rapidly equilibrates, reaching terminal velocity almost instantaneously, effectively eliminating acceleration from the dynamics.  

In this regime, the equation of motion simplifies to a quasi-static form where velocity is directly proportional to the applied force:  
$$F = b \diff{x}{t}$$
\section{Discretizing time: Gradient Descent}\label{sec:S_gradient_descent}
\end{comment}

In practice, we modify the mechanism that drives the evolution of $\Xcomp[i]$, replacing it with a new one starting from the intervention time $t_\intervention$. As in \cref{ch:Ci_over_time}, we focus on two types of intervention:
\begin{itemize}
    \item \textbf{Additive Intervention}: we modify the drift function by adding a constant, $\tilde{\mu}^i(\x) := \mu^i(\x) + F^i$
    \item \textbf{Forcing Intervention}: we include a term that pulls the variable toward a target value $\hat{x}^i$, $\tilde{\mu}^i(\x) := \mu^i(\x) + F^i(\hat{x}^i - \xcomp[i])$
\end{itemize}
Similarly, we denote the intervention with $\intervention$ and the new solution of the intervened \ac{SDE} with $\X^\intervention$.
As in the discrete-time case (\cref{def:ce_t_discrete}), we define the Causal Effect over time $\ce{}_t$ of $\intervention$ as

\begin{equation}\label{eq:causal_effet_time_continuous}
        \ce{}^\intervention_t := \expected{\X^\intervention_t - \X_t \mid \X_{t_\intervention}}.%\:|\: C},
\end{equation}

\section{Estimating the Long-Run Causal Effect}\label{sec:long_run_continuous}

If the dynamical system admits a stationary distribution both before and after the intervention, we may be interested in estimating the long-term effect of the intervention, that is,
$$
\ce{} = \lim_{t\to\infty} \ce{}_t = \mathbb{E}[\X^\intervention_\infty - \X_\infty].
$$

A brute-force approach consists in simulating the system for a sufficiently long time, until it reaches equilibrium. However, this method has two major limitations. First, we typically lack information about the convergence rate to the stationary distribution, which could, in principle, be very slow. Second, due in part to the first issue, this procedure may be computationally expensive.

In practice, if we have a way to estimate $\prob_\infty$ directly without relying on numerical simulation, that approach is almost certainly preferable.

Using the formulation in \cref{eq:langevin_equation}, we ask whether the drift vector can be written as the gradient of a potential function. In fact, writing the expectation explicitly as
$$
\mathbb{E}[\X_\infty] = \int_\Xspace \X \, \prob_\infty(\X) \, \dl{\X},
$$
makes the estimation of $\ce{}$ straightforward.

\paragraph{One-dimensional case}\label{sec:one-dimensional_langevin_equilibration}

Given the \ac{SDE}
\[
\dl{X_t} = \mu(X_t)\,\dl{t} + \sqrt{2}\sigma\,\dl{W_t},
\]
we can always express $\mu(x)$ as the (negative) derivative of another function by defining
\begin{equation}\label{eq:mu_to_potential}
    V(x) := - \int_0^x \mu(s)\,\dl{s}.
\end{equation}

As a result, if $V(x)$ grows sufficiently fast as $|x|\to \infty$, a stationary distribution always exists and is given by
$$
\prob_\infty(x) = \frac{1}{\mathcal{Z}} e^{\frac{-V(x)}{\sigma^2}}.
$$

In particular, under an additive intervention, the potential function $V(x)$ becomes
\begin{equation}\label{eq:potential_additive_intervention}
    V^\intervention(x) := - \int_0^x \left[\mu(s) + F\right]\,\dl{s} = V(x) - Fx,
\end{equation}
and the new stationary distribution $\prob^\intervention_\infty$ adjusts accordingly.

\begin{remark*}[Additive intervention is not always stable]
In the discrete-time setting (see \cref{sec:stability_additive_discrete}), additive interventions preserved system stability regardless of the magnitude of $F$. This is no longer guaranteed in the continuous-time case. Specifically, the existence of a stationary distribution requires that $V(x)$ grow faster than logarithmically as $|x| \to \infty$. Adding a linear term may violate this condition. For instance, if $V(x) = |x|$, an additive intervention with $F > 1$ not only fails to satisfy the condition, but also pushes the probability mass indefinitely toward larger values of $x$. A sufficient condition for an additive intervention to preserve stability is that $V(x)$ grows faster than linearly at the tails.
\end{remark*}

Similarly, the forcing intervention transforms $V(x)$ into
\begin{equation}\label{eq:potential_forcing_intervention}
    V^\intervention(x) := - \int_0^x \left[\mu(s) + F(\hat{x}-s)\right]\,ds = V(x) + \frac{F}{2}\left[(x-\hat{x})^2 - \hat{x}^2\right].
\end{equation}

Without loss of generality, and by ignoring the constant term $-F\hat{x}^2/2$, we can write
\begin{equation}\label{eq:pinf_forcing_continuos}
    \prob^\intervention_\infty(x) = \frac{1}{\mathcal{Z}} \exp\left({\frac{-2V(x)-F(x-\hat{x})^2}{2\sigma^2}}\right).
\end{equation}

Note that for large values of $F$, the rightmost term in the numerator becomes dominant, making the distribution increasingly similar to a Gaussian distribution with variance $\sigma^2 / F$. In particular, as $F \to \infty$, the distribution approaches the Dirac delta function $\delta(\hat{x})$, making forcing intervention equivalent to the operation $do(x = \hat{x})$.

\paragraph{n-dimensional case} Consider the \ac{SDE}
\begin{equation}\label{eq:n-dimensional_SDE}
    \dl{\X_t} = \bm{\mu}(\X_t) \dl{t} + \sqrt{2}\sigma \dl{\W_t}.
\end{equation}
Here, $\bm{\mu}$ is a \textit{vector field}, and we need to determine whether it admits a potential function $V$. For this to be true, assuming $V$ is sufficiently regular, the mixed partial derivatives of $V$ must be equal, that is,
\begin{equation*}
    \difcp{V}{\xcomp[i]\xcomp[j]} = \difcp{V}{\xcomp[j]\xcomp[i]}.
\end{equation*}
This condition is equivalent to requiring that $\bm{\mu}$ is an \textit{irrotational vector field}. Formally, this means that the following must hold:
\begin{equation}\label{eq:irrotational_field}
    \difcp{\mu^i}{\xcomp[j]} = \difcp{\mu^j}{\xcomp[i]}, \quad \forall i,j.
\end{equation}

This requirement is quite restrictive. To see this, consider the linear case
\begin{equation*}
    \dl{\X_t} = \A \X_t \dl{t} + \sqrt{2}\sigma \dl{\W_t}.
\end{equation*}
In this case, \cref{eq:irrotational_field} becomes the condition $A_{ij} = A_{ji}$; in other words, \textit{the evolution matrix must be symmetric}. While this condition is reasonable in physical systems, as a consequence of Newton’s third law\footnote{For every action (force) in nature, there is an equal and opposite reaction. If object A exerts a force on object B, then object B exerts an equal and opposite force on object A.}, it is incompatible with the framework of causality, which seeks to investigate asymmetries in interactions between objects. For this reason, a direct application of the Langevin equation is not satisfactory. We need to follow a different approach.

\paragraph{Acyclic n-dimensional case}\label{sec:acyclic_non_linear_continuos_equilibration}
Starting from \cref{eq:n-dimensional_SDE}, let us restrict ourselves to the case where the causal graph is acyclic. In other words, the Jacobian matrix of $\bm{\mu}$ is always lower triangular (not strict). In this setting, we can apply the results from \cref{sec:one-dimensional_langevin_equilibration} one variable at a time, following the causal hierarchy, \ie, moving from causes to effects. The system of equations then takes the form
\begin{equation}
    \begin{cases}
        \dl{\Xcomp[1]_t} = \mu^1(\Xcomp[1]_t) \dl{t} + \sqrt{2}\sigma\,dW^1_t \\
        \dl{\Xcomp[2]_t} = \mu^2(\Xcomp[1]_t,\Xcomp[2]_t) \dl{t} + \sqrt{2}\sigma\,dW^2_t\\
        \vdots \\
        \dl{\Xcomp[n]_t} = \mu^n(\Xcomp[1]_t,\Xcomp[2]_t,\dots, \Xcomp[n]_t) \dl{t} + \sqrt{2}\sigma\,\dl{W^n_t}.
    \end{cases}
\end{equation}
We proceed as follows:
\begin{enumerate}
    \item We begin by finding the distribution $\prob^1_\infty(\xcomp[1])$, using the fact that the first equation is a one-dimensional \ac{SDE}, as described in \cref{sec:one-dimensional_langevin_equilibration}.
    
    \item Extending to $\Xcomp[2]$, we study its evolution by fixing $\Xcomp[1] = \xcomp[1]$. In this way, the second equation also becomes a one-dimensional \ac{SDE}, and we obtain the limit distribution $\prob^2_{\infty[\xcomp[1]]}(\xcomp[2])$. We note that this is exactly the conditional distribution of $\Xcomp[2]$ given $\Xcomp[1] = \xcomp[1]$: $\prob^2_{\infty[\xcomp[1]]}(\xcomp[2]) \equiv \prob_\infty(\xcomp[2]|\xcomp[1])$.
\item Proceeding in the same way, for each $\Xcomp[k]$ we compute the conditional probability $\prob^k_{\infty[\xcomp[1],\dots,\xcomp[k-1]]}(\xcomp[k]) \equiv \prob_\infty(\xcomp[k]|\xcomp[1],\dots,\xcomp[k-1])$.
\item Once the conditional distribution for $\Xcomp[n]$ given all the previous variables is computed, we can reconstruct the entire joint distribution. Moreover, we have the canonical factorization (\cref{eq:canonical_factorization})
$$
		\prob_\infty(\x) = \prod_{i=1}^{n} \prob_\infty(\xcomp[i]|\xcomp[<i]).
$$
\item Knowing the form of each individual term in the factorization, we can explicitly express $\prob_\infty$ as
\begin{equation}\label{eq:p_inf_explicit_factorization}
    \prob_\infty(\x) = \frac{1}{\mathcal{Z}} \prod_{i=1}^n \exp\left(\frac{-V^i(\xcomp[1],\dots,\xcomp[i])}{\sigma^2}\right) = \frac{1}{\mathcal{Z}}  \exp\left(\frac{- \sum_{i=1}^nV^i(\xcomp[1],\dots,\xcomp[i])}{\sigma^2}\right),
\end{equation}
where $V^i(\xcomp[1],\dots,\xcomp[i]) = - \int_0^x \mu^i(\xcomp[1],\dots,s) \dl{s}$ is the potential associated with $\mu^i$ and $\mathcal{Z} = \prod_{i=1}^n \mathcal{Z}^i$ is the joint product of the partition functions at each step of the pipeline.
\end{enumerate}
We observe in point (5) that $\prob_\infty$ keeps its original structure, that is, it remains a Gibbs distribution, with $V(\x) = \sum_{i=1}^n V^i(\x)$.

\begin{remark*}
    The procedure followed is based on the \textit{principle of modularity of causal mechanisms}: an intervention on a variable modifies only the mechanism that determines it, leaving all others unchanged. Concerning the canonical factorization, this principle corresponds to the independence between conditional probabilities. When intervening on $\Xcomp[i]$, the only term that changes is $\prob_\infty(\xcomp[i]|\xcomp[<i])$. In this context, the acyclicity of the causal graph plays a fundamental role. Without this assumption, we would not be able to write $\prob_\infty$ as a product of independent components. As a result, estimating causal effects in the cyclic case becomes less straightforward.
\end{remark*}

\paragraph{Every intervention is identifiable}
We have seen that the interventions considered  add an extra term to the original potential, which in both cases is a function only of the variable $\Xcomp[i]$ that has been intervened on. Specifically, for the additive intervention (\cref{eq:potential_additive_intervention}) we obtain a linear term $F\xcomp[i]$, and for the forcing intervention (\cref{eq:potential_forcing_intervention}) a quadratic term $\tilde{F}(\xcomp[i]-\hat{x})^2$. 

It is useful to notice that such terms can only arise from modifying the corresponding function $\mu^i$, and not the others, assuming sufficient regularity of $\bm{\mu}$. In fact, by the integral definition of $V^i$, this function cannot contain terms that do not depend in some way on $\xcomp[i]$. 

To see this, assume that 
$$V^j(\xcomp[1],\dots,\xcomp[i],\dots,\xcomp[j]) = g(\xcomp[1],\dots,\xcomp[i],\dots,\xcomp[j]) + h(\xcomp[i]), \quad \text{with} \quad i \neq j.$$

From the definition of $V^j$ (\cref{eq:mu_to_potential}) as a definite integral from $0$ to $\xcomp[j]$, we have

$$-\mu^j = \difcp{V^j}{\xcomp[j]} =  \difcp{g}{\xcomp[j]}+ \underbrace{\difcp{h}{\xcomp[j]}}_{=0} = \difcp{g}{\xcomp[j]},$$

since $h$ does not depend on $\xcomp[j]$. Therefore, $h$ appears only as an integration constant in $V^j$. In particular, since $V^j$ is computed as a definite integral, the contribution of $h$ is cancelled out, as it does not depend on the integration limits. Therefore, every intervention considered leads to the addition of a term in the final potential that depends only on the variable that was modified. As a result, the presence of such a term uniquely identifies which intervention was applied to the system.

\section{Connection with Discrete-Time Models}\label{sec:connection_discrete_continuous}

In this section, we discuss the theoretical bridges between the discrete-time framework presented in \cref{ch:Ci_over_time} and the continuous-time framework explored in this chapter. Specifically, by making the connection between \acp{SdE} and \acp{SDE} explicit, the goal is to enable the transfer of results obtained in one framework to the other, and vice versa. As shown in \cref{tab:results_recap}, we aim to focus on two specific cases for applying this transfer: \textit{extending the continuous-time framework to linear systems with loops}, and \textit{extending the discrete-time framework to the nonlinear acyclic case}. At present, we are not yet able to handle nonlinear systems with loops in either framework.

\begin{table}[h]
\centering
\renewcommand{\arraystretch}{1.2}
\begin{tabular}{@{} lccc @{}}
\toprule
\textbf{Framework} & \textit{Linear Acyclic} & \textit{Linear with Loops} & \textit{Nonlinear Acyclic} \\
\midrule
\textit{Discrete Time}   & $\checkmark_{\ref{subsec:mapping_var_scm}}$ & $\checkmark_{\ref{subsec:mapping_var_scm}}$ & \colorbox{gray!10}{$\Uparrow$} \\
\textit{Continuous Time} & $\checkmark_{\ref{sec:acyclic_non_linear_continuos_equilibration}}$ & \colorbox{gray!10} {$\Downarrow$} & $\checkmark_{\ref{sec:acyclic_non_linear_continuos_equilibration}}$ \\
\bottomrule
\end{tabular}
\caption{\textbf{Solved Cases.} Summary of solved cases and directions of result transferability between discrete- and continuous-time frameworks. The arrows indicate results that are expected to transfer but remain to be formally established.}\label{tab:results_recap}
\end{table}

\subsection{From \acp{SDE} to \acp{VAR}}  
We begin by showing how a linear \ac{SDE} can be associated with a \ac{VAR}(1). Consider the linear \ac{SDE}
\begin{equation}\label{eq:general_linear_SDE}
    \dl{\X_t} = \A \X_t \dl{t} + \bm{D} \dl{\W_t},
\end{equation}
where $\A$ is the drift matrix and $\bm{D}$ is the diffusion matrix. The explicit solution is:
\begin{equation*}
    \X_t = e^{\A t} \X_0 + \int_0^t e^{\A(t-s)} \bm{D} \, \dl{\W_s}.
\end{equation*}

While the second term is a stochastic integral, its covariance can be written as a deterministic integral:
\begin{equation*}
    \Cov[\X_t] = \int_0^t e^{\A (t-s)} \bm{D} \bm{D}^\top e^{\A^\top (t-s)} \, \dl{s}.
\end{equation*}

Since the integrand is deterministic, $\X_t$ is a multivariate Gaussian process. Hence, it is fully characterized by its mean, \ie, $e^{\A t} \X_0$, and the covariance above. By evaluating the solution at discrete time steps $t = k\Delta t$, we obtain a \ac{VAR}(1) model of the form:
\begin{equation*}
    \X_k = e^{\A \Delta t} \X_{k-1} + \vectornoise_k, \quad \vectornoise_k\sim\normal(0,\cov[\vectornoise]),
\end{equation*}
where
\begin{equation}\label{eq:covariance_sde_to_var}
    \cov[\vectornoise] = \int_0^{\Delta t} e^{\A s} \bm{D} \bm{D}^\top e^{\A^\top s} \, \dl{s}.
\end{equation}

With this equivalence in hand, we can generalize \cref{thm:var_to_scm} to define a mapping between \acp{SDE} and \acp{SCM}. Formally, we define this mapping as the composition of the map from \ac{SDE} to \ac{VAR} and the map from \ac{VAR} to \ac{SCM}, as illustrated in \cref{fig:sde_to_scm_graph}.

\begin{figure}[h!]
\centering
\begin{tikzpicture}
    [scale=1.0,
     every node/.style={
        rectangle split,
        rectangle split parts=2,
        rectangle split horizontal=true,
        rounded corners=0.1cm,
        minimum height=1cm,
        align=center,
        scale=0.8
    }]

    % Nodes
    \node[draw] (sde) at (0,1.5) {\nodepart[text width=1.2cm]{one}\ac{SDE}\nodepart[text width=1.2cm]{two}$\X_t$};
    \node[draw] (var) at (6,1.5) {\nodepart[text width=1.2cm]{one}\ac{VAR}\nodepart[text width=1.2cm]{two}$\X_k$};
    \node[draw] (scm) at (12,1.5) {\nodepart[text width=1.2cm]{one}\ac{SCM}\nodepart[text width=1.2cm]{two}$\X$};

    % Arrows
    \draw[|->, shorten >=4pt, shorten <=4pt, color=MidnightBlue] (sde) -- (var) node[midway, above, xshift=6pt] {\small Time discretization};
    \draw[|->, shorten >=4pt, shorten <=4pt, color=MidnightBlue] (var) -- (scm) node[midway, above] {\small Equilibration};

    % Composed arrow
    \draw[|->, dashed, thick, shorten >=4pt, shorten <=4pt, bend left=30, color=BrickRed]
        (sde.north east) to node[above, yshift=4pt] {\small Composition: SDE $\Rightarrow$ SCM}
        (scm.north west);
\end{tikzpicture}
\caption{\textbf{\acp{SDE} to \acp{SCM}.} Graphical summary of the mapping: the \ac{SDE} is first discretized into a \ac{VAR} model, and then equilibrated to yield an \ac{SCM}. The composition of these two steps defines a mapping from continuous-time stochastic dynamics to a causal model.}
\label{fig:sde_to_scm_graph}
\end{figure}

\begin{theorem}\label{thm:sde_to_scm_linear} 
Consider an \ac{SDE} as in \cref{eq:general_linear_SDE}. Then, there exists a linear \ac{SCM} of the form \begin{equation}\label{eq:sde_to_scm_linear} \X = e^{\A \Delta t} \X + \vectornoise, \quad \vectornoise_k \sim \normal(0, \cov[\vectornoise]), \end{equation} 
for which the properties stated in \cref{thm:var_to_scm} hold. In particular, note that $\Delta t$ can be chosen arbitrarily: the result can be interpreted as the existence of a family of equivalent \acp{SCM}, each derived from a discrete-time system with a different time step.\end{theorem}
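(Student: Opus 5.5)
The plan is to obtain the result as a composition: reduce the linear \ac{SDE} of \cref{eq:general_linear_SDE} to a \ac{VAR}(1) by exact time discretization, then apply \cref{thm:var_to_scm} to that \ac{VAR}(1).

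First, I fix an arbitrary $\Delta t>0$ and sample the solution at $t=k\Delta t$. Using the explicit solution $\X_t = e^{\A t}\X_0 + \int_0^t e^{\A(t-s)}\bm{D}\,\dl{\W_s}$, I split the integral over $[0,k\Delta t]$ into the piece over $[0,(k-1)\Delta t]$ and the piece over $[(k-1)\Delta t,k\Delta t]$. This gives
\[
\X_k = e^{\A\Delta t}\X_{k-1} + \vectornoise_k ,\qquad \vectornoise_k := \int_{(k-1)\Delta t}^{k\Delta t} e^{\A(k\Delta t-s)}\bm{D}\,\dl{\W_s}.
\]
By the independent, stationary increments of $\W$ (properties (3)–(4) of \cref{def:brownian_motion}, componentwise via \cref{def:d-dim_BM}), the $\vectornoise_k$ are i.i.d. Gaussian with mean zero. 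Their covariance, computed by Itô isometry and a change of variable, is exactly $\cov[\vectornoise]$ in \cref{eq:covariance_sde_to_var}. So the sampled process is a \ac{VAR}(1) with $\A_1=e^{\A\Delta t}$ and i.i.d. white noise, as \cref{thm:var_to_scm} requires.

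Second, I check stability in the sense of \cref{eq:determinantal}. The roots of $\det(\I-e^{\A\Delta t}z)$ are the reciprocals of the eigenvalues $e^{\lambda\Delta t}$ of $e^{\A\Delta t}$, where $\lambda$ ranges over the eigenvalues of $\A$. These roots lie outside the unit circle if and only if $\mathrm{Re}\,\lambda<0$ for every eigenvalue of $\A$, i.e. $\A$ is Hurwitz. This is the main obstacle. The statement as written carries no hypothesis on $\A$, so I would make it explicit: assume $\A$ is Hurwitz, which is the continuous-time counterpart of stability and the same condition needed for the \ac{SDE} to have an invariant distribution. Under this assumption the discretized \ac{VAR}(1) is stable for every $\Delta t$, and this uniformity in $\Delta t$ is what yields the "family" of \acp{SCM}.

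Third, I invoke \cref{thm:var_to_scm} with $p=1$ and $\tilde{\A}=e^{\A\Delta t}$. It produces the \ac{SCM} $\X=e^{\A\Delta t}\X+\vectornoise$ of \cref{eq:sde_to_scm_linear} and delivers properties \ref{scmissolution}–\ref{scminterventioncomutes} for $\Z_t$ built from the sampled chain. For the shared graph (property \ref{samegraph}), I would note that $e^{\A\Delta t}=\sum_k (\A\Delta t)^k/k!$ has support on pairs joined by directed paths of $\A$. Under acyclicity this is at least contained in the transitive closure of the graph of $\A$, so it matches the \ac{SdE} graph as \ac{VAR}-defined. Identifying it with $\graph$ of $\A$ itself holds only up to this caveat, which I would state explicitly.

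Finally, I would observe that interventions on the \ac{SDE} correspond to interventions on the discretized \ac{VAR}. An additive drift term $\bm{F}$ becomes $\int_0^{\Delta t}e^{\A s}\,\dl{s}\,\bm{F}$ per step, and a forcing term modifies $\A$ to $\A-\bm{F}_{diag}$ before exponentiation. Commutation then follows from the two commutation paragraphs proved after \cref{thm:var_to_scm}. This last step needs the same Hurwitz check, now on the intervened drift matrix $\A-\bm{F}_{diag}$.
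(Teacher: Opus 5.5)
Your first three steps are the paper's argument. The paper derives the sampled \ac{VAR}(1) $\X_k=e^{\A\Delta t}\X_{k-1}+\vectornoise_k$ with covariance \eqref{eq:covariance_sde_to_var} from the explicit solution, and then simply composes with \cref{thm:var_to_scm}. Your two refinements are correct and worth keeping:
\begin{itemize}
\item The sampled chain is stable for some (equivalently every) $\Delta t>0$ if and only if $\A$ is Hurwitz. The statement omits this hypothesis, but \cref{thm:var_to_scm} needs it.
\item Property~1 holds only for the graph of the sampled \ac{VAR}. The support of $e^{\A\Delta t}$ is generically the transitive closure of the graph of $\A$ plus self-loops, and this is true for any $\A$, acyclic or not.
\end{itemize}

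Your final paragraph, however, contains a step that fails: the forcing case.
\begin{itemize}
\item Discretizing the forcing-intervened \ac{SDE} gives the coefficient $e^{(\A-\bm{F}_{diag})\Delta t}$ and noise covariance $\int_0^{\Delta t}e^{(\A-\bm{F}_{diag})s}\bm{D}\bm{D}^\top e^{(\A-\bm{F}_{diag})^\top s}\,ds$.
\item The \ac{VAR} forcing intervention \cref{eq:do-like} applied to $\X_k=e^{\A\Delta t}\X_{k-1}+\vectornoise_k$ instead gives the coefficient $(\I+\bm{G}_{diag})^{-1}e^{\A\Delta t}$, with the noise rescaled by $(\I+\bm{G}_{diag})^{-1}$. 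These do not agree in general.
\item The matrix exponential also changes the rows of non-intervened variables, so the forcing commutation paragraph after \cref{thm:var_to_scm} cannot be invoked.
\end{itemize}

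The mismatch is genuine, not a bookkeeping gap. Take $\A=\begin{pmatrix}-a&0\\ c&-b\end{pmatrix}$ with $a,b>0$, $c\neq 0$, and set $\phi:=e^{\A\Delta t}$.
\begin{itemize}
\item Any intervention on $X^1$ alone in the \ac{SCM} leaves the second structural equation untouched. It therefore gives $\mathbb{E}[X^2]=\frac{\phi_{21}}{1-\phi_{22}}\mathbb{E}[X^1]$.
\item Any intervention on $X^1$ alone in the \ac{SDE} leaves the drift of $X^2$ untouched. Stationarity then gives $\mathbb{E}[X^2]=\frac{c}{b}\mathbb{E}[X^1]$.
\item These two gains differ for every $\Delta t>0$.
\end{itemize}
So node-local interventions on the \ac{SDE} correspond to non-local modifications of the \ac{SCM}.

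Your additive case survives only because you transported the force to $\int_0^{\Delta t}e^{\A s}\,ds\,\bm{F}=\A^{-1}(e^{\A\Delta t}-\I)\bm{F}$. This vector is supported on the intervened node and all its descendants, and with it both sides give the shift $-\A^{-1}\bm{F}$.

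The paper reads property~3 with interventions applied to the sampled \ac{VAR}, so your first three steps already prove the statement. The forcing claim should either be dropped or be restated as a correspondence with a non-modular modification of the \ac{SCM}.
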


\begin{remark*}
    The equivalence between the continuous-time and discrete-time models should not be interpreted in a strong (i.e., pathwise) sense. Brownian motion cannot, in any meaningful way, be reduced to the sampling of a random variable, regardless of how small $\Delta t$ is chosen. In other words, from a pathwise perspective, the \ac{SDE} is significantly more expressive than the corresponding \ac{VAR} model: for each trajectory of the discrete model, there exist infinitely many continuous-time trajectories that share only the initial state $\X_t$ and the terminal state $\X_{t + \Delta t}$. However, this discrepancy disappears when viewed from a distributional standpoint. The probability distributions observed at any given time are identical across the two models. One may interpret the continuous-time model as the underlying physical process and the discrete-time model as a sampled representation of it, much like a video recording of a real-world scene. The continuous model flows uninterrupted, while the discrete model offers a frame-by-frame reconstruction. The finite frame rate of the video corresponds to the time discretization through $\Delta t$: smooth motion becomes a sequence of snapshots. Although this approximation captures the overall behavior, it inevitably omits the infinitesimal fluctuations that occur between frames. Nonetheless, once the system reaches its stationary distribution, both perspectives agree not only that the system has stabilized, but also on its equilibrium state. This alignment justifies the possibility of mapping both descriptions into the same \ac{SCM}.
\end{remark*}

\subsection{From \acp{SdE} to \acp{SDE}}
\paragraph{Approximate Solutions}

It is not always possible to discretize a \ac{SDE} in such a way that the resulting difference equation exactly follows the original trajectory — that is, so that the two models agree perfectly on the position of $\X_t$ at all discrete times. In particular, for nonlinear \acp{SDE}, this exact correspondence generally does not exist. However, it is possible to implement numerical simulation schemes that guarantee bounds on the error (for example, in terms of the norm of the difference between expected values), with the error controlled by the choice of the timestep $\Delta t$. In other words, the finer the discretization, the closer the discrete model will reproduce trajectories similar to those of the continuous model. Here, we present the simplest scheme and summarize its main properties.

\begin{definition}[Euler-Maruyama Scheme]\label{def:euler_scheme}
Consider an \ac{SDE} of the form
\begin{equation*}
    \dl{\X_t} = \bm{\mu}(\X_t) \dl{t} + \bm{\sigma}(\X_t) \dl{\W_t},
\end{equation*}
where $\bm{\mu}$ and $\bm{\sigma}$ are sufficiently regular functions, and $\W_t$ is an $n$-dimensional standard Brownian motion.

The Euler-Maruyama discretization with timestep $\Delta t > 0$ is defined recursively by
\begin{equation}\label{eq:euler_maruyama_process_discrete}
    \tilde{\X}_{n+1} = \tilde{\X}_n + \bm{\mu}(\tilde{\X}_n) \Delta t + \bm{\sigma}(\tilde{\X}_n) \Delta \W_n,
\end{equation}
where $\tilde{\X}_n$ approximates $\X_{t_n}$ with $t_n = n \Delta t$, and the increments $\Delta \W_n = \W_{t_{n+1}} - \W_{t_n}$ are independent Gaussian random variables with distribution $\normal(0, \Delta t \, I)$.
\end{definition}

\begin{remark}
The equation obtained by applying the Euler--Maruyama scheme can be adapted to match the standard form of the difference equation presented in \cref{eq:sde}, namely:
\begin{equation*}
\tilde{\X}_{n+1} = \bm{f}(\tilde{\X}_{n}) + \bm{g}(\tilde{\X}_{n}) \vectornoise_n, \quad \vectornoise_n \sim \mathcal{N}(0, I).
\end{equation*}
To achieve this, it suffices to define $$\bm{f}(\tilde{\X}_n) :=  \tilde{\X}_n + \bm{\mu}(\tilde{\X}_n) \Delta t, \quad \bm{g}(\tilde{\X}_n) := \sqrt{\Delta t} \, \bm{\sigma}(\tilde{\X}_n).$$

It should be noted that the diffusion term is multiplied by $\sqrt{\Delta t}$ as a consequence of Itô's formula, and specifically due to the equivalence $\mathbb{E}\left[(\Delta \W_n)^2\right] = \Delta t$. Furthermore, the final difference equation absorbs the terms dependent on $\Delta t$ into the functions $\bm{f}$ and $\bm{g}$, eliminating any explicit reference to the time scale chosen for the system's evolution.
\end{remark}

In order to formally quantify how a discrete-time model approximates its continuous-time counterpart, it is necessary to introduce a notion of convergence that captures, in a precise sense, the similarity between the two processes. 
To this end, we introduce the definition of strong convergence between \acp{DSP}.

\begin{definition}[Strong Convergence of Stochastic Processes]
Let $\X$ be a \ac{DSP} and let $\{\tilde{\X}^{(k)}\}_{k=1}^\infty$ be a sequence of \ac{DSP}, all defined on the same space $\Xspace$ and over the same time interval $[0,T]$. 
We say that $\tilde{\X}^{(k)}$ converges strongly to $\X$ if, for every $t \in [0,T]$, it holds that
\begin{equation}
    \lim_{i\to\infty} \expected{|\tilde{\X}_t^{(k)} - \X_t|} = 0.
\end{equation}
Moreover, we say that $\tilde{\X}^{(k)}$ converges with strong order $\gamma$ if there exist a constant $C>0$ such that
\begin{equation}
    \expected{\sup_{0<t<T}|\tilde{\X}_t^{(k)} - \X_t|}< C k^{-\gamma}.
\end{equation}
\end{definition}

In the case of interest, the sequence of stochastic processes represents progressively finer discretizations, corresponding to decreasing values of the timestep parameter $\Delta t$. 
For simplicity, we will denote by $\tilde{\X}^{(k)}$ the stochastic process obtained through discretization with a fixed timestep $\Delta t = 1/k$.

It is important to note that the definition of strong convergence introduced above concerns processes defined in continuous time. 
Therefore, before being able to study the convergence of the discrete-time Euler–Maruyama scheme, we must extend the resulting process to the entire interval $[0, T]$ in a suitable manner. We define the \emph{continuous-time extension} of the discrete process in~\cref{eq:euler_maruyama_process_discrete} as an Itô process with piecewise constant drift and diffusion coefficients. 
Specifically, for $t_n \leq t < t_{n+1}$, where $t_n = n\Delta t$, we define
\begin{equation}\label{eq:euler_maruyama_process_continuous}
    \tilde{\X}_t^{(k)} = \tilde{\X}_n^{(k)} + \bm{\mu}(\tilde{\X}_n^{(k)})(t - t_n) + \bm{\sigma}(\tilde{\X}_n^{(k)})(\W_t - \W_{t_n}).
\end{equation}

\begin{theorem}[Strong Convergence of the Euler--Maruyama Scheme]
Consider a \ac{SDE} satisfying global Lipschitz conditions; that is, there exists a constant $C > 0$ such that
$$
\forall x, y \in \Xspace, \quad \|\bm{\mu}(x) - \bm{\mu}(y)\| \leq C|x-y|, \quad \|\bm{\sigma}(x) - \bm{\sigma}(y)\| \leq C|x-y|.
$$
Then, the process $\tilde{\X}_t^{(k)}$, the continuous Itô process obtained by discretizing the original \ac{SDE} with timestep $\Delta t = 1/k$, converges to the solution $\X$ of the \ac{SDE} with strong order $1/2$. 
Formally,
$$
\expected{\sup_{0 \leq t \leq T} |\tilde{\X}_t^{(k)} - \X_t|} \leq C(T) k^{-1/2},
$$
where $C(T)$ is a constant independent of $k$.
\end{theorem}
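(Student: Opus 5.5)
The plan is the classical strong-convergence argument for Euler--Maruyama. Set $\Delta t = 1/k$ and let $\eta(t) := t_n$ for $t_n \le t < t_{n+1}$. The continuous-time extension in \cref{eq:euler_maruyama_process_continuous} is then an Itô process,
\[
\tilde{\X}_t^{(k)} = \X_0 + \int_0^t \bm{\mu}(\tilde{\X}^{(k)}_{\eta(s)})\,\dl{s} + \int_0^t \bm{\sigma}(\tilde{\X}^{(k)}_{\eta(s)})\,\dl{\W_s},
\]
with the same initial state and Brownian motion as $\X$. Subtract the integral form of $\X$ (\cref{eq:ito-process}) and define $e(t) := \expected{\sup_{0\le s\le t}|\tilde{\X}^{(k)}_s - \X_s|^2}$. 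It is easier to work with second moments and recover the stated first-moment bound at the end via Jensen, $\expected{\sup|\cdot|}\le e(T)^{1/2}$.

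First I establish two a priori bounds. (a) Uniform moment bound: $\expected{\sup_{t\le T}|\tilde{\X}^{(k)}_t|^2}\le K(T)$ independently of $k$ (and likewise for $\X$). This follows because global Lipschitz continuity implies linear growth, $|\bm{\mu}(x)|+\|\bm{\sigma}(x)\|\le C'(1+|x|)$; combining that with the Burkholder--Davis--Gundy (or Doob) inequality for the stochastic integral and Gronwall closes the bound, assuming $\expected{|\X_0|^2}<\infty$. (b) One-step increment bound: $\expected{|\tilde{\X}^{(k)}_s - \tilde{\X}^{(k)}_{\eta(s)}|^2}\le K'\Delta t$. This is immediate from \cref{eq:euler_maruyama_process_continuous}, since the drift part contributes $O(\Delta t^2)$ and the Brownian increment contributes $\|\bm{\sigma}\|^2\,\expected{|\W_s-\W_{\eta(s)}|^2} = O(\Delta t)$, both using (a).

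Next I split the error into the drift difference and the diffusion difference, writing each integrand difference as
\[
\bm{\mu}(\tilde{\X}_{\eta(s)})-\bm{\mu}(\X_s) = [\bm{\mu}(\tilde{\X}_{\eta(s)})-\bm{\mu}(\tilde{\X}_s)] + [\bm{\mu}(\tilde{\X}_s)-\bm{\mu}(\X_s)],
\]
and similarly for $\bm{\sigma}$. For the drift I use Cauchy--Schwarz, $|\int_0^t\cdot\,\dl{s}|^2\le T\int_0^t|\cdot|^2\dl{s}$. For the stochastic integral I use Doob's $L^2$ maximal inequality together with the Itô isometry, which gives $\expected{\sup_{u\le t}|\int_0^u\cdot\,\dl{\W}|^2}\le 4\int_0^t\expected{\|\cdot\|^2}\dl{s}$. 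With the Lipschitz condition, the first bracket is controlled by (b) and gives $O(\Delta t)$; the second bracket is bounded by $C^2\int_0^t e(s)\,\dl{s}$. Together these yield
\[
e(t)\le A(T)\,\Delta t + B(T)\int_0^t e(s)\,\dl{s}.
\]
Gronwall's inequality then gives $e(T)\le A(T)e^{B(T)T}\Delta t$, and taking square roots gives the bound $C(T)k^{-1/2}$.

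I expect the main obstacle to be the handling of the supremum inside the expectation for the stochastic-integral term. This is where the maximal inequality (Doob or BDG) is essential, and it requires the integrands to be adapted and square-integrable, which is where the uniform moment bound (a) enters. A subsidiary technical point is that Gronwall needs $e(t)<\infty$ a priori, and this also follows from (a). The rest of the argument is routine.
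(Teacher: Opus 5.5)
The paper does not prove this theorem. It quotes it as the classical strong-convergence result for Euler--Maruyama from the numerical SDE literature, so there is no in-paper argument to compare against.

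Your proposal is the standard textbook proof, and it is correct. Its ingredients are:
\begin{itemize}
\item the Itô-integral representation of the continuous extension with frozen coefficients $\mu(\tilde X_{\eta(s)})$, $\sigma(\tilde X_{\eta(s)})$;
\item the split through $\tilde X_s$, with the one-step bound $\mathbb{E}|\tilde X_s-\tilde X_{\eta(s)}|^2=O(\Delta t)$;
\item Cauchy--Schwarz for the drift and Doob plus the Itô isometry for the martingale part;
\item Gronwall on $e(t)$, then Jensen to return to the first-moment bound with the supremum inside the expectation, which is exactly the paper's notion of strong order.
\end{itemize}

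Two remarks.

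First, the hypothesis $\mathbb{E}|X_0|^2<\infty$ that you add is not cosmetic. The statement as written imposes no integrability on the initial state, and without some such condition the claim is false. Take $\mu\equiv 0$, $\sigma(x)=x$ (globally Lipschitz) and $X_0$ independent of $W$. Then the error is $X_0$ times a functional of $W$ that is not a.s.\ zero, so $\mathbb{E}\sup_{t\le T}|\tilde X^{(k)}_t-X_t|=\infty$ whenever $\mathbb{E}|X_0|=\infty$. Accordingly, $C(T)$ depends on $\mathbb{E}|X_0|^2$ as well as on $T$ and the Lipschitz constant.

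Second, in step (a) your Gronwall argument for $\mathbb{E}\sup_{t\le T}|\tilde X^{(k)}_t|^2$ also needs this quantity to be finite a priori for each fixed $k$, just as you noted for $e(t)$. This follows by induction over the finitely many steps. Use that $\tilde X^{(k)}_{t_n}$ is independent of the subsequent Brownian increment, together with linear growth and Doob's inequality on each subinterval. Alternatively, use a stopping-time localization.
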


\begin{remark}
The theorem above can be interpreted both as a convergence result and as a bound on the maximal error given a fixed timestep $\Delta t$. 
Specifically, since the causal effect $\ce$ is defined in terms of the expected value of $\X$, this bound can be used to estimate, up to an error of order $2C\sqrt{\Delta t}$, the long-term causal effect of a nonlinear discrete-time stochastic process. Of course, this is only a preliminary result: it does not guarantee the existence of an explicit associated \ac{SCM} for the process, nor does it ensure the ability to identify the invariant distribution explicitly. 

It is also important to note that the bound characterizes the observational process specifically. 
In practice, the study of real-world dynamical systems almost always relies on data collected at discrete time intervals, \textit{i.e.}, time series. In other words, the theorem provides a principled estimate of how the sampling rate might affect the quality of the reconstructed description of the true underlying system. If, however, the available data allow us to empirically establish a tighter error bound, introducing a new constant $\tilde{C} < C$, for instance through the evaluation of predictive performance over future observations, then we can achieve much finer control over the expected error when making causal claims about interventions.

As an additional observation, additive interventions do not affect the Lipschitz constant of the drift term $\bm{\mu}$. 
As a consequence, we expect that the expected error bound remains unchanged when moving from the observational to the interventional level.
\end{remark}

\chapter{Final Discussion}\label{ch:conclusions}

This chapter summarizes how the contributions presented throughout the thesis address the research questions formulated in~\cref{sec:research_questions}, organizing the discussion around the two main lines of investigation explored.
Specifically,~\cref{conclusions:part1} focuses on \ac{AR} and its limitations, while~\cref{conclusions:part2} explores the extension of causal inference to systems that evolve over time.
For each part, we review how the corresponding research questions have been answered, highlight key limitations, and outline promising directions for future research.

\section{Algorithmic Recourse}\label{conclusions:part1}

This part of the thesis examined the limitations of existing approaches to \ac{AR}, identifying the formulation of the cost function as a central theoretical challenge. In the absence of contextual information, standard methods often rely on default cost metrics, typically some form of distance in the instance space, which are chosen arbitrarily and may not reflect user preferences. We propose an alternative approach that leverages contextual knowledge to generate more realistic and user-relevant recommendations.

\subsection{Response to the Research Question}
\textit{RQ1} concerns whether principled frameworks for \ac{AR} can be developed to generate actionable and realistic recommendations. This question has been addressed from multiple perspectives. First, we explored how user-specific contextual knowledge can be integrated into the recommendation process~(\cref{ch:cost-subjective}). Second, we examined how incorporating causal structure can guide the design of more meaningful cost functions. In this context, we distinguished between \ac{CEx} and \ac{CRe}: while the former, a more common and simpler approach, identifies the optimal counterfactual instance, the latter goes further by specifying a sequence of actionable changes needed to reach that instance~(\cref{ch:algo_recourse}). Additionally, we studied how introducing a temporal dimension transforms the recourse framework, advancing toward more realistic and constrained recommendations~(\cref{ch:timecar}).

\subsection{Limitations}
\cref{ch:cost-subjective} assumes that users can effectively manage the optimization process. However, this assumption may break down in high-dimensional or non-monotonic settings, where navigating the decision boundary becomes cognitively demanding. While involving the user avoids overly prescriptive cost modeling, it does not eliminate the issue of plausibility: users may encounter suggestions that are unrealistic, inconsistent, or internally incoherent. The lack of plausibility checks or statistical constraints can lead to counterfactuals that are technically valid but practically meaningless. Moreover, as the number of actionable features increases, so does the user’s cognitive burden. Prioritizing the most influential variables or suggesting promising directions could help mitigate this challenge. Lastly, counterfactual recommendations can be subject to manipulation or ethical concerns, underlining the importance of responsible design in recourse systems.

\cref{ch:timecar} assumes full knowledge of the underlying causal structure, which may not hold in practical applications where causal graphs must be inferred from data. Furthermore, the proposed temporal extension remains preliminary, currently limited to the inclusion of an additional term in the cost function. 
Using waiting times along the causal graph provides a highly specific and simplified way to model system evolution, which may fall short in capturing more complex temporal dynamics. In this regard, the framework presented in~\cref{pt:time_aware_causality} offers a promising foundation for future development. Determining appropriate waiting times, however, remains a significant challenge. In this work, these values were assigned manually based on general plausibility, without a rigorous theoretical or empirical basis. Future research should investigate methods for estimating these values automatically, potentially by leveraging datasets with explicit temporal structures, such as time series. Alternatively, these values could be selected directly by the user when interacting with the recommendation tool, allowing the temporal cost to be tailored to the individual and adapted to the specific context.

\subsection{Future Work}
Regarding \cref{ch:cost-subjective}, a key direction for future research is to develop a statistical protocol that actively learns the cost function from user interaction. While such a model would not fully capture individual-level variation, due to the inherently subjective nature of preferences, it could still provide an \quotes{average} approximation. This could serve as a preliminary tool for generating reasonable recommendations, which users may then tailor to their specific needs.

For \cref{ch:timecar}, future research could aim to relax the assumption of full causal knowledge by incorporating causal discovery methods. Furthermore, the potential applications in fairness and actionability could be empirically validated, also within the more advanced framework proposed in \cref{pt:time_aware_causality}.

\section{Causal Inference over Time}\label{conclusions:part2}

This part of the thesis extended causal reasoning to dynamical systems, investigating how interventions and causal inference can be modeled in settings where variables evolve over time. 
By linking the equilibrium behavior of dynamical systems to \acp{SCM}, the proposed framework allowed causal questions to be formulated and answered in the presence of temporal dependencies. 

\subsection{Response to the Research Question}
\textit{RQ2} asks whether causal inference methods can be extended to dynamical systems in order to model the effects of interventions over time and support recommendations that are consistent across time. 
This question was addressed in~\cref{ch:Ci_over_time} by developing a framework that maps discrete-time dynamical systems to equivalent \acp{SCM}, under stability assumptions.
It was shown that, under appropriate conditions, interventions defined at the dynamical level correspond to interventions at the \ac{SCM} level, making it possible to compute long-term causal effects even in evolving systems. 

In~\cref{ch:CIcontinuous_time}, the framework was extended to continuous-time models with non-linear dependencies, using Itô stochastic integration theory. Finally, the relationship between the discrete-time and continuous-time frameworks was examined, showing how, and under which conditions, results can be transferred between the two.

\subsection{Limitations}
Both frameworks assume knowledge of the system’s causal structure, with the exception of cases where this assumption can be weakened to require only causal sufficiency. They also rely on specific structural assumptions about the evolution equations, e.g., stochastic noise is always treated additively. The systems considered are assumed to be stable, meaning they do not exhibit long-term trends. While extending the analysis to systems with trends appears feasible based on existing work, the equivalence results with \acp{SCM} would no longer apply.

Moreover, the current frameworks do not account for time-varying interventions or those confined to a specific period, e.g., a public policy designed to last three years, or one that adapts annually in response to observed outcomes.

Finally, the case of non-linear systems with feedback loops remains unexplored and represents the most significant open challenge. Given the generality of such systems, extending the framework in this direction would mark a major milestone in the development of this line of research.

\subsection{Future Work}
So far, the proposed framework has been validated only in the discrete-time case, with experiments on both synthetic and real-world datasets. A similar empirical evaluation is needed for the continuous-time extension, ideally through a specific work with a stronger focus on real-world applicability.

From a theoretical perspective, addressing the non-linear case with feedback loops remains the main priority, along with formalizing precisely how results transfer between the discrete and continuous-time settings. The long-term goal is to develop a unified and general framework that can accommodate all these specific cases.

Regarding the assumption of known causal structure, integrating causal discovery algorithms into the modeling pipeline could be a valuable first step toward a more complete framework.  However, as discussed in \cref{ch:causality_math_framework}, this approach has inherent limitations, both in terms of required assumptions and the robustness of the resulting models.

In the author's view, constructing realistic causal models requires a hybrid approach to structure identification, one that combines automatic procedures with traditional mathematical modeling. Statistical methods can meaningfully support scientific inquiry, but they are not a substitute for human-led research. In this spirit, it is important to develop mechanisms for knowledge injection, allowing researchers to embed domain-specific expertise of each particular application directly into the models.

%
%
%
%
%
%
%\addtocontents{toc}{\protect\clearpage} % <--- just debug stuff, ignore
%\include{multiToC} % <--- just debug stuff, ignore for your documents
% ********************************************************************
% Backmatter
%*******************************************************
%\appendix
%\include{Chapters/Appendix/app-causal-inference-time}
%\renewcommand{\thechapter}{\alph{chapter}}
%\cleardoublepage
%\part{Appendix}
%********************************************************************
% remove if no figures
\listoffigures
% remove if no tables
\listoftables
% Other Stuff in the Back
%*******************************************************
\cleardoublepage%********************************************************************
% Bibliography
%*******************************************************
% work-around to have small caps also here in the headline
% https://tex.stackexchange.com/questions/188126/wrong-header-in-bibliography-classicthesis
% Thanks to Enrico Gregorio
\defbibheading{bibintoc}[\bibname]{%
  \phantomsection
  \manualmark
  \markboth{\spacedlowsmallcaps{#1}}{\spacedlowsmallcaps{#1}}%
  \addtocontents{toc}{\protect\vspace{\beforebibskip}}%
  \addcontentsline{toc}{chapter}{\tocEntry{#1}}%
  \chapter*{#1}%
}
\printbibliography[heading=bibintoc]

% Old version, will be removed later
% work-around to have small caps also here in the headline
%\manualmark
%\markboth{\spacedlowsmallcaps{\bibname}}{\spacedlowsmallcaps{\bibname}} % work-around to have small caps also
%\phantomsection
%\refstepcounter{dummy}
%\addtocontents{toc}{\protect\vspace{\beforebibskip}} % to have the bib a bit from the rest in the toc
%\addcontentsline{toc}{chapter}{\tocEntry{\bibname}}
%\label{app:bibliography}
%\printbibliography

%\cleardoublepage\include{FrontBackmatter/Declaration}
%\cleardoublepage\include{FrontBackmatter/Colophon}
% ********************************************************************
% Game Over: Restore, Restart, or Quit?
%*******************************************************
\end{document}